\documentclass[twoside,11pt]{article}

\usepackage{blindtext}

\usepackage{arxiv}

\usepackage{amsmath}

\usepackage{algorithm}
\usepackage{algpseudocode}

\usepackage{titletoc}
\setcounter{tocdepth}{3}
\newcommand\DoToC{%
  \startcontents
  \printcontents{}{1}{\vskip 1.5em\hrule\vskip .75em}
  \vskip .75em\hrule\vskip 2em
}

\usepackage{xcolor}

\usepackage{tikz}
\usetikzlibrary{decorations.pathreplacing, calc}
\usepackage{float} 

\newcommand{\tikzmark}[1]{\tikz[remember picture] \coordinate (#1);}

\newcommand{\EOT}{\operatorname{OT}_\varepsilon}
\newcommand{\OTL}{\operatorname{OT}^{\textnormal{loc}}_\varepsilon}


\usepackage{lastpage}


\ShortHeadings{Kokot and Luedtke}{Coreset Selection for Generic Smooth Divergences}
\firstpageno{1}

\begin{document}

\title{Coreset selection for the Sinkhorn divergence\\ and generic smooth divergences}
\author{\name Alex Kokot \email akokot@uw.edu \\
       \addr Department of Statistics\\
       University of Washington\\
       Seattle, WA 98195-4322, USA
       \AND
       \name Alex Luedtke \email aluedtke@uw.edu \\
       \addr Department of Statistics\\
       University of Washington\\
       Seattle, WA 98195-4322, USA}

\maketitle
              
\begin{abstract}
We introduce CO2, an efficient algorithm to produce convexly-weighted coresets with respect to generic smooth divergences. 
By employing a functional Taylor expansion, we show a local equivalence between sufficiently regular losses and their second order approximations, reducing the coreset selection problem to maximum mean discrepancy minimization. 
We apply CO2 to the Sinkhorn divergence, providing a novel sampling procedure that requires poly-logarithmically many data points to match the approximation guarantees of random sampling. 
To show this, we additionally verify several new regularity properties for entropically regularized optimal transport of independent interest. 
Our approach leads to a new perspective linking coreset selection and kernel quadrature to classical statistical methods such as moment and score matching.
We showcase this method with a practical application of subsampling image data, and highlight key directions to explore for improved algorithmic efficiency and theoretical guarantees.
\end{abstract}

\begin{keywords}
  coresets, kernel quadrature, Hadamard differentiability, entropic optimal transport, Sinkhorn divergence
\end{keywords}

\section{Introduction}
We consider the problem of compressing a dataset by selecting a convexly weighted subset of the observations, optimized for an error metric $D$. More concretely, given $n$ independent draws from a compactly supported $\mathbb{P}$ on $\mathcal{X}\subseteq\mathbb{R}^d$ with empirical distribution $\mathbb{P}_n$, we seek to construct $P_{m}$ supported on a `coreset' of $m$ of these observations
such that $D(\mathbb{P}_n, P_{m})$ is on the order of $D(\mathbb{P}_n, \mathbb{P})$ as $m,n\rightarrow\infty$. We call such a weighted coreset (asymptotically) lossless.
A typical example of $D$ is an integral probability metric \citep{10.1214/12-EJS722}, $D(\mathbb{P}_n, P_{m})=\sup_{f\in \mathcal{F}} |\int f d(\mathbb{P}_n- P_m)|$ --- when $\mathcal{F}$ is Donsker \citep{van2000asymptotic}, we shall seek a $P_m$ such that this quantity is $O_p(n^{-1/2})$. Another example of interest is the Sinkhorn divergence which we introduce in detail later \citep{ramdas2015wasserstein}. As we are concerned with empirical approximation of the population distribution, we frequently adopt the notation $D(Q) := D(Q, \mathbb{P}).$

Compression techniques are commonly employed in settings where it is costly to work with a large sample. In computational cardiology, for each configuration $x$ of a simulated heart, thousands of machine hours must be used to evaluate a function $f(x)$ that approximates the mechanism for calcium signaling \citep{strocchi2020simulating, dwivedi2023kernel}. This precludes evaluating common statistics of interest, such as $\mathbb{P}_n f$, if $\mathbb{P}_n$ is defined using a large sample of heart configurations. However, if $m$ configurations could be identified and weighted so that $P_m f\approx \mathbb{P}_n f$, the computational task could be made tractable. If we had knowledge that $f$ belonged to a function class $\mathcal{F}$, then this would be achieved by selecting $P_m$ to make the corresponding integral probability metric small.

The construction of such $P_m$ has been well studied in the case where $\mathcal{F}$ is the unit ball of a reproducing kernel Hilbert space \citep{scholkopf2002learning}. An IPM in this setting is referred to as maximum mean discrepancy (MMD) \citep{gretton2008kernel}. As these spaces are Donsker, the optimal error rate is $O(n^{-1/2})$.

Kernel herding was shown to achieve this error rate with $m = O(n^{1/2})$ in the case where the kernel is finite dimensional \citep{chen2012super}. 
Kernel thinning subsequently met this bound (up to logarithmic factors) in more general cases, depending on the kernel's tail behavior \citep{dwivedi2023kernel}. 
An alternative approach based on quadrature was developed in \citep{NEURIPS2022_2dae7d1c} and refined in \citep{li2024debiaseddistributioncompression}, leading to conditions depending on the spectral decay of the kernel when viewed as an integral operator. Key to our analysis is a kernel quadrature algorithm similar to that of \citep{li2024debiaseddistributioncompression}, tailored to data dependent kernels.

Often in unsupervised learning settings, data compression algorithms optimize for non-MMD error metrics. 
For example, $K$-means forms clusters to minimize the average squared deviation from the centroids \citep{steinley2006k}. 
This algorithm can be equivalently expressed in terms of the 2-Wasserstein distance, $W_2(\mu,\nu):= \inf_{\pi\in \Pi(\mu,\nu)} \left(\int d(x,y)^2\pi(x,y)\right)^{1/2}$ \citep{pollard1982quantization, canas2012learning}. In this framework, a distribution $P_K$ supported on $K$ centroids is 
selected to minimize $W_2(\mathbb{P}_n, P_K)$. 
Works such as \cite{claici2020wassersteinmeasurecoresets} fit more complex parametric distributions for the surrogate measure $P_K$, with Gaussian mixtures being a primary example.
By considering compressed data, we make the more restrictive assumption that $P_K \ll \mathbb{P}_n$. This forces the support of $P_K$ to retain any structure enjoyed by that of $\mathbb{P}_n$---for example, if $\mathbb{P}_n$ is supported on images of faces, then so too is $P_K$.

In practice, the Wasserstein distance is often too computationally expensive to be computed on large datasets, but it can be approximated efficiently using its entropically regularized counterpart \citep{cuturi2013sinkhorn}. The Sinkhorn divergence similarly approximates the Wasserstein loss, but with slight modifications to satisfy the properties of a statistical divergence \citep{ramdas2015wasserstein}. 
Beyond their computational benefits, these quantities are also easier to estimate than the Wasserstein distance, especially in high dimensions \citep{vacher2021dimension}, and are more robust to noise \citep{rigollet2018entropic, bigot2019datadriven}.

In this work, we explore whether divergences such as the Sinkhorn divergence can also yield benefits over the Wasserstein distance for coreset selection. 
This problem is important since, despite the Wasserstein distance's appealing relationship to familiar clustering algorithms, no compression algorithm can asymptotically improve on a random sample with respect to this loss \citep{weed2019sharp}.
Our approach culminates in a procedure applicable to arbitrary divergences, where MMD compression algorithms can be applied to analytically derived Taylor expansions of the functional of interest, a two-step procedure we call CO2 (Coresets of Order 2).

Our main contributions are as follows:
\begin{enumerate}
   \item We develop a framework for lossless data compression with respect to generic smooth divergences.

    \item We verify several new regularity results regarding quantities of interest in entropic optimal transport, particularly the entropic optimal transport potentials

    \item Using these results, we show that poly-logarithmically many data points $m$ are sufficient to ensure $D(P_m)=O_p(n^{-1})$ when $D$ is the Sinkhorn divergence.

\end{enumerate}
Precisely, we show $m=\omega(\log^d n)$ samples suffice for asymptotically lossless Sinkhorn reconstruction. To our knowledge, ours is the first work in the literature to even consider whether $m = o(n)$ samples suffices. The recent work of \cite{yinwasserstein}, which also developed a method for constructing Sinkhorn coresets, instead focused on providing stability and convergence guarantees for their proposed optimization scheme.

We highlight in our experiments how Sinkhorn-CO2 applied to MNIST data not only improves performance with respect to the Sinkhorn divergence, but also better approximates concrete quantities of interest, such as the proportions of each label present in the resulting coreset.
Other recent works have related coreset selection to improved efficiency in learning downstream tasks \citep{xiong2024fairwassersteincoresets, chenunified, gong2024supervised}.
This places these methods under the umbrella of  data distillation \citep{sachdeva2023datadistillationsurvey}, the study of compactly representing large datasets with the goal of improving algorithmic efficiency.
Our experiments demonstrate that Sinkhorn coresets provide a similar enhancement.

\section{The General Case}
\subsection{Notation and definitions}

Our results apply to sufficiently smooth divergences $D$, made precise by a second-order Hadamard differentiability condition \citep{Fernholz1983}. Denote by $\mathcal{P}(\mathcal{X})$ the space of probability distributions supported on $\mathcal{X}$.
For an RKHS $\mathcal{F}$ on $\mathcal{X}$ with norm $\|\cdot\|_{\mathcal{F}}$, let $\mathcal{F}_1:= \{f \in \mathcal{F}: \|f\|_{\mathcal{F}} \leq 1\}$ denote the unit ball and $\ell^\infty(\mathcal{F}_1)$ be the space of distributions on $\mathcal{X}$ with pseudometric $\|\mu\|_{\ell^\infty(\mathcal{F}_1)}:= \sup_{\|f\|_\mathcal{F} \leq 1}|\int f d\mu|$. 
 We say that a functional $D$ with $\operatorname{Domain}(D)\subseteq \ell^{\infty}(\mathcal{F}_1)$ is Hadamard differentiable at $\mathbb{P}$ (relative to $\ell^{\infty}(\mathcal{F}_1)$) if there exists a continuous linear functional $\dot{D}:\ell^\infty(\mathcal{F}_1)\to\mathbb{R}$ such that, for all $\{\mathbb{P}+th_t : t\in [-1,1],h_t\in \ell^\infty(\mathcal{F}_1)\}\subseteq  \operatorname{Domain}(D)$ with $\|h_t - h\|_{\ell^\infty(\mathcal{F}_1)} =o(1)$ for some $h$,

$$
\lim_{t\to 0} \frac{D(\mathbb{P} + th_t) - D(\mathbb{P})}{t} = \dot{D}(h).
$$
The functional $D$ is second order Hadamard differentiable at $\mathbb{P}$ if there exists a continuous quadratic form $\ddot{D}:\ell^\infty(\mathcal{F}_1)\to\mathbb{R}$ such that
$$
\lim_{t\to 0} \frac{D(\mathbb{P} + th_t) - D(\mathbb{P}) - \dot{D}(th_t)}{t^2} = \ddot{D}(h).
$$
 The Hadamard operator of $D$ is defined as the bounded linear operator $G$ such that $\ddot{D}(h) = \int Gh\, dh$, which exists by Theorem~2.7 in \citep[][pg. 323]{kato2013perturbation}. 
 We can also express $G$ via the convolution with its corresponding kernel function, $g: \mathcal{X}^2\rightarrow\mathbb{R}$, $Gh = \int g(\cdot ,y)\, dh(y)$, $g(\cdot, y) \in \mathcal{F}$. See Appendix \ref{sec: FTE} for further detail on why Hadamard differentiability is natural in the compression setting.

As the dual RKHS topology $\ell^\infty(\mathcal{F}_1)$ will be of key importance in our study, we introduce the corresponding kernel function, $k$, and the integral operator $K:\ell^\infty(\mathcal{F}_1) \to \mathcal{F}$ defined as $Kh:= \int k(\cdot, y) dh(y)$. We denote the $L^2(\mathbb{P})$ Mercer spectrum  and eigenfunctions of $K$ by $\{\sigma_i\}_{i=1}^\infty$ and $\{\psi_i\}_{i=1}^\infty$, respectively. Depending on which is clearer in context, we will denote RKHS norms by subscripting by either the class of functions or its kernel integral operator---for example, we will denote the norm on $\mathcal{F}$ by both $\|\cdot\|_\mathcal{F}$ by $\|\cdot\|_K$ at different points. The $\ell^\infty(\mathcal{F}_1)$ norm can be more familiarly expressed as a quadratic form
\[
\|\mu\|_{\ell^\infty(\mathcal{F}_1)} = \left(\int k(x,y) d\mu(x) d\mu(y)\right)^{1/2} =: Q_K(\mu).
\]
In particular, $\|\mu - \nu\|_{\ell^\infty(\mathcal{F}_1)} = \operatorname{MMD}_K(\mu, \nu)$, with $\operatorname{MMD}_{K}$ denoting the maximum mean discrepancy relative to the RKHS $\mathcal{F}$.

Much of our study will center around $\mathbb{P}_n$ and $P_m$ supported on it.
In this setting, we can typically reduce the integral operator $K$ to the finite dimensional kernel Gram matrix $K_n := [k(x_i, x_j)]_{i,j=1}^n$, with ordered spectrum $\{n\sigma_i^n\}_{i=1}^n$, $\sigma_i^n$ being the eigenvalues of the normalized integral operator $K_n/n$.
To ensure that the spectral decay of the Gram matrix is comparable to that of the Mercer decomposition of $k$, we assume that $\|\psi_i\|_\infty:= \sup_{x\in \mathcal{X}} |\psi_i(x)| \leq M$ for all $i$. This is a classical nonparameteric simplifying assumption known to hold for Matern (Sobolev) and, with small adjustment, Gaussian kernels \citep{yang2020function}. 

When studying operators, $A: F\to G$, we will denote $\|A\|_{F\to G} = \sup_{\|f\|_F \leq 1} \|Af\|_G$, and when $F=G$ we shorten this to $\|A\|_F$. In technical arguments, we may often express a scalar constant on each bound by $C$ that may represent different values in different arguments. 
We will make use of multiplication operators $\mathfrak{M}_f : \ell^\infty(\mathcal{F}_1) \to \ell^\infty(\mathcal{F}_1)$, $\mathfrak{M}_f(\mu) = f \mu$, that is, $ \mathfrak{M}_f(\mu)[g] = \int fg\, d\mu.$ For linear operators (embeddings) $K_1, K_2 : \ell^\infty(\mathcal{F}_1) \to \mathcal{F}$, we say that $K_1 \succeq K_2$ if $\int (K_1 - K_2)\mu\, d\mu \geq 0$ for all $\mu \in \ell^\infty(\mathcal{F}_1).$

\subsection{Main results}
The main idea of our method is that compression with respect to a carefully chosen MMD leads to compression with respect to $D$. This leads to the CO2 meta-algorithm in Algorithm \ref{main_alg}.

\begin{algorithm}[h!]
    \caption{CO2 meta-algorithm}
    \label{main_alg}
    \begin{algorithmic}[1] 
        \Require $D$, $\operatorname{KernelSelection}$, $\operatorname{KernelCompress}$
        \Ensure $P_m$
        \State $K,\, \text{args} \gets$ $\operatorname{KernelSelection}(D)$
        \State $P_m \gets \operatorname{KernelCompress}(K, \text{args})$
    \end{algorithmic}
\end{algorithm}

The KernelSelection algorithm associates to the divergence $D$ a kernel that guarantees rapid compression, and the KernelCompress method performs MMD compression. In the remainder of this section we give an initial indication of how these two tasks can be achieved, ultimately leading to a more concrete CO2 algorithm in Algorithm \ref{CO2}.

\begin{lemma}[Quadratic domination]\label{Functional Delta Method}
    Let $D$ be second order Hadamard differentiable on $\ell^\infty(\mathcal{F}_1)$, where $\mathcal{F}_1$ is the unit ball in an RKHS with characteristic kernel $k$. Then, $\operatorname{MMD}_K(\mathbb{P}_n, P_{m}) = o_p(n^{-1/2})$ implies
    $D(P_{m}) = O_p(n^{-1})$ and
    \[
    D(P_m) = \ddot{D}(P_m - \mathbb{P}) + o_p(n^{-1})  = D(\mathbb{P}_n) + o_p(n^{-1}).
    \]
\end{lemma}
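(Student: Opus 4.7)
The plan is to apply a second-order functional delta method, exploiting that $\dot{D}$ vanishes at $\mathbb{P}$. Since $D(\cdot) = D(\cdot, \mathbb{P})$ is a divergence with global minimum at $\mathbb{P}$, the continuous linear part $\dot{D}$ must vanish on admissible perturbations (any direction $h$ giving a nonnegative one-sided first-order expansion forces $\dot{D}(h) = 0$ when $-h$ is admissible too), so the Hadamard condition reduces to $D(\mathbb{P} + th_t) = t^2 \ddot{D}(h) + o(t^2)$. First I would control the total scale: by the triangle inequality,
$$\|P_m - \mathbb{P}\|_{\ell^\infty(\mathcal{F}_1)} \leq \operatorname{MMD}_K(P_m, \mathbb{P}_n) + \operatorname{MMD}_K(\mathbb{P}_n, \mathbb{P}) = o_p(n^{-1/2}) + O_p(n^{-1/2}) = O_p(n^{-1/2}),$$
where the second rate uses that $k$ is bounded on the compact support of $\mathbb{P}$, so $\mathcal{F}_1$ is $\mathbb{P}$-Donsker; this also yields weak convergence of $\sqrt{n}(\mathbb{P}_n - \mathbb{P})$, and hence of $\sqrt{n}(P_m - \mathbb{P})$, to a tight Gaussian process in $\ell^\infty(\mathcal{F}_1)$.

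The key step is to apply the second-order Hadamard condition along this asymptotically tight sequence. Writing $t_n = n^{-1/2}$ and $h_n = \sqrt{n}(P_m - \mathbb{P})$, tightness of $\{h_n\}$ lets us promote the pointwise Hadamard statement to a uniform one on its asymptotically supporting compacta, yielding $D(P_m) = \ddot{D}(P_m - \mathbb{P}) + o_p(n^{-1})$. Continuity of $\ddot{D}$ as a quadratic form on $\ell^\infty(\mathcal{F}_1)$ (bounded by $\|G\|$ times the squared MMD norm) then gives $\ddot{D}(P_m - \mathbb{P}) = O_p(n^{-1})$, whence $D(P_m) = O_p(n^{-1})$.

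For the final equality, I would apply the identical argument with $\mathbb{P}_n$ in place of $P_m$ to obtain $D(\mathbb{P}_n) = \ddot{D}(\mathbb{P}_n - \mathbb{P}) + o_p(n^{-1})$, and then compare the two quadratic evaluations using the symmetric bilinear form $B$ underlying $\ddot{D}$ and the polarization identity
$$\ddot{D}(P_m - \mathbb{P}) - \ddot{D}(\mathbb{P}_n - \mathbb{P}) = B\bigl(P_m - \mathbb{P}_n,\ P_m + \mathbb{P}_n - 2\mathbb{P}\bigr).$$
Continuity of $B$ bounds the right side by a constant times $\|P_m - \mathbb{P}_n\| \cdot \|P_m + \mathbb{P}_n - 2\mathbb{P}\| = o_p(n^{-1/2}) \cdot O_p(n^{-1/2}) = o_p(n^{-1})$. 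The main obstacle is the rigorous upgrade of the pointwise second-order Hadamard condition to the asymptotically uniform statement needed along random sequences; this is standard in the functional delta method literature, but here relies essentially on the Donsker-implied tightness of $\sqrt{n}(P_m - \mathbb{P})$ established above, which is what makes the characteristic and compact-support hypotheses indispensable.
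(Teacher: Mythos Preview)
Your proposal is correct and follows essentially the same approach as the paper: both establish weak convergence of $\sqrt{n}(P_m-\mathbb{P})$ via the Donsker property and the assumed $o_p(n^{-1/2})$ condition, invoke the second-order functional delta method to get $D(P_m)=\ddot D(P_m-\mathbb{P})+o_p(n^{-1})$ (and likewise for $\mathbb{P}_n$), and then bound the difference of quadratic forms. The only cosmetic difference is that you use the polarization identity $B(P_m-\mathbb{P}_n,P_m+\mathbb{P}_n-2\mathbb{P})$ directly, whereas the paper expands $\ddot D(P_m-\mathbb{P})-\ddot D(\mathbb{P}_n-\mathbb{P})=\operatorname{MMD}_G^2(P_m,\mathbb{P}_n)+2\langle G(P_m-\mathbb{P}_n),\mathbb{P}_n-\mathbb{P}\rangle$ and applies Cauchy--Schwarz together with Lemma~\ref{Hadamard Optimality} ($CK\succeq G$)---which is exactly your ``continuity of $B$'' in disguise.
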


Our proof of the above uses that $D(\mathbb{P}) = 0$ and $\dot{D} \equiv 0$ for a divergence about the null. Combining this with the fact that $P_m=\mathbb{P} + tH_t$ when $H_t:=[P_m - \mathbb{P}]/t$ yields
\begin{align*}
    D(P_m) = t^2\left[\frac{D(\mathbb{P} + tH_t) - D(\mathbb{P}) - t\dot{D}(H_t)}{t^2}\right].
\end{align*}
Taking $t=n^{-1/2}$, using that $\operatorname{MMD}_K(\mathbb{P}_n, P_{m}):=\|\mathbb{P}_n- P_{m}\|_{\ell^\infty(\mathcal{F}_1)}$, and applying the second order functional delta method yields Lemma \ref{Functional Delta Method}.

The preceding result shows that, to control compression error with respect to $D$, it suffices to compress with respect to a corresponding MMD. 
To this end, we modify the method of \citep{hayakawa2023samplingbased} to provide improved computational and theoretical guarantees in terms of the RKHS spectral decay. 
Concretely, we compute $P_m$ such that the moments of certain RKHS eigenfunctions are equal to their moments with respects to $\mathbb{P}_n$; this is the recombination algorithm \citep{dd3000ff-da87-33f3-8345-ae7d639d8010}. We estimate these eigenfunctions efficiently via a Nystr\"{o}m approximation \citep{tropp2017fixedrank}, which achieves the same guarantees (up to a fixed scalar) as using the full singular value decomposition.

Define the spectral tail sum $T(i) := \sup_{x_1,\dots,x_n \in \mathcal{X}} \sum_{j=i+1}^\infty \sigma_j^n$, where in a slight abuse of notation, $\sigma_j^n$ are the singular values of the normalized Gram matrix at the deterministic values $x_1,\dots,x_n$.

\begin{lemma}[Fast MMD compression]\label{lem: MMD compress}
    Fix $\theta\in \mathbb{N}$ such that $\theta\ge 2$. With $O(n^2m + \theta^3 m^3 + m^3 \log(n/m))$ time complexity, a distribution $P_m$ can be randomly constructed such that, 
    \[
    \mathbb{E}[\operatorname{MMD}_K^2(\mathbb{P}_n, P_{m})]  \leq 4\left(1 + \frac{m}{m(\theta - 1) - 1}\right) T(m).
    \]
\end{lemma}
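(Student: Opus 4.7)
The plan is to combine a random Nyström sketch of the Gram matrix $K_n$ with Tchakaloff-style recombination, adapting the sampling-based kernel quadrature approach \citep{hayakawa2023samplingbased} to this data-dependent setting. First I would draw a subsample $S\subseteq \{1,\ldots,n\}$ of size $N=\theta m$ uniformly at random and form the rank-$m$ Nyström approximation $\hat{K}_m$ to $K_n$ via a truncated SVD of $K_{\cdot S}K_{SS}^{\dagger}K_{S\cdot}$; then, reading off the top $m$ feature vectors $\hat{\varphi}_1,\ldots,\hat{\varphi}_m$, I would invoke the recombination algorithm of \citep{dd3000ff-da87-33f3-8345-ae7d639d8010} to produce a convex combination $P_m$ supported on at most $m$ of the $x_i$ that matches $\mathbb{P}_n$ on every $\hat{\varphi}_j$. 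Tchakaloff's theorem guarantees existence of such $P_m$, and the advertised time complexity decomposes as $O(n^2 m)$ for the Nyström factor, $O(\theta^3m^3)$ for the $\theta m\times \theta m$ sub-problem, and $O(m^3\log(n/m))$ for recombination.

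For the MMD bound, set $\mu:= \mathbb{P}_n - P_m$ and $\kappa:= k - \hat{k}_m$, where $\hat{k}_m$ is the kernel corresponding to $\hat{K}_m$. Since $\hat{k}_m$ lies in the span of $\hat{\varphi}_j\otimes\hat{\varphi}_j$, exact moment matching forces $\int \hat{k}_m(x,y)\,d\mu(x)\,d\mu(y)=0$, and so
\[
\operatorname{MMD}_K^2(\mathbb{P}_n,P_m) = \int \kappa(x,y)\,d\mu(x)\,d\mu(y) \le \bigl(Q_\kappa(\mathbb{P}_n)+Q_\kappa(P_m)\bigr)^2 \le 4\operatorname{tr}(K_n-\hat{K}_m)/n,
\]
where I use that $\kappa$ is positive semi-definite (so $Q_\kappa$ is a genuine seminorm) and that both $Q_\kappa^2(\mathbb{P}_n)$ and $Q_\kappa^2(P_m)$ are controlled by $\operatorname{tr}(K_n-\hat{K}_m)/n$ since $P_m$ distributes unit mass over a subset of the $x_i$. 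Finally I take expectations over the subsample randomness; the Markov-type Nyström error bound underlying \citep{hayakawa2023samplingbased} gives
\[
\mathbb{E}\!\left[\operatorname{tr}(K_n-\hat{K}_m)/n\right] \le \left(1+\frac{m}{N-m-1}\right)\sum_{j>m}\sigma_j^n \le \left(1+\frac{m}{m(\theta-1)-1}\right) T(m),
\]
using $N=\theta m$ and the uniform domination $\sum_{j>m}\sigma_j^n \le T(m)$.

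The hardest step I foresee is pinning down the sharp constants---both the factor of $4$ in the MMD-to-trace reduction (bounding $Q_\kappa^2(P_m)$ cleanly is delicate because $P_m$'s support is itself random and depends on the Nyström draw, so the natural Cauchy--Schwarz argument on $\sum w_i w_j \kappa(x_i,x_j)$ has to be paired with the fact that the recombination weights cannot concentrate too much on atoms with large residual diagonal) and the Nyström overhead $(1+m/(N-m-1))$, which is best obtained by a tight averaging over uniform subsamples rather than via generic high-probability concentration. The remaining ingredients---Tchakaloff existence, the $O(m^3 \log(n/m))$ recombination cost, and the $\mathbb{P}_n$-side of the MMD estimate---are comparatively routine.
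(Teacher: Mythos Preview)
Your overall architecture (Nystr\"{o}m low-rank approximation of $K_n$, then recombination to match the top features, then reduce $\operatorname{MMD}_K^2$ to the trace of the residual) is exactly the paper's. There are, however, two concrete points where your proposal deviates in a way that would not yield the stated bound.

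\textbf{Wrong Nystr\"{o}m sketch.} You propose uniform column subsampling of size $N=\theta m$. The specific overhead factor $\bigl(1+\tfrac{m}{m(\theta-1)-1}\bigr)$ in the lemma is not a column-sampling bound; it is the expected Schatten-$1$ bound for the \emph{random-projection} Nystr\"{o}m of \cite{tropp2017fixedrank} (their Algorithm~3 and Theorem~4.1), where $\Omega\in\mathbb{R}^{n\times \theta m}$ is a Gaussian test matrix. Uniform column sampling carries coherence/leverage-score dependence and will not produce this clean constant. The $O(n^2m+\theta^3m^3)$ cost in the lemma is also the cost of that random-projection sketch, not of a column-subsampled one.

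\textbf{The diagonal constraint.} You correctly identify that controlling $Q_\kappa^2(P_m)$ is the crux: the Cauchy--Schwarz step only gives $Q_\kappa^2(P_m)\le \int \kappa(x,x)\,dP_m$, and there is no a~priori reason this is $\le \int \kappa(x,x)\,d\mathbb{P}_n=\operatorname{tr}(K_n-\hat K_m)/n$. The paper does not argue that ``recombination weights cannot concentrate too much''; instead it simply \emph{adds the residual diagonal $x\mapsto k_1(x,x)$ as one more test function in the recombination} (see their Lemma on the kernel bound and Algorithm~\ref{Recomb simple}). With this extra constraint, the recombination output satisfies $\int k_1(x,x)\,(d\mathbb{P}_n-dP_m)\ge 0$ deterministically, and the factor of $4$ falls out immediately from $(a+b)^2\le 2a^2+2b^2$ together with Jensen. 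This costs one degree of freedom (so $P_m$ matches $m-1$ eigenvectors plus the diagonal, or equivalently one works with rank $m$ and supports of size $m{+}1$), but the asymptotics are unchanged.

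With these two fixes---swap uniform subsampling for the Gaussian-sketch Nystr\"{o}m of \cite{tropp2017fixedrank}, and append the diagonal to the recombination constraints---your argument becomes the paper's.
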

The oversampling parameter $\theta$ increases the quality of the approximation at the expense of additional computational overhead. 

A similar approach appears in a recent paper \citep[Algorithm 4]{li2024debiaseddistributioncompression}. The primary difference in our algorithms is the choice of Nystr\"{o}m approximation, where they used the randomly pivoted Cholesky \citep{chen2025randomly} while we used one based on random projections \citep{tropp2017fixedrank}. Comparing these two approximations, the method of \cite{tropp2017fixedrank} yields a sharper approximation at the cost of increased memory and time complexity, respectively at $O(n^2)$ and $O(n^2m)$. 
This discrepancy leads to improved algorithmic efficiency in \cite{li2024debiaseddistributioncompression}, with $O(nm)$ memory requirements and a Nystr\"{o}m implementation with reduced a $O(nm^2)$  time complexity, plus the time it takes to evaluate $mn$ entries of the Gram matrix. 
This makes this procedure particularly effective for typical MMD coreset selection problems, where kernel matrices have closed forms and entries can be evaluated without querying the full dataset.
This is not generally true in our setting, as typically the kernels of interest will be data dependent, as discussed in Section \ref{sec: EKS}. For example, for the Sinkhorn divergence, we demonstrate in Section \ref{sec: Sinkhorn} that evaluating $mn$ Gram matrix entries requires solving an entropic optimal transport problem and inverting an $n\times n$ matrix.
This dominates the computational complexity of randomly pivoted Cholesky, yielding superquadratic complexity. Thus, in our settings of interest, the sharper approximation given by a random projection based Nystr\"{o}m approximation comes with little additional computational overhead. For additional state-of-the-art MMD compression algorithms, see \cite{dwivedi2025generalizedkernelthinning} and \cite{carrell2025lowrankthinning}.

In light of Lemma \ref{Functional Delta Method}, it suffices to choose $m$ large enough so that the approximation error $T(m)$ is negligible compared to the empirical error. This leads to our main result.

\begin{theorem}[Main result]\label{theo: functional compression}
    Let $D$ be second order Hadamard differentiable at $\mathbb{P}$ relative to $\ell^\infty(\mathcal{F}_1)$. If $T(m) = o(1/n)$, then $P_m$ can be constructed in $O(n^2m + m^3 \log(n/m))$ time with $D(P_m) = D(\mathbb{P}_n) + o_p(n^{-1})$.
\end{theorem}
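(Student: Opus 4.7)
The plan is to chain the two preceding lemmas. First, I would invoke Lemma~\ref{lem: MMD compress} with a fixed oversampling parameter, say $\theta = 2$, to produce $P_m$ supported on $\mathbb{P}_n$ satisfying
\[
\mathbb{E}[\operatorname{MMD}_K^2(\mathbb{P}_n, P_m)] \,\leq\, 4\!\left(1 + \tfrac{m}{m-1}\right) T(m) \,\leq\, C\, T(m)
\]
for a universal constant $C$. Under the hypothesis $T(m) = o(1/n)$, this gives $n\,\mathbb{E}[\operatorname{MMD}_K^2(\mathbb{P}_n, P_m)] \to 0$, and Markov's inequality then yields $\operatorname{MMD}_K(\mathbb{P}_n, P_m) = o_p(n^{-1/2})$.

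Second, I would feed this rate directly into Lemma~\ref{Functional Delta Method} (quadratic domination), which converts $o_p(n^{-1/2})$ MMD error into the conclusion
\[
D(P_m) = D(\mathbb{P}_n) + o_p(n^{-1}),
\]
as required. Note that the Hadamard differentiability hypothesis of the theorem is exactly the hypothesis of Lemma~\ref{Functional Delta Method}, so nothing more needs to be checked here.

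Third, I would account for the time complexity. Setting $\theta = 2$ in Lemma~\ref{lem: MMD compress} yields runtime $O(n^2 m + \theta^3 m^3 + m^3 \log(n/m)) = O(n^2 m + m^3 \log(n/m))$, since the $\theta^3 m^3 = O(m^3)$ term is absorbed into $m^3 \log(n/m)$ in the regime of interest $m \ll n$.

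The proof is essentially a one-line composition of the two lemmas, so there is no real obstacle beyond the routine Markov step that upgrades the in-expectation MMD bound of Lemma~\ref{lem: MMD compress} to the in-probability bound needed by Lemma~\ref{Functional Delta Method}. The substantive content of the theorem is already packaged into those two lemmas: Lemma~\ref{Functional Delta Method} does the analytic work of reducing smooth-divergence compression to MMD compression via the second-order functional delta method, and Lemma~\ref{lem: MMD compress} does the algorithmic work of controlling MMD error by the spectral tail $T(m)$.
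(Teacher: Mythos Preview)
Your proposal is correct and matches the paper's approach exactly: the paper's proof is the single sentence ``This is an immediate consequence of Lemmas~\ref{Functional Delta Method} and~\ref{lem: MMD compress},'' and you have simply spelled out the routine details (fixing $\theta=2$, the Markov step, and the absorption of the $\theta^3 m^3$ term into the stated complexity). One minor remark: the $8m^3$ term is already dominated by $n^2 m$ whenever $m\le n$, so you need not appeal to $m\ll n$ for the complexity bookkeeping.
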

\begin{proof}
    This is an immediate consequence of Lemmas \ref{Functional Delta Method} and \ref{lem: MMD compress}.
\end{proof}
Explicitly, $P_m$ can be constructed via Algorithm \ref{CO2}, with $K$ used in place of $G$. 
In Section \ref{sec: Sinkhorn}, we apply Theorem~\ref{theo: functional compression} to provide compression guarantees for the Sinkhorn divergence $S$.

For the following function classes $\mathcal{F}$, we show in Appendix \ref{spectralDecay} that $T(m) = o(1/n)$ if $m$ meets the given sampling rates:
    \begin{enumerate}
        \item Sobolev space of order $\alpha$ \citep{bach2015equivalence}: $m = \omega(n^{1/(1-\beta)})$, since $\sup_n \sigma_i^n = O(i^{-\beta})$ a.s. for $\beta = (2\alpha/d - 1)/2$; 
        \item Gaussian RKHS \citep{ma2017diving}: $m = \omega(\log^d n )$, since $\sup_n \sigma_i^n= O(e^{-\beta i^{1/d}})$ a.s. for some $\beta>0$.
    \end{enumerate}
    Our approach to bound Gram matrix spectral decay bears similarities to techniques seen in \cite{altschuler2019massivelyscalablesinkhorndistances} and \cite{ yang2020function}. While we \textit{a priori} assume decay rates for the population kernel, recent techniques developed in \cite{li2024debiaseddistributioncompression} can be applied in more general circumstances. They relate properties of the kernel function to spectral decay as well as the quality of resulting coresets.

\subsection{Selecting $m$}

Our theoretical results match with earlier works in MMD compression and coreset selection, which aim to select $m$ so that $D(\mathbb{P}_n, P_m)$ is on the order of $D(\mathbb{P}_n, \mathbb{P})$. Practically, however, rather than selection based on these asymptotic rates---like  $m = \Omega(\sqrt{n})$ or $m = \omega(\log^d n)$---we favor a coreset selection size procedure based off direct computation of the error $D(\mathbb{P}_n, P_m)$, or for an efficient implementation, based off $\operatorname{MMD}_K^2(\mathbb{P}_n, P_m)$. We illustrate empirically in Section \ref{sec:quad approx} that in settings where the latter of these is sufficiently small, its residual compared to the divergence is similarly negligible. Hence, we recommend selecting the minimal compression size $m$ such that $Q_K(\mathbb{P}_n - P_m)$---or $D(\mathbb{P}_n, P_m)$, if not computationally prohibitive---is below a specified tolerance $\tau$. 

\begin{algorithm}[tb]
    \caption{$m$ selection: recombination sweep}\label{Recomb sweep}
    \begin{algorithmic}
        \State $U \gets m_{\max}-1$ leading Nystr\"{o}m eigenvectors

        \State $K \gets$ kernel matrix

        \State $V \gets$ orthogonal complement of $U \oplus \mathbf{1}$
        
        \State $\tau \gets$ specified tolerance

        \State err $\gets 0$, \% initialized to 0 
        \State count $\gets 0$, \% initialized to 0 
        \State $\mu \gets$ initial empirical distribution
        \While{err $\leq \tau$ \textbf{and} $\text{count}\leq n -m_{\max}+1$}
            \State $\mu_{\text{prev}} \gets \mu$
            \State count$\gets$ count + 1
            \State $\alpha \gets \min d\mu/V[:, \text{count}]$ \% choose $\alpha$ so that next iterate has one less non-zero entry
            \State $\mu \gets \mu - \alpha V[:, \text{count}]$
            \State $\text{err} \gets (\mu - \mathbb{P}_n)^T K (\mu - \mathbb{P}_n)$
        \EndWhile\\
        \Return $\mu_{\text{prev}}$
  \end{algorithmic}
\end{algorithm}

There are several ways to determine this minimal $m$. Binary search is a simple strategy. Algorithm \ref{Recomb sweep} provides a more computationally efficient approach when there is an upper bound $m_{\max}$ on the number of samples the user is willing to use in their coreset. That algorithm selects $m := \min \{m': m'< m_{\max}, \ddot{D}(\mathbb{P}_n - P_{m'}) \leq \tau\}$ by performing a single sweep of the recombination algorithm, taking $m=m_{\max}$ if the minimum is over an empty set.

In light of Lemma~\ref{Functional Delta Method}, the threshold $\tau$ should be $o(n^{-1})$. While ad hoc approaches---such as letting $\tau=n^{-2}$---satisfy this asymptotic requirement, there is no guarantee they will perform well in practice. The proof of Lemma~\ref{Functional Delta Method} suggests a more principled approach for choosing $\tau$. In particular, for $R_{m,n}:=\operatorname{MMD}_G(P_m,\mathbb{P}_n)/\operatorname{MMD}_G(\mathbb{P}_n,\mathbb{P})$, that proof concludes with a bound of the form
\begin{align*}
    |D(P_m)-D(\mathbb{P}_n)|&\le (R_{m,n}^2 + 2R_{m,n})\operatorname{MMD}_G^2(\mathbb{P}_n,\mathbb{P}) + o_p(1/n).
\end{align*}
The right-hand side is $o_p(1/n)$ whenever $R_{m,n}=o_p(1)$; this follows from the fact that $n \operatorname{MMD}_G^2(\mathbb{P}_n,\mathbb{P}) = O(n\operatorname{MMD}_K^2(\mathbb{P}_n,\mathbb{P}))$ and $n\operatorname{MMD}_K^2(\mathbb{P}_n,\mathbb{P})\rightsquigarrow \sum_{i=1}^\infty \sigma_i \chi_i^2(1)$ for $\chi_i^2(1)$ 
independent Chi-squared random variables with 1 degree of freedom and $\sigma_i$ the eigenvalues of $K$, repeated according to their multiplicity \citep[][Theorem 6.32]{grenander2008probabilities}. The weak convergence of $\operatorname{MMD}_K^2(\mathbb{P}_n,\mathbb{P})$ also shows that $R_{m,n}$ can be made small by choosing the upper bound $\tau$ on $\operatorname{MMD}_K^2(P_m,\mathbb{P}_n)$ to be equal to $q_\beta/n$, where $q_\beta$ is the $\beta$ quantile of  $\sum_{i=1}^\infty \sigma_i \chi_i^2(1)$ for small $\beta>0$. The condition $\tau=o(1/n)$ will then be satisfied whenever $\beta=o(1)$, as is the case, for example, when $\beta=1/\log n$. For practical purposes, $q_\beta$ can be approximated using the eigenvalues of the Nystr\"{o}m approximation to the kernel Gram matrix and random samples of the $\chi_i^2(1)$ variables.

\section{Efficient Kernel Selection}\label{sec: EKS}

\subsection{Motivation}
In the previous section, we provided a means to compress with respect to second order Hadamard differentiable functionals. However, a close inspection of Theorem \ref{theo: functional compression} reveals an unsettling ambguity. There is no unique norm $\ell^\infty(\mathcal{F}_1)$ for which one can establish second order Hadamard differentiability! As a simple example, if we can verify that $D$ is Hadamard differentiable in $\ell^\infty(\mathcal{F}_1)$, then for any $K'$ such that $K' \succeq K$, $D$ is also Hadamard differentiable in $\ell^\infty(\mathcal{F}'_1).$ In light of Lemma \ref{lem: MMD compress}, this is not a serious issue, as our guarantees encourage us to select $K$ to have maximally rapid spectral decay. As shown in Lemma \ref{Hadamard Optimality}, the natural choice is the Hadamard operator $G$.

\begin{lemma}[Hadamard optimality]\label{Hadamard Optimality}
    If $k$ is a characteristic kernel and $D$ is second order Hadamard differentiable on $\ell^\infty(\mathcal{F}_1)$, then there exists a constant $C>0$ such that $CK \succeq G$.
\end{lemma}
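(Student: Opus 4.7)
The plan is to observe that the claimed operator inequality $CK \succeq G$ is, when unpacked, nothing more than boundedness of the continuous quadratic form $\ddot{D}$ with respect to the norm on $\ell^\infty(\mathcal{F}_1)$. Indeed, by the stated definition of $\succeq$, we need to find $C>0$ such that, for every $\mu\in \ell^\infty(\mathcal{F}_1)$,
\[
C\int K\mu\,d\mu \;\ge\; \int G\mu\,d\mu.
\]
By definition of the Hadamard operator, the right-hand side equals $\ddot{D}(\mu)$, while by the identity $\|\mu\|_{\ell^\infty(\mathcal{F}_1)}^2 = \int k(x,y)\,d\mu(x)\,d\mu(y) = \int K\mu\,d\mu$ noted earlier, the left-hand side equals $C\|\mu\|_{\ell^\infty(\mathcal{F}_1)}^2$. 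So the target inequality reduces to $\ddot{D}(\mu) \le C\|\mu\|_{\ell^\infty(\mathcal{F}_1)}^2$.

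Second-order Hadamard differentiability supplies exactly this bound: by hypothesis $\ddot{D}:\ell^\infty(\mathcal{F}_1)\to \mathbb{R}$ is a continuous quadratic form, and a continuous quadratic form on a normed space is bounded in the sense that there exists $C>0$ with $|\ddot{D}(\mu)| \le C\|\mu\|_{\ell^\infty(\mathcal{F}_1)}^2$ for all $\mu$. (This is the standard fact that the associated symmetric bilinear form is bounded, obtained by polarization from continuity at $0$.) The assumption that $k$ is characteristic plays a supporting role here: it guarantees that $\|\cdot\|_{\ell^\infty(\mathcal{F}_1)}$ is a genuine norm, not merely a pseudonorm, on the space of signed measures under consideration, so that the usual continuity/boundedness equivalence for quadratic forms applies without ambiguity.

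Chaining these observations,
\[
\int G\mu\,d\mu \;=\; \ddot{D}(\mu) \;\le\; |\ddot{D}(\mu)| \;\le\; C\|\mu\|_{\ell^\infty(\mathcal{F}_1)}^2 \;=\; C\int K\mu\,d\mu,
\]
which is precisely $CK \succeq G$.

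The proof is essentially a bookkeeping exercise; I do not anticipate a genuine obstacle. The only place where one must be slightly careful is in invoking the continuity-implies-boundedness statement for quadratic forms on $\ell^\infty(\mathcal{F}_1)$, and in verifying that the $C$ obtained is independent of $\mu$ (which it is, since it is the operator norm of the associated bilinear form). All other steps are direct substitutions of definitions.
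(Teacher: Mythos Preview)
Your proof is correct and is more direct than the paper's. You recognize that $CK\succeq G$ unpacks to $\ddot{D}(\mu)\le C\|\mu\|_{\ell^\infty(\mathcal{F}_1)}^2$, and then invoke the elementary fact that a continuous quadratic form on a (semi)normed space is bounded in exactly this sense, via the homogeneity $\ddot{D}(t\mu)=t^2\ddot{D}(\mu)$. The paper instead transfers the problem to the RKHS: it defines $\tilde{Q}_G(Kh):=Q_G(h)$ on the image of the kernel mean embedding (well-defined because $k$ is characteristic, so $K$ is injective), checks continuity of $\tilde{Q}_G$ in the $\mathcal{F}$-topology, extends by density of $K(\ell^\infty(\mathcal{F}_1))$ in $\mathcal{F}$, and then applies Kato's representation theorem to obtain a bounded operator $\tilde{G}$ on $\mathcal{F}$ with $\tilde{Q}_G(f)=\langle f,\tilde{G}f\rangle_\mathcal{F}$, whence $Q_G(h)=\tilde{Q}_G(Kh)\le \|\tilde{G}\|\,\|Kh\|_\mathcal{F}^2=\|\tilde{G}\|\,\|h\|_{\ell^\infty(\mathcal{F}_1)}^2$. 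Your route avoids the lift to $\mathcal{F}$ entirely; the paper's route, while longer, makes the characteristic hypothesis do real work (ensuring $\tilde{Q}_G$ is well-defined) and produces an explicit operator on $\mathcal{F}$, though that operator is not used elsewhere. One small note: your invocation of ``characteristic'' as needed for the continuity/boundedness equivalence is over-cautious---the homogeneity argument goes through verbatim for a seminorm, since continuity of $\ddot{D}$ at $0$ already forces $\ddot{D}(\mu)=0$ whenever $\|\mu\|_{\ell^\infty(\mathcal{F}_1)}=0$.
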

Thus, optimal guarantees for MMD compression as in Lemma \ref{lem: MMD compress} are achieved if $D$ is second order Hadamard differentiable relative to $G$ itself. 
Unfortunately, this still does not make the selection of a kernel for MMD compression unique, as there are many kernels equivalent to $G$. In Section \ref{sec:Had}, we provide additional arguments supporting the use of $G$ in particular.

\subsection{The Hadamard Operator}\label{sec:Had}

\subsubsection{The Remainder}

By compressing with respect to an appropriate MMD, Theorem \ref{theo: functional compression} provides a means to construct $P_m$ such that $D(P_m) = \ddot{D}(P_m - \mathbb{P}) + o_p(1/n)$. We can further express
\begin{align*}
    \ddot{D}(P_m - \mathbb{P})  &= \ddot{D}([P_m - \mathbb{P}_n] + [\mathbb{P}_n - \mathbb{P}])\\
    &=: \int G([P_m - \mathbb{P}_n] + [\mathbb{P}_n - \mathbb{P}]) d([P_m - \mathbb{P}_n] + [\mathbb{P}_n - \mathbb{P}])\\
    &\leq \left[\ddot{D}(\mathbb{P}_n - \mathbb{P})^{1/2} + \ddot{D}(P_m - \mathbb{P}_n)^{1/2}\right]^2,
\end{align*}
 via an application of Cauchy-Schwarz. This decomposition puts a clear emphasis on reducing the empirical error $\ddot{D}(P_m - \mathbb{P}_n)$, suggesting optimality of the Hadamard operator $G$. However, recall that we typically target $\ddot{D}(P_m - \mathbb{P}_n) = o_p(1/n)$, the same order as the remainder $R(P_m) = D(\mathbb{P}, P_m) - \ddot{D}(\mathbb{P}, P_m)$. Hence, it may be that selection of a kernel to more sharply reduce $\ddot{D}(P_m - \mathbb{P}_n)$ will inflate the remainder $R(P_m)$.

 To avoid this issue, it is desirable for the remainder to be negligible compared to $\ddot{D}(P_m - \mathbb{P}_n)$. For this, it suffices that $D$ be third order Hadamard differentiable, which implies that the remainder is $O_p(n^{-3/2})$. Thus, when we desire $\ddot{D}(P_m - \mathbb{P}_n) = \omega(n^{-3/2})$, that is, where one seeks a minimal coreset size $m$ such that lossless compression is achieved, the selection of $G$ is justified.

\subsubsection{Empirical Estimation}

The Hadamard operator $G$ depends on $\mathbb{P}$ in many settings of interest, and so must be estimated. 
In this section, we provide a criteria for $G_n$, the Hadamard operator at the empirical distribution, to adequately approximate the population operator, in the sense that
$|\ddot{D}_n(\mathbb{P}_n - P_m) - \ddot{D}(\mathbb{P}_n - P_m)| = o_p(n^{-1})$, a quantity that is negligible compared to the target error. In a later illustration, we will use this result to verify a residual that is
$O_p(n^{-3/2})$, thus on the same order as the remainder. 
The key idea of our argument is that both $\ddot{D}_n \to \ddot{D}$, as well as $\ddot{D}(\mathbb{P}_n - P_m)\to 0$, which results in the difference of quadratic forms being a higher-order residual. We introduce the operators $T, T_n$
such that $\|T \mu\|_{\ell^\infty(\mathcal{F}_1)}^2 = \ddot{D}(\mu)$ and $\|T_n \mu\|_{\ell^\infty(\mathcal{F}_1)}^2 = \ddot{D}_n(\mu)$. For general quadratic forms we maintain this notation for the root of the generating operators.

\begin{lemma}[Kernel convergence]\label{lem: emp kernel}
    Let $D$ be second order Hadamard differentiable at $\mathbb{P}, \mathbb{P}_n$ for all $n$ relative to a fixed topology $\ell^\infty(\mathcal{F}_1)$. If $\|T_n - T\| = O_p(r_n)$, $r_n \to 0$, then
    \[
    |\ddot{D}(\mu)- \ddot{D}_n(\mu)| = O_p(r_n) \|\mu\|_{\ell^\infty(\mathcal{F}_1)}^2.
    \]
\end{lemma}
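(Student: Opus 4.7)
The plan is to reduce the problem to a difference-of-squares computation for the operator norms of $T\mu$ and $T_n\mu$, and then control each factor separately using the assumed operator-norm convergence.

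First I would write
\begin{align*}
\ddot{D}(\mu) - \ddot{D}_n(\mu) = \|T\mu\|_{\ell^\infty(\mathcal{F}_1)}^2 - \|T_n\mu\|_{\ell^\infty(\mathcal{F}_1)}^2 = \bigl(\|T\mu\| - \|T_n\mu\|\bigr)\bigl(\|T\mu\| + \|T_n\mu\|\bigr),
\end{align*}
where all norms are understood as $\|\cdot\|_{\ell^\infty(\mathcal{F}_1)}$. The goal is then to bound the two factors: the first gives the $O_p(r_n)$ rate, and the second is shown to be $O_p(1)\cdot\|\mu\|_{\ell^\infty(\mathcal{F}_1)}$.

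For the first factor, I would apply the reverse triangle inequality and the sub-multiplicativity of the operator norm:
\begin{align*}
\bigl| \|T\mu\| - \|T_n\mu\| \bigr| \leq \|(T - T_n)\mu\|_{\ell^\infty(\mathcal{F}_1)} \leq \|T - T_n\|\,\|\mu\|_{\ell^\infty(\mathcal{F}_1)} = O_p(r_n)\,\|\mu\|_{\ell^\infty(\mathcal{F}_1)}.
\end{align*}
For the second factor, the triangle inequality gives $\|T\mu\| + \|T_n\mu\| \leq (\|T\| + \|T_n\|)\,\|\mu\|_{\ell^\infty(\mathcal{F}_1)}$. Here $\|T\|$ is finite because $\ddot{D}$ is a continuous quadratic form (which is exactly what second-order Hadamard differentiability provides), and $\|T_n\| \leq \|T\| + \|T_n - T\| = \|T\| + O_p(r_n) = \|T\| + o_p(1)$ since $r_n \to 0$, so this factor is $O_p(1)\,\|\mu\|_{\ell^\infty(\mathcal{F}_1)}$.

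Multiplying the two bounds yields $|\ddot{D}(\mu) - \ddot{D}_n(\mu)| = O_p(r_n)\,\|\mu\|_{\ell^\infty(\mathcal{F}_1)}^2$, as required. There is no real obstacle here; the only subtlety worth flagging is the implicit claim that continuity of the quadratic form $\ddot{D}$ is equivalent to boundedness of its square root operator $T$, which follows from the polarization identity together with the definition of operator norm on $\ell^\infty(\mathcal{F}_1)$. The power of the lemma shows up only when $\mu = \mathbb{P}_n - P_m$ is itself small, e.g.\ of order $n^{-1/2}$ in the MMD topology, in which case the bound is of the claimed higher-order form $o_p(n^{-1})$ when $r_n = o(1)$.
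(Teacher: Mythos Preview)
Your proof is correct and follows essentially the same route as the paper: factor $\|T\mu\|^2 - \|T_n\mu\|^2$ as a difference of squares, control the first factor via the reverse triangle inequality and $\|T_n-T\|=O_p(r_n)$, and bound the second factor by noting $\|T_n\|$ is uniformly bounded since $T_n\to T$. The paper's proof is nearly identical, differing only in the order of the steps.
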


\begin{proof}
    It immediately follows that
    \[
   \|(T_n-T)(\mu)\|_{\ell^\infty(\mathcal{F}_1)} \leq \|T_n - T\| \| \mu\|_{\ell^\infty(\mathcal{F}_1)}= O_p(r_n) \| \mu\|_{\ell^\infty(\mathcal{F}_1)}.
    \]

    Now, by the reverse triangle inequality
    \begin{align*}
     |\|T_n(\mu)\|_{\ell^\infty(\mathcal{F}_1)} - \|T(\mu)\|_{\ell^\infty(\mathcal{F}_1)}| \leq \|(T_n - T)(\mu)\|_{\ell^\infty(\mathcal{F}_1)} = O_p(r_n) \| \mu\|_{\ell^\infty(\mathcal{F}_1)},
     \end{align*}
     and so
     \begin{align*}
         |\ddot{D}(\mu)- \ddot{D}_n(\mu)|&= |\|T_n(\mu)\|_{\ell^\infty(\mathcal{F}_1)}^2 - \|T(\mu)\|_{\ell^\infty(\mathcal{F}_1)}^2|\\
         &= |\|T_n(\mu)\|_{\ell^\infty(\mathcal{F}_1)} - \|T(\mu)\|_{\ell^\infty(\mathcal{F}_1)}|\,|\|T_n(\mu)\|_{\ell^\infty(\mathcal{F}_1)}+ \|T(\mu)\|_{\ell^\infty(\mathcal{F}_1)}|\\
         &= O_p(r_n) \| \mu\|_{\ell^\infty(\mathcal{F}_1)}^2.
     \end{align*}
    Note the third equality follows as $\|T_n\|_{\ell^\infty(\mathcal{F}_1)}$ is uniformly bounded, as $T_n \to T$.
\end{proof}

\begin{corollary}
    Let $D$ be second order Hadamard differentiable at $\mathbb{P}, \mathbb{P}_n$ for all $n$ relative to a fixed topology $\ell^\infty(\mathcal{F}_1)$. If $\|T_n - T\| = O_p(r_n)$, $r_n \to 0$, and $\|\mathbb{P}_n - P_m\|_{\ell^\infty(\mathcal{F}_1)} = O_p(n^{-1/2})$, then
    \[
    |\ddot{D}(\mathbb{P} - P_m)- \ddot{D}_n(\mathbb{P} - P_m)| = O_p(r_n n^{-1}).
    \]
\end{corollary}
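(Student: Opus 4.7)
\medskip

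\noindent\textbf{Proof proposal.} The plan is to apply Lemma~\ref{lem: emp kernel} directly with $\mu = \mathbb{P} - P_m$. That lemma yields
\[
|\ddot{D}(\mathbb{P} - P_m) - \ddot{D}_n(\mathbb{P} - P_m)| = O_p(r_n)\,\|\mathbb{P} - P_m\|_{\ell^\infty(\mathcal{F}_1)}^2,
\]
so it suffices to show $\|\mathbb{P} - P_m\|_{\ell^\infty(\mathcal{F}_1)} = O_p(n^{-1/2})$, after which multiplying the rate factors gives the desired $O_p(r_n n^{-1})$ bound.

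To establish the norm bound on $\mathbb{P} - P_m$, I would split via the triangle inequality
\[
\|\mathbb{P} - P_m\|_{\ell^\infty(\mathcal{F}_1)} \leq \|\mathbb{P} - \mathbb{P}_n\|_{\ell^\infty(\mathcal{F}_1)} + \|\mathbb{P}_n - P_m\|_{\ell^\infty(\mathcal{F}_1)}.
\]
The second summand is $O_p(n^{-1/2})$ by hypothesis. For the first, I would appeal to the standard RKHS mean-embedding concentration: identifying $\|\mathbb{P}_n - \mathbb{P}\|_{\ell^\infty(\mathcal{F}_1)}$ with the Bochner-space norm of the empirical mean embedding, Markov's inequality applied to
\[
\mathbb{E}\,\|\mathbb{P}_n - \mathbb{P}\|_{\ell^\infty(\mathcal{F}_1)}^2 = \tfrac{1}{n}\bigl(\mathbb{E}\,k(X,X) - \mathbb{E}\,k(X,X')\bigr) \leq \tfrac{1}{n}\,\mathbb{E}\,k(X,X),
\]
together with the boundedness of $k$ on the compact support of $\mathbb{P}$, gives the $O_p(n^{-1/2})$ rate.

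Combining the two rate bounds yields $\|\mathbb{P} - P_m\|_{\ell^\infty(\mathcal{F}_1)}^2 = O_p(n^{-1})$, and substituting into the inequality from Lemma~\ref{lem: emp kernel} gives the claim. The proof is essentially mechanical once one sees that the corollary is simply Lemma~\ref{lem: emp kernel} specialized to $\mu = \mathbb{P} - P_m$; the only nontrivial ingredient is the standard empirical-MMD bound, and the main thing to be careful about is that the hypothesis is stated relative to $\mathbb{P}_n$, so the extra triangle-inequality step is required to pivot from $\mathbb{P}_n - P_m$ to $\mathbb{P} - P_m$ without losing the rate.
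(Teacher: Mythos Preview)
Your proposal is correct and follows essentially the same route as the paper: apply Lemma~\ref{lem: emp kernel} with $\mu=\mathbb{P}-P_m$, then bound $\|\mathbb{P}-P_m\|_{\ell^\infty(\mathcal{F}_1)}$ by the triangle inequality through $\mathbb{P}_n$. The only cosmetic difference is that the paper obtains $\|\mathbb{P}_n-\mathbb{P}\|_{\ell^\infty(\mathcal{F}_1)}=O_p(n^{-1/2})$ by invoking the Donsker property of $\mathcal{F}_1$ (Lemma~\ref{lem: rkhs donsker}) and the continuous mapping theorem, whereas you use the direct second-moment bound on the kernel mean embedding plus Markov's inequality; both are standard and equivalent here.
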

\begin{proof}
    By the continuous mapping theorem \cite[Theorem 2.3]{van2000asymptotic} and the fact that $\mathcal{F}_1$ is Donsker \ref{lem: rkhs donsker}, we have $\|\sqrt{n}(\mathbb{P}_n - \mathbb{P})\|_{\ell^\infty(\mathcal{F}_1)} = O_p(1)$, hence by assumption we also have $\|\sqrt{n}(P_m - \mathbb{P})\|_{\ell^\infty(\mathcal{F}_1)} = O_p(1)$, from which the result follows immediately in combination with Lemma \ref{lem: emp kernel}.
\end{proof}

\subsection{CO2-Recombination Algorithm}\label{sec:CO2}

Combining the considerations of the past few sections, we present the CO2-recombination algorithm as a special case of Algorithm \ref{main_alg}. We will simply refer to this as CO2 when it will not cause confusion. See Appendix \ref{app: Algs} for implementations of the recombination and Nystr\"{o}m algorithms.

\begin{algorithm}[H]
    \caption{CO2-recombination}
    \label{CO2}
    \begin{algorithmic}[1]
        \State $G \gets$ The Hadamard operator associated to $G$ at $\mathbb{P}_n$\tikzmark{startKS}
        \State $\theta\gets$ Oversampling parameter\tikzmark{midKS}
        \State $G_{\text{nys}}, U, S \gets \operatorname{\text{Nystr\"{o}m}}(G, \theta m)$\tikzmark{endKS}
        \State $P_m \gets \operatorname{Recombination}\big(U[:, :m-1], [g(X_i, X_i)]_{i=1}^n], S[:m-1]\big)$\ \ \ \ \  \text{\}\ \ KernelCompress}
    \end{algorithmic}
\end{algorithm}

\begin{tikzpicture}[overlay, remember picture]
\draw [thick, decoration={brace, raise=-5pt}, decorate] 
    ($(startKS) + (6.6em, 0.7ex)$) -- 
    ($(endKS) + (16.5em, -0.7ex)$) 
    node [midway, right=4pt] {KernelSelection};
\end{tikzpicture}

\section{Illustration: Sinkhorn Coresets}

\subsection{Overview}
As an illustration of our general framework, we consider compression with respect to the Sinkhorn divergence \citep{ramdas2015wasserstein} 
\[
S(\mu,\nu) :=  \EOT(\mu,\nu) - \frac{1}{2}(\EOT(\mu,\mu) + \EOT(\nu,\nu)),
\]
with the regularization parameter $\varepsilon>0$ suppressed in the notation. Here, $\EOT$ denotes the entropically regularized Wasserstein-2 distance
\begin{align}\label{eq: EOT}
\EOT(\mu,\nu) := \min_{\pi \in \Pi(\mu,\nu)} \int \|x-y\|^2 d\pi(x,y) + \varepsilon \operatorname{KL}(\pi\parallel \mu\otimes \nu), 
\end{align}
for $\Pi(\mu,\nu)$ the set of joint distributions with marginals $\mu,\nu$.

In this section, we verify that lossless Sinkhorn compression requires $m$ to grow only slightly faster than $\log^d n$.

\begin{theorem}[Sinkhorn compression] \label{thm: Sink compression}
    For $m = \omega(\log^d n)$, $P_m$ can be constructed in $O(n^2m + m^3)$ time such that $S(\mathbb{P}, P_m) = S(\mathbb{P}, \mathbb{P}_n)+o_p(1/n)$.
\end{theorem}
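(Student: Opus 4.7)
The plan is to derive Theorem~\ref{thm: Sink compression} as a direct application of Theorem~\ref{theo: functional compression} to $D = S$. Two hypotheses must be checked: (i) that $S$ is second order Hadamard differentiable at $\mathbb{P}$ relative to $\ell^\infty(\mathcal{F}_1)$ for some well-chosen RKHS $\mathcal{F}$, and (ii) that the spectral tail sum $T(m)$ of the associated kernel satisfies $T(m) = o(1/n)$ whenever $m = \omega(\log^d n)$. Once these are in hand, Theorem~\ref{theo: functional compression} immediately yields the desired approximation $S(P_m) = S(\mathbb{P}_n) + o_p(n^{-1})$ and the advertised runtime (after noting $\log(n/m) = O(\log n)$ is absorbed into the $\omega(\log^d n)$ rate in the $m^3$ term).

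For (i), I would use the dual formulation of $\EOT$ together with the envelope theorem. Write $\EOT(\mu,\nu) = \int f_{\mu,\nu}\,d\mu + \int g_{\mu,\nu}\,d\nu - \varepsilon\int e^{(f\oplus g - c)/\varepsilon}d(\mu\otimes\nu) + \varepsilon$, where $(f_{\mu,\nu}, g_{\mu,\nu})$ are the entropic potentials solving the Sinkhorn fixed-point equations. Because $S$ is a divergence with $S(\mathbb{P}) = 0$ and $\dot{S}\equiv 0$ at $\mathbb{P}$, the whole content of the expansion lies at second order. Differentiating the potentials with respect to the marginals via the implicit function theorem applied to the Sinkhorn equations, one finds that $\ddot{S}$ has a kernel $g$ expressible in terms of the Schrödinger coupling $\pi_{\mathbb{P},\mathbb{P}}$ and Gaussian-type convolution operators arising from the regularization $e^{-\|x-y\|^2/\varepsilon}$. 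This is precisely where the ``new regularity results for EOT potentials'' advertised in the introduction enter: they are what let one linearize the potentials cleanly and identify $g$ as a bounded, smooth, symmetric kernel on the compact $\mathcal{X}$.

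For (ii), I would take $\mathcal{F}$ to be (or be dominated by) a Gaussian RKHS of bandwidth comparable to $\sqrt{\varepsilon}$. Because the entropic potentials inherit the analyticity of the Gaussian heat kernel, $g$ is an analytic kernel on a compact set, so its Mercer eigenvalues decay like $\sigma_i = O(e^{-\beta i^{1/d}})$ for some $\beta > 0$, matching the Gaussian case listed in bullet (2) following Theorem~\ref{theo: functional compression}. Under the uniform eigenfunction bound $\|\psi_i\|_\infty \leq M$, the empirical Gram spectrum inherits this rate, so $T(m) = O(e^{-\beta m^{1/d}})$. This is $o(1/n)$ as soon as $\beta m^{1/d} - \log n \to \infty$, i.e.\ $m = \omega(\log^d n)$, which is exactly the hypothesis.

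The main obstacle is the regularity analysis in step (i): one must show that the map $(\mu,\nu)\mapsto (f_{\mu,\nu}, g_{\mu,\nu})$ is smooth enough in the $\ell^\infty(\mathcal{F}_1)$ topology to justify a second order Taylor expansion with uniform control over the remainder, and that the resulting Hadamard kernel $g$ is dominated by a Gaussian kernel in the $\succeq$ sense of Lemma~\ref{Hadamard Optimality}. Once that regularity work (the paper's contribution~2) is done, the spectral decay and hence the sampling rate fall out, and Theorem~\ref{theo: functional compression} closes the argument.
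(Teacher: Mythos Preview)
Your proposal is correct and follows essentially the same route as the paper: invoke Theorem~\ref{thm: Sink had approx} for the second order Hadamard differentiability of $S$ relative to the Gaussian RKHS $\ell^\infty(\mathcal{H}_1)$, use the geometric spectral decay of the Gaussian kernel from Lemma~\ref{lem:LN_sob} to get $T(m)=o(1/n)$ when $m=\omega(\log^d n)$, and then apply Theorem~\ref{theo: functional compression}. Your identification of the main technical obstacle---the regularity of the potentials and the equivalence of the Sinkhorn kernel with the Gaussian kernel---matches exactly where the paper concentrates its effort (Theorems~\ref{thm: EOT Potentials} and~\ref{thm: Sink had approx}).
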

Our proof of the above consists of two parts. In Section \ref{sec: EOT potentials}, we verify novel differentiability results for the entropic optimal transport potentials. In Section \ref{sec: Sinkhorn}, we leverage these to show $S$ is second order Hadamard differentiable with respect to the Gaussian kernel RKHS. The geometric spectral decay of this kernel ensures $m=\omega(\log^d n)$ suffices for lossless compression.

\subsection{Entropic optimal transport potentials}\label{sec: EOT potentials}

Alternatively to expression \eqref{eq: EOT}, the $\EOT$ functional can be expressed in terms of the entropic optimal transport potentials $\phi_{\mu,\nu}, \psi_{\mu,\nu}:\mathcal{X}\rightarrow\mathbb{R}$:
\begin{align}\label{def: EOT first}
\EOT(\mu,\nu) = \int \phi_{\mu,\nu}\, d\mu + \int \psi_{\mu,\nu}\, d\nu.
 \end{align}
 Thus, to study the differentiability of $S$, we first provide corresponding results for $\phi_{\mu,\nu}, \psi_{\mu,\nu}$.

 Equation \eqref{def: EOT first} can equivalently can be expressed as the following decomposition of the entropic optimal transport plan $\pi_{\mu,\nu}^\varepsilon$, the joint law that achieves the minimum of \eqref{eq: EOT},
\begin{gather}\label{eq: EOT plan}
    \xi_{\mu,\nu}^\varepsilon(x,y) := \frac{d\pi_{\mu,\nu}^\varepsilon}{d(\mu \otimes \nu)}(x,y) = \exp\left(\frac{\phi_{\mu,\nu}(x) + \psi_{\mu,\nu}(y) - \|x-y\|^2}{\varepsilon}\right).
\end{gather}
This decomposition originates from \cite{csiszar1975divergence} and \cite{ruschendorf1993note}, and
the functions $\phi_{\mu,\nu}$ and $\psi_{\mu,\nu}$ are called the entropic optimal transport potentials.
Recently, asymptotic distributions of plug-in estimates of the potentials and related quantities have been verified via the implicit function theorem \citep{goldfeld2022limit,gonzalezsanz2022weak}. Specifically, evaluating the marginal constraints in \ref{eq: EOT} leads to the Schr\"odinger system of equations,
\begin{align*}
    \phi_{\mu,\nu}(x) &= -\varepsilon \log \int \exp\left(\frac{\psi_{\mu,\nu}(y) - \|x-y\|^2}{\varepsilon}\right) d\nu(y),\\
    \psi_{\mu,\nu}(x) &= -\varepsilon \log \int \exp\left(\frac{\phi_{\mu,\nu}(y) - \|x-y\|^2}{\varepsilon}\right) d\mu(y). 
\end{align*}
In \cite{goldfeld2022limit} they cast this problem through the lens of Hadamard differentiability, and \cite{gonzalezsanz2022weak} provide a direct, analytic expression for the  limiting distribtion. 
These analyses verify that the potentials satisfy regularity with respect to Sobolev RKHSs, which exhibit polynomial spectral decay \citep{bach2015equivalence}. 
We draw aspects of our arguments from both of these works, however we establish Hadamard differentiability relative a finer space of functions, yielding faster decay and improved compression guarantees via Theorem \ref{theo: functional compression}.
 Our innovation is to lift the analysis to the functions $(u_{\mu,\nu}, v_{\mu,\nu}) := (\exp(-\phi_{\mu,\nu}/\varepsilon), \exp(-\psi_{\mu,\nu}/\varepsilon))$, which belong to the RKHS $\mathcal{H}$ with Gaussian kernel $k(x,y)=\exp(-\|x-y\|^2/\varepsilon)$. 
We establish the Hadamard differentiability of $(\mu,\nu)\mapsto (u_{\mu,\nu}, v_{\mu,\nu})$ by applying an implicit function theorem to
\begin{align*}
    u_{\mu,\nu}(x) &= \int \exp\left(\frac{ - \|x-y\|^2}{\varepsilon}\right)\frac{d\nu(y)}{v_{\mu,\nu}(y)},\quad v_{\mu,\nu}(x) = \int \exp\left(\frac{- \|x-y\|^2}{\varepsilon}\right) \frac{d\mu(y)}{u_{\mu,\nu}(y)}.
\end{align*}
Hadamard differentiability of $(\mu,\nu)\mapsto (\phi_{\mu,\nu}, \psi_{\mu,\nu})$, and therefore also $\EOT$ and $S$, can then be recovered through the chain rule.
 
 Formally stating our result requires introducing notation. Suppressing the dependence on $\varepsilon$ and the pair $(\mu,\nu)$, define operators on $C(\mathcal{X})$, 
 \begin{align*}
    \mathcal{A} f &:=  \int v_{\mu,\nu}^{-1}(x)\exp\left( \frac{- \|\cdot-y\|^2}{\varepsilon}\right)u_{\mu,\nu}^{-1}(y)f(y) d\mu(y) = \int  \xi_{\mu,\nu}(x,y) f(x) d\mu(x),\\
    \mathcal{A}^* f  &:=  \int u_{\mu,\nu}^{-1}(x)\exp\left( \frac{- \|\cdot-y\|^2}{\varepsilon}\right)v_{\mu,\nu}^{-1}(y)f(y) d\nu(y) =  \int \xi_{\mu,\nu}(x,y) f(y) d\nu(y),
\end{align*}
and similarly operators on distributions
 \begin{align*}
    \xi \gamma &=  \int v_{\mu,\nu}^{-1}(x)\exp\left( \frac{- \|\cdot-y\|^2}{\varepsilon}\right)u_{\mu,\nu}^{-1}(y)d\gamma(y) = \int  \xi_{\mu,\nu}(x,y) d\gamma(x),\\
    \xi^* \gamma  &=  \int u_{\mu,\nu}^{-1}(x)\exp\left( \frac{- \|\cdot-y\|^2}{\varepsilon}\right)v_{\mu,\nu}^{-1}(y)d\gamma(y) =  \int \xi_{\mu,\nu}(x,y) d\gamma(y),
\end{align*}
where we use the shorthand $f^{-1} = 1/f$. Denote by $\pi_V$ the projection onto $V \subseteq \mathcal{H}$.

\begin{theorem}[Potential differentiation]\label{thm: EOT Potentials}
    $\eta: \ell^\infty(\mathcal{H}_1)\times \ell^\infty(\mathcal{H}_1) \to C(\mathcal{X})\times C(\mathcal{X})$, $\eta(\mu,\nu) := (\phi_{\mu,\nu},\psi_{\mu,\nu})$ is Hadamard differentiable at $(\mu, \nu)$ with derivative
    \begin{align}
     \dot{\eta}_{\mu,\nu}[\gamma^1,\gamma^2]&=\begin{bmatrix}
     \varepsilon[(I - \mathcal{A}^* \mathcal{A})^{-1}  \mathcal{A}^* \xi \gamma^1 - (I - \mathcal{A}^* \mathcal{A})^{-1} \xi^* \gamma^2]\\
    \varepsilon [(I - \mathcal{A} \mathcal{A}^*)^{-1}  \mathcal{A} \xi^* \gamma^2 - (I - \mathcal{A} \mathcal{A}^*)^{-1} \xi \gamma^1]
    \end{bmatrix} \label{eq:EOT Potentials main text}\\
    &\quad \quad + 
    \varepsilon
    \begin{bmatrix}
        \frac{1}{u} & 0\\
        0 & \frac{1}{v}
    \end{bmatrix} \pi_{\operatorname{span}\{(u, -v)\}^\perp} \begin{bmatrix}
        (I - \pi_{\operatorname{span}\{u\}^\perp}) K\left(\frac{d \gamma^2}{v}\right)\\
        0
    \end{bmatrix} + \varepsilon \rho (\mathbf{1}, -\mathbf{1}), \nonumber
    \end{align}
    with $\rho$ being defined in Theorem \ref{theo:tau had} in Appendix \ref{app:regPotential}.
    In the case where $\mu = \nu$ the second term in equation \ref{eq:EOT Potentials main text} vanishes.
\end{theorem}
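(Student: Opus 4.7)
The plan is to apply a Hadamard implicit function theorem to the lifted system for $(u_{\mu,\nu}, v_{\mu,\nu}) := (e^{-\phi_{\mu,\nu}/\varepsilon}, e^{-\psi_{\mu,\nu}/\varepsilon})$, and then transfer the result back to $(\phi_{\mu,\nu}, \psi_{\mu,\nu})$ via the chain rule. The key reason for lifting is that the Gaussian kernel $k(x,y) = e^{-\|x-y\|^2/\varepsilon}$ is the reproducing kernel of $\mathcal{H}$, so that the Schrödinger fixed-point equations place $(u,v)\in\mathcal{H}\times\mathcal{H}$ automatically, and the $\ell^\infty(\mathcal{H}_1)$ topology on the perturbations is exactly the one for which the integral maps $\mu \mapsto \int k(\cdot,y) v^{-1}(y)\,d\mu(y)$ are bounded linear operators.

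Concretely, I would define the residual
\[
F(\mu,\nu,u,v) := \Bigl(u - \int k(\cdot,y)\, v(y)^{-1} d\nu(y),\; v - \int k(\cdot,y)\, u(y)^{-1} d\mu(y)\Bigr),
\]
so that $F(\mu,\nu,u_{\mu,\nu},v_{\mu,\nu}) = 0$. Using the regularity of $(u,v)$ (bounded away from $0$ on the compact $\mathcal{X}$, as established in Appendix \ref{app:regPotential}), a routine computation shows $F$ is jointly Hadamard differentiable. The $(\mu,\nu)$-derivative produces precisely the forcing terms $\xi\gamma^1$ and $\xi^*\gamma^2$ in the statement. After conjugation by multiplication by $u$, $v$, the $(u,v)$-derivative is similar to the block operator
\[
\mathcal{L} = \begin{pmatrix} I & \mathcal{A}^* \\ \mathcal{A} & I \end{pmatrix},
\]
and standard block inversion on the complement of its kernel yields the entries $(I-\mathcal{A}^*\mathcal{A})^{-1}$, $(I-\mathcal{A}^*\mathcal{A})^{-1}\mathcal{A}^*$ and their mirror images visible in \eqref{eq:EOT Potentials main text}. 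Applying the chain rule $\dot\phi = -\varepsilon \dot u/u$, $\dot\psi = -\varepsilon \dot v/v$ then produces the overall $\varepsilon/u$, $\varepsilon/v$ scaling.

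The main obstacle, and the analytic heart of the argument, is that $\mathcal{L}$ is \emph{not} invertible on the whole space: the gauge symmetry $(\phi,\psi)\mapsto(\phi+c,\psi-c)$ of the Schrödinger system produces a one-dimensional kernel, corresponding in the lifted variables to $\operatorname{span}\{(u,-v)\}$. To invert, I would show that $\|\mathcal{A}\|_{\operatorname{op}}=1$ is attained only on $\operatorname{span}\{u\}$ (a Jensen-type argument using that $\mathcal{A}$, $\mathcal{A}^*$ are conditional expectation operators under the coupling $\pi_{\mu,\nu}^\varepsilon$), so that the spectral radii of $\mathcal{A}^*\mathcal{A}$ and $\mathcal{A}\mathcal{A}^*$ are strictly less than one on the orthogonal complement. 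Inverting $\mathcal{L}$ on this complement, followed by composition with $D_{(\mu,\nu)}F$ and a restatement via the Hadamard implicit function theorem of the style used in \cite{goldfeld2022limit}, produces the first bracketed matrix in \eqref{eq:EOT Potentials main text}.

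Finally, to pin down a distinguished representative of $(\phi,\psi)$ (otherwise only defined modulo $(c,-c)$), I would invoke Theorem \ref{theo:tau had} in Appendix \ref{app:regPotential}, whose derivative $\rho$ enters as the trailing $\varepsilon\rho(\mathbf{1},-\mathbf{1})$ term. The middle summand in \eqref{eq:EOT Potentials main text} arises from propagating the derivative of the auxiliary normalization across the projection onto $\operatorname{span}\{(u,-v)\}^\perp$; it is naturally asymmetric in $\gamma^1, \gamma^2$ because the normalization is tied to the $\mu$-marginal. When $\mu = \nu$ we have $u = v$ and $\mathcal{A}^* = \mathcal{A}$, so the asymmetric correction collapses and the second term vanishes, consistent with the statement. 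The rest of the argument is bookkeeping: verifying the continuity of $\mathcal{A}$, $\mathcal{A}^*$, $\xi$, $\xi^*$ as operators on the relevant spaces and checking that each step above fits into the Hadamard differentiability framework.
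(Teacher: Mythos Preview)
Your proposal follows essentially the same route as the paper: lift to $(u,v)$, apply the van der Vaart--Wellner implicit function lemma (Proposition~\ref{Wellner}) to the Schr\"odinger residual augmented by the point normalization $u(x_0)=v(x_0)$, invert the block Fr\'echet derivative on $\operatorname{span}\{(u,-v)\}^\perp$, and transfer back via the chain rule for $-\varepsilon\log$. Two small deviations worth flagging: the paper obtains the simplicity of the eigenvalue $1$ for $\mathcal{A}_{u,\mu}\mathcal{A}_{v,\nu}$ via the Krein--Rutman theorem in $\mathcal{H}$ rather than a Jensen/conditional-expectation argument, and the vanishing of the second term when $\mu=\nu$ relies specifically on the mass-zero constraint $\gamma^2(\mathbf{1})=0$ (which gives $\langle K(d\gamma^2/u),u\rangle_\mathcal{H}=\int \mathbf{1}\,d\gamma^2=0$), not just on the symmetry $u=v$, $\mathcal{A}=\mathcal{A}^*$.
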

Above its proof in Appendix~\ref{app:regPotential}, we describe how this result relates to other results in the literature (Theorem 2.2 in \citealp{gonzalezsanz2022weak}; Theorem 4 in \citealp{harchaoui2022asymptoticsdiscreteschrodingerbridges}).

\subsection{Sinkhorn divergence}\label{sec: Sinkhorn}
  To leverage the above first order differentiability, we use a local approximation from \cite{gonzalezsanz2022weak}, verifying that
\begin{align*}
    \OTL(\mu,\nu) &:=  \iint [\phi_{\nu,\nu}(x)+ \psi_{\nu,\nu}(y)]\, d\mu(x) d\nu(y) \\
    &\quad\quad +\frac{1}{2\varepsilon}\iint ([\phi_{\mu,\nu} - \phi_{\nu,\nu}](x) + [\psi_{\mu,\nu} -\psi_{\nu,\nu}](y))^2\, d\mu(x) d\nu(y)
\end{align*}
agrees with $\operatorname{OT}_\varepsilon(\mu,\nu)$ up to second order. This approximation allows us to differentiate each of the potentials individually, leading to the following quadratic expansion of the Sinkhorn divergence.

\begin{theorem}[Sinkhorn kernel]\label{thm: Sink had approx}
    $D(\nu) := S(\mathbb{P},\nu)$ is second order Hadamard differentiable at $\mathbb{P}$ relative to $\ell^\infty(\mathcal{H}_1)$ with Hadamard operator
    \[
    G := G_{S,\mathbb{P}} := \varepsilon(I - \mathcal{A}^2)^{-1} \xi.
    \]
    Moreover, the Sinkhorn kernel $G_{S,\mathbb{P}}$ is equivalent to the Gaussian kernel.
\end{theorem}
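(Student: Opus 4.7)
The plan is to combine the local quadratic approximation $\OTL$ with the first-order Hadamard differentiability of the potentials from Theorem~\ref{thm: EOT Potentials}. Since $\OTL$ agrees with $\EOT$ up to second order at the diagonal \citep{gonzalezsanz2022weak}, I would replace the three $\EOT$ terms in
\[
S(\mathbb{P},\nu) = \EOT(\mathbb{P},\nu) - \tfrac{1}{2}\bigl(\EOT(\mathbb{P},\mathbb{P}) + \EOT(\nu,\nu)\bigr)
\]
with their $\OTL$ counterparts and compute $S^{\operatorname{loc}}(\mathbb{P}, \mathbb{P} + t h_t)$ to second order in $t$. After reassembly, the ``baseline'' contributions $\int \phi_{\nu,\nu}\,d\mathbb{P} - \tfrac{1}{2}\int (\phi_{\nu,\nu}+\psi_{\nu,\nu})\,d\nu - \tfrac{1}{2}\int (\phi_{\mathbb{P},\mathbb{P}}+\psi_{\mathbb{P},\mathbb{P}})\,d\mathbb{P}$ collapse using the diagonal symmetry $\phi_{\nu,\nu} = \psi_{\nu,\nu}$, and their first-order contributions in $h$ cancel, consistent with $\dot D \equiv 0$ for a divergence.

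The quadratic residual of $\OTL(\mathbb{P},\nu)$ supplies the $t^2$-order content. Into it I would substitute the Hadamard expansions of $\phi_{\mathbb{P},\nu}-\phi_{\nu,\nu}$ and $\psi_{\mathbb{P},\nu}-\psi_{\nu,\nu}$ from Theorem~\ref{thm: EOT Potentials}. At the anchor $\mu = \nu = \mathbb{P}$, the auxiliary term in the potential derivative vanishes (as recorded in that theorem's statement), and diagonal symmetry gives $\mathcal{A} = \mathcal{A}^*$ together with $\xi = \xi^*$, so the operator $(I-\mathcal{A}^*\mathcal{A})^{-1}$ collapses to $(I-\mathcal{A}^2)^{-1}$. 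Collecting terms, the quadratic form in $h = \lim h_t$ assembles into $\int (Gh)\,dh$ with $G = \varepsilon(I-\mathcal{A}^2)^{-1}\xi$, identifying the Hadamard operator.

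For the equivalence with the Gaussian kernel integral operator $K$, I would establish two-sided bounds $c_1 K \preceq G \preceq c_2 K$ on the mass-zero signed measures that arise in the Hadamard framework. Since $\mathcal{X}$ is compact and the potentials continuous and strictly positive, $u^{-1}$ and $v^{-1}$ are uniformly bounded above and below, so $\xi$ is comparable to $K$. Meanwhile $\mathcal{A}$ is a self-adjoint Markov operator at the diagonal with $\mathcal{A}\mathbf{1} = \mathbf{1}$ and $\|\mathcal{A}\| \le 1$, giving $I \preceq (I-\mathcal{A}^2)^{-1}$. An upper bound requires a spectral gap for $\mathcal{A}$ off the constants, which follows from compactness and strict positivity of its integral kernel via a Perron--Frobenius argument; because any $h$ of interest has total mass zero, it is orthogonal to the top eigenvector $\mathbf{1}$, so only the gap-controlled part of the spectrum participates in $\int (Gh)\,dh$. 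The main obstacle will be making this spectral gap quantitative and robust enough for the empirical variant envisaged by Lemma~\ref{lem: emp kernel}, where $\mathbb{P}$ is replaced by $\mathbb{P}_n$ and one needs control uniform in $n$.
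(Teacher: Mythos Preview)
Your overall plan matches the paper's: reduce $S$ to a local surrogate $S^{\text{loc}}$ via $\OTL$, then feed in the first-order potential derivatives from Theorem~\ref{thm: EOT Potentials}. There is, however, a genuine gap in how you handle the ``baseline'' linear piece.

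With the asymmetric replacement $\EOT(\mathbb{P},\nu)\to\OTL(\mathbb{P},\nu)$ that you describe, the baseline collapses (after using $\phi_{\nu,\nu}=\psi_{\nu,\nu}$) to $\int(\phi_{\nu,\nu}-\phi_{\mathbb{P},\mathbb{P}})\,d\mathbb{P}$. You correctly observe that its first-order contribution in $t$ vanishes, but it also carries a nonzero $t^2$ part, namely $\tfrac{t^2}{2}\int\ddot\phi_{\mathbb{P},\mathbb{P}}[h,h]\,d\mathbb{P}$, and extracting this \emph{requires the second-order Hadamard derivative of} $\nu\mapsto\phi_{\nu,\nu}$, which Theorem~\ref{thm: EOT Potentials} does not supply. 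So the claim that ``the quadratic residual of $\OTL(\mathbb{P},\nu)$ supplies the $t^2$-order content'' is incomplete, and as written you cannot close the computation. The paper avoids this by symmetrizing: it sets
\[
S^{\text{loc}}(\alpha,\beta)=\tfrac12[\OTL(\alpha,\beta)-\EOT(\beta,\beta)]+\tfrac12[\OTL(\beta,\alpha)-\EOT(\alpha,\alpha)],
\]
whose baseline becomes $\tfrac12\int(\phi_{\alpha,\alpha}-\phi_{\beta,\beta})\,d(\beta-\alpha)$, a product of two $O(t)$ factors. Only \emph{first}-order derivatives of the potentials (Lemma~\ref{potential limits}) are then needed to extract its $t^2$-limit $\tfrac{\varepsilon}{2}\int(I-\mathcal{A}^2)^{-1}(I-\mathcal{A})\xi h\,dh$, and that limit combines with the quadratic-residual contribution $\tfrac{\varepsilon}{2}\int(I-\mathcal{A}^2)^{-1}\mathcal{A}\xi h\,dh$ to produce $G=\varepsilon(I-\mathcal{A}^2)^{-1}\xi$ (Theorem~\ref{thm:Sink2}).

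Your equivalence sketch is essentially correct and matches the paper: $\xi$ is the Gaussian kernel rescaled by the smooth, uniformly positive function $u^{-1}(x)u^{-1}(y)$, so Lemma~\ref{uniform kernel equiv} gives $\xi\asymp K$; and the boundedness of $(I-\mathcal{A}^2)^{-1}$ on the mass-zero sector is exactly the Krein--Rutman spectral-gap argument of Lemma~\ref{operator properties}. Your final worry about making the gap uniform in $n$ is misplaced here---that issue belongs to Theorem~\ref{theo: Sink kernel comp}, not the present statement.
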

As indicated previously, the Sinkhorn kernel is not computable at $\mathbb{P}$ since the population distribution is unknown. Centering instead at $\mathbb{P}_n$, for $\mu \ll \mathbb{P}_n$ the empirical Hadamard operator can be expressed as
\[
G_n = \varepsilon n(I - {\pi_{\mathbb{P}_n,\mathbb{P}_n}^\varepsilon}^2)^{-1} \pi_{\mathbb{P}_n,\mathbb{P}_n}^\varepsilon,
\]
for $\pi_{\mathbb{P}_n,\mathbb{P}_n}^\varepsilon$ the entropic optimal transport plan from the empirical distribution to itself. Due to the previously established convergence of entropic potentials, we can apply Lemma \ref{lem: emp kernel} to show that the plug-in kernel has negligible error compared to its population version.

\begin{lemma}[Sinkhorn embedding convergence]\label{lem: xi quad comp}
If $\|\mathbb{P} - P_m\|_{\ell^\infty(\mathcal{F}_1)} = O_p(n^{-1/2})$, then
    \[
    |Q_{\xi_n}(\mathbb{P} - P_m) - Q_{\xi}(\mathbb{P} - P_m)| = \|\mathbb{P} - P_m\|_{\ell^\infty(\mathcal{H}_1)}^2 \,O_p(n^{-1/2})  =  O_p(n^{-3/2})
    \]
\end{lemma}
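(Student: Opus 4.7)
The plan is to invoke Lemma~\ref{lem: emp kernel} with the operators associated to the kernels $\xi$ and $\xi_n$. Two ingredients are required: (a) operator norm convergence $\|T_n - T\| = O_p(n^{-1/2})$ for the square-root embeddings, and (b) the bound $\|\mathbb{P} - P_m\|_{\ell^\infty(\mathcal{H}_1)} = O_p(n^{-1/2})$. For (b), the triangle inequality gives $\|\mathbb{P} - P_m\|_{\ell^\infty(\mathcal{H}_1)} \leq \|\mathbb{P} - \mathbb{P}_n\|_{\ell^\infty(\mathcal{H}_1)} + \|\mathbb{P}_n - P_m\|_{\ell^\infty(\mathcal{H}_1)}$, where the second summand is the hypothesis (identifying $\mathcal{F}=\mathcal{H}$ in this section) and the first is $O_p(n^{-1/2})$ by the Donsker property of $\mathcal{H}_1$ (Lemma~\ref{lem: rkhs donsker}). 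Once (a) and (b) are in hand, the conclusion $O_p(n^{-3/2})$ is immediate from Lemma~\ref{lem: emp kernel}.

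The core technical step is (a). Decompose
\[
\xi_n(x,y) - \xi(x,y) = e^{-\|x-y\|^2/\varepsilon}\bigl[(u_n^{-1}(x) - u^{-1}(x))\, v_n^{-1}(y) + u^{-1}(x)(v_n^{-1}(y) - v^{-1}(y))\bigr],
\]
so the perturbation factors as the Gaussian kernel sandwiched between multiplication operators whose norms are controlled by $\|u_n - u\|_\infty$ and $\|v_n - v\|_\infty$, using that the potentials $u,v$ are bounded away from zero on the compact set $\mathcal{X}$. By Theorem~\ref{thm: EOT Potentials}, the map $(\mu,\nu)\mapsto(\phi_{\mu,\nu},\psi_{\mu,\nu})$ is Hadamard differentiable at $(\mathbb{P},\mathbb{P})$ relative to $\ell^\infty(\mathcal{H}_1)^2$; combined with the tightness of $\sqrt{n}(\mathbb{P}_n-\mathbb{P})$ in $\ell^\infty(\mathcal{H}_1)$, the functional delta method yields $\|\phi_n - \phi\|_\infty, \|\psi_n - \psi\|_\infty = O_p(n^{-1/2})$. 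The chain rule through the smooth map $\phi\mapsto e^{-\phi/\varepsilon}$ transfers this rate to $u_n-u$ and $v_n-v$, and then to $u_n^{-1}-u^{-1}$ and $v_n^{-1}-v^{-1}$ via the identity $u_n^{-1}-u^{-1} = (u-u_n)/(u u_n)$ together with the uniform positive lower bounds on $u_n$ that hold eventually by uniform convergence to a strictly positive limit.

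The main obstacle is translating the pointwise/uniform kernel convergence into a clean operator norm bound appropriate for Lemma~\ref{lem: emp kernel}. One must verify that the product structure of $\xi_n - \xi$ defines a bounded operator on $\ell^\infty(\mathcal{H}_1)$ whose norm is genuinely controlled by the sup-norm perturbations of the multipliers, as opposed to by a weaker total-variation norm; the right framing is to factor $\xi$ and $\xi_n$ through the Gaussian embedding $K$ composed on each side with multiplication by $u^{-1}$ (resp.\ $u_n^{-1}$) and $v^{-1}$ (resp.\ $v_n^{-1}$), and to use boundedness of $\mathfrak{M}_f:\ell^\infty(\mathcal{H}_1)\to\ell^\infty(\mathcal{H}_1)$ for $f\in C(\mathcal{X})$. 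Once this factorization is in place, the $O_p(n^{-1/2})$ rate for the multipliers transfers directly to $\|\xi_n - \xi\|$ and hence to the square-root embeddings via standard spectral continuity of positive operators.
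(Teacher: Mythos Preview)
Your approach is essentially the paper's: reduce to Lemma~\ref{lem: emp kernel} and establish $\|T_n-T\|=O_p(n^{-1/2})$ via Hadamard differentiability of the potentials (Theorem~\ref{thm: EOT Potentials}) and the delta method. Two places where the paper is sharper than your sketch:

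First, because both marginals equal $\mathbb{P}$ (resp.\ $\mathbb{P}_n$), Lemma~\ref{Double Marginal} gives $u=v$ and $u_n=v_n$, so $\xi(x,y)=k_\varepsilon(x,y)/(u(x)u(y))$ and hence $Q_\xi(\mu)=\|\mathfrak{M}_{1/u}\,\mu\|_{\ell^\infty(\mathcal{H}_1)}^2$. The square root in Lemma~\ref{lem: emp kernel} is therefore \emph{explicit}: $T=\mathfrak{M}_{1/u}$, $T_n=\mathfrak{M}_{1/u_n}$, and $T_n-T=\mathfrak{M}_{1/u_n-1/u}$. Your final ``spectral continuity of positive operators'' step is unnecessary, and if carried out via a generic bound of the form $\|A^{1/2}-B^{1/2}\|\lesssim\|A-B\|^{1/2}$ it would degrade the rate to $O_p(n^{-1/4})$ and the argument would fail. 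Use the symmetry $u=v$ to avoid this detour entirely.

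Second, the passage from $\|1/u_n-1/u\|_\infty=O_p(n^{-1/2})$ to $\|\mathfrak{M}_{1/u_n-1/u}\|_{\ell^\infty(\mathcal{H}_1)}=O_p(n^{-1/2})$ is not immediate from Lemma~\ref{kernel equiv}, which only asserts boundedness of $\mathfrak{M}_f$ for each fixed continuous $f$, not a quantitative bound in $\|f\|_\infty$. The paper handles this via Corollary~\ref{cor: op bound}: expand $\sqrt{n}(1/u_n-1/u)=\dot{u}[\sqrt{n}(\mathbb{P}_n-\mathbb{P})]+R_n$ with $R_n=o_p(1)$ in $C^k(\mathcal{X})$, and since the derivative $\dot{u}$ is a bounded linear map into a totally bounded subset of $C(\mathcal{X})$, Corollary~\ref{cor: op bound} yields $\|\mathfrak{M}_{\dot{u}[\mu]}\|\le C\|\mu\|_{\ell^\infty(\mathcal{H}_1)}$, which is exactly what is needed.
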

\begin{proof}
    Let $u_n := u_{\mathbb{P}_n, \mathbb{P}_n}, u:= u_{\mathbb{P},\mathbb{P}}$. Observe that $T_n = \mathfrak{M}_{1/u_n}, T = \mathfrak{M}_{1/u}$, hence by Lemma \ref{lem: emp kernel}, it suffices to verify that $\|T_n - T\| = O_p(n^{-1/2})$. Expanding this, we have
    \[
    \sqrt{n}(T_n - T) = \mathfrak{M}_{\sqrt{n}(1/u_n - 1/u)}.
    \]
    By Lemma 3.10.26 in \cite{van1996wellner}, we can express $\sqrt{n}(1/u_n - 1/u) = \dot{u}[\sqrt{n}(\mathbb{P}_n - P)] + R_n$ where $\dot{u}$ is a continuous linear operator with image in $C^k(\mathcal{X})$ and $\|R_n\|_{C^k(\mathcal{X})} = o_p(1)$. Hence, by Corollary \ref{cor: op bound},
    \[
    \sqrt{n}\|T_n - T\| \leq C(\|R_n\|_{C^k(\mathcal{X})} + \sqrt{n}\|\mathbb{P}_n - P\|_{\ell^\infty(\mathcal{F}_1)}) = O_p(1).
    \] 
\end{proof}

The full proof of the theorem below entails studying the operator $\mathcal{A}$. This requires a longer argument, and so we defer it to Appendix \ref{sec: sink emp kernel}. 

\begin{theorem}[Sinkhorn kernel convergence]\label{theo: Sink kernel comp}
    If $\|\mathbb{P} - P_m\|_{\ell^\infty(\mathcal{F}_1} = O_p(n^{-1/2})$, then
    \[
    |Q_{G_n}(\mathbb{P} - P_m) - Q_{G}(\mathbb{P} - P_m)| = \|\mathbb{P} - P_m\|_{\ell^\infty(\mathcal{F}_1)}^2 \,O_p(n^{-1/2})  =  O_p(n^{-3/2})
    \]
\end{theorem}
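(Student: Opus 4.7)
The strategy is to apply Lemma \ref{lem: emp kernel}, which reduces the claim to producing operators $T, T_n$ satisfying $\|T\mu\|_{\ell^\infty(\mathcal{H}_1)}^2 = Q_G(\mu)$ and $\|T_n\mu\|_{\ell^\infty(\mathcal{H}_1)}^2 = Q_{G_n}(\mu)$, and then establishing $\|T_n - T\| = O_p(n^{-1/2})$. Once this is in hand, the first equality of the statement is immediate from Lemma \ref{lem: emp kernel} with $r_n = n^{-1/2}$, and the second follows from the hypothesis $\|\mathbb{P} - P_m\|_{\ell^\infty(\mathcal{H}_1)} = O_p(n^{-1/2})$.

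Factoring $G = \varepsilon(I - \mathcal{A}^2)^{-1}\xi$, the natural candidate is $T = R \circ T^\xi$, where $T^\xi = \mathfrak{M}_{1/u}$ is the square-root operator already analyzed in Lemma \ref{lem: xi quad comp} and $R$ is a self-adjoint square root of $\varepsilon(I - \mathcal{A}^2)^{-1}$ restricted to the complement of constants. A product-rule decomposition $T_n - T = (R_n - R)T^\xi + R_n(T^\xi_n - T^\xi)$ together with the resolvent identity
$$
(I - \mathcal{A}_n^2)^{-1} - (I - \mathcal{A}^2)^{-1} = (I - \mathcal{A}^2)^{-1}\bigl[\mathcal{A}_n(\mathcal{A}_n - \mathcal{A}) + (\mathcal{A}_n - \mathcal{A})\mathcal{A}\bigr](I - \mathcal{A}_n^2)^{-1}
$$
reduces matters to controlling three quantities: (i) $\|\mathcal{A}_n - \mathcal{A}\|$; (ii) the resolvents $\|(I - \mathcal{A}^2)^{-1}\|$ and $\|(I - \mathcal{A}_n^2)^{-1}\|$; and (iii) $\|T^\xi_n - T^\xi\|$, already handled in Lemma \ref{lem: xi quad comp}.

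For (i), write $\mathcal{A}f(x) = \int \xi_{\mathbb{P},\mathbb{P}}(x,y)\,f(y)\,d\mathbb{P}(y)$ with $\xi_{\mathbb{P},\mathbb{P}}(x,y) = k(x,y)/[u(x)u(y)]$. Then $\mathcal{A}_n - \mathcal{A}$ splits into an empirical-process term driven by $\mathbb{P}_n - \mathbb{P}$ acting against a smooth Gaussian-kernel integrand, plus multiplication-operator contributions from $\mathfrak{M}_{1/u_n} - \mathfrak{M}_{1/u}$. Each piece is $O_p(n^{-1/2})$: the empirical-process contribution by the Donsker property of $\mathcal{H}_1$ (Lemma \ref{lem: rkhs donsker}), and the multiplication contribution exactly as in the proof of Lemma \ref{lem: xi quad comp}, leveraging the Hadamard differentiability of the potentials (Theorem \ref{thm: EOT Potentials}) together with Corollary \ref{cor: op bound}.

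The principal obstacle is step (ii). Since $\mathcal{A}$ is a Markov-type operator that fixes constants with eigenvalue one, $(I - \mathcal{A}^2)^{-1}$ is not bounded on all of $\ell^\infty(\mathcal{H}_1)$ and must be restricted to the zero-mean subspace. For the signed measures $\mathbb{P} - P_m$ of interest this restriction is automatic, but one still needs $\|(I - \mathcal{A}_n^2)^{-1}\|$ to remain uniformly bounded with high probability on this subspace. This requires a uniform spectral-gap estimate for $\mathcal{A}_n$ inherited from the strict contraction of Sinkhorn iterations on the orthogonal complement of constants, in the spirit of \citep{gonzalezsanz2022weak, harchaoui2022asymptoticsdiscreteschrodingerbridges}. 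Establishing this gap for $\mathcal{A}$ and then transferring it to $\mathcal{A}_n$ via the $O_p(n^{-1/2})$ operator-norm convergence from step (i) is the main content of the longer appendix argument.
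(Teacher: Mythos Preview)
Your outline is plausible but takes a different route from the paper, and your description of what ``the longer appendix argument'' does is not accurate. The paper does \emph{not} proceed via Lemma~\ref{lem: emp kernel} and square-root operators for $G$. Instead it works directly with the quadratic forms and introduces an intermediate operator $\hat{\mathcal{A}}_n f := \int \xi_{\mathbb{P},\mathbb{P}}(\cdot,y)f(y)\,d\mathbb{P}_n(y)$ (population kernel, empirical measure) and an associated intermediate form $\ddot S_{n,\xi}$. The difference $|\ddot S_n - \ddot S|$ is then bounded through $\ddot S_{n,\xi}$ by a triangle inequality, with each piece handled by elementary operator-norm estimates: $\|\mathcal{A}-\hat{\mathcal{A}}_n\|_{\mathcal F}=O_p(n^{-1/2})$ (Lemma~\ref{lem: discretization}), $\|\mathcal{A}_n-\hat{\mathcal{A}}_n\|_{\mathcal H}=O_p(n^{-1/2})$ (Lemma~\ref{lem: emp operator comp}), and the resolvent perturbation bound $\|(I-\mathcal{A}^2)^{-1}-(I-\hat{\mathcal{A}}_n^2)^{-1}\|=O_p(n^{-1/2})$ via \cite[Lemma~8.6]{haase2018lectures}. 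No square-root factorization of $(I-\mathcal{A}^2)^{-1}$ is ever taken, and no new spectral-gap argument is developed in that appendix section; the boundedness of $(I-\mathcal{A}^2)^{-1}$ on the relevant subspace is inherited from the Fredholm/Krein--Rutman analysis already carried out in Lemma~\ref{lem: pseudo-inversion}.

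Your approach could in principle be completed, but it carries extra baggage you have not addressed: passing from $\|(I-\mathcal{A}_n^2)^{-1}-(I-\mathcal{A}^2)^{-1}\|=O_p(n^{-1/2})$ to $\|R_n-R\|=O_p(n^{-1/2})$ for the operator square roots requires a Lipschitz bound on the square-root map, which holds here only because $(I-\mathcal{A}^2)^{-1}\succeq I$ is bounded away from zero---this should be stated. More importantly, your factorization $T=R\circ T^\xi$ with $T^\xi=\mathfrak{M}_{1/u}$ does not obviously yield $\|T\mu\|_{\ell^\infty(\mathcal{H}_1)}^2=Q_G(\mu)$: the operator $(I-\mathcal{A}^2)^{-1}$ acts on functions in an RKHS, while $T^\xi\mu=\mu/u$ is a measure, so you need to specify carefully in which space $R$ acts and why the composed quadratic form matches $Q_G$. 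The paper's direct approach sidesteps both issues by never leaving the level of quadratic forms and embeddings.
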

Computing $G_n$ is quite cumbersome computationally due to the presence of the pseudo-inverse in the expression. We offer two practical means to circumvent this problem. The first is that, in Algorithm \ref{CO2}, we see that the recombination Algorithm requires only knowledge of principal eigenvectors/eigenvalues and the diagonal of the matrix. The eigenvalues and eigenvectors can be recovered from $\pi_{\mathbb{P}_n,\mathbb{P}_n}^\varepsilon$, and, for the diagonal, Hutchinson's diagonal estimator can be used to provide a computationally efficient estimator \citep{dharangutte2022tightanalysishutchinsonsdiagonal}. Even more simply, this problem can be circumvented altogether by focusing on the quadratic form $Q_{\xi_n}$ generated by $n \pi_{\mathbb{P}_n, \mathbb{P}_n}^\varepsilon$. For one, $Q_{G_n}(\mathbb{P}_n - \mathbb{P}_m) = O_p(\varepsilon Q_{\xi_n}(\mathbb{P}_n - \mathbb{P}_m))$ as $\|(I - {\pi_{\mathbb{P}_n,\mathbb{P}_n}^\varepsilon}^2)^{-1}\| = O_p(1)$, but we expect a tighter result. We provide a basic argument in the following which we empirically validate in Section \ref{sec:quad approx}. We leave a complete argument to future research, as the necessary Nystr\"{o}m manipulations are outside the scope of this paper. 

\begin{lemma}[Sinkhorn fast computation]\label{lem:cheap quad}
    For $P_m$ as constructed in Algorithm \ref{CO2}, with the Nystr\"{o}m approximation replaced by the rank $r$ singular value decomposition,
    \[
    |Q_{G_n}(\mathbb{P}_n - P_m) - \varepsilon Q_{\xi_n}(\mathbb{P}_n - P_m)| = Q_{\xi_n}(\mathbb{P}_n - P_m)O_p(\exp(-\gamma n))
    \]
    for some $\gamma > 0$.
\end{lemma}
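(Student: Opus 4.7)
The idea is to exploit the operator identity
\[
G_n - \varepsilon \xi_n \;=\; \varepsilon\bigl[(I-\mathcal{A}_n^2)^{-1}-I\bigr]\xi_n \;=\; \varepsilon\,\mathcal{A}_n^2(I-\mathcal{A}_n^2)^{-1}\xi_n,
\]
valid on the orthogonal complement of the unit eigenspace of $\mathcal{A}_n$ (the constants). Since $\mu := \mathbb{P}_n-P_m$ integrates to zero, its image $\xi_n\mu$ lies in this complement, and
\[
Q_{G_n}(\mu) - \varepsilon Q_{\xi_n}(\mu) \;=\; \varepsilon\!\int \mathcal{A}_n^2(I-\mathcal{A}_n^2)^{-1}\xi_n\mu\, d\mu.
\]

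\textbf{Spectral reduction.} With both marginals equal to $\mathbb{P}_n$, $\mathcal{A}_n$ is self-adjoint on $L^2(\mathbb{P}_n)$ with spectrum $1=\lambda_1 > |\lambda_2| \geq \cdots$ and eigenbasis $(\psi_i)$. Both $\xi_n$ and $G_n$ share this basis, with eigenvalues $\lambda_i$ and $\varepsilon\lambda_i/(1-\lambda_i^2)$ respectively; as $\lambda\mapsto \lambda/(1-\lambda^2)$ is monotone on $[0,1)$, the top $r$ eigenvectors of $G_n$ and $\xi_n$ coincide. Replacing the Nystr\"{o}m step in Algorithm~\ref{CO2} by the exact rank-$r$ SVD, the recombination procedure forces $c_i := \int \psi_i\, d\mu = 0$ for $i = 1,\ldots,r$. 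Expanding each quadratic form in the basis and subtracting,
\begin{align*}
Q_{\xi_n}(\mu) &= \sum_{i>r}\lambda_i c_i^2, \qquad Q_{G_n}(\mu) = \varepsilon\sum_{i>r}\frac{\lambda_i}{1-\lambda_i^2}c_i^2,\\
|Q_{G_n}(\mu) - \varepsilon Q_{\xi_n}(\mu)| &= \varepsilon \sum_{i>r}\frac{\lambda_i^3}{1-\lambda_i^2}\,c_i^2 \;\leq\; \frac{\lambda_{r+1}^2}{1-\lambda_{r+1}^2}\,\varepsilon\, Q_{\xi_n}(\mu),
\end{align*}
so the ratio of the error to $Q_{\xi_n}(\mu)$ is controlled by $\lambda_{r+1}^2/(1-\lambda_{r+1}^2)$.

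\textbf{Tail estimate and main obstacle.} It remains to bound $\lambda_{r+1}$. The kernel of $\xi_n$ is the Gaussian kernel $\exp(-\|x-y\|^2/\varepsilon)$ multiplicatively modulated by the uniformly bounded factors $1/(u_n(x)v_n(y))$, so $\mathcal{A}_n$ should inherit the (stretched) exponential spectral decay of the Gaussian integral operator recorded in item~2 following Theorem~\ref{theo: functional compression}. Combined with a Nystr\"{o}m-style concentration bound relating the Gram-matrix spectrum to the population Mercer spectrum, this yields $\lambda_{r+1}^2 = O_p(e^{-\gamma n})$ for some $\gamma>0$, completing the proof. The hard step --- and the reason the authors defer a complete argument --- is making this tail estimate quantitative: one must control the random Gram-matrix spectrum of $n\pi^\varepsilon_{\mathbb{P}_n,\mathbb{P}_n}$ uniformly in $n$ while accommodating the data-dependent multipliers $u_n,v_n$, via Nystr\"{o}m perturbation arguments in the spirit of those supporting Theorem~\ref{theo: Sink kernel comp}.
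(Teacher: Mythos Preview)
Your proof is essentially the same as the paper's: both diagonalize $n\pi^\varepsilon_{\mathbb{P}_n,\mathbb{P}_n}$, use the recombination construction to annihilate the leading eigencoefficients, and bound the residual ratio by $\lambda_{m+1}^2/(1-\lambda_{m+1}^2)\cdot\varepsilon$. One clarification on your last paragraph: the tail bound you flag as the hard, deferred step is in fact completed in the paper for the SVD case---the authors invoke Lemmas~\ref{uniform kernel equiv} and~\ref{lem:LN_sob} (using that the Sinkhorn scalings $u_n$ are uniformly smooth almost surely, by Theorem~\ref{thm: EOT Potentials}) together with the standing assumption $m=\omega(\log^d n)$ to obtain the geometric decay of $\lambda_m$; what is deferred to future work is only the extension from the exact SVD to the Nystr\"{o}m approximation.
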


\begin{proof}
    We can view $\xi_n, G_n$ as finite dimensional linear operators as we only consider distributions supported on $\mathbb{P}_n$. The distribution $\mathbb{P}_n - P_m = \sum_{i=1}^n w_i \delta_{x_i}$ then corresponds to a vector $w = [w_1,\dots, w_n]^T$, and we can express $Q_{\xi_n}(\mathbb{P}_n - P_m) = w^T (n\pi_{\mathbb{P}_n, \mathbb{P}_n}^\varepsilon) w$ and $Q_{G_n}(\mathbb{P}_n - P_m) = w^T \varepsilon n(I - {\pi_{\mathbb{P}_n,\mathbb{P}_n}^\varepsilon}^2)^{-1} \pi_{\mathbb{P}_n,\mathbb{P}_n}^\varepsilon w$.     
    Taking a spectral decomposition, we decompose $w$ over the basis of eigenvectors of $\pi_{\mathbb{P}_n, \mathbb{P}_n}^\varepsilon$, given by $\{\phi_i\}_{i=1}^n$. Thus, we can identify $w = \sum_{i=1}^{n} a_i \phi_i$ and
    \begin{align*}
    Q_{\xi_n}(\mathbb{P}_n - P_m) = \sum_{i=1}^n \lambda_i a_i^2,\hspace{3em}
    Q_{G_n}(\mathbb{P}_n - P_m) = \varepsilon \sum_{i=1}^n \frac{\lambda_i}{1-\lambda_i^2} a_i^2.
     \end{align*}
     By the construction in Algorithm \ref{CO2}, $a_1,\dots, a_m = 0$, leaving us with the tails of these series. Thus,
     \begin{align*}
     &|[Q_{G_n} - \varepsilon Q_{\xi_n}](\mathbb{P}_n - P_m)| = \varepsilon \left|\sum_{i=m+1}^n \frac{\lambda_i^3}{1-\lambda_i^2} a_i^2\right|\leq \varepsilon \frac{\lambda_m^2}{1-\lambda_m^2}Q_{\xi_n}(\mathbb{P}_n - P_m)\\
     &\quad\quad =  Q_{\xi_n}(\mathbb{P}_n - P_m) o_p(\exp(-\gamma n))
     \end{align*}
     as $\lambda_i$ monotonically decreasing to 0, $\lambda_i$ is geometrically decaying by Lemmas \ref{uniform kernel equiv} and \ref{lem:LN_sob} (upon noting that the Sinkhorn scalings are uniformly smooth almost surely by Theorem \ref{thm: EOT Potentials}), and $m = \omega(\log^d n).$ 
\end{proof}

\section{Numerical experiments}

For efficient computation of the Sinkhorn divergence and entropic optimal transport potentials we utilize the geomloss Python package \citep{feydy2019interpolating}. CO2 is performed with respect to $\pi_{\mathbb{P}_n, \mathbb{P}_n}^\varepsilon$ for improved computational efficiency as detailed in Section \ref{sec: Sinkhorn}. In our Nystr\"{o}m approximation, we select a $\theta = 3$ oversampling parameter except in our MNIST experiment in Section \ref{sec: mnist}, where we set $\theta=5$. Code for all simulations can be found at
\href{https://github.com/amkokot/sinkhorn\_coresets}{https://github.com/amkokot/sinkhorn\_coresets}.

\subsection{Visualization}\label{sec:visualization}

We first qualitatively demonstrate that our compression algorithm rapidly recovers key characteristics of input datasets. We consider a Gaussian mixture generative model $\frac{1}{K} \sum_{i=1}^K N(\mu_i, I)$ in two dimensions, which we approximate with $\mathbb{P}_n$, $n=10^4$. We consider a mixture of $K=8$ Gaussians with means placed at $[\pm 3, \pm 3], [0,\pm 6], [\pm 6, 0]$, which we compress to $m=16$ points based on the Sinkhorn divergence with $\varepsilon=0.75$.
As seen in Figure \ref{fig:grid}, these coresets typically recover the primary clusters of the dataset, with samples being reasonably spread across each of the 8 clusters.  

\begin{figure}[h!]
    \centering
    \includegraphics[width=0.8\linewidth]{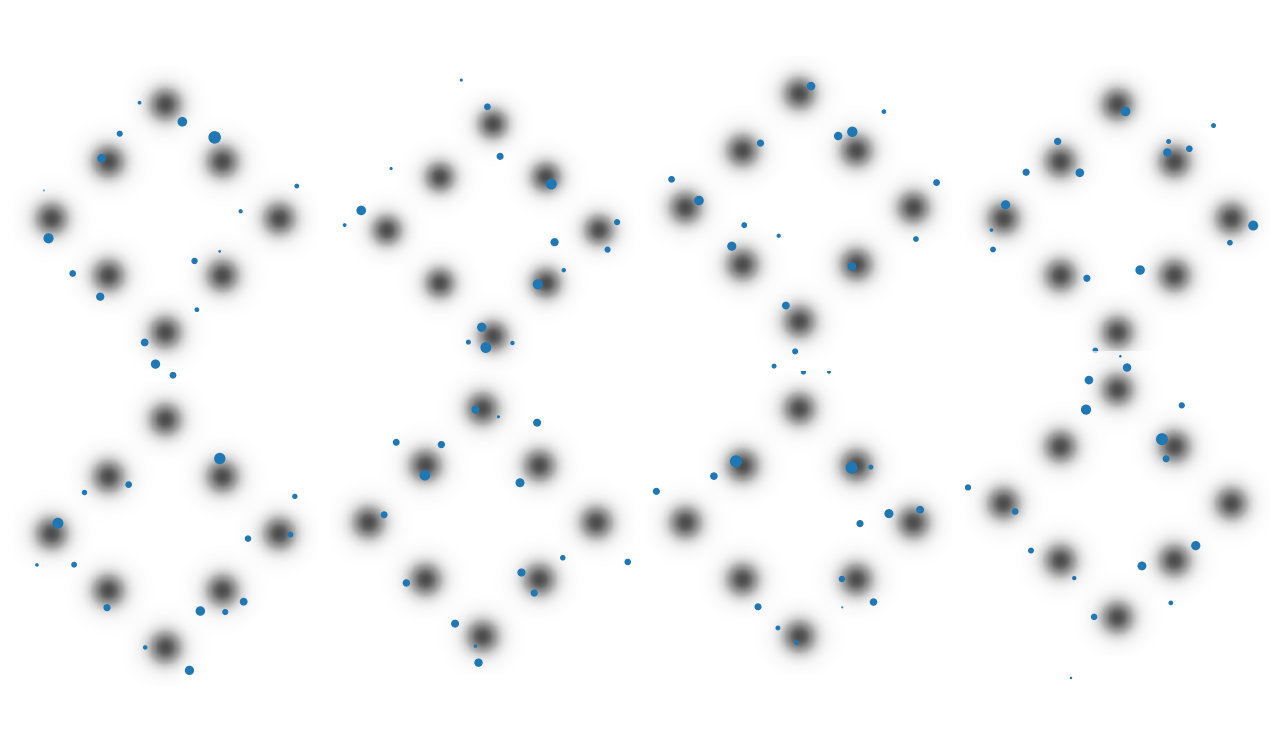}

    \caption{Eight repetitions of the simulation described in Section \ref{sec:visualization}.}
    \label{fig:grid}
\end{figure}

\subsection{Distribution recovery}\label{sec: dist rec}

We now demonstrate how our method quickly reduces the reconstruction error $S_\varepsilon(\mathbb{P}_n, P_m)$. Following Section 7.3 of \citep{dwivedi2023kernel}, we take $N(0, I)$ as our generative distribution in dimensions $d = 2, 5, 10$, and we set the regularization parameter to be $\varepsilon = 2d$ in each setting. We then compare $S_\varepsilon(\mathbb{P}_n, P_m)$ for $P_m$ generated by Sinkhorn compression to $P_m$ generated by random sampling. We perform 40 trials for each configuration of $m,d,n$. In Figure \ref{fig:sink combined} we see a strong dependence on the dimension, and minimal dependence on $n$.

\begin{figure}[tb]
    \centering
    \includegraphics[width=0.99\linewidth]{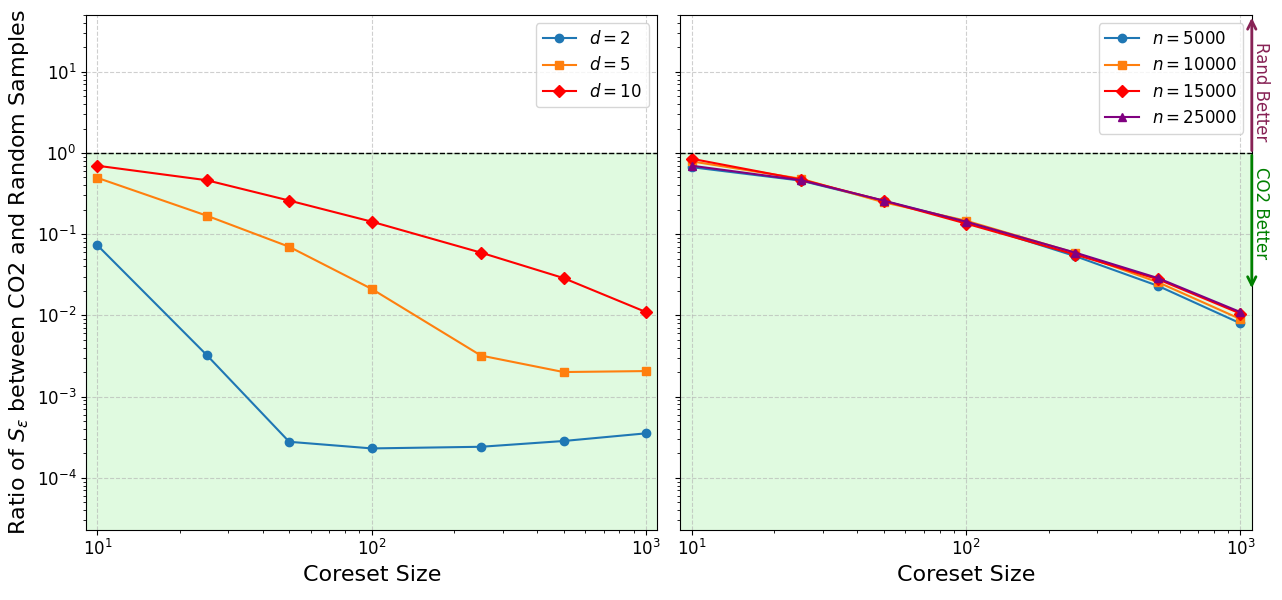}
    \caption{The reconstruction error $S_\varepsilon(\mathbb{P}_n, P_m)$ in various dimensions (left) and dataset sizes $n$ (right). In the first plot the sample size is fixed at $n=2.5 \times 10^4$, for the latter the dimension is fixed at $d=10$.}
    \label{fig:sink combined}
\end{figure}

\subsection{Quadratic approximation}\label{sec:quad approx}

A key element of our theory is that compression with respect to the second order expansion of a divergence is sufficient for compression with respect to the actual divergence of interest. We demonstrate this empirically by computing the relative error
\[
\frac{|S(\mathbb{P}_n, P_m) - \ddot{S}(\mathbb{P}_n, P_m)|}{S(\mathbb{P}_n, P_m)}
\]
for $P_m$ generated via our recombination method, and samples $\mathbb{P}_n$ generated with respect to the generative process in Section \ref{sec: dist rec}, with $d=10$, $n=2.5 \times 10^4$. In fact, we go one step further, as we utilize the cheaper quadratic approximation highlighted in Lemma \ref{lem:cheap quad}. We perform 80 trials for each $m$. As shown in Figure \ref{fig:kern}, the approximation error quickly vanishes, with negligible relative error compared to the actual divergence.
\begin{figure}[h!]
    \centering
        \includegraphics[width=0.8\linewidth]{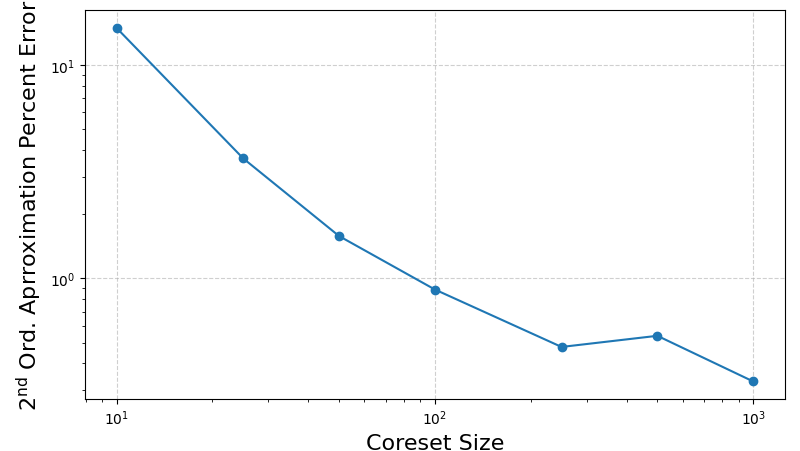}
    \caption{Percent relative error of $\ddot{S}(\mathbb{P}_n, P_m)$ compared to $S(\mathbb{P}_n, P_m)$ with $P_m$ generated via recombination compression.}
    \label{fig:kern}
\end{figure}

\subsection{MMD Compression Algorithms}

We demonstrate how a variety of MMD compression algorithms can be paired with CO2 to provide efficient coresets with respect to the Sinkhorn divergence. Our second order approximation results allow one to reduce the coreset selection problem for smooth divergences such as the Sinkhorn to a corresponding MMD, allowing practitioners to select an algorithm of their preference at this step of our method. This is particularly useful as this is a growing area of research, and we expect the development of algorithms that greatly improve our basic recombination approach, particularly in terms of practical finite sample performance. 

We draw $n=2.5\times 10^4$ samples uniformly from the unit cube of dimension 10, and compute the Sinkhorn divergence with regularization $\varepsilon = 20$. We compare 3 different coreset selection procedures. Two of these algorithms are aimed at directly optimizing the MMD compression error, kernel herding and recombination, and we optimize the weights of the selected samples post selection. Halton sampling is a more generic discrepancy minimization procedure used in quasi-Monte Carlo \citep{jank2005quasi}. We compare these procedures to the baseline random sampling, and perform 80 trials for each $m$ and sampling algorithm.

As seen in Figure \ref{fig:cube}, procedures that directly targeted MMD minimization of the second order term displayed the best performance. The recombination algorithm, our recommended MMD compression procedure, performed near optimally in each setting, with particularly impressive results for smaller targeted compression sizes.

\begin{figure}[h!]
    \centering
    \includegraphics[width=0.8\linewidth]{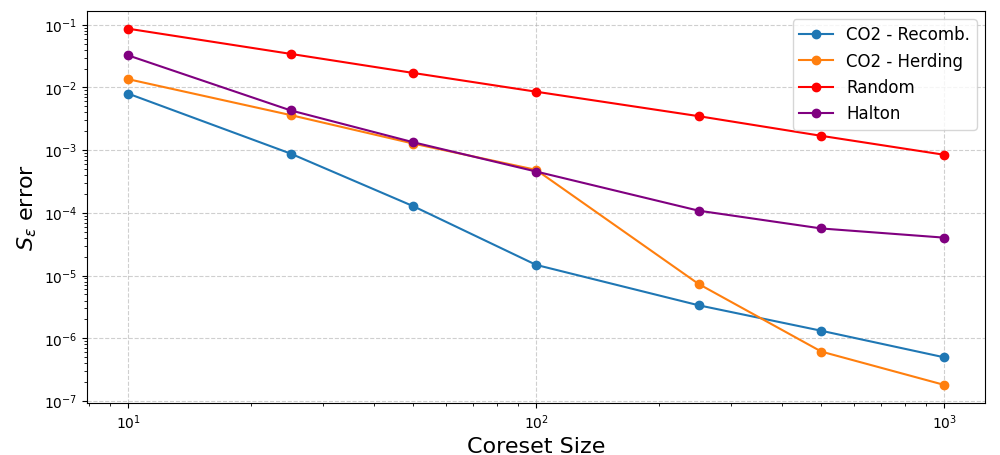}
    \caption{MMD compression algorithms as well as other sampling methods employed at the secondary stage of our Sinkhorn compression method and the resulting compression error $S(\mathbb{P}_n, P_m)$ induced by these algorithms.}
    \label{fig:cube}
\end{figure}

\subsection{Image processing}\label{sec: mnist}

We employ our algorithm on the entire 70,000 image MNIST dataset, where after standardizing the raw features to have mean 0 and unit variance, we select a coreset of size $m=1000$ for the Sinkhorn divergence with regularization $\varepsilon = 1.5 \times 10^4$. This regularization is large, but not atypical in this high-dimensional setting. As a point of comparison, a typical heuristic for kernel bandwidth selection is to use the median squared distance between observations \citep{garreau2018largesampleanalysismedian}, and for our data this is $\approx 1000$, and thus of a comparable magnitude. We perform 500 trials of coreset generation. High dimensional image data is representative of a particularly relevant setting for data compression. By enforcing that our coreset belongs to the observed dataset, there is no ambiguity regarding which of the labels these datapoints correspond to, as compared to a procedure that selects centroids from the ambient space. Our selection thus respects the underlying structure of the data, which we expect to practically be of a much lower intrinsic dimension. 

\begin{figure}[tb]
    \centering
    \includegraphics[width=0.99\linewidth]{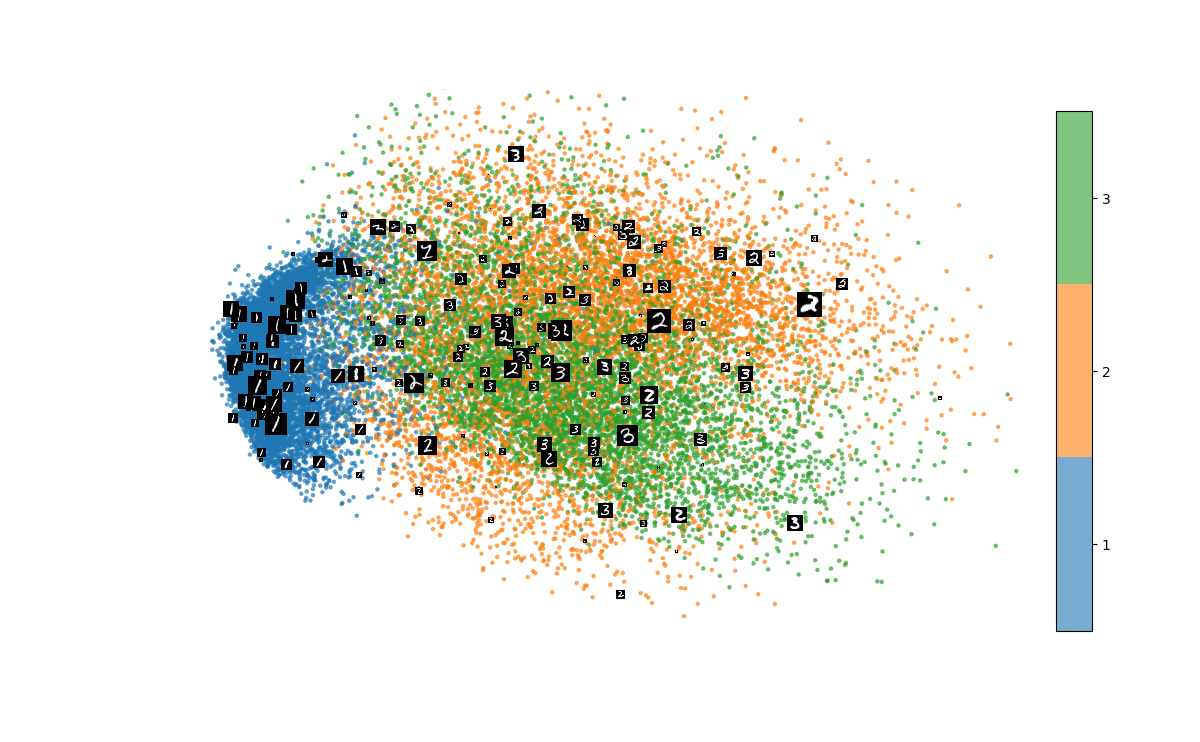}
    \caption{PCA coordinates of MNIST data with a Sinkhorn CO2 coreset overlaid on the sample, with figure sizes proportional to the sample weights. For interpretability, only digits 1, 2, and 3 are displayed here.}
    \label{fig:MNIST2}
\end{figure}

As a visualization of the resulting weighted coreset, in Figure \ref{fig:MNIST2} we plot the first 2 PCA coordinates of the MNIST dataset corresponding to a subset of the labels considered, $\{1, 2, 3\}$, in a scatter plot, and overlay the images corresponding to the selected datapoints with size proportionate to their weights. This figure gives a sense of the geometry of the sample, as we see a tight cluster of 1s, compared to the 2 and 3 digits which have more significant overlap. In Figure \ref{fig:MNIST_tree} we display a treemap of the 1000 selected digits realized by one trial of the CO2 algorithm. This visualization reveals that each digit receives about 1/10 of the total weight in the convexly weighted coreset.

\begin{figure}[tb]
    \centering
    \includegraphics[width=0.99\linewidth]{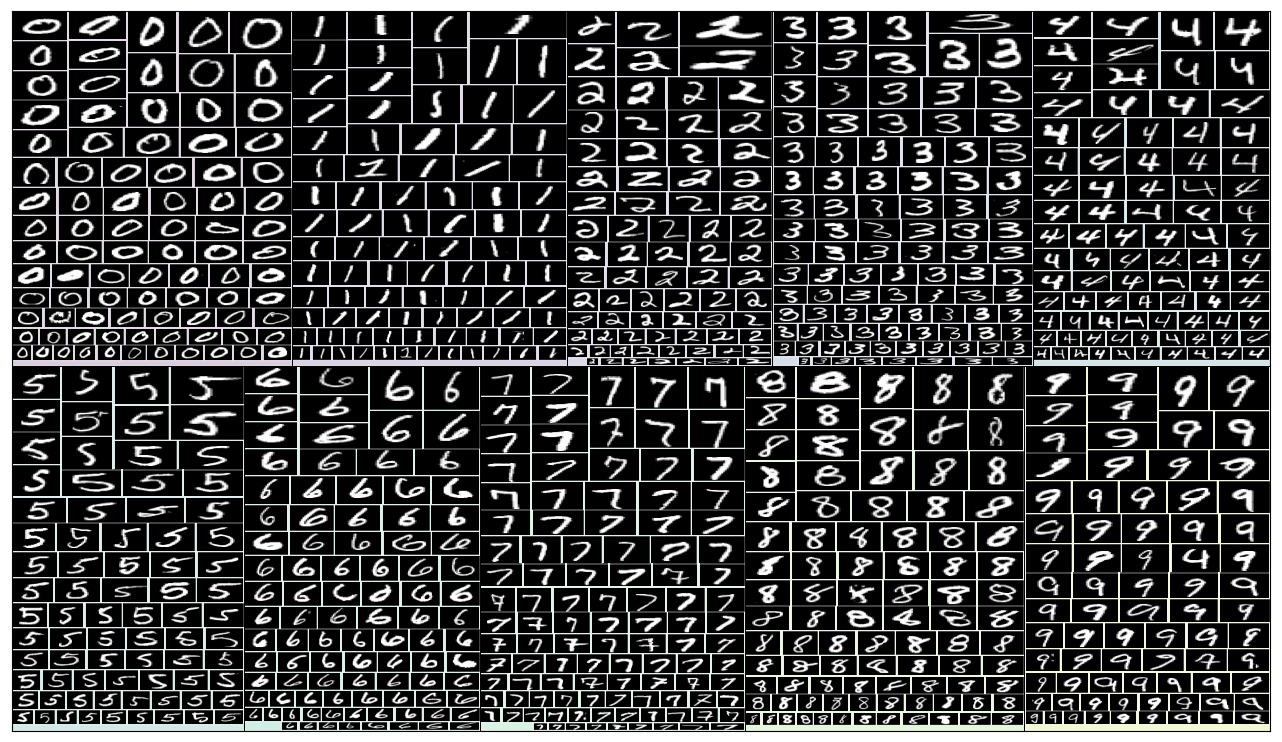}
    \caption{1000 MNIST digits selected via Sinkhorn CO2, sorted by digit and with areas proportional to their weights.\\     \textit{Note: the treemap software used to produce this visualization could not maintain the 1:1 aspect ratio of the original MNIST images.}}
    \label{fig:MNIST_tree}
\end{figure}

The results of this experiment are shown in Figure \ref{fig:MNIST1}, which displays Q-Q (quantile-quantile) plots comparing the percentiles of compression error as compared to random sampling, as well as the $\ell_1$ error of sampled label proportions relative to the population values, $\sum_{i=0}^9 |P(\text{label}=i) - \sum_{j=1}^n w_j \mathbf{1}\{Y_j = i\}|$.
The Q-Q plots capture the variation of both the random samples as well as the random projection employed in the Nystr\"{o}m approximation in Algorithm \ref{CO2}.
Due to the significant regularization level, it is unsurprising that we see our compression scheme dramatically reduces the Sinkhorn error as compared to random subsampling. However, we additionally see that this compression adequately encapsulates the dataset relative to the more concrete $\ell_1$ error on label proportions defined above. This indicates that the quality of the approximation is meaningful beyond this particular divergence.

\begin{figure}[tb]
    \centering
    
    \includegraphics[width=0.49\linewidth]{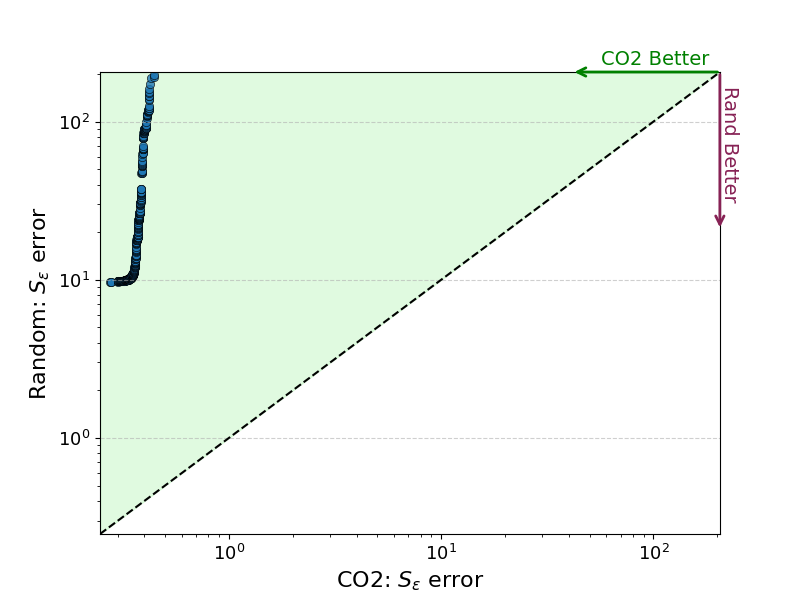}
    \includegraphics[width=0.49\linewidth]{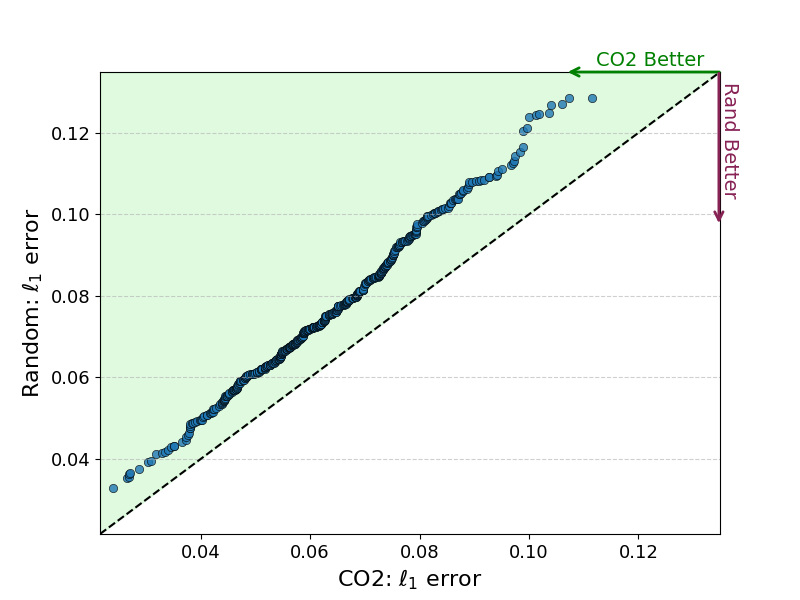}
    \caption{Q-Q plots of the Sinkhorn reconstruction error (left) and $\ell_1$ error between the label proportions (right) of the compressed data as compared to random samples. }
    \label{fig:MNIST1}
\end{figure}

\section{Discussion}

The CO2 algorithm attains rapid, asymptotically lossless compression by targeting the second order expansion of a divergence of interest. This framework invites many natural follow-up questions, and paths towards improved algorithmic efficiency.

One problem of interest is the development of finite sample guarantees for this procedure. The second stage of the algorithm, MMD compression, is related to kernel spectral decay, which is amenable to such an analysis, however the relationship of these bounds to the performance of the coreset for the desired divergence are not immediate. 
Key to our analysis were asymptotic properties of the divergence of interest via a functional delta method. 
To achieve finite sample guarantees, explicit bounds on the remainder of the divergence relative to its quadratic approximation need to be developed.

Our procedure can benefit from future developments in convex MMD compression.
Our current preferred instantiation of CO2 uses a recombination procedure. 
However, we anticipate that this algorithm can be improved as the state-of-the-art in this domain improves. 
In particular, we expect our error bounds to be suboptimal in polynomial spectral decay settings, as our analysis leads to looser error bounds than are suggested by the Mercer decomposition (see Lemma \ref{lem:LN_sob}).
As methodology improves in this domain to achieve minimax MMD compression guarantees, we believe pairing CO2 with such a procedure would lead to asymptotically minimax compression for all sufficiently regular divergences. 
Likewise, we speculate that in finite sample settings minimax MMD compression would lead to minimax CO2 compression whenever the remainder term satisfies appropriate regularity conditions.

A drawback of our procedure is that CO2 requires the explicit derivation of the Hadamard operator, which can be technically demanding. Further, guarantees for this procedure require specialized results, such as the verification of second order Hadamard differentiability in a specific topology. Thus, theoretically, it would be desirable to streamline this process, verifying for certain classes of divergences that the conditions we impose always hold. Practically, the development of algorithms to circumvent the analytic computation of the second order expansion would be useful. 
Automatic differentiation provides one way of achieving this, and such an approach has been recently used to estimate first order derivatives of statistical functionals \citep{luedtke2024simplifyingdebiasedinferenceautomatic}.

\acks{This work was supported by NSF Grant DMS-2210216. AK was supported by the Department of Defense (DoD) through the National Defense Science \& Engineering Graduate (NDSEG) Fellowship Program. The authors thank Lester Mackey for helpful discussions.}

\bibliography{refs}

\newpage

\appendix

\appendix

\DoToC

\section{Functional Taylor expansion}\label{sec: FTE}

Here we provide further details on the intuition of our method, linking compression with respect to a smooth divergence $D$ to a carefully chosen MMD. 

 As we are in the setting of coreset selection, we desire the size of our compressed dataset $m \ll n$. The  condition $D(P_m) = O_p(n^{-1})$ places a natural constraint on the class of functionals of interest.
Typically, such $D$ are required to be sufficiently smooth, made precise by the notion of Hadamard differentiability \citep{Fernholz1983}. 

To understand how to minimize a Hadamard differentiable divergence, we first consider minimizing a smooth function $f$. If we Taylor expand about a point $x$, then we get a local expansion 
$$f(y)-f(x) = \nabla f(x)^T(y-x) + \tfrac{1}{2}(y-x)^T \nabla^2 f(x)(y-x) + o(\|y-x\|^2).$$
If $x$ was a minimizer of $f$, then the first order gradient would vanish, leaving a dominant quadratic term from the Hessian. Now, suppose we generated a sequence $\{y_m\}$ such that $\|(y_m-x)^T \nabla^2 f(x)(y_m-x)\| = O(n^{-1}).$ If the Hessian were non-degenerate, this would necessitate that the remainder term be $o(n^{-1})$, hence the overall error would be $|f(y_m) - f(y)| = O(n^{-1}).$

Hadamard differentiability allows us to recover an analogous expansion for functionals, coined the von Mises expansion \cite[Chapter~9]{van2000asymptotic}. In this setting, for a Hadamard differentiable functional $D$,
\[
D(\mathbb{P}_n) - D(\mathbb{P}) = \dot{D}(\mathbb{P}_n - \mathbb{P}) + \ddot{D}(\mathbb{P}_n - \mathbb{P}) + o_p(n^{-1}).
\]
For a divergence, just like in the case of a smooth function, the first order term vanishes, leaving the dominant MMD. We conjecture that in most cases of interest compressing with respect to it is asymptotically equivalent to the functional of interest. There are obvious counterexamples to this, however, consider
\[
D(\mu) := Q_{K^2}(\mu - \nu) + Q_{K}^2(\mu - \nu).
\]
$K^2$ has more rapid spectral decay than $K$, hence it is not guaranteed that the remainder will decay appropriately even if this is verified for the second order term. This indicates that additional regularity is required. As one alternative, if $D$ was second order \textit{Fr\'{e}chet} differentiable in the topology induced by $Q_G$, then we arrive at the expansion
\[
D(\mu,\nu) = Q_G(\mu - \nu) + o(Q_G(\mu - \nu))
\]
providing control over these higher-order terms. Hadamard differentiability, a weaker notion than Fr\'{e}chet differentiability is also a sufficient degree of regularity,  which we verify in Lemma \ref{Functional Delta Method} by a quick application of the second order functional delta method.

\section{Algorithms}\label{app: Algs}

In this section, we briefly present the internal algorithms used in our implementation of Algorithm \ref{CO2}. Algorithm~\ref{alg:fixed_rank_psd} presents a Nystr\"{o}m approximation algorithm. Algorithm~\ref{Recomb simple} presents a basic version of the recombination algorithm. For a more detailed and efficient implementation, see \href{https://github.com/satoshi-hayakawa/kernel-quadrature}{https://github.com/satoshi-hayakawa/kernel-quadrature}.

\begin{algorithm}[h!]
    \caption{Nystr\"{o}m approximation \citep[][Algorithm 3]{tropp2017fixedrank}}\label{alg:fixed_rank_psd}
    \begin{algorithmic}
        \State \textbf{Input:} matrix \( A \), must be PSD; $\theta \in \mathbb{N},\ 2\leq \theta \leq n$;  rank parameter \( 1 \leq r \leq \theta \)

        \State
        $\Omega \gets$ $\operatorname{randn}(n, \theta)$ 
        \State $\mu \gets$ \text{small value to improve numerical stability}
        \State
        $\Omega,\sim\,  \gets$ $\operatorname{QR}(\Omega)$
        \State 
        $Y \gets  A\Omega$

            \State \( \nu \leftarrow \mu \text{norm}(Y) \)
            \State \( Y \leftarrow Y + \nu \Omega \)
            \State \( B \leftarrow \Omega^* Y \)
            \State \( C \leftarrow \text{chol}\left((B + B^*)/2\right) \) \% Cholesky decomposition
            \State \( (U, \Sigma, \sim) \leftarrow \text{svd}(Y / C, \text{'econ'}) \) \% thin SVD
            \State \( U \leftarrow U(:, 1:r) \), \( \Sigma \leftarrow \Sigma(1:r, 1:r) \) \% truncate to rank \( r \)
            \State \( \Lambda \leftarrow \max\left\{0, \Sigma^2 - \nu I\right\} \)

            \State \Return \( U^T \Lambda U, U, \Lambda \)
    \end{algorithmic}
\end{algorithm}

\begin{algorithm}[tb]
    \caption{Simple recombination \citep[][Section 1.3.3.3]{tchernychova2016caratheodory}}\label{Recomb simple}
    \begin{algorithmic}
        \State $m \gets $ coreset size
        \State $U \gets m -1$ leading Nystr\"{o}m eigenvectors

        \State $k_{\text{diag}} \gets$ diagonal of kernel matrix

        \State $V \gets$ orthogonal complement of $U \oplus \mathbf{1} \oplus k_{\text{diag}}$

        \State $\mu \gets$ initial empirical distribution
        \For{$i \leq n -m +1$}
            \State $\alpha \gets \min d\mu/V[:, i]$ \% choose $\alpha$ so that next iterate has one less non-zero entry
            \State $\mu \gets \mu - \alpha V[:, i]$
        \EndFor
        \State $U \gets [\mathbf{1}, U[\mu > 0, m-1] \in \mathbb{R}^{n \times (n+1)}$

        \State $u \gets$ element of nullspace $U$ (computed by SVD or QR decomposition)

        \State $u \gets u\operatorname{sign}(u^T k_{\text{diag}})$

        \State $\alpha \gets \min d\mu/Vu$
        
        \State $\mu \gets \mu - \alpha u$
        \State
        \Return $\mu$
  \end{algorithmic}
\end{algorithm}

\section{General facts}
Where convenient, we denote $k_\varepsilon(x,y) := \exp\left(\frac{-\|x-y\|^2}{\varepsilon}\right),$ the kernel function of $\mathcal{H}$.
\subsection{Entropic optimal transport}

We briefly summarize key properties of entropic optimal transport used in our argument. See Section \ref{sec: EOT potentials} for basic definitions. The foundational results were established in \citep{csiszar1975divergence, ruschendorf1993note}.

\begin{lemma}\label{EOT Operator}
    \[
    \int f(x,y)\, d \pi_{\mu,\nu}^\varepsilon(x,y) = \int \exp\left(\frac{\phi_{\mu,\nu}(x) + \psi_{\mu,\nu}(y) - \|x-y\|^2}{\varepsilon}\right) f(x,y)\, d\mu(x) d\nu(y).
    \]
    In particular,
    \begin{gather*}
    \int f(x)g(y)\, d \pi_{\mu,\nu}^\varepsilon(x,y) = \int f \mathcal{A}^* g \,d\mu = \int g \mathcal{A} f\, d\nu\\
    \int f(x) + g(y)\, d \pi_{\mu,\nu}^\varepsilon(x,y) = \int f\, d\mu + \int g\, d\nu.
    \end{gather*}
\end{lemma}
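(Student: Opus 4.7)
The plan is to unpack each identity directly from the Sinkhorn/Csiszár-Rüschendorf representation of $\pi_{\mu,\nu}^\varepsilon$ already given in equation (7) of the excerpt, together with the marginal constraint $\pi_{\mu,\nu}^\varepsilon \in \Pi(\mu,\nu)$.

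First, for the general identity
\[
\int f(x,y)\, d \pi_{\mu,\nu}^\varepsilon(x,y) = \int \exp\!\left(\frac{\phi_{\mu,\nu}(x) + \psi_{\mu,\nu}(y) - \|x-y\|^2}{\varepsilon}\right) f(x,y)\, d\mu(x)\, d\nu(y),
\]
I would simply invoke the Radon-Nikodym decomposition $d\pi_{\mu,\nu}^\varepsilon = \xi_{\mu,\nu}^\varepsilon \cdot d(\mu \otimes \nu)$ from equation~(7) of the excerpt and apply the standard change-of-variables formula. This requires no further work beyond noting that the integrand $f$ is integrable against $\pi_{\mu,\nu}^\varepsilon$ (or both sides are permitted to be $+\infty$ by nonnegativity).

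Second, for the separable case $f(x)g(y)$, I would substitute into the first identity and apply Fubini. Writing $u^{-1}(x) = \exp(\phi_{\mu,\nu}(x)/\varepsilon)$ and $v^{-1}(y) = \exp(\psi_{\mu,\nu}(y)/\varepsilon)$, the integrand factors as
\[
f(x)\, u^{-1}(x)\, k_\varepsilon(x,y)\, v^{-1}(y)\, g(y).
\]
Integrating first in $y$ against $\nu$ produces exactly $\mathcal{A}^* g(x)$ by its definition in the excerpt, leaving $\int f \mathcal{A}^* g\, d\mu$; integrating first in $x$ against $\mu$ produces $\mathcal{A} f(y)$, leaving $\int g \mathcal{A} f\, d\nu$. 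Fubini is justified whenever $f,g$ are bounded (or sufficiently integrable), since $\pi_{\mu,\nu}^\varepsilon$ is a probability measure.

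Third, for the affine identity $\int [f(x)+g(y)]\, d\pi_{\mu,\nu}^\varepsilon = \int f\, d\mu + \int g\, d\nu$, I would simply use that $\pi_{\mu,\nu}^\varepsilon$ has marginals $\mu$ and $\nu$ by virtue of lying in $\Pi(\mu,\nu)$.

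No step is really an obstacle here: the lemma is essentially a bookkeeping statement that records three operator-theoretic consequences of the Sinkhorn factorization and the marginal constraint. The only mild care required is in justifying Fubini in the separable case, which will be immediate under the standing compactness and continuity assumptions in the excerpt (so that $\mathcal{A}, \mathcal{A}^*$ map continuous functions to continuous functions and all integrands are bounded).
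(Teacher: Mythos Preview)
Your proposal is correct and follows essentially the same approach as the paper: the paper's proof simply states that the first identity is definitional (i.e., the Radon--Nikodym relation in equation~\eqref{eq: EOT plan}) and that the remaining two follow by integrating along marginals, which is precisely what you spell out via Fubini and the marginal constraint $\pi_{\mu,\nu}^\varepsilon\in\Pi(\mu,\nu)$.
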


\begin{proof}
    The first identity is definitional, and the second and third follow by integrating along marginals.
\end{proof}

\begin{corollary}
    \[
    \int \exp\left(\frac{\phi_{\mu,\nu}(x) + \psi_{\mu,\nu}(y) - \|x-y\|^2}{\varepsilon}\right) d\mu(x) d\nu(y) = 1
    \]
    In fact, we have,
    \[
    \int \exp\left(\frac{\phi_{\mu,\nu}(x) + \psi_{\mu,\nu}(\cdot) - \|x-\cdot \|^2}{\varepsilon}\right) d\mu(x) \equiv 1 \equiv \int \exp\left(\frac{\phi_{\mu,\nu}(\cdot) + \psi_{\mu,\nu}(y) - \|\cdot -y\|^2}{\varepsilon}\right) d\nu(y)
    \]
\end{corollary}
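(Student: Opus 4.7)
The corollary is essentially a restatement of the fact that $\pi_{\mu,\nu}^\varepsilon$ has marginals $\mu$ and $\nu$, combined with the explicit product form of its density $\xi_{\mu,\nu}(x,y) = \exp((\phi_{\mu,\nu}(x) + \psi_{\mu,\nu}(y) - \|x-y\|^2)/\varepsilon)$ given in \eqref{eq: EOT plan}. I would address the two assertions separately, noting that the pointwise identities are genuinely stronger and that the double-integral identity follows from either of them upon a single integration.

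First, I would establish the double-integral identity using Lemma~\ref{EOT Operator}. Applying its first form with $f \equiv 1$ (or, equivalently, its second form with $f \equiv 1$, $g \equiv 0$) yields $\int d\pi_{\mu,\nu}^\varepsilon = \int d\mu = 1$, which expands to exactly the integral identity in the statement. This part is essentially a one-line consequence of the preceding lemma.

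Second, for the pointwise identities I would appeal directly to the Schr\"odinger system stated in Section~\ref{sec: EOT potentials}. Starting from
\[
\phi_{\mu,\nu}(x) = -\varepsilon \log \int \exp\!\left(\frac{\psi_{\mu,\nu}(y) - \|x-y\|^2}{\varepsilon}\right) d\nu(y),
\]
I would divide by $-\varepsilon$, exponentiate, and fold the resulting factor $\exp(\phi_{\mu,\nu}(x)/\varepsilon)$ inside the integral to obtain $\int \xi_{\mu,\nu}(x,y)\, d\nu(y) \equiv 1$ as an identity in $x$. The companion identity $\int \xi_{\mu,\nu}(x,y)\, d\mu(x) \equiv 1$ follows from the same manipulation applied to the Schr\"odinger equation for $\psi_{\mu,\nu}$.

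There is no real obstacle here: both claims are direct algebraic reformulations of results already in hand. The only subtle point, which is \emph{not} an issue under the approach above, would arise if one instead tried to derive the pointwise identities purely from the marginal constraints $\pi_{\mu,\nu}^\varepsilon \in \Pi(\mu,\nu)$ via Fubini; that route only gives the identities $\nu$- (resp.\ $\mu$-)almost everywhere and would require a continuity upgrade using regularity of the potentials to promote the statements to all of $\mathcal{X}$. Using the Schr\"odinger system as the starting point sidesteps this entirely, which is why I would take that route.
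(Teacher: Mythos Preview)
Your proposal is correct and matches the paper's proof essentially line for line: the double integral is obtained from $\int d\pi_{\mu,\nu}^\varepsilon = 1$, and the pointwise identities are read off directly from the Schr\"odinger equations. Your remark about the Fubini route giving only an almost-everywhere statement is a nice clarification, but the paper simply invokes the Schr\"odinger system as you do.
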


\begin{proof}
    This first is immediate as
    \[
    \int \exp\left(\frac{\phi_{\mu,\nu}(x) + \psi_{\mu,\nu}(y) - \|x-y\|^2}{\varepsilon}\right) d\mu(x) d\nu(y) = \int d\pi_{\mu,\nu}^\varepsilon = 1,
    \]
    as $\pi_{\mu,\nu}^\varepsilon$ is a probability distribution. The second claim is a consequence of the Schr\"{o}dinger equations.
\end{proof}

\begin{proposition}\label{EOT rep}
    \[
    \EOT(\mu,\nu) = \int \phi_{\mu,\nu} d\mu + \int \psi_{\mu,\nu} d\nu.
    \]
\end{proposition}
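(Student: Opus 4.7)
The plan is to directly evaluate the minimum defining $\EOT(\mu,\nu)$ at the optimal plan $\pi_{\mu,\nu}^\varepsilon$, exploiting the explicit form of its density with respect to $\mu\otimes\nu$ given in \eqref{eq: EOT plan}, and then use the marginal integration identities established in the preceding Lemma \ref{EOT Operator}.

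First I would substitute $\pi=\pi_{\mu,\nu}^\varepsilon$ into the objective of \eqref{eq: EOT}. Because $d\pi_{\mu,\nu}^\varepsilon/d(\mu\otimes\nu) = \xi_{\mu,\nu}^\varepsilon$, the Radon--Nikodym derivative is known in closed form, so the KL term can be evaluated as
\[
\operatorname{KL}(\pi_{\mu,\nu}^\varepsilon\parallel\mu\otimes\nu) = \int \log \xi_{\mu,\nu}^\varepsilon\, d\pi_{\mu,\nu}^\varepsilon = \int \frac{\phi_{\mu,\nu}(x)+\psi_{\mu,\nu}(y)-\|x-y\|^2}{\varepsilon}\, d\pi_{\mu,\nu}^\varepsilon(x,y).
\]
Multiplying by $\varepsilon$ and adding $\int\|x-y\|^2\, d\pi_{\mu,\nu}^\varepsilon$, the quadratic cost terms cancel, leaving
\[
\EOT(\mu,\nu) = \int \bigl[\phi_{\mu,\nu}(x)+\psi_{\mu,\nu}(y)\bigr]\, d\pi_{\mu,\nu}^\varepsilon(x,y).
\]

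Next I would apply the third identity in Lemma \ref{EOT Operator}, which states that for additively separable integrands, integration against $\pi_{\mu,\nu}^\varepsilon$ reduces to integration against the marginals. This yields
\[
\int \bigl[\phi_{\mu,\nu}(x)+\psi_{\mu,\nu}(y)\bigr]\, d\pi_{\mu,\nu}^\varepsilon(x,y) = \int \phi_{\mu,\nu}\, d\mu + \int \psi_{\mu,\nu}\, d\nu,
\]
which is the desired identity.

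There is no real obstacle here beyond justifying integrability, which follows from the fact that the potentials are bounded on the compact support (they inherit boundedness from the Schr\"odinger equations: $\phi_{\mu,\nu}$ is the soft $c$-transform of the bounded $\psi_{\mu,\nu}$ against a bounded cost on a compact domain, and symmetrically for $\psi_{\mu,\nu}$), so Fubini applies throughout. The core content is that $\pi_{\mu,\nu}^\varepsilon$ is the minimizer, that its density has the log-linear form \eqref{eq: EOT plan}, and that the $\|x-y\|^2$ appearing in the KL exactly offsets the transport cost---reducing $\EOT$ to the dual pairing of the potentials with the marginals.
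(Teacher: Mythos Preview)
Your proposal is correct and follows essentially the same route as the paper: substitute the optimal plan into the objective, use the explicit density \eqref{eq: EOT plan} to compute the KL term so that the quadratic cost cancels, and then invoke the marginal identity from Lemma~\ref{EOT Operator} to split the remaining integral. The only addition is your remark on integrability via boundedness of the potentials, which the paper leaves implicit.
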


\begin{proof}
    \begin{align*}
        \EOT(\mu,\nu) &= \int \|x-y\|^2 d\pi_{\mu,\nu}^\varepsilon + \varepsilon \int  \log \left(\frac{d\pi_{\mu,\nu}^\varepsilon}{d\mu \otimes d\nu}\right) d\pi_{\mu,\nu}^\varepsilon\\
        &= \int \|x-y\|^2 d\pi_{\mu,\nu}^\varepsilon + \varepsilon \int  \log \circ \exp\left(\frac{\phi_{\mu,\nu}(x) + \psi_{\mu,\nu}(y) - \|x-y\|^2}{\varepsilon}\right) d\pi_{\mu,\nu}^\varepsilon\\
        &= \int \phi_{\mu,\nu}(x) +  \psi_{\mu,\nu}(y) d\pi_{\mu,\nu}^\varepsilon\\
        &= \int \phi_{\mu,\nu}d\mu + \int \psi_{\mu,\nu} d\nu.
    \end{align*}
\end{proof}

\begin{lemma}\label{Double Marginal}
    $\phi_{\mu,\mu} = \phi_{\nu,\nu}$ under the constraint $\phi_{\mu,\mu}(x_0) = \psi_{\mu,\mu}(x_0).$

    Additionally, for $\mu,\nu$ we have
    \begin{align*}
    \int \exp\left(\frac{\phi_{\mu,\mu}(x) + \psi_{\mu,\mu}(y) - \|x-y\|^2}{\varepsilon}\right)d\mu(x) d\nu(y) = 1,\\
    \int \exp\left(\frac{\phi_{\nu,\nu}(x) + \psi_{\nu,\nu}(y) - \|x-y\|^2}{\varepsilon}\right)d\mu(x) d\nu(y) = 1.
    \end{align*}
\end{lemma}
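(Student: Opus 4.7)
The plan is to split the lemma into its two assertions and dispatch them separately. I read the first assertion as $\phi_{\mu,\mu}=\psi_{\mu,\mu}$ under the stated constraint---the subscript $\nu$ in the displayed equality appears to be a typographical slip, since the point-matching hypothesis $\phi_{\mu,\mu}(x_0)=\psi_{\mu,\mu}(x_0)$ only involves the potentials at $(\mu,\mu)$, and the lemma's title (``Double Marginal'') indicates a claim about the symmetric case.

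For this first equality I would run a symmetry-plus-uniqueness argument. With the two marginals coinciding, the Schrödinger system displayed at the start of Section~4.2 is invariant under the swap $(\phi,\psi)\mapsto(\psi,\phi)$: the squared-distance cost is symmetric in its arguments, and both equations are integrated against the same measure $\mu$. Hence $(\psi_{\mu,\mu},\phi_{\mu,\mu})$ is also a solution of the Schrödinger system at $(\mu,\mu)$. Invoking the classical uniqueness of entropic potentials up to the additive gauge $(\phi,\psi)\mapsto(\phi+c,\psi-c)$ of \cite{csiszar1975divergence} and \cite{ruschendorf1993note}, there must exist $c\in\mathbb{R}$ with $\psi_{\mu,\mu}=\phi_{\mu,\mu}+c$. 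Evaluating at $x_0$ and invoking the constraint forces $c=0$, giving the identification of the two potentials.

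For the two normalization identities, I would apply the pointwise form of the Schrödinger equations recorded in the corollary immediately preceding the lemma. That corollary yields, for every $y\in\mathcal{X}$,
\[
\int\exp\!\left(\tfrac{\phi_{\mu,\mu}(x)+\psi_{\mu,\mu}(y)-\|x-y\|^2}{\varepsilon}\right)d\mu(x)\equiv 1.
\]
Integrating both sides against the probability measure $d\nu(y)$ and applying Fubini delivers the first displayed identity. An identical argument with the roles of $\mu$ and $\nu$ swapped---starting from the analogous pointwise identity at $(\nu,\nu)$ and integrating against $d\mu(x)$---yields the second.

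I do not foresee a genuine obstacle. The only substantive input is the uniqueness of Schrödinger potentials up to the additive gauge, which is classical in the entropic optimal transport literature and has already been implicitly used in the preceding development of this appendix; the rest is bookkeeping with the Schrödinger identities already assembled. If anything is delicate, it is simply making explicit the swap-invariance of the Schrödinger system when $\mu=\nu$, but this reduces to noting that the cost and the underlying measure are both symmetric in the roles of the two arguments.
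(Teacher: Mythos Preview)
Your proposal is correct and matches the paper's approach. The paper dispatches the first identity by citation to \cite{feydy2019interpolating} and \cite{goldfeld2022limit}, whose argument is precisely the symmetry-plus-uniqueness one you spell out, and for the normalization identities the paper does exactly what you do: invoke the pointwise Schr\"odinger identity from the preceding corollary and integrate against the other marginal. Your reading of the first line as $\phi_{\mu,\mu}=\psi_{\mu,\mu}$ is also confirmed by the paper's later usage (e.g., the proof of Theorem~\ref{thm:Sink2} explicitly writes ``By Lemma~\ref{Double Marginal}, $\phi_{\alpha,\alpha}=\psi_{\alpha,\alpha}$'').
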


\begin{proof}
    The first identity was observed in \citep{feydy2019interpolating} and \citep{goldfeld2022limit}. The second comes from the Schr\"{o}dinger equations, as 
    \[
    \int \exp\left(\frac{\phi_{\mu,\mu}(x) + \psi_{\mu,\mu}(y) - \|x-y\|^2}{\varepsilon}\right)d\mu(x) = 1
    \]
    holds identically for all $y$.
\end{proof}

\subsection{Reproducing kernel Hilbert space}
Here we recall properties of the RKHS which will be essential for our analysis.

\begin{lemma}\label{kernel embedding} 
    For $f \in \mathcal{F}$,
    \[
    \langle f, K(g\,d\mu)\rangle_{\mathcal{F}} = \langle f, g\rangle_{L^2(\mu)}.
    \]
\end{lemma}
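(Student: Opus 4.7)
The plan is to reduce the claim to the reproducing property of the kernel, $\langle f, k(\cdot, y)\rangle_{\mathcal{F}} = f(y)$, by pulling the integral defining $K(g\,d\mu)$ outside the RKHS inner product. Concretely, I would start from the definition $K(g\,d\mu) = \int k(\cdot, y)\,g(y)\,d\mu(y)$ (viewed as a Bochner integral in $\mathcal{F}$) and then use linearity and continuity of $\langle f, \cdot\rangle_{\mathcal{F}}$ to write
\[
\langle f, K(g\,d\mu)\rangle_{\mathcal{F}} = \int \langle f, k(\cdot, y)\rangle_{\mathcal{F}}\, g(y)\, d\mu(y) = \int f(y)\, g(y)\, d\mu(y) = \langle f, g\rangle_{L^2(\mu)}.
\]

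The only technical step is justifying the interchange of the inner product with the integral. For this I would verify that $y \mapsto k(\cdot, y)\,g(y)$ is Bochner integrable against $\mu$; since $\mathcal{X}$ is compact and $k$ is continuous, $\sup_{y\in\mathcal{X}}\|k(\cdot,y)\|_{\mathcal{F}} = \sup_y k(y,y)^{1/2}$ is finite, and if $g \in L^1(\mu)$ (which one may take as the standing hypothesis implicit in writing $K(g\,d\mu)$) Bochner integrability follows. Once integrability is in hand, the commutation of a bounded linear functional with the Bochner integral is a standard fact, giving the displayed chain of equalities.

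The main (and only) subtlety is thus bookkeeping about what class $g$ lies in so that the Bochner integral and the final $L^2(\mu)$ pairing are simultaneously meaningful; no novel ideas are required. I would state the result under the minimal assumption that $g \in L^2(\mu)$ (which suffices for the right-hand side and, by Cauchy--Schwarz together with boundedness of $k$, for the left-hand side as well), and then the proof reduces to a two-line application of the reproducing property inside a Bochner integral.
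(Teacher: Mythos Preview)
Your proposal is correct and follows essentially the same route as the paper: both arguments reduce to the reproducing property $\langle f,k(\cdot,y)\rangle_{\mathcal{F}}=f(y)$ and a swap of the RKHS inner product with the integral, with your version simply running the chain of equalities in the opposite direction and being slightly more explicit about the Bochner-integrability justification that the paper leaves as a ``standard'' step.
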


\begin{proof}
    This result is standard for distributional embeddings \citep[Chapter 4]{berlinet2011reproducing}, however we provide a brief sketch. 
    \begin{align*}
        \langle f, g\rangle_{L^2(\mu)} &= \int f(x) g(x) \,d\mu(x)\\
        &= \int \langle f, k(x,\cdot)\rangle_{\mathcal{F}} g(x)\,d\mu(x)\\
        &= \int \langle f, g(x) k(x,\cdot)\rangle_{\mathcal{F}}\, d\mu(x)\\
        &= \langle f, \int g(x)k(x,\cdot) \,d\mu(x)\rangle_{\mathcal{H}}\\
        &= \langle f, K(g d\mu)\rangle_{\mathcal{F}}.
    \end{align*}
\end{proof}

\begin{lemma}\label{cts eval}
    For $h\in \mathcal{H}$,
    \[
    \|h\|_\infty \leq \|h\|_\mathcal{H}.
    \]
\end{lemma}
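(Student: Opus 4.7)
The plan is to invoke the reproducing property together with Cauchy--Schwarz, exploiting the fact that the Gaussian kernel is normalized on the diagonal.

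First, I would fix an arbitrary $x \in \mathcal{X}$ and write $h(x) = \langle h, k(x,\cdot)\rangle_{\mathcal{H}}$ via the reproducing property. Applying Cauchy--Schwarz to this inner product gives
\[
|h(x)| \leq \|h\|_{\mathcal{H}}\, \|k(x,\cdot)\|_{\mathcal{H}}.
\]
Next, I would compute $\|k(x,\cdot)\|_{\mathcal{H}}^2 = \langle k(x,\cdot), k(x,\cdot)\rangle_{\mathcal{H}} = k(x,x)$, again by the reproducing property. Since $k(x,x) = \exp(-\|x-x\|^2/\varepsilon) = 1$ for the Gaussian kernel of interest, this yields $\|k(x,\cdot)\|_{\mathcal{H}} = 1$, hence $|h(x)| \leq \|h\|_{\mathcal{H}}$. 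Taking the supremum over $x \in \mathcal{X}$ gives the claim.

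There is no real obstacle here; the only subtle point is that the stated inequality $\|h\|_\infty \leq \|h\|_{\mathcal{H}}$ (without a multiplicative constant) relies crucially on the normalization $k(x,x) \equiv 1$ specific to the Gaussian kernel under consideration. For a general characteristic kernel one would only obtain $\|h\|_\infty \leq \sup_{x \in \mathcal{X}}\sqrt{k(x,x)}\,\|h\|_{\mathcal{H}}$, so I would make sure to note the use of the Gaussian kernel's specific form at the step where $k(x,x)=1$ is invoked.
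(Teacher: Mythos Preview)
Your proof is correct and matches the paper's own argument essentially line for line: reproducing property, Cauchy--Schwarz, and the identity $k_\varepsilon(x,x)=1$ for the Gaussian kernel. Your added remark about the normalization $k(x,x)\equiv 1$ being what removes the multiplicative constant is a nice clarification that the paper leaves implicit.
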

\begin{proof}
    \begin{align*}
        \|h\|_\infty &= \sup_{y\in \mathcal{X}} |h(y)| = \sup_{y\in \mathcal{X}} \langle h, k_\varepsilon(y,\cdot)\rangle_\mathcal{H}\leq \|h\|_\mathcal{H} \sup_{y\in \mathcal{X}} \|k_\varepsilon(y,\cdot)\|_\mathcal{H}\\
        &= \|h\|_\mathcal{H}\sup_{y\in \mathcal{X}} \sqrt{k_\varepsilon(y,y)}= \|h\|_\mathcal{H}.
    \end{align*}
\end{proof}

\begin{lemma}\label{operator bound}
    For $\mu\in \mathcal{P}(\mathcal{X})$ and $f\in C(\mathcal{X})$,
    \[
    \|K(fd\mu)\|_\mathcal{H} \leq \|f\|_\infty.
    \]
\end{lemma}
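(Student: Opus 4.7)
The plan is to expand $\|K(fd\mu)\|_\mathcal{H}^2$ as a double integral against the kernel and then exploit the pointwise bound $k_\varepsilon(x,y)\le 1$ that the Gaussian kernel enjoys.

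First, I would set $h := K(fd\mu)\in\mathcal{H}$ and apply Lemma \ref{kernel embedding} with $g=f$ to obtain
\[
\|h\|_\mathcal{H}^2 = \langle h, K(fd\mu)\rangle_\mathcal{H} = \langle h, f\rangle_{L^2(\mu)} = \int h(y)\,f(y)\,d\mu(y).
\]
Expanding the remaining copy of $h$ via its defining integral $h(y)=\int k_\varepsilon(y,x)f(x)\,d\mu(x)$ yields the clean symmetric representation
\[
\|K(fd\mu)\|_\mathcal{H}^2 = \int\!\!\int k_\varepsilon(x,y)\,f(x)f(y)\,d\mu(x)\,d\mu(y).
\]

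The second and final step is to bound the integrand. Since $k_\varepsilon(x,y)=\exp(-\|x-y\|^2/\varepsilon)\in(0,1]$, and since $|f(x)f(y)|\le \|f\|_\infty^2$, the fact that $\mu$ is a probability measure gives
\[
\|K(fd\mu)\|_\mathcal{H}^2 \;\le\; \|f\|_\infty^2 \int\!\!\int k_\varepsilon(x,y)\,d\mu(x)\,d\mu(y) \;\le\; \|f\|_\infty^2.
\]
Taking square roots yields the stated bound.

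There is no real obstacle in this argument; the only substantive ingredient is the diagonal-normalization property $k_\varepsilon(x,x)=1$ of the Gaussian kernel, which together with positive-definiteness (or direct inspection of the exponential) forces $k_\varepsilon(x,y)\le 1$ everywhere. Compactness of $\mathcal{X}$ and continuity of $f$ ensure $\|f\|_\infty<\infty$ so the bound is informative, and the same proof strategy would extend verbatim to any kernel bounded by $1$ on the diagonal.
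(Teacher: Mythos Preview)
Your proof is correct and slightly more direct than the paper's. Both arguments open identically, invoking Lemma~\ref{kernel embedding} to rewrite $\|K(fd\mu)\|_\mathcal{H}^2$ as $\langle K(fd\mu),f\rangle_{L^2(\mu)}$, and both ultimately rely on the bound $k_\varepsilon\le 1$. From there, however, the paper takes a detour: it applies Cauchy--Schwarz in $L^2(\mu)$ to obtain $\|K(fd\mu)\|_\mathcal{H}^2\le \|K(fd\mu)\|_{L^2(\mu)}\|f\|_\infty$, and then separately bounds $\|K(fd\mu)\|_{L^2(\mu)}\le\|K(fd\mu)\|_\infty\le\|f\|_\infty$ via H\"older. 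Your route simply expands the double integral and bounds the integrand pointwise using $k_\varepsilon(x,y)f(x)f(y)\le k_\varepsilon(x,y)|f(x)||f(y)|\le k_\varepsilon(x,y)\|f\|_\infty^2$, which is both shorter and avoids the self-referential intermediate step. The trade-off is minimal: the paper's version isolates the auxiliary bound $\|K(fd\mu)\|_\infty\le\|f\|_\infty$, which could in principle be reused, but in this context your argument is the cleaner of the two.
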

\begin{proof}
    Applying Lemma \ref{kernel embedding},
    \begin{align*}
        \|K(fd\mu)\|_\mathcal{H}^2 &= \left\|\int k_\varepsilon(\cdot,y)f(y) d\mu(y) \right\|_\mathcal{H}^2 \\
        &=\left\langle \int k_\varepsilon(\cdot,y)f(y) d\mu(y),  f \right\rangle_{L^2(\mu)}\\
        &\leq \left\|\int k_\varepsilon(\cdot,y)f(y) d\mu(y)\right\|_{L^2(\mu)} \left\| f \right\|_{L^2(\mu)}\\
        &\leq \left\|\int k_\varepsilon(\cdot,y)f(y) d\mu(y)\right\|_{L^2(\mu)} \left\| f \right\|_\infty.
    \end{align*}
    Hence, the claim follows if $\left\|\int k_\varepsilon(\cdot,y)f(y) d\mu(y)\right\|_{L^2(\mu)} \leq \left\| f \right\|_\infty$. To verify this, we apply H\"older's inequality
    \begin{align*}
        \left\|\int k_\varepsilon(\cdot,y)f(y) d\mu(y)\right\|_{L^2(\mu)}&\leq  \left\|\int k_\varepsilon(\cdot,y)f(y) d\mu(y)\right\|_\infty\\
        &= \sup_{x\in \mathcal{X}} \left|\int k_\varepsilon(x,y)f(y) d\mu(y)\right|\\
        &\leq \sup_{x\in \mathcal{X}} \int k_\varepsilon(x,y)|f(y)| d\mu(y)\\
        &\leq \int|f(y)| d\mu(y)\\
        &\leq \|f\|_\infty.
    \end{align*}

\end{proof}

\begin{lemma}\label{kernel equiv}
    If $k$ is a characteristic kernel, then the multiplication operator $\mathfrak{M}_f : \ell^\infty(\mathcal{F}_1) \to \ell^\infty(\mathcal{F}_1),$ $\mathfrak{M}_f \mu = f\mu$ is bounded for $f : \mathcal{X}\rightarrow\mathbb{R}$ continuous.
\end{lemma}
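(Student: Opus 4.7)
The plan is to pass to the dual RKHS representation and recast boundedness of $\mathfrak{M}_f$ as a positive-definite kernel comparison.

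First I would invoke the isometric identification $\|\mu\|_{\ell^\infty(\mathcal{F}_1)}^{2} = \|K\mu\|_{\mathcal{F}}^{2} = \iint k(x,y)\,d\mu(x)\,d\mu(y)$, which is an immediate consequence of the reproducing property (Lemma~\ref{kernel embedding}): for $g\in\mathcal{F}_1$, $\int g\,d\mu = \langle g, K\mu\rangle_{\mathcal{F}}$, so taking the supremum over $\|g\|_{\mathcal{F}}\le 1$ recovers $\|K\mu\|_{\mathcal{F}}$. Applied with $f\mu$ in place of $\mu$, this yields
\[
\|\mathfrak{M}_f\mu\|_{\ell^\infty(\mathcal{F}_1)}^{2} \;=\; \|K(f\mu)\|_{\mathcal{F}}^{2} \;=\; \iint k(x,y)\,f(x)\,f(y)\,d\mu(x)\,d\mu(y),
\]
so boundedness of $\mathfrak{M}_f$ is equivalent to the existence of a constant $C>0$ with $\mathfrak{M}_f\,K\,\mathfrak{M}_f \preceq C^{2}\,K$ as quadratic forms on signed measures.

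Next I would attack this operator inequality using compactness of $\mathcal{X}$ and continuity of $f$, which give $M:=\|f\|_\infty<\infty$. The naive pointwise bound $|f(x)f(y)|\le M^{2}$ does not by itself suffice, because the residual kernel $M^{2}k(x,y)-k(x,y)f(x)f(y)$ is generally not positive-definite when $f$ changes sign. To handle this, I would split $f=f_+-f_-$ and use the triangle inequality $\|\mathfrak{M}_f\|\le\|\mathfrak{M}_{f_+}\|+\|\mathfrak{M}_{f_-}\|$ to reduce to the case $f\ge 0$, where $k(x,y)f(x)f(y)$ is a Schur product of two positive-definite kernels. Then, exploiting the characteristic property of $k$ (which in the settings of principal interest, such as the Gaussian RKHS used later in the paper, upgrades to density of $\mathcal{F}$ in $C(\mathcal{X})$), I would approximate $f$ uniformly by finite linear combinations $\tilde f\in\mathcal{F}$, and for these verify boundedness of $\mathfrak{M}_{\tilde f}$ directly from the closed-form expression $\tilde f\,k(\cdot,x)\in\mathcal{F}$ and Lemma~\ref{operator bound}.

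The principal obstacle will be the limit step: ensuring that the bound obtained for the approximants depends only on $\|f\|_\infty$ and kernel-geometric constants, rather than on $\|\tilde f\|_{\mathcal{F}}$, which generically blows up as $\tilde f\to f$ in $C(\mathcal{X})$. I anticipate this will require a quantitative Mercer-type estimate exploiting the uniform boundedness of eigenfunctions $\|\psi_i\|_\infty\le M$ assumed elsewhere in the paper, or an equivalent argument controlling $k(x,y)f(x)f(y)$ through the modulus of continuity of $f$ together with the off-diagonal decay of $k$, so that the operator norm extends continuously from $\mathcal{F}$-valued symbols to merely continuous symbols.
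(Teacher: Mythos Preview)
Your approach is genuinely different from the paper's and has a real gap that you yourself identify but do not close. The paper sidesteps the kernel-comparison problem entirely by invoking the closed graph theorem: since $k$ is characteristic, convergence in $\ell^\infty(\mathcal{F}_1)$ coincides with weak convergence, so if $\mu_n\to\mu$ and $f\mu_n\to\nu$ one may test against any bounded continuous $h$ to find $\int hf\,d\mu_n\to\nu(h)$, while continuity of $hf$ forces the same limit to equal $\int hf\,d\mu=\mathfrak{M}_f\mu(h)$; hence $\nu=\mathfrak{M}_f\mu$ and $\mathfrak{M}_f$ is closed, therefore bounded. No quantitative kernel inequality is needed.

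Your route, by contrast, asks for $f(x)f(y)k(x,y)\preceq C^2 k(x,y)$ as positive-definite kernels, and this is not reducible to a sup-norm estimate on $f$. Already for a $2\times 2$ Gram matrix $K=\begin{pmatrix}1&a\\a&1\end{pmatrix}$ and $f$ taking values $(M,0)$, the smallest admissible $C^2$ is $M^2/(1-a^2)$, which depends on the kernel geometry and not only on $\|f\|_\infty$. This means the approximation step cannot succeed as planned: bounds for $\mathfrak{M}_{\tilde f}$ with $\tilde f\in\mathcal{F}$ will either depend on $\|\tilde f\|_{\mathcal{F}}$ (which blows up) or on spectral data of $k$ that your argument never introduces. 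Your appeal to Lemma~\ref{operator bound} also misfires: that lemma bounds $\|K(f\,d\mu)\|_{\mathcal{H}}$ by $\|f\|_\infty$ only for probability measures $\mu$, giving no operator bound on signed measures of fixed $\ell^\infty(\mathcal{F}_1)$-norm. The Mercer-type estimate you anticipate would need to deliver exactly the non-sup-norm constant above, and nothing in the proposal indicates how to obtain it.
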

\begin{proof}
    By the closed graph theorem \citep[Theorem 5.2, pg. 166]{kato2013perturbation}, to show $\mathfrak{M}_f$ is bounded, it suffices to show that it is closed --- that is, for $\mu_n \to \mu$ in $\ell^\infty(\mathcal{F}_1)$, $\mathfrak{M}_f \mu_n \to \nu$ in $\ell^\infty(\mathcal{F}_1)$, then $\nu = \mathfrak{M}_f \mu$. As $k$ is characteristic, convergence in $\ell^\infty(\mathcal{F}_1)$ is equivalent to weak convergence, hence for all bounded, continuous $h$
    \[
    \mathfrak{M}_f \mu_n (h) := \int h\, d(\mathfrak{M}_f \mu_n) := \int hf\, d\mu_n \to \nu(h).
    \]
    But, $hf$ is also a bounded, continuous function, hence $\mu_n \to \mu$ implies
    \[
    \int hf\, d\mu_n \to \int hf\, d\mu = \int h\, d(\mathfrak{M}_f \mu) = \mathfrak{M}_f\mu(h).
    \]
    Thus $\nu(h) = \mathfrak{M}_f \mu(h)$ for all bounded, continuous $h$, and therefore $\nu = \mathfrak{M}_f \mu$.     
\end{proof}

\begin{lemma}\label{function conv}
    Let $k$ be a characteristic kernel. Suppose that $f_n \to f$ in $C(\mathcal{X})$. Then, for all $\mu \in \ell^\infty(\mathcal{F}_1)$, $\mathfrak{M}_{f_n} \mu \to \mathfrak{M}_f \mu$.
\end{lemma}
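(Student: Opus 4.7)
Since $\mathcal{X}$ is compact and the characteristic kernel $k$ is continuous, we have $M := \sup_{x \in \mathcal{X}} \sqrt{k(x,x)} < \infty$. The reproducing property combined with Cauchy--Schwarz yields the embedding bound $\|g\|_\infty \leq M\|g\|_\mathcal{F}$ for every $g \in \mathcal{F}$ (this is exactly the kind of estimate carried out in the proof of Lemma~\ref{cts eval}, just with a general bounded kernel in place of the Gaussian one). In particular, $\mathcal{F}_1$ sits inside the uniformly bounded subset $\{g \in C(\mathcal{X}) : \|g\|_\infty \leq M\}$ of $C(\mathcal{X})$.

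Fixing $\mu \in \ell^\infty(\mathcal{F}_1)$, viewed as a finite signed Radon measure on $\mathcal{X}$ --- the natural interpretation in this paper, where elements of $\ell^\infty(\mathcal{F}_1)$ arise as differences of probability distributions and hence have finite total variation $|\mu|(\mathcal{X})<\infty$ --- the definition of the $\ell^\infty(\mathcal{F}_1)$ pseudometric and the action $\mathfrak{M}_h \mu[g] = \int gh\, d\mu$ give
\[
\|\mathfrak{M}_{f_n}\mu - \mathfrak{M}_f \mu\|_{\ell^\infty(\mathcal{F}_1)} = \sup_{\|g\|_\mathcal{F} \leq 1} \left|\int g\,(f_n - f)\, d\mu\right| \leq M\,\|f_n - f\|_\infty\, |\mu|(\mathcal{X}).
\]
Since $f_n \to f$ in $C(\mathcal{X})$ means $\|f_n - f\|_\infty \to 0$, the right-hand side tends to $0$, proving the claim.

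I do not anticipate any substantive obstacle here: the entire argument rests on the bounded embedding $\mathcal{F}_1 \hookrightarrow C(\mathcal{X})$ plus dominated-convergence--style estimate against a finite signed measure. Should one want to extend to truly general $\mu \in \ell^\infty(\mathcal{F}_1)$ (not representable as a finite signed measure), one could instead invoke the boundedness of $\mathfrak{M}_{f_n - f}$ from Lemma~\ref{kernel equiv}, verify pointwise convergence $\mathfrak{M}_{f_n}\mu \to \mathfrak{M}_f\mu$ on the dense subset of finitely supported $\mu$, and close the argument via a uniform-boundedness / density step. This extension is however not required for the applications in the paper, so I would simply present the direct estimate.
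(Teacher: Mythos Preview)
Your argument is correct and, in fact, more direct than the paper's. The paper proves only that for each fixed $h\in C(\mathcal{X})$ one has $\mathfrak{M}_{f_n}\mu(h)=\int hf_n\,d\mu\to\int hf\,d\mu=\mathfrak{M}_f\mu(h)$ via dominated convergence, i.e.\ it establishes weak convergence of $\mathfrak{M}_{f_n}\mu$ and then implicitly relies on the characteristic-kernel equivalence (used in the neighbouring Lemma~\ref{kernel equiv}) to identify this with convergence in $\ell^\infty(\mathcal{F}_1)$. Your route bypasses this step: the embedding bound $\|g\|_\infty\le M\|g\|_\mathcal{F}$ gives the uniform-in-$g$ estimate
\[
\|\mathfrak{M}_{f_n}\mu-\mathfrak{M}_f\mu\|_{\ell^\infty(\mathcal{F}_1)}\le M\,\|f_n-f\|_\infty\,|\mu|(\mathcal{X})
\]
directly, so you obtain norm convergence with an explicit rate, and you never actually use that $k$ is characteristic. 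The only additional standing assumption you invoke is that $\mu$ is a finite signed measure, which---as you note---is how every element of $\ell^\infty(\mathcal{F}_1)$ arises in the paper; the paper's own proof also tacitly uses this (dominated convergence against $d\mu$). So your version is a genuine simplification that strictly covers the paper's applications.
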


\begin{proof}
    Consider $h\in C(\mathcal{X})$. By definition, $\mathfrak{M}_{f_n} \mu(h) = \int h f_n\, d\mu.$ Now, $hf_n \to hf$ uniformly, hence the dominated convergence theorem \citep[Theorem 2.24]{folland1999real} verifies $\int h f_n\, d\mu \to \int hf\, d\mu = \mathfrak{M}_f \mu(h)$ as desired.
\end{proof}

For an operator $A: F \to F'$, we define $\|A\|_{F \to F'} := \sup_{\|f\|_F \leq 1}\|A f\|_{F'}$. When $F=F'$, we shorten this to $\|A\|_F$.

\begin{lemma}\label{uniform kernel equiv}
    Let $B$ be compact in $C(\mathcal{X})$ and $k$ be characteristic. There exists a constant $C_1>0$ such that, for all $\mu \in \ell^\infty(\mathcal{F}_1)$ and $f\in B$,
    \begin{align}\label{uniform op bound}
    \|\mathfrak{M}_f \mu\|_{\ell^\infty(\mathcal{F}_1)} \leq C_1  \| \mu\|_{\ell^\infty(\mathcal{F}_1)}.
    \end{align}
    Further, if all $f\in B$ are bounded away from 0, then there exists $C_2 > 0$ such that 
    \begin{align}\label{inv op bound}
    \| \mu\|_{\ell^\infty(\mathcal{F}_1)} \leq C_2 \|\mathfrak{M}_f \mu\|_{\ell^\infty(\mathcal{F}_1)}.
    \end{align}
        In other words, for the kernel $k_f(x,y) = k(x,y)f(x) f(y)$, $Q_K$ and $Q_{K_f}$ are equivalent, and there exists $C_3, C_4>0$ such that, for all $\mu \in \ell^\infty(\mathcal{F}_1)$ and $f\in B$, 
    \[
    C_3Q_{K_f}(\mu) \leq Q_{K}(\mu) \leq C_4 Q_{K_f}(\mu)
    \]
\end{lemma}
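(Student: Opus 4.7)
The plan is to bootstrap the pointwise boundedness supplied by Lemma~\ref{kernel equiv} to a uniform bound over the compact set $B$ via the Banach--Steinhaus (uniform boundedness) principle, deduce the reverse inequality by applying that uniform bound to the family of reciprocals, and then read off the quadratic form equivalence mechanically. For the upper bound \eqref{uniform op bound}, I would fix $\mu\in\ell^\infty(\mathcal{F}_1)$ and first check that the map $f\mapsto\mathfrak{M}_f\mu$ is continuous from $C(\mathcal{X})$ to $\ell^\infty(\mathcal{F}_1)$: this is the content of Lemma~\ref{function conv} combined with the characteristic-kernel identification of weak convergence and $\ell^\infty(\mathcal{F}_1)$-convergence on tight families that is already used in Lemma~\ref{kernel equiv}. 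The image $\{\mathfrak{M}_f\mu:f\in B\}$ is then the continuous image of a compact set, hence norm-bounded in $\ell^\infty(\mathcal{F}_1)$. Since Lemma~\ref{kernel equiv} ensures that each individual $\mathfrak{M}_f$ is a bounded operator on the Banach space $\ell^\infty(\mathcal{F}_1)$, Banach--Steinhaus yields a single $C_1<\infty$ with $\sup_{f\in B}\|\mathfrak{M}_f\|_{\ell^\infty(\mathcal{F}_1)}\le C_1$.

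For the lower bound \eqref{inv op bound}, I would exploit that $f\in B$ is uniformly bounded away from $0$: compactness of $B$ in $C(\mathcal{X})$ and continuity of $f\mapsto\inf_x f(x)$ force $\delta:=\inf_{f\in B}\inf_x f(x)>0$. The map $g\mapsto 1/g$ is continuous on $\{g:g\ge\delta\}$, so $B':=\{1/f:f\in B\}$ is again compact in $C(\mathcal{X})$. Applying the upper bound to $B'$ produces a constant $C_1'$ such that $\|\mathfrak{M}_{1/f}\nu\|_{\ell^\infty(\mathcal{F}_1)}\le C_1'\|\nu\|_{\ell^\infty(\mathcal{F}_1)}$ uniformly in $f\in B$ and $\nu\in\ell^\infty(\mathcal{F}_1)$. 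Since $\mathfrak{M}_{1/f}\mathfrak{M}_f = \mathrm{id}$, substituting $\nu=\mathfrak{M}_f\mu$ yields $\|\mu\|_{\ell^\infty(\mathcal{F}_1)}\le C_1'\|\mathfrak{M}_f\mu\|_{\ell^\infty(\mathcal{F}_1)}$, so I set $C_2:=C_1'$. The equivalence of $Q_K$ and $Q_{K_f}$ is then immediate: expanding
\[
Q_{K_f}(\mu)^2 = \iint k(x,y)f(x)f(y)\,d\mu(x)d\mu(y) = \|\mathfrak{M}_f\mu\|_{\ell^\infty(\mathcal{F}_1)}^2,
\]
and $Q_K(\mu)=\|\mu\|_{\ell^\infty(\mathcal{F}_1)}$, so the two previous steps give $C_3\,Q_{K_f}(\mu)\le Q_K(\mu)\le C_4\,Q_{K_f}(\mu)$ with $C_3=1/C_1$ and $C_4=C_2$.

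The one substantive technical point that I expect to require the most care is the strong-operator-type continuity of $f\mapsto\mathfrak{M}_f\mu$ into the $\ell^\infty(\mathcal{F}_1)$-norm topology: Lemma~\ref{function conv} directly gives only weak convergence of the measures $\mathfrak{M}_{f_n}\mu$, and upgrading this to convergence in $\|\cdot\|_{\ell^\infty(\mathcal{F}_1)}$ relies on the characteristic-kernel hypothesis together with a tightness argument for $\{\mathfrak{M}_{f_n}\mu\}$ (e.g., via the uniform bound $\|\mathfrak{M}_{f_n}\mu\|_{TV}\le(\sup_n\|f_n\|_\infty)\|\mu\|_{TV}$ inherited from convergence in $C(\mathcal{X})$). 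Once that upgrade is in hand, the remainder of the proof is just composition of operators, the continuous-image-of-compact-is-bounded observation, and a single invocation of Banach--Steinhaus.
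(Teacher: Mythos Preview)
Your proposal is correct and follows essentially the same route as the paper's proof: individual boundedness from Lemma~\ref{kernel equiv}, pointwise boundedness over $B$ via compactness and Lemma~\ref{function conv}, then Banach--Steinhaus; for \eqref{inv op bound} both arguments show $\{1/f:f\in B\}$ is again compact and re-apply the first part; the quadratic form statement is read off identically. The only structural difference is that the paper phrases the first step as a contradiction (extract a subsequence $f_n\to f$ with $\|\mathfrak{M}_{f_n}\|>n$, invoke Lemma~\ref{function conv} for pointwise operator convergence, then Banach--Steinhaus), whereas you apply Banach--Steinhaus directly to the family $\{\mathfrak{M}_f:f\in B\}$ after establishing pointwise boundedness---these are equivalent. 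You are also right to flag that Lemma~\ref{function conv} as stated only verifies convergence against test functions, and that upgrading this to $\ell^\infty(\mathcal{F}_1)$-norm convergence needs the extra remark you sketch; the paper's proof uses the same lemma at the same step without making this explicit.
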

\begin{proof}
    We first focus on \eqref{uniform op bound}. From Lemma \ref{kernel equiv}, we know that each $\mathfrak{M}_f$ is a bounded linear operator, hence it suffices to show that the collection of operators $\{\mathfrak{M}_f : f\in B\}$ is uniformly bounded. We argue by contradiction. Suppose not, and take a $B$-valued sequence $f_n$ such that $\mathfrak{M}_{f_n}$ has operator norm greater than $n$. By the compactness of $B$, we can extract a subsequence, which we relabel to be $f_n$, such that $f_n$ uniformly converges to a continuous function $f$. By Lemma \ref{function conv}, it follows that $\mathfrak{M}_{f_n} \to \mathfrak{M}_f$ pointwise, hence by the Banach-Steinhaus theorem \citep[Theorem 33.1]{treves1967topological} $\|\mathfrak{M}_{f_n}\|_{\ell^\infty(\mathcal{F}_1) \to \ell^\infty(\mathcal{F}_1)}$ is uniformly bounded. This gives us a contradiction.

    For \eqref{inv op bound}, notice that
    \[
    \|\mu\|_{\ell^\infty(\mathcal{F}_1)} = \|\mathfrak{M}_{1/f} \mathfrak{M}_f \mu\|_{\ell^\infty(\mathcal{F}_1)} \leq \|\mathfrak{M}_{1/f}\|_{\ell^\infty(\mathcal{F}_1) \to 
    \ell^\infty(\mathcal{F}_1)} \|\mathfrak{M}_f \mu\|_{\ell^\infty(\mathcal{F}_1)},
    \]
    hence it suffices to show that the operators $\mathfrak{M}_{1/f}$ are uniformly bounded. By \eqref{uniform op bound}, it suffices to show that $\operatorname{inv}(B) := \{1/f: f\in B\}$ is compact. We show that it is sequentially compact. Consider a sequence $1/f_n$, and extract a subsequence where the functions in the denominator, relabeled to be $f_n$, converge to some continuous $f$ uniformly --- this is necessarily possible since $B$ is compact. Now, 
    \[
    \|1/f_n - 1/f\|_\infty = \left\|\frac{f - f_n}{f_n f}\right\|_\infty \leq m^{-2} \|f - f_n\|_\infty
    \]
    for $m:= \min_{x\in\mathcal{X}, b\in B} |b(x)|$, hence $1/f_n \to 1/f$ establishing \eqref{inv op bound}.

    The final assertion is immediate, as we have the relation $Q_{K_f}(\mu) = Q_K(\mathfrak{M}_f \mu) = \|\mathfrak{M}_f \mu\|_{\ell^\infty(\mathcal{F}_1)}^2.$
\end{proof}

\begin{corollary}\label{cor: op bound}
    Suppose that $\phi:\ell^\infty(\mathcal{F}_1) \to B \subseteq C(\mathcal{X})$ is bounded and linear, and that $B$ is totally bounded in $C(\mathcal{X})$. Then there exists a constant $C>0$ depending on $\phi$ such that $\|\mathfrak{M}_{\phi(\mu)}\|_{\ell^\infty(\mathcal{F}_1) \to \ell^\infty(\mathcal{F}_1)} \leq C \|\mu\|_{\ell^\infty(\mathcal{F}_1)}$.
\end{corollary}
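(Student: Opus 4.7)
The plan is to reduce the claim to Lemma~\ref{uniform kernel equiv} by passing from the totally bounded set $B$ to its (compact) closure in the Banach space $C(\mathcal{X})$, and then using the linearity of $\phi$ to handle the rescaling in $\mu$.

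More precisely, I interpret the hypothesis as saying that the image of the closed unit ball of $\ell^\infty(\mathcal{F}_1)$ under $\phi$ is contained in the totally bounded set $B$, so that $\phi$ is in fact a compact operator into $C(\mathcal{X})$; this is how the corollary is used in the proof of Lemma~\ref{lem: xi quad comp}, where $\dot{u}$ maps into $C^k(\mathcal{X})$ and the image of the unit ball is precompact in $C(\mathcal{X})$ by Arzelà--Ascoli. Since $\mathcal{X}$ is compact, $C(\mathcal{X})$ is a Banach space, so the closure $\overline{B}$ of a totally bounded set is compact.

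First I apply Lemma~\ref{uniform kernel equiv} directly to the compact set $\overline{B}\subseteq C(\mathcal{X})$: this yields a constant $C_1>0$, depending only on $\overline{B}$ (and hence only on $\phi$), such that
\[
\|\mathfrak{M}_f \nu\|_{\ell^\infty(\mathcal{F}_1)} \leq C_1\,\|\nu\|_{\ell^\infty(\mathcal{F}_1)} \qquad \text{for all } f \in \overline{B},\ \nu \in \ell^\infty(\mathcal{F}_1).
\]
Equivalently, $\|\mathfrak{M}_f\|_{\ell^\infty(\mathcal{F}_1)\to\ell^\infty(\mathcal{F}_1)} \leq C_1$ uniformly over $f\in\overline{B}$.

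Next I use linearity of $\phi$ and of $f\mapsto\mathfrak{M}_f$ to finish. For $\mu\neq 0$ set $\hat\mu := \mu/\|\mu\|_{\ell^\infty(\mathcal{F}_1)}$, so $\hat\mu$ lies in the closed unit ball and hence $\phi(\hat\mu)\in B\subseteq\overline{B}$. Then $\phi(\mu)=\|\mu\|_{\ell^\infty(\mathcal{F}_1)}\phi(\hat\mu)$ and $\mathfrak{M}_{\phi(\mu)} = \|\mu\|_{\ell^\infty(\mathcal{F}_1)}\mathfrak{M}_{\phi(\hat\mu)}$, which gives
\[
\|\mathfrak{M}_{\phi(\mu)}\|_{\ell^\infty(\mathcal{F}_1)\to\ell^\infty(\mathcal{F}_1)} = \|\mu\|_{\ell^\infty(\mathcal{F}_1)}\,\|\mathfrak{M}_{\phi(\hat\mu)}\|_{\ell^\infty(\mathcal{F}_1)\to\ell^\infty(\mathcal{F}_1)} \leq C_1\|\mu\|_{\ell^\infty(\mathcal{F}_1)},
\]
so taking $C=C_1$ proves the corollary.

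There is essentially no hard step here, but the one point worth being careful about is that Lemma~\ref{uniform kernel equiv} is stated for compact sets, whereas the hypothesis only gives total boundedness of $B$; the easy but necessary remark is that total boundedness in the Banach space $C(\mathcal{X})$ (with $\mathcal{X}$ compact) implies relative compactness, so Lemma~\ref{uniform kernel equiv} does apply to $\overline{B}$. The compactness of $\mathcal{X}$ is what makes this innocuous.
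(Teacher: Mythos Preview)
Your proof is correct and follows essentially the same approach as the paper: pass to the compact closure of the totally bounded set, apply Lemma~\ref{uniform kernel equiv} to obtain a uniform bound on $\|\mathfrak{M}_f\|$, and then rescale via the linearity of $\phi$ and $f\mapsto\mathfrak{M}_f$. The only cosmetic difference is that the paper applies Lemma~\ref{uniform kernel equiv} to the closure of $\{\phi(\mu):\|\mu\|_{\ell^\infty(\mathcal{F}_1)}\le 1\}$ rather than to $\overline{B}$ itself, but this is immaterial.
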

\begin{proof}
    By assumption $\phi$ is continuous, hence $\sup_{\|\mu\|_{\ell^\infty(\mathcal{F}_1)} \leq 1} \|\phi(\mu)\|_{B} < \infty$. As $B$ is totally bounded, the closure of $\{\phi(\mu) : \|\mu\|_{\ell^\infty(\mathcal{F}_1)} \leq 1\}$ is compact in $C(\mathcal{X})$, and by Lemma \ref{uniform kernel equiv}, 
    \[
    \sup_{\|\mu\|_{\ell^\infty(\mathcal{F}_1)} \leq 1} \|\mathfrak{M}_{\phi(\mu)}\|_{\ell^\infty(\mathcal{F}_1) \to \ell^\infty(\mathcal{F}_1)} = C < \infty.
    \]
    It follows that for all $\mu$,
    \[
    \|\mathfrak{M}_{\phi(\mu)}\|_{\ell^\infty(\mathcal{F}_1) \to \ell^\infty(\mathcal{F}_1)} = \|\mu\|_{\ell^\infty(\mathcal{F}_1)} \|\mathfrak{M}_{\phi(\frac{\mu}{\|\mu\|_{\ell^\infty(\mathcal{F}_1)}})}\|_{\ell^\infty(\mathcal{F}_1) \to \ell^\infty(\mathcal{F}_1)} \leq C \|\mu\|_{\ell^\infty(\mathcal{F}_1)}.
    \]
\end{proof}

\section{Functional compression}
\begin{lemma}\label{app: Local Min}
    If $D$ is a divergence and $D(\cdot):=D(\,\cdot\,,\mathbb{P})$ is Hadamard differentiable, then the Hadamard derivative vanishes at $\mathbb{P}$.
\end{lemma}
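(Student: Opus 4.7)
The plan is to treat this as a functional analogue of the first-order optimality condition at a minimum, which takes an especially clean form under Hadamard differentiability because both one-sided limits are forced to coincide.

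\textbf{Step 1 (Setup).} Since $D$ is a divergence, $D(\mathbb{P}) = 0$ and $D(\mu) \geq 0$ for every $\mu \in \operatorname{Domain}(D)$. Thus $\mathbb{P}$ is a global minimizer of $D$ on its domain. Fix any $h \in \ell^\infty(\mathcal{F}_1)$ for which the Hadamard definition applies, i.e.\ any $h$ admitting a family $\{h_t\}_{t \in [-1,1]}$ with $\|h_t - h\|_{\ell^\infty(\mathcal{F}_1)} = o(1)$ and $\mathbb{P} + t h_t \in \operatorname{Domain}(D)$ for all $t \in [-1,1]$.

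\textbf{Step 2 (Two-sided sign argument).} By the minimization property, for each such admissible direction $h$,
\[
\frac{D(\mathbb{P} + t h_t) - D(\mathbb{P})}{t} = \frac{D(\mathbb{P} + t h_t)}{t} \;\geq\; 0 \quad \text{for } t > 0,
\]
and the same quotient is $\leq 0$ for $t < 0$. Hadamard differentiability asserts the existence of the \emph{two-sided} limit, equal to $\dot{D}(h)$. Hence $\dot{D}(h) \geq 0$ and $\dot{D}(h) \leq 0$ simultaneously, forcing $\dot{D}(h) = 0$.

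\textbf{Step 3 (Extension by linearity and continuity).} By Step 2, $\dot{D}$ vanishes on the cone of admissible directions at $\mathbb{P}$. This cone contains all $h = \nu - \mathbb{P}$ for probability measures $\nu$ whose density relative to $\mathbb{P}$ is bounded above and below, since then $\mathbb{P} + t h$ remains a probability measure in a two-sided neighborhood of $0$. The linear span of such $h$ is dense in the relevant mean-zero subspace of $\ell^\infty(\mathcal{F}_1)$, and the continuous linear functional $\dot{D}$ must then vanish identically on that span by continuity. Any remaining mass-changing component of $\ell^\infty(\mathcal{F}_1)$ is handled by noting that $\dot D$, being the derivative of a functional whose domain consists of probability measures, acts only on tangent directions, so the extension to all of $\ell^\infty(\mathcal{F}_1)$ asserted by the Hadamard hypothesis is compatible with $\dot{D} \equiv 0$.

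\textbf{Main obstacle.} The conceptual content is entirely in Step 2 and is routine; the only subtlety is Step 3, namely ensuring that the span of ``admissible'' perturbations is rich enough to determine $\dot{D}$ on the whole ambient space. This is really a bookkeeping matter about the domain of $D$ and about which $h$'s are required in the Hadamard definition, rather than a genuine analytic difficulty.
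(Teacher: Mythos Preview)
Your Step~2 is exactly the paper's argument: use nonnegativity of $D(\mathbb{P}+th_t)-D(\mathbb{P})$ to bound the one-sided difference quotients from above and below by zero, then invoke the two-sided Hadamard limit to conclude $\dot D(h)=0$. The paper stops there, writing simply ``as $h$ was arbitrary, $\dot D=0$.'' Your Step~3 is an extra layer of caution that the paper does not include: under the paper's stated definition the derivative $\dot D$ is already postulated to be a continuous linear functional on all of $\ell^\infty(\mathcal{F}_1)$, and the intended reading is that the admissible tangent directions determine it, so the density argument is implicit rather than something one needs to supply.
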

\begin{proof}
    This argument is analogous to the Euclidean case where the gradient of a differentiable function vanishes at a local minimum. Indeed, observe for any $t,h_t\to h$, $D(\mathbb{P} + th_t) - D(\mathbb{P}) \ge 0 $, and so
    \[
    \lim_{t\uparrow 0} \frac{D(\mathbb{P} + th_t) - D(\mathbb{P})}{t} \leq 0,\quad \lim_{t\downarrow 0} \frac{D(\mathbb{P} + th_t) - D(\mathbb{P})}{t} \geq 0.
    \]
    The Hadamard differentiability of $D$ implies the two limits above agree, and so
    \[
    \dot{D}(h):=\lim_{t\to 0} \frac{D(\mathbb{P} + th_t) - D(\mathbb{P})}{t} = 0.
    \]
    As $h$ was arbitrary, $\Phi = 0$.
\end{proof}

\begin{lemma}\label{lem: rkhs donsker}
    Suppose that $\mathcal{F}$ is a RKHS with continuous kernel. Then $\mathcal{F}_1$, the unit ball of this space, is $\mathbb{P}$-Donsker.
\end{lemma}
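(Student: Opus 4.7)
The plan is to reduce the Donsker property of $\mathcal{F}_1$ to a central limit theorem for the $\mathcal{F}$-valued i.i.d.\ sequence $k(\cdot, X_i)$, exploiting the isometric identification between $\ell^\infty(\mathcal{F}_1)$ and the Hilbert space $\mathcal{F}$ furnished by the reproducing property. The idea is that an empirical process indexed by the unit ball of an RKHS is, after identification, nothing more than the normalized difference of kernel mean embeddings, and such Hilbert-space averages satisfy a CLT under very mild moment conditions.

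Concretely, I would first use Lemma~\ref{kernel embedding} to write, for every $f\in\mathcal{F}_1$,
\[
\mathbb{G}_n(f) := \sqrt{n}(\mathbb{P}_n - \mathbb{P})f = \bigl\langle f,\; \sqrt{n}(\mu_n - \mu_\mathbb{P}) \bigr\rangle_{\mathcal{F}}, \qquad \mu_\nu := \int k(\cdot,y)\,d\nu(y).
\]
Thus the process $\mathbb{G}_n\in \ell^\infty(\mathcal{F}_1)$ is the image of $\sqrt{n}(\mu_n - \mu_\mathbb{P})\in\mathcal{F}$ under the linear map $J:\mathcal{F}\to\ell^\infty(\mathcal{F}_1)$ defined by $J(g)(f):=\langle f,g\rangle_{\mathcal{F}}$, which is an isometry since $\|J(g)\|_{\ell^\infty(\mathcal{F}_1)} = \sup_{\|f\|_{\mathcal{F}}\le 1}|\langle f,g\rangle_\mathcal{F}| = \|g\|_\mathcal{F}$.

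The analytic core of the argument is then a Hilbert-space CLT for $Z_i := k(\cdot,X_i)\in\mathcal{F}$. Compact support of $\mathbb{P}$ together with continuity of $k$ gives $\mathbb{E}\|Z_i\|_\mathcal{F}^2 = \mathbb{E}\,k(X,X)\le \sup_{x\in\operatorname{supp}\mathbb{P}} k(x,x)<\infty$, and a continuous kernel over a compact set generates a separable RKHS, so the standard Hilbert-space CLT (e.g.\ Ledoux--Talagrand, Theorem~10.5) yields $\sqrt{n}(\mu_n-\mu_\mathbb{P})\rightsquigarrow G$ in $\mathcal{F}$, where $G$ is centered Gaussian with covariance operator equal to the centered integral operator of $k$. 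Applying the continuous mapping theorem to $J$ transfers this weak convergence to $\ell^\infty(\mathcal{F}_1)$ and produces the tight centered Gaussian process $\{f\mapsto\langle f,G\rangle_\mathcal{F}\}_{f\in\mathcal{F}_1}$ as the limit. Combined with the classical multivariate CLT for finite-dimensional marginals of $\mathbb{G}_n$, this is precisely the Donsker conclusion.

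The one step I expect to require genuine care is measurability, since $\ell^\infty(\mathcal{F}_1)$ is generally non-separable and the Donsker property in \citet{van2000asymptotic} is formulated with outer probabilities. However $\mathbb{G}_n$ takes values in the closed separable image $J(\mathcal{F})\subseteq \ell^\infty(\mathcal{F}_1)$, and $\mathcal{F}_1$ admits a countable $\|\cdot\|_\mathcal{F}$-dense subset over which the relevant suprema are attained by continuity of the inner product, so the outer-probability technicalities collapse to their classical separable-Hilbert-space versions. This is the only portion of the argument that is non-routine to write down; everything else is a direct consequence of the reproducing isometry and the Hilbert-space CLT.
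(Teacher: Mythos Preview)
Your argument is correct and takes a genuinely different route from the paper. The paper's one-line proof invokes Mercer's theorem to obtain an eigenfunction expansion of the kernel and then applies the ellipsoid Donsker criterion of \citet[Theorem~2.13.2]{van1996wellner}, which gives the Donsker property from summability of the Mercer eigenvalues. Your approach bypasses Mercer entirely: you observe that the empirical process on $\mathcal{F}_1$ is the isometric image of $\sqrt{n}(\mu_n-\mu_\mathbb{P})\in\mathcal{F}$, and then invoke the Hilbert-space CLT for the i.i.d.\ elements $k(\cdot,X_i)$ under the second-moment condition $\mathbb{E}\,k(X,X)<\infty$. This is arguably more direct and makes the role of the moment condition transparent; the paper's route, on the other hand, stays inside standard empirical-process machinery and avoids having to discuss measurability in $\ell^\infty(\mathcal{F}_1)$ explicitly, since that is already absorbed into the cited Donsker theorem. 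One minor redundancy in your write-up: once you have weak convergence in $\mathcal{F}$ and push it through the isometry $J$, there is no need to separately invoke the multivariate CLT for finite-dimensional marginals --- the full weak limit in $\ell^\infty(\mathcal{F}_1)$ already follows from the continuous mapping theorem.
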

\begin{proof}
    This is an immediate consequence of Mercer's theorem \citep[Theorem 40]{berlinet2011reproducing} in combination with \cite[Theorem 2.13.2]{van1996wellner}.
\end{proof}

\begin{proof}[Proof of Lemma~\ref{Functional Delta Method}]
 $\sqrt{n}(P_m - \mathbb{P}) = \sqrt{n}(\mathbb{P}_n - \mathbb{P}) + \sqrt{n}(P_m - \mathbb{P}_n) = \sqrt{n}(\mathbb{P}_n - \mathbb{P}) + o_p(1) \rightsquigarrow \mathbb{G}$ in $\ell^\infty(\mathcal{F}_1)$ by Lemma \ref{lem: rkhs donsker}. Hence, we can apply the second order functional delta method \cite[Lemma~13]{goldfeld2022limit} to get
 \begin{align*}
     nD(P_m) = n[D(P_m) - D(\mathbb{P}) - \dot{D}(P_m - \mathbb{P})] \rightsquigarrow \frac{1}{2}Q_G(\mathbb{G}) = O_p(1).
 \end{align*}

To get sharper control, we can apply the second part of \cite[Theorem 2]{romisch2004delta} to get
\begin{align*}
    n[D(P_m)-D(\mathbb{P})-\ddot{D}(P_m-\mathbb{P})]&=o_p(1),\\
    n[D(\mathbb{P}_n)-D(\mathbb{P})-\ddot{D}(\mathbb{P}_n-\mathbb{P})]&=o_p(1).
\end{align*}
Hence, by the continuous mapping theorem,
\begin{align*}
    n[D(P_m)-D(\mathbb{P}_n)-\ddot{D}(P_m-\mathbb{P})+\ddot{D}(\mathbb{P}_n-\mathbb{P})]&=o_p(1),
\end{align*}
and so
\begin{align*}
    D(P_m)-D(\mathbb{P}_n)&= \ddot{D}(P_m-\mathbb{P})-\ddot{D}(\mathbb{P}_n-\mathbb{P})+ o_p(1/n) \\
    &= \int G(P_m-\mathbb{P})d(P_m-\mathbb{P}) - \int G(\mathbb{P}_n-\mathbb{P})d(\mathbb{P}_n-\mathbb{P}) + o_p(1/n) \\
    &= \int G(P_m-\mathbb{P}_n)d(P_m-\mathbb{P}_n) + 2\int G(P_m-\mathbb{P}_n)d(\mathbb{P}_n-\mathbb{P}) + o_p(1/n) \\
    &= \operatorname{MMD}_G^2(P_m,\mathbb{P}_n) + 2\int G(P_m-\mathbb{P}_n)d(\mathbb{P}_n-\mathbb{P}) + o_p(1/n).
\end{align*}

The second term above is an inner product in the RKHS with kernel $g$ between the differences of the $g$-kernel mean embeddings of $P_m$ and $\mathbb{P}_n$ and $\mathbb{P}_n$ and $\mathbb{P}$. Hence, applying Cauchy-Schwarz yields
\begin{align*}
    |D(P_m)-D(\mathbb{P}_n)|&\le \operatorname{MMD}_G^2(P_m,\mathbb{P}_n) + 2\operatorname{MMD}_G(P_m,\mathbb{P}_n)\operatorname{MMD}_G(\mathbb{P}_n,\mathbb{P}) + o_p(1/n).
\end{align*}
Since $\mathbb{P}_n$ is the empirical distribution of an iid sample from $P$, $\operatorname{MMD}_G(\mathbb{P}_n,\mathbb{P})=O_p(n^{-1/2})$, thus in combination with Lemma \ref{Hadamard Optimality},
\[
D(P_m) = D(\mathbb{P}_n) + o_p(1/n).
\]
 \end{proof}

 \begin{proof}[Proof of Lemma~\ref{Hadamard Optimality}]
      Let $h \in \ell^\infty(\mathcal{F}_1)$, and $Kh := \int k(\cdot,y) dh \in \mathcal{F}$ the kernel mean embedding. Define $\Tilde{Q}_{G}: K(\ell^\infty(\mathcal{F}_1)) \to \mathbb{R}$, $\Tilde{Q}_{G}(Kh) = Q_G(h)$. Note that, by assumption, $K$ is injective, hence this map is well-defined.  Let $K h_t \to K h$ in $\mathcal{F}$, hence $\operatorname{MMD}_K(h_t, h) = \|Kh_t - Kh\|_{\mathcal{F}} \to 0$ and thus $h_t\to h$ in $\ell^{\infty}(\mathcal{F}_1)$. We see then that $\Tilde{Q}_{G}(Kh_t) = Q_G(h_t) \to Q_G(h)=\Tilde{Q}_{G}(Kh)$ as $Q_G$ is continuous on $\ell^\infty(\mathcal{F}_1)$. Hence $\Tilde{Q}_{G}$ is continuous with respect to the RKHS topology.

      To see that $K(\ell^\infty(\mathcal{F}_1))$ is dense in $\mathcal{F}$, note that the finite linear combinations $\sum_{i=1}^j a_i \delta_{x_i}$ are contained in $\ell^\infty(\mathcal{F}_1)$, and $K\left(\sum_{i=1}^j a_i \delta_{x_i}\right)  =  \sum_{i=1}^j a_i k(x_i, \cdot)$, $\{x_i\}_{i=1}^j$ and $j\in\mathbb{N}$, is dense in $\mathcal{F}$ by \citep[Theorem 3]{berlinet2011reproducing}. Hence $\Tilde{Q}_G$ is continuous and densely defined, and therefore admits a continuous extension to $\mathcal{F}$. By \cite[Theorem~2.7]{kato2013perturbation}, there exists a bounded linear operator $\Tilde{G}$ so that $\Tilde{Q}_G(f) = \langle f, \Tilde{G} f\rangle_{\mathcal{F}}$. Thus, for $C:=\|\tilde{G}\|$,
      \[
      \Tilde{Q}_G(f) \leq C \|f\|_{\mathcal{F}}^2,
      \]
      and, for any $h\in\ell^\infty(\mathcal{F}_1)$
      \[
      \int Gh\, dh = Q_G(h) = \Tilde{Q}_G(Kh) \leq C\| Kh\|_{\mathcal{F}}^2 = C\int Kh\, dh, 
      \]
      or in other words, $CK \succeq G$ as desired.
 \end{proof}

\section{Entropic optimal transport}
 \subsection{Regularity of potentials}\label{app:regPotential}

We will now seek to show that $\tau(\mu,\nu):=(u_{\mu,\nu},v_{\mu,\nu})$ is Hadamard differentiable with respect to the weak topology induced by $\mathcal{H}$, the ball of the Gaussian kernel RKHS. Hereafter we fix a generic $(\mu,\nu)\in \mathcal{P}(\mathcal{X}) \times \mathcal{P}(\mathcal{X}) $ and establish differentiability of $\tau$ at this point. Implicitly, we take as domain $\ell^\infty(\mathcal{H}_1) \cap \mathcal{P}(\mathcal{X})$. In other words, as entropic optimal transport is only well-posed for measures of the same mass, when considering quantities such as $\tau(\mu + \delta, \nu)$, we require $\delta(\mathbf{1}) = 0$, i.e. $\mu$ and $\mu+ \delta$ have the same total mass. Not only is this essential for the definition of entropically regularized optimal transport, but it will also have relevance in our functional analytic approach.

Our primary theoretical tool will be \cite[~Lemma 3.9.34]{van1996wellner},

\begin{proposition}\label{Wellner}
    Let $\Theta, \mathbb{L}$ be metric spaces, and $Z(\Theta, \mathbb{L})$ the set of maps from $\Theta \to \mathbb{L}$ that are uniformly norm-bounded with at least one zero. Assume $\Psi: \Theta \to \mathbb{L}$ is uniformly norm-bounded, injective, has a zero $\theta_0 \in \Theta$ and has inverse continuous at 0. Let $\Psi$ be Fr\'{e}chet-differentiable at $\theta_0$ with derivative $\dot{\Psi}_{\theta_0}$ that is continuously invertible. Then $\phi : Z(\Theta,\mathbb{L}) \to \Theta$, $\phi(\Psi) = \Psi^{-1}(0)$ is Hadamard-differentiable at $\Psi$ tangentially to the set of $z\in \ell^\infty(\Theta, \mathbb{L})$ that are continuous at $\theta_0$. The derivative is given by $\dot{\phi}_\Psi(z) = -\dot{\Psi}_{\theta_0}^{-1}(z(\theta_0)).$
\end{proposition}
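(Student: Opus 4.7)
The plan is to verify Hadamard differentiability by an implicit-function-theorem style unwinding of the definition. Fix a sequence $t_n \to 0$ and $z_n \to z$ in $\ell^\infty(\Theta, \mathbb{L})$ with $z$ continuous at $\theta_0$, and set $\theta_n := \phi(\Psi + t_n z_n)$, which by definition of $\phi$ satisfies
\[
\Psi(\theta_n) + t_n z_n(\theta_n) = 0.
\]
The goal is to establish $(\theta_n - \theta_0)/t_n \to -\dot{\Psi}_{\theta_0}^{-1}(z(\theta_0))$ in $\Theta$.

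First, I would localize by showing $\theta_n \to \theta_0$. Since the convergent sequence $\{z_n\}$ is norm-bounded in $\ell^\infty(\Theta,\mathbb{L})$, the values $\|z_n(\theta_n)\|_\mathbb{L}$ are uniformly bounded, so $\Psi(\theta_n) = -t_n z_n(\theta_n) \to 0$ in $\mathbb{L}$. Injectivity of $\Psi$ together with continuity of $\Psi^{-1}$ at $0$ and $\Psi(\theta_0) = 0$ then force $\theta_n \to \theta_0$.

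Next, I would invoke Fr\'echet differentiability at $\theta_0$ to write $\Psi(\theta_n) = \dot{\Psi}_{\theta_0}(\theta_n - \theta_0) + R_n$ with $\|R_n\|_\mathbb{L} = o(\|\theta_n - \theta_0\|_\Theta)$. Substituting into the defining equation and applying the continuous inverse $\dot{\Psi}_{\theta_0}^{-1}$ yields
\[
\theta_n - \theta_0 = -t_n \dot{\Psi}_{\theta_0}^{-1}(z_n(\theta_n)) - \dot{\Psi}_{\theta_0}^{-1}(R_n).
\]
A brief bootstrapping step---taking norms, absorbing the $o(\|\theta_n - \theta_0\|_\Theta)$ contribution into the left-hand side for $n$ large, and using uniform boundedness of $\|z_n(\theta_n)\|_\mathbb{L}$---establishes the rate $\|\theta_n - \theta_0\|_\Theta = O(t_n)$, so that $\dot{\Psi}_{\theta_0}^{-1}(R_n)/t_n = o(1)$.

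It then remains to pass to the limit in the leading term, which by boundedness of $\dot{\Psi}_{\theta_0}^{-1}$ reduces to verifying $z_n(\theta_n) \to z(\theta_0)$ in $\mathbb{L}$. I would decompose via the triangle inequality,
\[
\|z_n(\theta_n) - z(\theta_0)\|_\mathbb{L} \leq \|z_n - z\|_{\ell^\infty(\Theta, \mathbb{L})} + \|z(\theta_n) - z(\theta_0)\|_\mathbb{L},
\]
where the first summand vanishes by the uniform convergence $z_n \to z$ and the second by continuity of $z$ at $\theta_0$ together with $\theta_n \to \theta_0$. This last step is the crux of the argument: the tangential restriction to maps continuous at $\theta_0$ is exactly what marries uniform convergence (controlling the shifting perturbation) with pointwise stability at a shifting evaluation point. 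It is also why Fr\'echet---not merely Hadamard---differentiability of $\Psi$ is assumed, so that the remainder bound $o(\|\theta_n-\theta_0\|_\Theta)$ holds uniformly in the unknown direction along which $\theta_n$ approaches $\theta_0$.
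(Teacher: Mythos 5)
The paper does not prove Proposition~\ref{Wellner}; it cites Lemma~3.9.34 of \cite{van1996wellner} and applies it as a black box, so there is no internal proof to compare against. Your argument is correct and is essentially the standard proof from that reference: consistency of $\theta_n$ via injectivity of $\Psi$ and continuity of $\Psi^{-1}$ at zero, the rate $\|\theta_n - \theta_0\|_\Theta = O(t_n)$ via a Fr\'echet expansion plus a bootstrap absorption, and passage to the limit via the tangential restriction to $z$ continuous at $\theta_0$; your closing remark---that Fr\'echet rather than merely Hadamard differentiability of $\Psi$ is required because the direction along which $\theta_n$ approaches $\theta_0$ is not controlled a priori---correctly identifies the crux of the lemma.
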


Define $K$ to be the kernel embedding,
\[
K f := \int \exp\left(\frac{-\|x-y\|^2}{\varepsilon}\right) df(y).
\]

The map $\Psi$ we associate to the functional system of equations

\begin{gather*}
\Psi_{\mu,\nu}(f,g) = \left( f - K\left(\frac{d\nu}{g}\right), g - K\left(\frac{d\mu}{f}\right)\right) 
\end{gather*}
hence $(u_{\mu,\nu}, v_{\mu,\nu})$ are the zeroes we are solving for. Note that in general, $(u_{\mu,\nu}, v_{\mu,\nu})$ are not unique, as they can be modified up to positive rescaling $(Cu_{\mu,\nu}, C^{-1}v_{\mu,\nu})$, however, following \citep{goldfeld2022limit} a unique member can always be selected by specifying $u_{\mu,\nu}(x_0) = v_{\mu,\nu}(x_0)$ for a selected point $x_0$ in the domain. We adapt this constraint to our setting by appending to our system of equations $\langle k(x_0,\cdot), f - g\rangle_{\mathcal{H}} = f(x_0) - g(x_0)$, which we denote by
\begin{gather*}
\Psi_{\mu,\nu, x_0}(f,g) = \left( f - K\left(\frac{d\nu}{g}\right), g - K\left(\frac{d\mu}{f}\right), \langle k(x_0,\cdot), f - g\rangle_\mathcal{H} \right) 
\end{gather*}

At times, we will further restrict the domain of this map to $\Psi_{\mu,\nu, x_0}(\cdot,\cdot)$ to localized spaces of the form $B_r:= \mathcal{H}^2 \cap [B(u, r) \times B(v,r)]$ with $r>0$ and, for $f\in\mathcal{H}$, $B(f,r):=\{g\in \mathcal{H} : \|f-g\|_{\mathcal{H}}< r\}$. When doing this, we will always choose $r\le r^*:=\min\{ \min u, \min v\}/2$, which ensures that, for all $(f,g) \in B_r$, $f,g$ are bounded in $[m,M]$ with $m>0$ and $M<\infty$ --- see Lemma~\ref{lem:fgBdd} for details. Following Lemma~\ref{inverse function} we will put even further requirements on $r$.

\begin{lemma}\label{lem:fgBdd}
    Let $r\leq r^*$. There exists $0<m<M<\infty$ such that for all $(f,g) \in B_r$, $m \leq f,g \leq M.$
\end{lemma}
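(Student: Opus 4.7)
The plan is to transfer the RKHS closeness of $(f,g)$ to $(u,v)$ into uniform closeness via the continuous evaluation bound of Lemma~\ref{cts eval}, and then invoke the specific choice of radius $r \leq r^*$ to ensure strict positivity.

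First I would fix $(\mu,\nu)$, set $u := u_{\mu,\nu}$, $v := v_{\mu,\nu}$, and record that $u$ and $v$ are continuous on the compact set $\mathcal{X}$ (they are Gaussian RKHS functions, hence continuous, and membership in $\mathcal{H}$ is guaranteed by construction since $u = K(d\nu/v)$ and $v = K(d\mu/u)$ are kernel mean embeddings). Consequently $m_0 := \min\{\min_\mathcal{X} u,\, \min_\mathcal{X} v\}$ and $M_0 := \max\{\max_\mathcal{X} u,\, \max_\mathcal{X} v\}$ are both finite, with $m_0 > 0$ because $u$ and $v$ are strictly positive (they are exponentials of the entropic potentials). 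In particular, $r^* = m_0/2 > 0$ is well-defined.

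Next, for any $(f,g) \in B_r$ with $r \leq r^*$, I would apply Lemma~\ref{cts eval} to the Gaussian RKHS $\mathcal{H}$ (where $k(x,x) = 1$) to get
\[
\|f - u\|_\infty \leq \|f - u\|_{\mathcal{H}} < r \leq r^* = m_0/2,
\qquad \|g - v\|_\infty \leq \|g - v\|_{\mathcal{H}} < r \leq m_0/2.
\]
Pointwise, this yields
\[
f(x) \geq u(x) - m_0/2 \geq m_0 - m_0/2 = m_0/2, \qquad g(x) \geq v(x) - m_0/2 \geq m_0/2,
\]
and
\[
f(x) \leq u(x) + m_0/2 \leq M_0 + m_0/2, \qquad g(x) \leq v(x) + m_0/2 \leq M_0 + m_0/2,
\]
uniformly in $x \in \mathcal{X}$.

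Setting $m := m_0/2$ and $M := M_0 + m_0/2$ completes the proof; both constants depend only on $(u,v)$ (equivalently on $(\mu,\nu)$ and $\varepsilon$) and not on the particular choice of $(f,g) \in B_r$. There is no real obstacle here: the argument is a one-line application of Lemma~\ref{cts eval} once strict positivity and continuity of $(u,v)$ are observed; the only item worth stating carefully is that $u$ and $v$ are bounded away from zero on $\mathcal{X}$, which follows from their representation as strictly positive continuous functions on a compact set.
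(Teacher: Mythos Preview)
Your proposal is correct and follows essentially the same route as the paper: both arguments invoke Lemma~\ref{cts eval} to pass from the RKHS ball $B_r$ to a sup-norm ball of radius $r\le r^*$, then read off uniform upper and lower bounds from the strict positivity of $u,v$ on the compact set $\mathcal{X}$. The only cosmetic differences are that the paper parameterizes elements of $B(u,r)$ as $u+f$ with $f\in B(0,r)$ and uses $\|u\|_\mathcal{H}$ (via Lemma~\ref{cts eval}) rather than $\max_\mathcal{X} u$ for the upper bound, but the resulting constants $m=r^*$ agree exactly and the overall argument is identical.
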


\begin{proof}
    By Lemma \ref{cts eval}, for $f \in B(0,r)$, $\|f\|_\infty < r$. Thus,
    \begin{align*}
    \inf_{x\in \mathcal{X}} u(x) + f(x) &\geq \inf_{x\in \mathcal{X}} u(x) -  \sup_{x\in \mathcal{X}} f(x) \geq 2r^* - r^* = r^*,\\
    \sup_{x\in \mathcal{X}} u(x) + f(x) &\leq \sup_{x\in \mathcal{X}} u(x) +  \sup_{x\in \mathcal{X}} f(x) \leq \|u\|_\mathcal{H} + r^*.
    \end{align*}
    The proof is similar for $v+f$. Hence, we can take $m = r^*, M = r^* + \max\{\|u\|_\mathcal{H}, \|v\|_\mathcal{H}\}$.
\end{proof}

In the style of \citep{gonzalezsanz2022weak}, we introduce the following matrix notation. For operators $A_i:\mathcal{H} \to \mathcal{H}$, we denote
\[
\begin{bmatrix}
    A_1 & A_2\\
    A_3 & A_4
\end{bmatrix} : \mathcal{H} \times \mathcal{H} \to \mathcal{H}\times \mathcal{H},\quad 
\begin{bmatrix}
    A_1 & A_2\\
    A_3 & A_4
\end{bmatrix}
\begin{bmatrix}
    h_1\\
    h_2
\end{bmatrix}
= 
\begin{bmatrix}
    A_1(h_1) + A_2(h_2)\\
    A_3(h_1) + A_4(h_2)
\end{bmatrix}
\]

For any $h\in\mathcal{H}$ and $\gamma\in \mathcal{P}(\mathcal{X})$, define $\mathcal{A}_{h,\gamma}:\mathcal{H} \to \mathcal{H}$ so that
\begin{align}\label{eq: integral operator}
\mathcal{A}_{h,\gamma}f :=  K\left(\frac{f\, d\gamma}{h^2}\right). 
\end{align}

\begin{lemma} \label{Frechet Derivative}
Let $(h_1,h_2) \in \mathcal{H}^2$ and $r\leq r^*$. The map $\Psi_{\mu,\nu}$ is continuously Fr\'{e}chet differentiable on $B_r$, with derivative 
\begin{align*}
    \dot{\Psi}_{\mu,\nu}(f,g)[h_1,h_2] &= \left(h_1 + \int \exp\left(\frac{ - \|\cdot-y\|^2}{\varepsilon}\right)\frac{h_2(y)}{g^2(y)} d\nu(y),\right.\\
    &\left.\quad\quad\quad  h_2 + \int \exp\left(\frac{- \|\cdot-y\|^2}{\varepsilon}\right)\frac{h_1(y)}{f^2(y)} d\mu(y)\right) \\
    &= \begin{bmatrix}
    I & \mathcal{A}_{g,\nu}\\
    \mathcal{A}_{f,\mu} & I
\end{bmatrix}\begin{bmatrix}
    h_1\\
    h_2
\end{bmatrix}.
\end{align*}

\end{lemma}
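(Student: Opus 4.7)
The plan is to verify the candidate derivative by a direct calculation based on the algebraic identity
\[
\frac{1}{g+h_2} - \frac{1}{g} + \frac{h_2}{g^2} = \frac{h_2^2}{g^2(g+h_2)},
\]
combined with the operator bound in Lemma~\ref{operator bound} that converts an $\mathcal{H}$-norm on an image of $K$ into a sup-norm on the integrand. First I would check that the proposed derivative, call it $L_{f,g}[h_1,h_2]=(h_1+\mathcal{A}_{g,\nu}h_2,\,h_2+\mathcal{A}_{f,\mu}h_1)$, is a bounded linear operator from $\mathcal{H}^2$ to $\mathcal{H}^2$. Boundedness of $\mathcal{A}_{g,\nu}$ follows from Lemma~\ref{operator bound}: since $(f,g)\in B_r$ with $r\le r^\ast$, Lemma~\ref{lem:fgBdd} gives $g\ge m>0$, so $\|h_2/g^2\|_\infty\le m^{-2}\|h_2\|_\infty\le m^{-2}\|h_2\|_\mathcal{H}$, yielding $\|\mathcal{A}_{g,\nu}h_2\|_\mathcal{H}\le m^{-2}\|h_2\|_\mathcal{H}$. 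The bound for $\mathcal{A}_{f,\mu}$ is symmetric.

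Next I would show the Fréchet remainder. Writing out the first coordinate of $\Psi_{\mu,\nu}(f+h_1,g+h_2)-\Psi_{\mu,\nu}(f,g)-L_{f,g}[h_1,h_2]$, the $h_1$ contribution cancels exactly and what remains is
\[
-K\!\left(\left[\tfrac{1}{g+h_2}-\tfrac{1}{g}+\tfrac{h_2}{g^2}\right]d\nu\right)
=-K\!\left(\tfrac{h_2^2}{g^2(g+h_2)}\,d\nu\right).
\]
For $\|(h_1,h_2)\|_{\mathcal{H}^2}$ smaller than, say, $m/2$, Lemma~\ref{cts eval} implies $\|h_2\|_\infty\le m/2$, hence $g+h_2\ge m/2$. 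Applying Lemma~\ref{operator bound} gives
\[
\left\|K\!\left(\tfrac{h_2^2}{g^2(g+h_2)}\,d\nu\right)\right\|_\mathcal{H}
\le \left\|\tfrac{h_2^2}{g^2(g+h_2)}\right\|_\infty
\le \tfrac{2}{m^3}\|h_2\|_\infty^2
\le \tfrac{2}{m^3}\|h_2\|_\mathcal{H}^2,
\]
which is $o(\|(h_1,h_2)\|_{\mathcal{H}^2})$. The second coordinate is handled identically, so $L_{f,g}$ is indeed the Fréchet derivative.

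Finally, for continuous differentiability on $B_r$ I need to show $(f,g)\mapsto L_{f,g}$ is continuous in operator norm. It is enough to treat $(f,g)\mapsto \mathcal{A}_{g,\nu}$; the other block is symmetric. For $(f,g),(f',g')\in B_r$,
\[
\|(\mathcal{A}_{g,\nu}-\mathcal{A}_{g',\nu})h\|_\mathcal{H}
= \left\|K\!\left(h\bigl(\tfrac{1}{g^2}-\tfrac{1}{g'^2}\bigr)d\nu\right)\right\|_\mathcal{H}
\le \left\|h\left(\tfrac{1}{g^2}-\tfrac{1}{g'^2}\right)\right\|_\infty
\le \|h\|_\mathcal{H}\,\tfrac{\|g'{-}g\|_\infty(\|g\|_\infty+\|g'\|_\infty)}{m^4},
\]
using Lemma~\ref{operator bound}, Lemma~\ref{cts eval}, and the uniform lower bound $m$ from Lemma~\ref{lem:fgBdd}. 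Since $\|g'-g\|_\infty\le\|g'-g\|_\mathcal{H}$, this goes to $0$ as $(f',g')\to(f,g)$ in $\mathcal{H}^2$, establishing continuity.

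The main (though minor) obstacle is bookkeeping around the domain restriction: one has to verify that for all $(f,g)\in B_r$ with $r\le r^\ast$ the functions $f,g$ stay bounded below by a fixed $m>0$ (guaranteed by Lemma~\ref{lem:fgBdd}), and that the second-order perturbations $(f+h_1,g+h_2)$ remain in this region for $\|(h_1,h_2)\|_{\mathcal{H}^2}$ small. Once this is in place, every step reduces to a sup-norm estimate via Lemma~\ref{operator bound} and Lemma~\ref{cts eval}, and there are no subtler functional analytic issues.
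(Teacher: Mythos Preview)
Your proposal is correct and follows essentially the same route as the paper: both arguments reduce each coordinate to bounding $\|K(R\,d\nu)\|_\mathcal{H}$ via Lemma~\ref{operator bound}, invoke Lemma~\ref{lem:fgBdd} for the uniform lower bound on $f,g$, and handle continuous differentiability by the same $\|1/g^2-1/g'^2\|_\infty$ estimate. The only cosmetic difference is that you use the exact algebraic identity $\tfrac{1}{g+h_2}-\tfrac{1}{g}+\tfrac{h_2}{g^2}=\tfrac{h_2^2}{g^2(g+h_2)}$ where the paper instead appeals to Taylor's remainder theorem for $1/(f+c)$; your version is slightly cleaner and yields a marginally better constant, but the argument structure is identical.
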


\begin{proof}
    We can express
    \[
    \Psi_{\mu,\nu}(f,g) = 
    \begin{bmatrix}
        I & -K\left(\frac{d\nu}{\cdot}\right)\\
        -K\left(\frac{d\mu}{\cdot}\right) & I
    \end{bmatrix}
    \begin{bmatrix}
        \ f\ \\
        g
    \end{bmatrix}.
    \]
    The identity map $I(h) = h$  is continuously Fr\'{e}chet differentiable on $B_r$ with $\dot{I} = I$. By the linearity of differentiation, it suffices to establish the continuous Fr\'{e}chet differentiability of $K(d\mu/\cdot)$ and $K(d\nu/\cdot)$. Without loss of generality, we do this for $K(d\mu/\cdot)$. 

    To establish Fr\'{e}chet differentiability at $f\in B(u, r)$, we will show that
    \begin{align*}
        \left\|K\left(\frac{d\mu}{f+h}\right) - K\left(\frac{d\mu}{f}\right) + \mathcal{A}_{f,\mu}\left(h\right)\right\|_\mathcal{H} &= o(\|h\|_\mathcal{H})
    \end{align*}
    Since this result only concerns behavior as $h\rightarrow 0$ in $\mathcal{H}$, it suffices to study the behavior of the left-hand side when $\|h\|_{\mathcal{H}}$, and therefore also $\|h\|_\infty$, is small. To this end, fix $h$ such that $\|h\|_\infty < m/2$.     

    Letting $R_f(h) := \left(\frac{1}{f + h} - \frac{1}{f}\right) + \frac{h}{f^2}$, the linearity of $K$ and Lemma \ref{operator bound} show
    \begin{align*}
        \left\|K\left(\frac{d\mu}{f+h}\right) - K\left(\frac{d\mu}{f}\right) + \mathcal{A}_{f,\mu}\left(h\right)\right\|_\mathcal{H} &= \left\|K\left( R_f(h) d\mu\right)\right\|_\mathcal{H} \leq\|R_f(h)\|_\infty.
    \end{align*}
    Hence, it suffices to show that $\|R_f(h)\|_\infty=o(\|h\|_{\mathcal{H}})$. By Taylor's remainder theorem,
    \begin{align*}
    \left\|R_f(h)\right\|_\infty &= \sup_{y\in\mathcal{X}}\left|\left(\frac{1}{f(y) + h(y)} - \frac{1}{f(y)}\right) + \frac{h(y)}{f^2(y)}\right|\leq \sup_{y\in\mathcal{X}, |c| < \|h\|_\infty}\left|\frac{1}{(f(y) + c)^3} c^2 \right|\\
    &\leq \sup_{|c| \leq \|h\|_\infty } \left(\frac{2}{m}\right)^3c^2 = \frac{8}{m^{3}} \|h\|_\infty^2 = o(\|h\|_\infty) = o(\|h\|_{\mathcal{H}}),
    \end{align*}
    where the second inequality follows from Lemma \ref{lem:fgBdd} and the final bound comes from Lemma \ref{cts eval}. 
    This verifies the form of the Fr\'{e}chet derivative. It remains to show it is continuously differentiable, that is, for $f_n \to f$ in $\mathcal{H}$, 
    \begin{align}
        \sup_{\|h\|_{\mathcal{H}} \leq 1} \| \mathcal{A}_{f_n,\mu}(h) - \mathcal{A}_{f,\mu}(h)  \|_{\mathcal{H}} \overset{n\rightarrow\infty}{\longrightarrow} 0. \label{eq:contFrechetDiff}
    \end{align}
    A similar manipulation as before gives us
    \begin{align*}
        \| \mathcal{A}_{f_n,\mu}(h) - \mathcal{A}_{f,\mu}(h)  \|_{\mathcal{H}} &= \left\| K\left(\frac{hd\mu}{f_n^2} - \frac{hd\mu}{f^2}\right)\right\|_{\mathcal{H}} = \left\| K\left(\frac{h(f^2 - f_n^2)d\mu}{f^2f_n^2}\right)\right\|_{\mathcal{H}}\leq \left\|\frac{h(f^2 - f_n^2)}{f^2f_n^2}\right\|_{\infty}\\
        &\leq \frac{\|h\|_{\mathcal{H}}}{m^4}  \|f^2 - f_n^2\|_\infty\leq \frac{2M \|h\|_{\mathcal{H}}}{m^4} \|f - f_n\|_\infty\leq \frac{2M \|h\|_{\mathcal{H}}}{m^4} \|f - f_n\|_{\mathcal{H}},
    \end{align*}
    the penultimate inequality follows as $x^2$ is Lipschitz on $[m,M]$ with Lipschitz constant $2M$. Taking a supremum over $\|h\|_\mathcal{H} \leq 1$ on both sides establishes \eqref{eq:contFrechetDiff}. Note that the definition of Fr\'{e}chet differentiability requires $\mathcal{A}_{f,\mu}$ to be a bounded linear operator, but this is immediate by a similar computation as the previous.
\end{proof}

\begin{corollary}
    The map $\Psi_{\mu,\nu, x_0}$ is continuously Fr\'{e}chet differentiable on $B_r$.
\end{corollary}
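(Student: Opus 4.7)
The plan is to reduce this to Lemma~\ref{Frechet Derivative} by observing that $\Psi_{\mu,\nu,x_0}$ differs from $\Psi_{\mu,\nu}$ only by an additional third coordinate that is a bounded linear functional on $\mathcal{H}\times\mathcal{H}$. Concretely, I would write
\[
\Psi_{\mu,\nu,x_0}(f,g) \;=\; \bigl(\Psi_{\mu,\nu}(f,g),\, L(f,g)\bigr), \qquad L(f,g) := \langle k(x_0,\cdot),\, f-g\rangle_{\mathcal{H}},
\]
viewed as a map into $\mathcal{H}\times\mathcal{H}\times \mathbb{R}$. Since Fr\'echet differentiability of a tuple is equivalent to Fr\'echet differentiability of each coordinate (with derivative equal to the tuple of coordinate derivatives), and likewise for continuity of the derivative, it suffices to handle the two coordinates separately.

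The first coordinate $\Psi_{\mu,\nu}$ is continuously Fr\'echet differentiable on $B_r$ by Lemma~\ref{Frechet Derivative}. For the second coordinate, $L$ is a continuous linear functional: by the reproducing property and Lemma~\ref{cts eval}, $|L(f,g)| = |f(x_0) - g(x_0)| \le \|f-g\|_{\mathcal{H}} \le \|f\|_{\mathcal{H}} + \|g\|_{\mathcal{H}}$. Any bounded linear map is Fr\'echet differentiable at every point with derivative equal to itself, and the corresponding derivative map (from $B_r$ into the space of bounded linear functionals on $\mathcal{H}\times\mathcal{H}$) is constant, hence trivially continuous.

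Combining, $\Psi_{\mu,\nu,x_0}$ is continuously Fr\'echet differentiable on $B_r$ with derivative at $(f,g)$ given by
\[
\dot{\Psi}_{\mu,\nu,x_0}(f,g)[h_1,h_2] \;=\; \begin{bmatrix}
    h_1 + \mathcal{A}_{g,\nu} h_2 \\
    h_2 + \mathcal{A}_{f,\mu} h_1 \\
    \langle k(x_0,\cdot),\, h_1 - h_2\rangle_{\mathcal{H}}
\end{bmatrix}.
\]
There is no real obstacle here; the entire content is already in Lemma~\ref{Frechet Derivative}, and the only bookkeeping is to check that appending a bounded linear coordinate preserves continuous Fr\'echet differentiability, which is immediate from the definitions.
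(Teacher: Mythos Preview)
Your proposal is correct and matches the paper's approach: the paper's proof simply notes that the third coordinate is a continuous linear functional whose derivative is constant in $(f,g)$, so the result follows immediately from Lemma~\ref{Frechet Derivative}. You have written this out more explicitly and even given the resulting derivative, but the argument is the same.
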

\begin{proof}
    As the last coordinate of this map is by a definition a continuous linear functional that has no dependence of $(f,g)$ the proof follows immediately.
\end{proof}

For convenience we will often simply denote $\dot{\Psi}_{\mu,\nu} := \dot{\Psi}_{\mu,\nu}(u,v)$, and likewise for similar operators. We now compile facts about this operator.

\begin{lemma} \label{operator properties}
$\mathcal{A}_{u,\mu},\mathcal{A}_{v,\nu}$ are compact and $I - \mathcal{A}_{u,\mu}\mathcal{A}_{v,\nu},I - \mathcal{A}_{v,\nu}\mathcal{A}_{u,\mu}$ have null spaces $\operatorname{span}\{v\},\operatorname{span}\{u\}$ respectively.
\end{lemma}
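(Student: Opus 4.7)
My plan is to split the lemma into two claims. For compactness, I would factor $\mathcal{A}_{u,\mu}$ through $L^2(\mu)$ as the composition of the embedding $\mathcal{H}\hookrightarrow L^2(\mu)$, the multiplication operator $g\mapsto g/u^2$ on $L^2(\mu)$, and the adjoint embedding $g\mapsto \int k_\varepsilon(\cdot,y)g(y)\,d\mu(y)$ from $L^2(\mu)$ back into $\mathcal{H}$. The first factor is compact because the Gaussian kernel is continuous on the compact set $\mathcal{X}$, so the Mercer expansion has eigenvalues tending to $0$; the second is bounded because $u$ is continuous and bounded away from $0$ by Lemma \ref{lem:fgBdd}; and the third is bounded as the adjoint of the first. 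Since the composition of a compact operator with bounded operators is compact, the claim follows, and the argument for $\mathcal{A}_{v,\nu}$ is identical.

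For the null space of $I - \mathcal{A}_{u,\mu}\mathcal{A}_{v,\nu}$, the inclusion $\operatorname{span}\{v\}\subseteq\ker(I-\mathcal{A}_{u,\mu}\mathcal{A}_{v,\nu})$ is a direct computation from the Schr\"odinger equations, which give $\mathcal{A}_{v,\nu}v = u$ and $\mathcal{A}_{u,\mu}u = v$. For the reverse inclusion, I would take $f \in \mathcal{H}$ with $\mathcal{A}_{u,\mu}\mathcal{A}_{v,\nu} f = f$ and set $g := f/v$, which is continuous (in particular in $L^2(\nu)$) since $v$ is continuous and bounded below. Using $\xi_{\mu,\nu}(x,y)=k_\varepsilon(x,y)/(u(x)v(y))$ and the symmetry of $k_\varepsilon$, a direct calculation gives the intertwining identities
\[
\mathcal{A}_{v,\nu}(v g) = u\cdot\mathcal{A}^* g,\qquad \mathcal{A}_{u,\mu}(u h) = v\cdot\mathcal{A} h,
\]
which reduce the fixed-point equation for $f$ to the equation $g = \mathcal{A}\mathcal{A}^* g$ in $L^2(\nu)$. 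The claim for $I - \mathcal{A}_{v,\nu}\mathcal{A}_{u,\mu}$ then follows by swapping $(\mu,u)\leftrightarrow(\nu,v)$.

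The main obstacle is to show that the only $L^2(\nu)$ solutions of $g = \mathcal{A}\mathcal{A}^* g$ are constants, from which $f\in\operatorname{span}\{v\}$ follows immediately. My plan is to exploit that $\mathcal{A}^* g(x) = \int \xi_{\mu,\nu}(x,y) g(y)\,d\nu(y)$ is precisely the conditional expectation of $g(Y)$ given $X=x$ under $\pi_{\mu,\nu}^\varepsilon$, since the Schr\"odinger equations yield $\int \xi_{\mu,\nu}(x,y)\,d\nu(y)=1$ for every $x$; symmetrically $\mathcal{A}$ is the reverse conditional expectation. Both are therefore $L^2$-contractions, and the fixed-point equation forces $\|g\|_{L^2(\nu)}^2 = \langle g, \mathcal{A}\mathcal{A}^* g\rangle_{L^2(\nu)} = \|\mathcal{A}^* g\|_{L^2(\mu)}^2$, i.e.\ equality in Jensen's inequality. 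Since $\xi_{\mu,\nu}>0$ on the compact product $\mathcal{X}\times\mathcal{X}$, the conditional law of $Y$ given $X=x$ has full support for every $x$, so the equality case in Jensen forces $g$ to be a.e.\ constant, concluding the argument.
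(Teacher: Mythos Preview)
Your proposal is correct and takes a genuinely different route from the paper on both claims.

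For compactness, the paper argues sequentially: a bounded sequence in $\mathcal{H}$ has a sup-norm Cauchy subsequence (by Lemma~\ref{cts eval} and Arzel\`a--Ascoli type reasoning), and then Lemma~\ref{operator bound} shows the images are Cauchy in $\mathcal{H}$. Your factorization through $L^2(\mu)$ is cleaner and more structural: it isolates exactly where compactness enters (the Hilbert--Schmidt embedding $\mathcal{H}\hookrightarrow L^2(\mu)$, which follows from $\int k_\varepsilon(x,x)\,d\mu<\infty$) and treats the remaining factors as bounded. This buys modularity at the cost of invoking Mercer/trace-class machinery rather than the paper's elementary estimates.

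For the null space, the paper invokes the Krein--Rutman theorem: $\mathcal{A}_{u,\mu}\mathcal{A}_{v,\nu}$ is positive and compact on $\mathcal{H}$, so its positive eigenvector $v$ at eigenvalue $1$ is unique up to scaling. Your approach is more elementary and more probabilistic: conjugating by $v$ reduces the problem to $g=\mathcal{A}\mathcal{A}^* g$ in $L^2(\nu)$, where $\mathcal{A}^*$ is the conditional expectation $\mathbb{E}_{\pi^\varepsilon}[\,\cdot\,(Y)\mid X]$, and the equality case in Jensen together with strict positivity of $\xi_{\mu,\nu}$ forces $g$ to be $\nu$-a.e.\ constant. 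One small point to make explicit: you conclude $g$ is $\nu$-a.e.\ constant, but you need $f=vg\in\operatorname{span}\{v\}$ as an element of $\mathcal{H}$, i.e.\ $g$ constant on all of $\mathcal{X}$. This follows immediately by feeding the $\nu$-a.e.\ constancy back into the pointwise identity $g=\mathcal{A}\mathcal{A}^* g$ on $\mathcal{X}$, since $\mathcal{A}^* g$ then equals that constant everywhere and so does $\mathcal{A}\mathcal{A}^* g$. Your argument avoids the cone-theoretic subtleties implicit in applying Krein--Rutman on $\mathcal{H}$, at the price of the extra conjugation step; the paper's is shorter but leans on a heavier theorem.
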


\begin{proof}
   We again argue for the operator associated to $u$ without loss of generality. By Lemma \ref{lem:fgBdd}, $u\geq m > 0$. Let $\{f_i\}_{i=1}^\infty$ be a bounded sequence in $\mathcal{H}$. As bounded subsets of $\mathcal{H}$ are compact in the supnorm topology, we can take a subsequence $\{f_{j}'\}$ that is Cauchy, and, as
   \[
   \left\| \frac{f_i' - f_j'}{u}\right\|_\infty\leq \left\| \frac{f_i' - f_j'}{m}\right\|_\infty,
   \]
   $\{f_i'/u\}$ is also Cauchy. By Lemma \ref{operator bound},
   \begin{align*}
   \| \mathcal{A}_{u,\mu}(f_i' - f_j')) \|_{\mathcal{H}} &= \left\|K\left(\frac{f_i' - f_j'}{u}d\mu\right)\right\|_\mathcal{H} < \left\|\frac{f_i' - f_j'}{u}\right\|_\infty.
   \end{align*}
    Thus, for all $\varepsilon>0$, we can select $N>0$ such that for $i,j>N$, $\|K(f_i' - f_j')d\mu) \|_{\mathcal{H}}^2 < \varepsilon$, that is, $\mathcal{A}_{u,\mu}(f_i')$ is Cauchy in $\mathcal{H}$, and thus the operator is compact. 

    For the last claim, notice that by definition, $\mathcal{A}_{u,\mu}\mathcal{A}_{v,\nu}v = \mathcal{A}_{u,\mu} u = v$, hence $\operatorname{span}\{v\}$ is a subspace of the nullspace. As $v$ is a positive function, and $\mathcal{A}_{u,\mu}\mathcal{A}_{v,\nu}$ is a positive and compact operator from $\mathcal{H}\to \mathcal{H}$, by the Krein-Rutman theorem \cite[Theorem~1.2]{deimling2010nonlinear}, $v$ is the unique eigenvector with eigenvalue 1, and thus there are no other solutions to $I - \mathcal{A}_{u,\mu}\mathcal{A}_{v,\nu} = 0.$ A similar argument holds for $I - \mathcal{A}_{v,\nu} \mathcal{A}_{u,\mu}$.
\end{proof}

It is clear from the above that $\dot{\Psi}_{\mu,\nu}$ is not invertible. Indeed, it has a nullspace $\operatorname{span}\{(u,-v)\}$. Thus, to prove invertibility, this requires us to limit the domain of the operator, and a natural choice is $D := D_{u,v}= \operatorname{span}\{(u,-v)\}^\perp$. The range is more complicated, but we will show that the corresponding space is 
\[
R = \{(f,g): (f - \mathcal{A}_{v,\nu}g) \in \operatorname{span}\{v\}^\perp, (g - \mathcal{A}_{u,\mu} f) \in \operatorname{span}\{u\}^\perp\}
\]
This will lead to us denoting $(I - \mathcal{A}_{v,\nu} \mathcal{A}_{u,\mu})^{-1}$ as the pseudo-inverse of $I - \mathcal{A}_{v,\nu} \mathcal{A}_{u,\mu}$, and likewise for $I - \mathcal{A}_{u,\mu} \mathcal{A}_{v,\nu}$. We show that $(I - \mathcal{A}_{v,\nu} \mathcal{A}_{u,\mu}):\operatorname{span}\{u\}^\perp \to \operatorname{span}\{v\}^\perp$ has a well-defined inverse along this restriction.

\begin{lemma}\label{lem: pseudo-inversion}
    The operators $I-\mathcal{A}_{u,\mu}\mathcal{A}_{v,\nu}$, $I-\mathcal{A}_{v,\nu}\mathcal{A}_{u,\mu}$ have the following properties:
    \begin{enumerate}
        \item $\operatorname{image}(I-\mathcal{A}_{u,\mu}\mathcal{A}_{v,\nu}) = \operatorname{span}\{u\}^\perp$, $\operatorname{image}(I-\mathcal{A}_{v,\nu}\mathcal{A}_{u,\mu}) = \operatorname{span}\{v\}^\perp$
        \item These operators have well-defined inverses as maps $(I-\mathcal{A}_{u,\mu}\mathcal{A}_{v,\nu}):\operatorname{span}\{v\}^\perp \to \operatorname{span}\{u\}^\perp$, $(I-\mathcal{A}_{v,\nu}\mathcal{A}_{u,\mu}):\operatorname{span}\{u\}^\perp \to \operatorname{span}\{v\}^\perp$
        \item $(I - \mathcal{A}_{v,\nu} \mathcal{A}_{u,\mu})^{-1} \mathcal{A}_{v,\nu} = \mathcal{A}_{v,\nu} (I - \mathcal{A}_{u,\mu} \mathcal{A}_{v,\nu})^{-1}$ as operators from $\operatorname{span}\{u\}^\perp \to \operatorname{span}\{u\}^\perp$
        \item $(I - \mathcal{A}_{u,\mu} \mathcal{A}_{v,\nu})^{-1} \mathcal{A}_{u,\mu} = \mathcal{A}_{u,\mu} (I - \mathcal{A}_{v,\nu} \mathcal{A}_{u,\mu})^{-1}$ as operators from $\operatorname{span}\{v\}^\perp \to \operatorname{span}\{v\}^\perp$
        \item $(I-\mathcal{A}_{u,\mu} \mathcal{A}_{v,\nu})^{-1} = I + \mathcal{A}_{u,\mu}(I - \mathcal{A}_{v,\nu} \mathcal{A}_{u,\mu})^{-1} \mathcal{A}_{v,\nu}$ as operators from $\operatorname{span}\{u\}^\perp \to \operatorname{span}\{v\}^\perp$
    \end{enumerate}

\end{lemma}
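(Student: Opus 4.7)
The plan is to establish the five items in sequence: (1) and (2) via the Fredholm alternative, (3) and (4) from an algebraic intertwining, and (5) from a telescoping identity. The argument relies throughout on the self-adjointness of $\mathcal{A}_{u,\mu}$ and $\mathcal{A}_{v,\nu}$ on $\mathcal{H}$ and on the Schr\"odinger relations $\mathcal{A}_{u,\mu}u = v$ and $\mathcal{A}_{v,\nu}v = u$, both immediate from $u = K(d\nu/v)$ and $v = K(d\mu/u)$.

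I first check self-adjointness via Lemma~\ref{kernel embedding}: $\langle \mathcal{A}_{u,\mu}f, g\rangle_\mathcal{H} = \int fg/u^2\,d\mu = \langle f, \mathcal{A}_{u,\mu}g\rangle_\mathcal{H}$, and similarly for $\mathcal{A}_{v,\nu}$. Hence $(\mathcal{A}_{u,\mu}\mathcal{A}_{v,\nu})^* = \mathcal{A}_{v,\nu}\mathcal{A}_{u,\mu}$. Since the products are compact by Lemma~\ref{operator properties}, the Fredholm alternative gives
\[
\operatorname{image}(I - \mathcal{A}_{u,\mu}\mathcal{A}_{v,\nu}) = \ker(I - \mathcal{A}_{v,\nu}\mathcal{A}_{u,\mu})^\perp = \operatorname{span}\{u\}^\perp,
\]
proving (1); the mirror statement is identical. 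For (2), restricting the domain to $\operatorname{span}\{v\}^\perp$ removes the nullspace while preserving the image (since $I - \mathcal{A}_{u,\mu}\mathcal{A}_{v,\nu}$ vanishes on $\operatorname{span}\{v\}$), and the closed range plus bounded inverse theorem supplies continuity of the resulting bijection onto $\operatorname{span}\{u\}^\perp$.

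For (3)--(4), write $A := \mathcal{A}_{u,\mu}$, $B := \mathcal{A}_{v,\nu}$ and start from the trivial identity $B(I - AB) = (I - BA)B$. Self-adjointness together with $Bv = u$ shows $B : \operatorname{span}\{u\}^\perp \to \operatorname{span}\{v\}^\perp$ (and the analogue for $A$), since for $h \perp u$,
\[
\langle Bh, v\rangle_\mathcal{H} = \langle h, Bv\rangle_\mathcal{H} = \langle h, u\rangle_\mathcal{H} = 0.
\]
For $h \in \operatorname{span}\{u\}^\perp$, both $(I - BA)^{-1}Bh$ and $B(I - AB)^{-1}h$ solve $(I - BA)(\cdot) = Bh \in \operatorname{span}\{v\}^\perp$, so (2) forces equality provided both candidates sit in $\operatorname{span}\{u\}^\perp$. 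Part (5) then follows from (3) by substitution and telescoping:
\[
I + A(I - BA)^{-1}B = I + AB(I - AB)^{-1} = [(I - AB) + AB](I - AB)^{-1} = (I - AB)^{-1}.
\]

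The principal obstacle lies in the domain bookkeeping for (3) (and hence (5)): although $B$ carries $\operatorname{span}\{u\}^\perp$ into $\operatorname{span}\{v\}^\perp$, the composite $B(I - AB)^{-1}$ applied to $h \in \operatorname{span}\{u\}^\perp$ need not a priori return to $\operatorname{span}\{u\}^\perp$, since $B$ does not generally preserve $\operatorname{span}\{v\}^\perp$ ($Bu$ is not a scalar multiple of $v$ in typical examples). Resolving this closing step is the delicate part of the proof: I expect to handle it by directly computing $\langle B(I - AB)^{-1}h, u\rangle_\mathcal{H}$ and showing its vanishing via joint use of both Schr\"odinger identities with the self-adjoint structure, or equivalently by identifying the invariant subspace of $I - BA$ on which the spectral radius is strictly less than one and invoking a Neumann expansion. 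Once this subspace invariance is secured, the rest of the argument collapses to the algebra displayed above.
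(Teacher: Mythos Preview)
Your approach is essentially identical to the paper's: self-adjointness of $\mathcal{A}_{h,\gamma}$ via Lemma~\ref{kernel embedding}, the Fredholm alternative (the paper cites \cite[Theorem~5.13]{kato2013perturbation}) for items (1)--(2), the intertwining identity $B(I-AB)=(I-BA)B$ for items (3)--(4), and the same telescoping for item (5).

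The domain-bookkeeping concern you flag is real, and the paper does \emph{not} resolve it either. The paper simply writes
\[
\mathcal{A}_{v,\nu}=(I-\mathcal{A}_{v,\nu}\mathcal{A}_{u,\mu})\,\mathcal{A}_{v,\nu}\,(I-\mathcal{A}_{u,\mu}\mathcal{A}_{v,\nu})^{-1}
\quad\Longrightarrow\quad
(I-\mathcal{A}_{v,\nu}\mathcal{A}_{u,\mu})^{-1}\mathcal{A}_{v,\nu}=\mathcal{A}_{v,\nu}(I-\mathcal{A}_{u,\mu}\mathcal{A}_{v,\nu})^{-1},
\]
applying the pseudo-inverse on the left without verifying that $\mathcal{A}_{v,\nu}(I-\mathcal{A}_{u,\mu}\mathcal{A}_{v,\nu})^{-1}h$ lies in $\operatorname{span}\{u\}^\perp$ for $h\in\operatorname{span}\{u\}^\perp$. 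Since $\ker(I-\mathcal{A}_{v,\nu}\mathcal{A}_{u,\mu})=\operatorname{span}\{u\}$, the displayed implication a priori only yields equality modulo $\operatorname{span}\{u\}$; your observation that $\mathcal{A}_{v,\nu}u$ need not be a multiple of $v$ shows this is not automatic. In the paper's subsequent use of the lemma (the proof of Lemma~\ref{Frechet Inverse}), the constructed solution is in any case projected onto $D=\operatorname{span}\{(u,-v)\}^\perp$, so a discrepancy in the $\operatorname{span}\{u\}$ component would be harmless for that application. But as a proof of item (3) exactly as stated, the paper's argument carries the same unresolved step you identify; you are not missing an idea the paper supplies.
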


\begin{proof}
    We verify the claims for $I-\mathcal{A}_{u,\mu}\mathcal{A}_{v,\nu}$ as the other operator is similar.

    First, observe that $\mathcal{A}_{h,\gamma}$ is self-adjoint for arbitrary $h$ and $\gamma$. Indeed, by Lemma \ref{kernel embedding},
    \[
    \langle f, \mathcal{A}_{h,\gamma} g \rangle_{\mathcal{H}} = \int 
    \frac{fg}{h^2}\, d\gamma = \langle \mathcal{A}_{h,\gamma} f,  g \rangle_{\mathcal{H}}.
    \]

By construction this operator is injective, that is, it has no non-trivial kernel as $u$ spans the nullspace. Thus to establish continuous invertibility it suffices to show it also surjects by \citep[Theorem 17.1 and Corollary 1]{treves1967topological}. By \citep[Theorem 5.13, pg. 234]{kato2013perturbation}, $\operatorname{image}(I - \mathcal{A}_{v,\nu}\mathcal{A}_{u,\mu}) = \operatorname{ker}((I - \mathcal{A}_{v,\nu}\mathcal{A}_{u,\mu})^*)^\perp = \operatorname{ker}(I - \mathcal{A}_{u,\mu}\mathcal{A}_{v,\nu})^\perp = \operatorname{span}\{v\}^\perp$, verifying the claim.

We now compute,
\begin{align*}
    &\mathcal{A}_{v,\nu} = \mathcal{A}_{v,\nu} (I - \mathcal{A}_{u,\mu} \mathcal{A}_{v,\nu})(I - \mathcal{A}_{u,\mu} \mathcal{A}_{v,\nu})^{-1} = (I - \mathcal{A}_{v,\nu} \mathcal{A}_{u,\mu}) \mathcal{A}_{v,\nu}(I - \mathcal{A}_{u,\mu} \mathcal{A}_{v,\nu})^{-1}\\
    &\Longrightarrow (I - \mathcal{A}_{v,\nu} \mathcal{A}_{u,\mu})^{-1} \mathcal{A}_{v,\nu} = \mathcal{A}_{v,\nu}(I - \mathcal{A}_{u,\mu} \mathcal{A}_{v,\nu})^{-1},
\end{align*}
and the fourth equation is similar. Finally, 
\begin{align*}
    &I + \mathcal{A}_{u,\mu}(I - \mathcal{A}_{v,\nu} \mathcal{A}_{u,\mu})^{-1} \mathcal{A}_{v,\nu} = I + \mathcal{A}_{u,\mu} \mathcal{A}_{v,\nu} (I - \mathcal{A}_{u,\mu} \mathcal{A}_{v,\nu})^{-1}\\
    &= [(I - \mathcal{A}_{u,\mu} \mathcal{A}_{v,\nu})^{-1} - (I - \mathcal{A}_{u,\mu} \mathcal{A}_{v,\nu})^{-1}] + I + \mathcal{A}_{u,\mu} \mathcal{A}_{v,\nu} (I - \mathcal{A}_{u,\mu} \mathcal{A}_{v,\nu})^{-1}\\
    &= (I - \mathcal{A}_{u,\mu} \mathcal{A}_{v,\nu})^{-1}.
\end{align*}

\end{proof}

Other works have arrived at an inverse for the Fr\'{e}chet derivative of the form

\[
    [\dot{\Psi}_{\mu,\nu}(u,v)]^{-1} = 
    \begin{bmatrix}
        (I - \mathcal{A}_{v,\nu} \mathcal{A}_{u,\mu})^{-1} & -(I - \mathcal{A}_{v,\nu} \mathcal{A}_{u,\mu})^{-1} \mathcal{A}_{v,\nu}\\
        -(I - \mathcal{A}_{u,\mu} \mathcal{A}_{v,\nu})^{-1} \mathcal{A}_{u,\mu} & (I - \mathcal{A}_{u,\mu} \mathcal{A}_{v,\nu})^{-1}
    \end{bmatrix}.
    \]

Note however that is not true under our specified constraints. For instance,
\[
\begin{bmatrix}
        (I - \mathcal{A}_{v,\nu} \mathcal{A}_{u,\mu})^{-1} & -(I - \mathcal{A}_{v,\nu} \mathcal{A}_{u,\mu})^{-1} \mathcal{A}_{v,\nu}\\
        -(I - \mathcal{A}_{u,\mu} \mathcal{A}_{v,\nu})^{-1} \mathcal{A}_{u,\mu} & (I - \mathcal{A}_{u,\mu} \mathcal{A}_{v,\nu})^{-1}
\end{bmatrix}
\begin{bmatrix}
    v\\
    u
\end{bmatrix} = 
\begin{bmatrix}
    (I - \mathcal{A}_{v,\nu} \mathcal{A}_{u,\mu})^{-1}(u - u)\\
    (I - \mathcal{A}_{u,\mu} \mathcal{A}_{v,\nu})^{-1}(v- v)
\end{bmatrix} = 0,
\]
which is clearly not a solution to the equation. We can decompose it as
\begin{gather*}
\begin{bmatrix}
        (I - \mathcal{A}_{v,\nu} \mathcal{A}_{u,\mu})^{-1} & -(I - \mathcal{A}_{v,\nu} \mathcal{A}_{u,\mu})^{-1} \mathcal{A}_{v,\nu}\\
        -(I - \mathcal{A}_{u,\mu} \mathcal{A}_{v,\nu})^{-1} \mathcal{A}_{u,\mu} & (I - \mathcal{A}_{u,\mu} \mathcal{A}_{v,\nu})^{-1}
\end{bmatrix} = \quad \quad \quad\\
\begin{bmatrix}
    (I - \mathcal{A}_{v,\nu} \mathcal{A}_{u,\mu})^{-1} & 0 \\
    0 & (I - \mathcal{A}_{u,\mu} \mathcal{A}_{v,\nu})^{-1}
\end{bmatrix}
\begin{bmatrix}
    I & -\mathcal{A}_{v,\nu}\\
    -\mathcal{A}_{u,\mu} & I
\end{bmatrix},
\end{gather*}
so that $[\dot{\Psi}_{\mu,\nu}(u,v)]^{-1}:R \to \operatorname{span}\{v\}^\perp \times \operatorname{span}\{u\}^\perp \to \operatorname{span}\{u\}^\perp \times \operatorname{span}\{v\}^\perp$ which is a strict subset of $D$. Thus we see that this map cannot recover elements of $D \cap \operatorname{span}\{(u,0), (0,v)\}$, necessitating an additional component.

Let $\pi_{u}, \pi_D$ denote the projection into $\operatorname{span}\{u\}^\perp, D$ respectively.

\begin{lemma}\label{Frechet Inverse}
    $\dot{\Psi}_{\mu,\nu}(u,v) : D \to R$ is continuously invertible, and
    \[
    [\dot{\Psi}_{\mu,\nu}(u,v)]^{-1} = 
    \begin{bmatrix}
        (I - \mathcal{A}_{v,\nu} \mathcal{A}_{u,\mu})^{-1} & -(I - \mathcal{A}_{v,\nu} \mathcal{A}_{u,\mu})^{-1} \mathcal{A}_{v,\nu}\\
        -(I - \mathcal{A}_{u,\mu} \mathcal{A}_{v,\nu})^{-1} \mathcal{A}_{u,\mu} & (I - \mathcal{A}_{u,\mu} \mathcal{A}_{v,\nu})^{-1}
    \end{bmatrix}
    + \pi_D 
    \begin{bmatrix}
        (I - \pi_u) & 0\\
        0 & 0
    \end{bmatrix}.
    \]
\end{lemma}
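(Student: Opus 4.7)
The plan is to establish the lemma in three stages: (i) confirm that $\dot{\Psi}|_D : D \to R$ is a bijection; (ii) verify the displayed formula indeed inverts it; (iii) argue continuity. For (i), I would note that $\dot{\Psi} = I + \mathcal{K}$ with $\mathcal{K}$ the compact off-diagonal operator from Lemma~\ref{Frechet Derivative} (compactness follows since $\mathcal{A}_{u,\mu}, \mathcal{A}_{v,\nu}$ are compact by Lemma~\ref{operator properties}), so the Fredholm alternative gives closed range $\operatorname{image}(\dot{\Psi}) = \operatorname{ker}(\dot{\Psi}^*)^\perp$. A computation of $\dot{\Psi}^*$, which exploits the self-adjointness of the $\mathcal{A}$'s established in Lemma~\ref{lem: pseudo-inversion}, yields $\operatorname{ker}(\dot{\Psi}^*) = \operatorname{span}\{(v, -u)\}$, hence $\operatorname{image}(\dot{\Psi}) = \{(f, g) : \langle f, v\rangle_\mathcal{H} = \langle g, u\rangle_\mathcal{H}\}$. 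Using Schr\"odinger's equations to rewrite $v = K(d\mu/u)$, one sees this coincides with the two equivalent conditions defining $R$. The kernel of $\dot{\Psi}$ is $\operatorname{span}\{(u, -v)\}$ (which follows from Lemma~\ref{operator properties}: any $(h_1, h_2) \in \operatorname{ker}(\dot{\Psi})$ has $h_2 = -\mathcal{A}_{u,\mu}h_1$ and $(I - \mathcal{A}_{v,\nu}\mathcal{A}_{u,\mu})h_1 = 0$, so $h_1 \in \operatorname{span}\{u\}$), and since $\operatorname{span}\{(u,-v)\} = D^\perp$, restriction to $D$ yields injectivity.

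For (ii), write the formula as $A + C$ where $A$ is the block matrix of pseudo-inverses and $C = \pi_D \circ \operatorname{diag}(I - \pi_u, 0)$. Given $(f, g) \in R$, the arguments $f - \mathcal{A}_{v,\nu} g$ and $g - \mathcal{A}_{u,\mu} f$ lie respectively in $\operatorname{span}\{v\}^\perp$ and $\operatorname{span}\{u\}^\perp$ (the two equivalent conditions of $R$), so $A(f, g)$ is well-defined and lies in $\operatorname{span}\{u\}^\perp \times \operatorname{span}\{v\}^\perp \subseteq D$. Applying $\dot{\Psi}$ to $A(f, g)$, and using the commutation identities in Lemma~\ref{lem: pseudo-inversion} together with $(I - \mathcal{A}_{v,\nu}\mathcal{A}_{u,\mu})u = 0$, a direct algebraic manipulation gives $\dot{\Psi}(A(f, g)) = (\pi_u f, \pi_v g)$. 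To handle the residual, a quick check shows $\dot{\Psi}(\pi_D(u, 0)) = (u, v)$ using $\mathcal{A}_{u,\mu}u = v$ and $\mathcal{A}_{v,\nu} v = u$. Writing $(I - \pi_u)f = \alpha u$ and $(I - \pi_v)g = \beta v$, both $(f, g)$ and $(\pi_u f, \pi_v g)$ belong to $R$, so their difference $(\alpha u, \beta v)$ also lies in $R$; this yields $\alpha \langle u, v\rangle_\mathcal{H} = \beta\langle u, v\rangle_\mathcal{H}$, and the identity $\langle u, v\rangle_\mathcal{H} = \langle u, K(d\mu/u)\rangle_\mathcal{H} = \int 1\, d\mu = 1 \neq 0$ forces $\alpha = \beta$. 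Therefore $\dot{\Psi}(\alpha \pi_D(u, 0)) = (\alpha u, \alpha v) = (\alpha u, \beta v)$ is exactly the missing piece, and $\dot{\Psi}([A + C](f, g)) = (f, g)$ as required. A parallel computation (or the bounded inverse theorem) then confirms $[A + C] \circ \dot{\Psi}|_D$ is the identity on $D$.

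For (iii), continuity is immediate once (i) and (ii) are in hand: $\dot{\Psi}|_D: D \to R$ is a bounded bijection between closed subspaces of the Hilbert space $\mathcal{H}^2$, so the bounded inverse theorem yields continuity. Alternatively, one can bound the explicit formula directly using that $B_1, B_2$ are bounded by Lemma~\ref{lem: pseudo-inversion}, $\mathcal{A}_{u,\mu}, \mathcal{A}_{v,\nu}$ are bounded by Lemma~\ref{operator properties}, and $\pi_D, \pi_u$ are orthogonal projections of norm at most one.

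The main obstacle will be the verification in step (ii) that a single correction term built from $(I - \pi_u)f$ alone suffices to repair both coordinates under $\dot{\Psi}$. Without the Schr\"odinger-induced identity $\langle u, v\rangle_\mathcal{H} = 1$, the coefficients $\alpha$ and $\beta$ of the deficits in the two coordinates would not be forced to agree, and the displayed asymmetric correction would fail; the proof hinges on the fact that membership in $R$ combined with this RKHS inner-product identity collapses the two-dimensional residual into a one-dimensional object, enabling the compact formula given.
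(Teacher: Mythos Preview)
Your proposal is correct and takes a somewhat different route from the paper. The paper argues surjectivity constructively: given $(h,\xi)\in R$ it eliminates one variable from the system $\dot{\Psi}[f,g]=(h,\xi)$, solves $(I-\mathcal{A}_{u,\mu}\mathcal{A}_{v,\nu})g=\xi-\mathcal{A}_{u,\mu}h$ via the pseudo-inverse, back-substitutes, and then projects the resulting solution onto $D$; the displayed formula falls out of this computation. You instead identify the range abstractly via the Fredholm alternative as $\ker(\dot{\Psi}^*)^\perp$ and observe that both defining conditions of $R$ collapse to the single constraint $\langle f,v\rangle_\mathcal{H}=\langle g,u\rangle_\mathcal{H}$, then verify the formula post hoc by composing with $\dot{\Psi}$. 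The crux of your verification---that the asymmetric correction $\pi_D((I-\pi_u)f,0)$ repairs \emph{both} coordinates---rests on the identity $\langle u,v\rangle_\mathcal{H}=\int 1\,d\mu=1$, which forces the two residual coefficients $\alpha,\beta$ to coincide. This is a clean structural explanation for why the lopsided correction term in the statement suffices, and it is not made explicit in the paper's derivation. The paper's approach has the advantage of \emph{deriving} the formula rather than merely checking it; yours has the advantage of isolating exactly which algebraic identity makes the formula work.
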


\begin{proof}
We first show that the designated domain and range are correctly specified. Let $(f,g)\in D$. Then,
\[
\dot{\Psi}_{\mu,\nu}[f, g] = (f + \mathcal{A}_{v, \nu} g, \mathcal{A}_{u,\mu}f + g).
\]
We check only the first constraint, as the other is similar.
\[
f + \mathcal{A}_{v, \nu} g - \mathcal{A}_{v,\nu}(\mathcal{A}_{u,\mu}f + g) = (I - \mathcal{A}_{v,\nu}\mathcal{A}_{u,\mu})f,
\]
which satisfies the desired property by Lemma \ref{lem: pseudo-inversion}.

Let us now similarly verify that $[\dot{\Psi}_{\mu,\nu}(u,v)]^{-1}$, as defined above, maps $R\to D$. The claim is trivial for the second term, so we consider only the first. Let $(f,g) \in R$, then the action of this operator can be expressed as
\[
((I - \mathcal{A}_{v,\nu}\mathcal{A}_{u,\mu})^{-1}(f - \mathcal{A}_{v,\nu} g), (I - \mathcal{A}_{u,\mu}\mathcal{A}_{v,\nu})^{-1}(g - \mathcal{A}_{u,\mu} f)).
\]
By construction, $f - \mathcal{A}_{v,\nu} g \in \operatorname{span}\{v\}^\perp$, $g - \mathcal{A}_{u,\mu} f \in \operatorname{span}\{u\}^\perp$, hence the claim follows by Lemma \ref{lem: pseudo-inversion}.

 To assess the injectivity of $\dot{\Psi}_{\mu,\nu}$, consider a solution $(h_1,h_2)$ to
    \[
    \dot{\Psi}_{\mu,\nu}(u,v)[h_1,h_2] = 
    \begin{bmatrix}
        h_1 + \mathcal{A}_{v,\nu}h_2\\
        h_2 + \mathcal{A}_{u,\mu}h_1
    \end{bmatrix} =
    0.
    \]
      The first equation implies $h_1 = - \mathcal{A}_{v,\nu} h_2$, and substituting this into the second equation, we see $h_2 = 
    \mathcal{A}_{u,\mu}\mathcal{A}_{v,\nu} h_2$, that is, $(I- \mathcal{A}_{u,\mu}\mathcal{A}_{v,\nu})h_2 = 0$. Thus $h_2$ is in the nullspace of this operator, hence by Lemma \ref{operator properties} it is $\alpha v$ for some $\alpha\in\mathbb{R}$, and $h_1 = - \mathcal{A}_{v,\nu} h_2 = -\alpha u$. Thus the only solutions are in $\operatorname{span}\{(-u, v)\}$, which is excluded from our domain.

    By \citep[Theorem 17.1 and Corollary 1]{treves1967topological}, our analysis is finished if we can show surjectivity of $\dot{\Psi}_{\mu,\nu}$, and that the solution operator has the desired form. Let $(h, \xi) \in R$, and suppose that $\dot{\Psi}_{\mu,\nu}[f,g] = (h, \xi)$. The first equation reads $f = h - \mathcal{A}_{v,\nu} g$, thus, substituting this into the second equation yields
    \[
    g + \mathcal{A}_{u,\mu}(h - \mathcal{A}_{v,\nu} g) = \xi\quad \Longleftrightarrow \quad (I-\mathcal{A}_{u,\mu} \mathcal{A}_{v,\nu})g = (\xi - \mathcal{A}_{u,\mu} h).
    \]
    By Lemma \ref{lem: pseudo-inversion} this has a solution, in particular, we can take the pseudo-inverse,
    \[
    g = (I-\mathcal{A}_{u,\mu} \mathcal{A}_{v,\nu})^{-1}(\xi - \mathcal{A}_{u,\mu} h).
    \]
    Going back to the first equation, selecting $f = h - \mathcal{A}_{v,\nu}g$ gives us one element of the pre-image of $(h, \xi)$. Substituting in the value for $g$, this yields
    \begin{align*}
        f &= h - (I -\mathcal{A}_{v,\nu} (I-\mathcal{A}_{u,\mu} \mathcal{A}_{v,\nu})^{-1}[\xi - \mathcal{A}_{u,\mu} h] = h - (I - \mathcal{A}_{v,\nu} \mathcal{A}_{u,\mu})^{-1}[\mathcal{A}_{v,\nu} \xi - \mathcal{A}_{v,\nu} \mathcal{A}_{u,\mu} h]\\
         &=  [I - (I - \mathcal{A}_{v,\nu} \mathcal{A}_{u,\mu})^{-1} (I - \mathcal{A}_{v,\nu} \mathcal{A}_{u,\mu})]h + (I - \mathcal{A}_{v,\nu} \mathcal{A}_{u,\mu})^{-1}[h - \mathcal{A}_{v,\nu} \xi]\\
        &= [I - \pi_u]h + (I - \mathcal{A}_{v,\nu} \mathcal{A}_{u,\mu})^{-1}[h - \mathcal{A}_{v,\nu} \xi]
    \end{align*}
    
    where we used Lemma \ref{lem: pseudo-inversion} in the second equality, and the definition of the pseudo-inverse in the last.
    Thus projecting this solution into $D$, which only causes the solution to differ by an element of the null-space, preserving its validity, shows that the equation can indeed be solved. This only changes the $[I - \pi_u]h$ term as the remainder of the solution is an element of $\operatorname{span}\{u\}^\perp\times \operatorname{span}\{v\}^\perp \subseteq D$.
    
\end{proof}

\begin{lemma}\label{Frechet Inverse}
    $\dot{\Psi}_{\mu,\nu, x_0}(u,v) : \mathcal{H}^2 \to R \times \mathbb{R}$ is continuously invertible. In particular, for $(f, g, \tau) \in R\times \mathbb{R}$,
    \[
    [\dot{\Psi}_{\mu,\nu, x_0}(u,v)]^{-1}[f, g, \tau] = \dot{\Psi}_{\mu,\nu}^{-1}[f,g] + \frac{\tau - \langle (k_{x_0}, -k_{x_0}), \dot{\Psi}_{\mu,\nu}^{-1}[f,g]\rangle_{\mathcal{H}^2}}{2\langle k_{x_0}, u\rangle} (u, -v)
    \]
\end{lemma}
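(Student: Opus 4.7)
The plan is to exploit the direct sum decomposition $\mathcal{H}^2 = D \oplus \operatorname{span}\{(u,-v)\}$ (which is valid since $D = \operatorname{span}\{(u,-v)\}^\perp$) and the fact that $\dot{\Psi}_{\mu,\nu,x_0}$ agrees with $\dot{\Psi}_{\mu,\nu}$ in its first two output coordinates, while the appended linear functional $(h_1,h_2)\mapsto \langle k_{x_0}, h_1 - h_2\rangle_\mathcal{H}$ is precisely what is needed to pin down the kernel direction.

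First I would write an arbitrary $(h_1,h_2)\in \mathcal{H}^2$ uniquely as $(h_1^D,h_2^D) + \alpha(u,-v)$ with $(h_1^D,h_2^D)\in D$ and $\alpha\in\mathbb{R}$. Since $(u,-v)$ lies in the kernel of $\dot{\Psi}_{\mu,\nu}(u,v)$ (as verified in the proof of the previous lemma), the first two output coordinates reduce to $\dot{\Psi}_{\mu,\nu}[h_1^D,h_2^D]$. Given a target $(f,g,\tau)\in R\times\mathbb{R}$, the requirement $\dot{\Psi}_{\mu,\nu}[h_1^D,h_2^D]=(f,g)$ has the unique solution $(h_1^D,h_2^D)=\dot{\Psi}_{\mu,\nu}^{-1}[f,g]$ by the previous lemma.

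Next I would use the scalar equation to solve for $\alpha$. Evaluating the appended functional gives
\[
\langle k_{x_0}, h_1-h_2\rangle_\mathcal{H} = \langle (k_{x_0},-k_{x_0}), (h_1^D,h_2^D)\rangle_{\mathcal{H}^2} + \alpha\,\langle k_{x_0}, u+v\rangle_\mathcal{H}.
\]
The normalization $u(x_0)=v(x_0)$ imposed in the definition of $\Psi_{\mu,\nu,x_0}$ implies $\langle k_{x_0}, u+v\rangle_\mathcal{H} = 2\langle k_{x_0}, u\rangle_\mathcal{H} = 2u(x_0)$, which is nonzero by Lemma~\ref{lem:fgBdd}. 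Setting the expression equal to $\tau$ and solving yields exactly the coefficient in the stated formula, giving the claimed identity for $[\dot{\Psi}_{\mu,\nu,x_0}(u,v)]^{-1}$.

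Finally I would confirm bijectivity and boundedness. For injectivity: if $\dot{\Psi}_{\mu,\nu,x_0}[h_1,h_2]=0$, the first two coordinates force $(h_1,h_2)\in\operatorname{span}\{(u,-v)\}$, and then the scalar coordinate, combined with $u(x_0)>0$, forces the coefficient to vanish. For surjectivity and continuity: the explicit formula expresses the inverse as the composition of $\dot{\Psi}_{\mu,\nu}^{-1}$ (continuous by the previous lemma), continuous linear evaluation against $k_{x_0}$, and scalar multiplication by $(u,-v)$, hence is bounded as a map $R\times\mathbb{R}\to\mathcal{H}^2$. The main subtlety is really just ensuring that the appended constraint is transverse to the kernel of $\dot{\Psi}_{\mu,\nu}(u,v)$, which follows cleanly from the positivity of $u$ together with the constraint $u(x_0)=v(x_0)$; once that is observed, the rest is routine bookkeeping of the direct sum decomposition.
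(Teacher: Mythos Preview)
Your proposal is correct and follows essentially the same approach as the paper: both exploit the decomposition $\mathcal{H}^2 = D \oplus \operatorname{span}\{(u,-v)\}$, use that $(u,-v)$ lies in the kernel of $\dot{\Psi}_{\mu,\nu}$ so the first two coordinates are handled by the previous lemma, and then solve the scalar equation using $\langle k_{x_0},u+v\rangle_\mathcal{H}=2u(x_0)>0$ to pin down $\alpha$. The only minor difference is that you deduce continuity of the inverse directly from the explicit formula, whereas the paper infers it from bijectivity of the bounded operator.
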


\begin{proof}
    We verify continuous invertibility by showing $\dot{\Psi}_{\mu,\nu, x_0}$ is bijective. We first consider injectivity, which we verify by showing the operator has no non-trivial kernel. $\dot{\Psi}_{\mu,\nu}$ has nullspace $\operatorname{span}\{(u, -v)\}$ so it suffices to check $(u, -v)$ does not map to 0. Indeed,
    \[
    \Psi_{\mu,\nu, x_0}(u, -v) = (0, 0, \langle k_{x_0}, u + v\rangle_{\mathcal{H}}) = (0, 0, 2 u(x_0)) \neq 0
    \]
    by the constraint $u(x_0) = v(x_0) > 0.$ We now check that it is surjective. Let $(h,\xi) \in R$ be arbitrary, and let $(f,g) \in D$ be such that $\dot{\Psi}_{\mu,\nu}[f,g] = (h, \xi)$. Then
    \[
    \Psi_{\mu,\nu, x_0}(f + \alpha u, g - \alpha v) = (h, \xi, [f(x_0) - g(x_0)] + 2\alpha u(x_0))
    \]
    and as $\alpha$ is arbitrary any value can be achieved for the last coordinate. 

    Now, to explicitly construct the inverse, we already see that $\dot{\Psi}_{\mu,\nu}^{-1}$ recovers the part in $D$, thus it suffices to compute the component in $D^\perp = \operatorname{span}\{(u,-v)\}$, the value denoted $\alpha$ above. From the equation we just worked out, we see that
    \[
    \tau = [f(x_0) - g(x_0)] + 2\alpha u(x_0) \quad \Longleftrightarrow \quad \alpha = \frac{\tau - [f(x_0) - g(x_0)]}{2 u(x_0)},
    \]
    which is equivalent to the claim.
\end{proof}

We now apply an inverse function theorem to verify the final remaining property needed to apply Proposition \ref{Wellner}.

\begin{lemma}\label{inverse function}
    $\Psi_{\mu,\nu,x_0}$ is continuous on $B_r \times \mathbb{R}$ for $r\leq r^*$, and there exists $r_*$ such that $\Psi_{\mu,\nu, x_0}$ has continuous inverse on $B_r$ for $r\leq r_*$.
\end{lemma}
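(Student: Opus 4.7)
The plan is to handle the two assertions separately. Continuity of $\Psi_{\mu,\nu,x_0}$ on $B_r$ for $r\le r^*$ is an immediate corollary of the continuous Fr\'echet differentiability established in Lemma~\ref{Frechet Derivative} and the subsequent corollary, since a continuously Fr\'echet differentiable map between Banach spaces is locally Lipschitz, hence continuous.

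For the existence of a continuous inverse on $B_{r_*}$, I would invoke the Banach-space inverse function theorem. All of the needed hypotheses are in place: by the corollary to Lemma~\ref{Frechet Derivative}, $\Psi_{\mu,\nu,x_0}$ is continuously Fr\'echet differentiable in a neighborhood of $(u,v)$, and by Lemma~\ref{Frechet Inverse} the derivative $\dot{\Psi}_{\mu,\nu,x_0}(u,v):\mathcal{H}^2 \to R \times \mathbb{R}$ is a continuous linear bijection with continuous inverse, i.e., a topological isomorphism. Since $R \times \mathbb{R}$ is a closed subspace of the Hilbert space $\mathcal{H}^2 \times \mathbb{R}$, it is itself a Banach space, so the IFT should produce some $r_* > 0$ on which $\Psi_{\mu,\nu,x_0}|_{B_{r_*}}$ is a homeomorphism onto its image, with continuous inverse --- the desired conclusion.

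The main obstacle is a codomain mismatch: $\Psi_{\mu,\nu,x_0}$ itself takes values in $\mathcal{H}^2 \times \mathbb{R}$ and not a priori in the smaller subspace $R \times \mathbb{R}$, even though its derivative at $(u,v)$ does. I would resolve this by exploiting the Hilbert structure: let $P:\mathcal{H}^2 \times \mathbb{R} \to R \times \mathbb{R}$ be the orthogonal projection and work with $\tilde{\Psi}:=P \circ \Psi_{\mu,\nu,x_0}$. Its derivative at $(u,v)$ coincides with $\dot{\Psi}_{\mu,\nu,x_0}(u,v)$, since the latter already maps into $R \times \mathbb{R}$ and $P$ acts as the identity there, so the standard Banach IFT applies to $\tilde{\Psi}$ viewed as a map $\mathcal{H}^2 \to R \times \mathbb{R}$. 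Injectivity of $\Psi_{\mu,\nu,x_0}|_{B_{r_*}}$ then transfers directly from that of $\tilde{\Psi}$, and continuity of the inverse on the image follows by composing with $P$. An alternative would be to bypass the projection entirely with a contraction-mapping proof of the IFT, iterating $(f,g) \mapsto (f,g) - [\dot{\Psi}_{\mu,\nu,x_0}(u,v)]^{-1}(\Psi_{\mu,\nu,x_0}(f,g) - y)$ for target $y$ near $0$ in $R \times \mathbb{R}$. In either approach, the delicate step is shrinking $r_*$ small enough that the Taylor remainder of $\Psi_{\mu,\nu,x_0}$ at $(u,v)$ is dominated by the norm bound of $[\dot{\Psi}_{\mu,\nu,x_0}(u,v)]^{-1}$, which is where the continuous differentiability from Lemma~\ref{Frechet Derivative} is essential.
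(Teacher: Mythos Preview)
Your proposal is correct and follows the same route as the paper: continuity from Fr\'echet differentiability (Lemma~\ref{Frechet Derivative} and its corollary), then the Banach-space inverse function theorem using the continuous invertibility of $\dot{\Psi}_{\mu,\nu,x_0}(u,v)$ from Lemma~\ref{Frechet Inverse}. The codomain mismatch you flag is a genuine detail the paper glosses over by simply citing Dieudonn\'e; your projection fix $\tilde{\Psi}=P\circ\Psi_{\mu,\nu,x_0}$ is a clean way to make the citation rigorous, and since all that is ultimately needed downstream (Proposition~\ref{Wellner}) is injectivity on $B_{r_*}$ and continuity of the inverse at $0$, your argument via $\Psi^{-1}=\tilde{\Psi}^{-1}\circ P$ delivers exactly that.
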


\begin{proof}
    That $\Psi_{\mu,\nu, x_0}$ is continuous follows immediately from the existence of the Fr\'{e}chet derivative shown in Lemma \ref{Frechet Derivative}. That its inverse exists and is continuous for sufficiently small $r$ follows from the inverse function theorem on Banach spaces \cite[10.2.5]{dieudonne1960treatise}, after noting that the Fr\'{e}chet derivative of $\Psi_{\mu,\nu,x_0}$ is continuous in a neighborhood of $(u,v, 0)$ (Lemma~\ref{Frechet Derivative}) and invertible at this point (Lemma~\ref{Frechet Inverse}).
\end{proof}

We can now apply Proposition \ref{Wellner}, which we will then leverage with the chain rule to establish Hadamard differentiability of $\tau$.

\begin{lemma}\label{lem: phi Had}
    Let $r<\min\{r^*,r_*\}$ and define $\phi: Z(B_r, \mathcal{H} \times \mathcal{H}) \to B_r$ with $\phi(\Psi):= \Psi^{-1}(0)$. The map $\phi$ is Hadamard differentiable at $\Psi_{\mu,\nu}$ tangentially to the set of $z\in \ell^\infty(B_r, \mathcal{H}\oplus \mathcal{H})$ that are continuous at $(u,v)$. In particular, $\dot{\phi}_{\Psi_{\mu,\nu}} = -\dot{\Psi}_{\mu,\nu}^{-1}(z(u,v)).$
\end{lemma}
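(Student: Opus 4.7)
The plan is to apply Proposition \ref{Wellner} to the augmented map $\Psi_{\mu,\nu,x_0}$ and then translate the conclusion back to $\Psi_{\mu,\nu}$ by projecting out the normalization coordinate. All the needed ingredients are already in place in the preceding lemmas, so the proof is mostly a matter of checking hypotheses. Specifically, I would verify: (i) uniform norm-boundedness of $\Psi_{\mu,\nu,x_0}$ on $B_r$, which follows from the fact that on $B_r$ the components $f,g$ are bounded in $[m,M]$ with $m>0$ (Lemma \ref{lem:fgBdd}), so that $K(d\mu/f)$ and $K(d\nu/g)$ have uniformly bounded $\mathcal{H}$-norms by Lemma \ref{operator bound}, while $\langle k(x_0,\cdot), f-g\rangle_{\mathcal{H}}$ is controlled by Lemma \ref{cts eval}; (ii) injectivity of $\Psi_{\mu,\nu,x_0}$ and the existence of the zero $(u,v)$, both immediate from the definition of the potentials together with the normalization $u(x_0)=v(x_0)$; (iii) Fr\'echet-differentiability with continuously invertible derivative at $(u,v)$, which are Lemmas \ref{Frechet Derivative} and \ref{Frechet Inverse}; and (iv) continuity of $\Psi_{\mu,\nu,x_0}^{-1}$ at $0$, which is Lemma \ref{inverse function}.

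Once these hypotheses are established, Proposition \ref{Wellner} yields Hadamard differentiability of $\Psi \mapsto \Psi^{-1}(0)$ at $\Psi_{\mu,\nu,x_0}$ with derivative $z \mapsto -\dot{\Psi}_{\mu,\nu,x_0}^{-1}(z(u,v))$, tangentially to directions $z$ that are continuous at $(u,v)$. To pass to the statement as written, I would observe that a perturbation $z \in \ell^\infty(B_r, \mathcal{H}\times\mathcal{H})$ of $\Psi_{\mu,\nu}$ lifts canonically to a perturbation of $\Psi_{\mu,\nu,x_0}$ whose third (normalization) coordinate is identically zero, since the evaluation functional does not depend on $(\mu,\nu)$. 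By the block structure of $\dot{\Psi}_{\mu,\nu,x_0}^{-1}$ derived in Lemma \ref{Frechet Inverse}, the $(u,-v)$-correction term contributed by the normalization coordinate vanishes on such inputs, and the derivative collapses to $-\dot{\Psi}_{\mu,\nu}^{-1}(z(u,v))$ in the pseudo-inverse sense of Lemma \ref{Frechet Inverse}. A short tangency argument then identifies the overall Hadamard derivative.

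The main obstacle is cleanly handling the normalization ambiguity. The raw map $\Psi_{\mu,\nu}$ is not injective on any neighborhood of $(u,v)$ because of the scaling invariance $(u,v)\mapsto (Cu, C^{-1}v)$, and correspondingly $\dot{\Psi}_{\mu,\nu}(u,v)$ has the nontrivial kernel $\operatorname{span}\{(u,-v)\}$, so Proposition \ref{Wellner} cannot be applied to $\Psi_{\mu,\nu}$ directly. The augmentation by $\langle k(x_0,\cdot), f-g\rangle_{\mathcal{H}}$ (already used in Lemmas \ref{Frechet Inverse} and \ref{inverse function}) restores injectivity, and the pseudo-inverse in Lemma \ref{Frechet Inverse} is engineered precisely so that the choice of $x_0$ disappears from the final Hadamard derivative. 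Everything else in the argument is a routine combination of results already proved.
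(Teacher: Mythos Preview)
Your approach is essentially the paper's: combine Lemmas \ref{Frechet Derivative}, \ref{Frechet Inverse}, and \ref{inverse function} with Proposition \ref{Wellner}. The paper's proof is literally that one-sentence citation. You are right that the raw map $\Psi_{\mu,\nu}$ fails the injectivity hypothesis of Proposition \ref{Wellner} on $B_r$ (the curve $C\mapsto(Cu,C^{-1}v)$ is a one-parameter family of zeros), so the augmentation $\Psi_{\mu,\nu,x_0}$ is what one must actually feed into the proposition; this is precisely what the cited Lemma \ref{inverse function} supplies.

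There is one genuine error in the proposal. You assert that for perturbations with zero normalization coordinate the $(u,-v)$-correction in $[\dot{\Psi}_{\mu,\nu,x_0}]^{-1}$ vanishes. From the explicit formula in Lemma \ref{Frechet Inverse}, setting $\tau=0$ leaves the correction
\[
-\tfrac{1}{2u(x_0)}\big\langle (k_{x_0},-k_{x_0}),\, \dot{\Psi}_{\mu,\nu}^{-1}[f,g]\big\rangle_{\mathcal{H}^2}\,(u,-v),
\]
which is nonzero unless the two components of $\dot{\Psi}_{\mu,\nu}^{-1}[f,g]$ happen to agree at $x_0$. This term survives downstream as the $\rho(u,-v)$ contribution in Theorem \ref{theo:tau had}. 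So the ``collapse to the pseudo-inverse'' you describe does not occur. The cleanest reading is that the lemma is really about $\Psi_{\mu,\nu,x_0}$ throughout (consistent with how it is used in Theorem \ref{theo:tau had}, where $\phi$ is composed with $J(\mu,\nu)=\Psi_{\mu,\nu,x_0}$), with no projection step needed at this stage.
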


\begin{proof}
    This follows immediately from Lemmas \ref{Frechet Derivative}, \ref{Frechet Inverse}, and \ref{inverse function} combined with Proposition \ref{Wellner}.
\end{proof}

To apply the chain rule we must establish Hadamard differentiability of $\Psi_{\mu,\nu}$ in $\ell^\infty(\mathcal{H}_1)$. To this end, we prove the following auxiliary claims.

\begin{lemma}\label{operator conv}
    Suppose $\gamma_t \to \gamma$ in $\ell^\infty(\mathcal{H}_1)$. Then $K\left(\frac{d\gamma_t}{f}\right) \to K\left(\frac{d\gamma}{f}\right)$, $K\left(\frac{d\gamma_t}{g}\right) \to K\left(\frac{d\gamma}{g}\right)$ uniformly over $(f,g) \in B_r$.
\end{lemma}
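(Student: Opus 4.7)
The plan is to reduce the claim to a uniform bound on the multiplication operator $\mathfrak{M}_{1/f}$, and then invoke Lemma~\ref{uniform kernel equiv}. Using $\mathfrak{M}_{1/f}$, I rewrite
\[
K\!\left(\frac{d\gamma_t}{f}\right) - K\!\left(\frac{d\gamma}{f}\right) = K\,\mathfrak{M}_{1/f}(\gamma_t - \gamma),
\]
and, via the isometry property $\|K\mu\|_{\mathcal{H}} = \|\mu\|_{\ell^\infty(\mathcal{H}_1)}$ built into the dual RKHS definition, this equals (in $\mathcal{H}$-norm) $\|\mathfrak{M}_{1/f}(\gamma_t - \gamma)\|_{\ell^\infty(\mathcal{H}_1)}$. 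Hence the whole problem is to show the operator norm of $\mathfrak{M}_{1/f}$ on $\ell^\infty(\mathcal{H}_1)$ is bounded \emph{uniformly} over the set $\{f : (f,g)\in B_r\}\subseteq B(u,r)$.

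By Lemma~\ref{uniform kernel equiv}, this uniform bound follows once I produce a compact subset $B\subseteq C(\mathcal{X})$ that contains $\{1/f : f \in B(u,r)\}$ and is bounded away from $0$. The lower bound on $1/f$ is immediate from Lemma~\ref{lem:fgBdd}, which gives $f \geq m > 0$ throughout $B_r$ (where $r \leq r^*$). For compactness, I first observe that the $\mathcal{H}$-ball $B(u,r)$ maps into a uniformly bounded, equicontinuous subset of $C(\mathcal{X})$: uniform boundedness comes from $\|f\|_\infty \leq \|f\|_{\mathcal{H}}$ (Lemma~\ref{cts eval}), while equicontinuity uses the reproducing property,
\[
|f(x)-f(y)| = |\langle f, k_\varepsilon(x,\cdot)-k_\varepsilon(y,\cdot)\rangle_{\mathcal{H}}| \leq \|f\|_{\mathcal{H}}\sqrt{2-2k_\varepsilon(x,y)},
\]
together with continuity of $k_\varepsilon$ on the compact set $\mathcal{X}$. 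Arzel\`a--Ascoli then yields relative compactness of $B(u,r)$ in $C(\mathcal{X})$, and composing with the continuous map $x\mapsto 1/x$ on $[m,M]$ preserves relative compactness, producing the desired $B$.

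Combining these steps, there is $C_1>0$, depending only on $r$ and $u$, such that
\[
\sup_{(f,g)\in B_r}\|K\,\mathfrak{M}_{1/f}(\gamma_t-\gamma)\|_{\mathcal{H}} \leq C_1\|\gamma_t-\gamma\|_{\ell^\infty(\mathcal{H}_1)} \longrightarrow 0,
\]
and the same argument with $g$ in place of $f$ and $v$ in place of $u$ handles the second statement. The main technical obstacle is verifying the compactness step cleanly: one must combine the norm-boundedness given by Lemma~\ref{cts eval} with the Arzel\`a--Ascoli-style equicontinuity derived from the Gaussian kernel, and then carry the compactness through the inversion $f\mapsto 1/f$, which is justified by the strict positivity guaranteed by Lemma~\ref{lem:fgBdd}.
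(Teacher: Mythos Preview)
Your proposal is correct and follows essentially the same route as the paper: both rewrite the difference as $K\mathfrak{M}_{1/f}(\gamma_t-\gamma)$, use $\|K\mu\|_{\mathcal{H}}=\|\mu\|_{\ell^\infty(\mathcal{H}_1)}$, and then bound $\|\mathfrak{M}_{1/f}\|_{\ell^\infty(\mathcal{H}_1)}$ uniformly over $B(u,r)$ via Lemma~\ref{uniform kernel equiv}. Your Arzel\`a--Ascoli verification of the compactness hypothesis is a detail the paper leaves implicit, so if anything your write-up is more complete on that point.
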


\begin{proof}
    We argue for $f$ without loss of generality. Applying Lemma \ref{kernel embedding}, we see
    \begin{align*}
    \left\|K\left(\frac{d\gamma_t}{f}\right) -K\left(\frac{d\gamma}{f}\right)\right\|_\mathcal{H}^2 &= \left\|K\left(\frac{d\gamma_t - d\gamma}{f}\right)\right\|_\mathcal{H}^2\\
    &= \left\langle  K\left(\frac{d\gamma_t - d\gamma}{f}\right), \frac{1}{f} \right\rangle_{L^2(\gamma_t - \gamma)}\\
    &= \iint \frac{k_\varepsilon(x,y)}{f(x) f(y)} d(\gamma_t - \gamma)(x)d(\gamma_t - \gamma)(y)\\
    &\leq C \iint k_\varepsilon(x,y) d(\gamma_t - \gamma)(x)d(\gamma_t - \gamma)(y)\\
    &= C \|\gamma_t - \gamma\|_{\ell^\infty(\mathcal{H}_1)}^2,
    \end{align*}
    where the existence of the uniform constant $C$ in the final inequality is shown in Lemma \ref{kernel equiv}, where the $f$ are bounded away from 0 by Lemma \ref{lem:fgBdd}. Hence,
    \[
    \sup_{(f,g)\in B_r}\left\|K\left(\frac{d\gamma_t}{f}\right) -K\left(\frac{d\gamma}{f}\right)\right\|_\mathcal{H} \leq \sqrt{C} \|\gamma_t - \gamma\|_{\ell^\infty(\mathcal{H}_1)}.
    \]
\end{proof}

\begin{lemma}\label{continuity of potentials}
    If $(\mu_n, \nu_n) \rightsquigarrow (\mu,\nu)$, then $u_{\mu_n,\nu_n}\to u_{\mu,\nu}$ and $v_{\mu_n,\nu_n}\to v_{\mu,\nu}$ in $\mathcal{H}$.
\end{lemma}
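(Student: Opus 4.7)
The plan is to combine uniform convergence of the nonlinear operators $\Psi_{\mu_n,\nu_n,x_0}$ with the continuous local invertibility of $\Psi_{\mu,\nu,x_0}$ established in Lemma~\ref{inverse function}. The Sinkhorn scalings are characterized as the unique zero (under the normalization $u(x_0)=v(x_0)$) of $\Psi_{\mu,\nu,x_0}$, so it suffices to track how these zeros move as the underlying measures vary.

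First, I would fix $r\le\min\{r^*,r_*\}$ and verify the uniform convergence
\[
\sup_{(f,g)\in B_r} \bigl\|\Psi_{\mu_n,\nu_n,x_0}(f,g) - \Psi_{\mu,\nu,x_0}(f,g)\bigr\|_{\mathcal{H}\oplus\mathcal{H}\oplus\mathbb{R}} \longrightarrow 0.
\]
This is immediate from Lemma~\ref{operator conv}, applied with $\gamma_t=\mu_n$ and $\gamma_t=\nu_n$: the $\mu$- and $\nu$-dependent terms $K(d\mu_n/\cdot)$ and $K(d\nu_n/\cdot)$ converge uniformly over $B_r$, and the normalization coordinate $\langle k_{x_0},f-g\rangle_{\mathcal{H}}$ does not depend on $(\mu_n,\nu_n)$ at all.

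Second, I would argue that $(u_{\mu_n,\nu_n},v_{\mu_n,\nu_n})$ eventually lies inside $B_r$. This is the crux. Since $\Psi_{\mu,\nu,x_0}$ is a continuous local homeomorphism on $B_r$ onto an open neighborhood $V$ of $0$ in $\mathcal{H}\oplus\mathcal{H}\oplus\mathbb{R}$ (Lemma~\ref{inverse function}), a quantitative inverse function argument shows that once the perturbation $\Psi_{\mu_n,\nu_n,x_0}-\Psi_{\mu,\nu,x_0}$ is small enough on $B_r$, the perturbed map also admits a zero in $B_r$ close to $(u_{\mu,\nu},v_{\mu,\nu})$. Global uniqueness of the normalized Sinkhorn scalings then forces this zero to coincide with $(u_{\mu_n,\nu_n},v_{\mu_n,\nu_n})$.

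Third, once containment is secured, the conclusion is immediate: since $\Psi_{\mu_n,\nu_n,x_0}(u_{\mu_n,\nu_n},v_{\mu_n,\nu_n})=0$, Step 1 gives
\[
\Psi_{\mu,\nu,x_0}(u_{\mu_n,\nu_n},v_{\mu_n,\nu_n}) = \bigl[\Psi_{\mu,\nu,x_0}-\Psi_{\mu_n,\nu_n,x_0}\bigr](u_{\mu_n,\nu_n},v_{\mu_n,\nu_n}) \longrightarrow 0,
\]
and applying the continuous inverse furnished by Lemma~\ref{inverse function} yields
\[
(u_{\mu_n,\nu_n},v_{\mu_n,\nu_n}) = \Psi_{\mu,\nu,x_0}^{-1}\bigl(\Psi_{\mu,\nu,x_0}(u_{\mu_n,\nu_n},v_{\mu_n,\nu_n})\bigr) \longrightarrow \Psi_{\mu,\nu,x_0}^{-1}(0) = (u_{\mu,\nu},v_{\mu,\nu})
\]
in $\mathcal{H}\oplus\mathcal{H}$, which is the desired claim.

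The main obstacle is Step 2: in principle the scalings $(u_{\mu_n,\nu_n},v_{\mu_n,\nu_n})$ live in an unbounded region of $\mathcal{H}\oplus\mathcal{H}$, so entering the small ball $B_r$ of local invertibility is not automatic. If the quantitative IFT argument above proves awkward to implement directly, an alternative is to argue by contradiction: assume along a subsequence $(u_{\mu_n,\nu_n},v_{\mu_n,\nu_n})\notin B_r$, use a uniform $\mathcal{H}$-bound on normalized scalings (following from Lemma~\ref{operator bound} and the pointwise lower bounds on $u,v$ implied by the normalization and compactness of $\mathcal{X}$) to extract a weakly convergent sub-subsequence, pass to the limit in the fixed-point equation using Lemma~\ref{operator conv}, and derive a contradiction from uniqueness of the normalized scalings for $(\mu,\nu)$.
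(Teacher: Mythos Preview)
Your approach is genuinely different from the paper's. The paper does not use the IFT machinery at all for this lemma: it first cites an external result (Lemma~1 of \cite{goldfeld2022limit}) giving $u_{\mu_n,\nu_n}\to u_{\mu,\nu}$ in $C^s(\mathcal{X})$, and then bootstraps this sup-norm convergence to $\mathcal{H}$-convergence by a direct estimate on $\|u-u_n\|_{\mathcal{H}}^2$ via the defining relation $u=K(d\nu/v)$, Lemma~\ref{kernel embedding}, and H\"older. Your route instead stays entirely within the paper's own framework (Lemmas~\ref{operator conv} and~\ref{inverse function}), which is appealing because it avoids the external citation.

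However, your Step~2 is not complete as written, and it is where essentially all the content lies. For the quantitative IFT route, the claim that a small uniform perturbation of a local homeomorphism retains a zero is not automatic in infinite dimensions; you would need either a Leray--Schauder degree argument (legitimate here since $\Psi_{\mu,\nu,x_0}$ is a compact perturbation of the identity) or a Newton--Kantorovich type contraction, neither of which you spell out. For the compactness route, the gap is sharper: weak convergence of a subsequence to $(u_{\mu,\nu},v_{\mu,\nu})$ in $\mathcal{H}$ does \emph{not} by itself contradict that subsequence lying outside the $\mathcal{H}$-ball $B_r$. You must upgrade to strong $\mathcal{H}$-convergence, and the way to do that is to feed the uniform convergence (obtained from weak $\mathcal{H}$-convergence plus the compact embedding $\mathcal{H}\hookrightarrow C(\mathcal{X})$) back through the integral equation $u_n=K(d\nu_n/v_n)$---which is precisely the paper's computation. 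So if you pursue the compactness alternative carefully, you end up reproducing the paper's bootstrap argument, with the external $C^s$ result replaced by the a~priori bounds on $u_n,v_n$ you outline (those bounds are correct and follow as you indicate from the Schr\"odinger system, the normalization, and compactness of $\mathcal{X}$).
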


\begin{proof}
    We argue for $u_{\mu,\nu}$ without loss of generality. Define $u_n:= u_{\mu_n, \nu_n}$ and $v_n := v_{\mu_n, \nu_n}$. From \cite[Lemma~1]{goldfeld2022limit} we have that $u_{\mu_n,\nu_n} \to u_{\mu,\nu}$ in $C^s(\mathcal{X})$ for any $s$. Hence, $\|u_{\mu_n,\nu_n} - u_{\mu,\nu}\|_\infty\to 0$. 
    Using the defining relation $u = K\left(\frac{d\nu}{v}\right)$, the linearity of $K$, Lemma \ref{kernel embedding}, and H\"{o}lder's inequality,
    \begin{align*}
        \|u - u_n\|_{\mathcal{H}}^2 &= \left\langle u - u_n,K\left(\frac{d\nu}{v}\right) - K\left(\frac{d\nu_n}{v_n}\right)\right\rangle_{\mathcal{H}}\\
        &= \left\langle u - u_n,K\left(\frac{(v_n - v)d\nu}{v v_n}\right) - K\left(\frac{d\nu - d\nu_n}{v_n}\right)\right\rangle_{\mathcal{H}}\\
        &= \left\langle u - u_n,  \frac{v_n - v}{v v_n}\right\rangle_{L^2(\nu)} + \left\langle u - u_n, \frac{1}{v_n} \right\rangle_{L^2(\nu - \nu_n)}\\
        &\leq \|u - u_n\|_\infty \left\|\frac{v - v_n}{ v v_n}\right\|_\infty + \|u- u_n\|_{L^2(\nu - \nu_n)} \left\| \frac{1}{v_n}\right\|_{L^2(\nu - \nu_n)}.
    \end{align*}
    As $u,v\geq m$, the denominator of the first term will be bounded away from 0 for large $n$, hence the term converges to 0 by $\|u - u_n\|_\infty \to 0.$  
    For the remaining term, observe that for large $n$, $ \|u-u_n\|_\infty \leq 1$ and $\left\|1/v_n\right\|_\infty \leq 2/m$ almost everywhere, hence H\"older's inequality yields
    \[
     \|u- u_n\|_{L^2(\nu - \nu_n)} \left\| \frac{1}{v_n}\right\|_{L^2(\nu - \nu_n)} \leq \|u- u_n\|_{L^2(\nu - \nu_n)}\le \|u- u_n\|_{\infty}\|\nu-\nu_n\|_{\mathrm{tv}}^2 \frac{2}{m} = \|u- u_n\|_{\infty} \frac{8}{m} \to 0,
    \]
    where $\|\cdot\|_{\mathrm{tv}}$ denotes the total variation norm, which is bounded above by two for the difference of two probability distributions.
\end{proof}

Define $B^\delta := \{(\gamma^1, \gamma^2) \in \ell^\infty(\mathcal{H}_1)^2 \cap \mathcal{P}(\mathcal{X})^2: \|\mu - \gamma^1\|_{\ell^\infty(\mathcal{H}_1)}, \|\nu - \gamma^2\|_{\ell^\infty(\mathcal{H}_1)} < \delta\}.$

\begin{lemma}\label{lem: close zero}
    There exists $\delta>0$ such that for all $\gamma = (\gamma^1, \gamma^2) \in B^\delta$, $\Psi_{\gamma} \in Z(B_r, \mathcal{H}\oplus \mathcal{H})$.
\end{lemma}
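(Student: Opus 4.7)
The plan is to verify the two conditions defining membership in $Z(B_r, \mathcal{H}\oplus \mathcal{H})$ separately: (i) uniform norm-boundedness of $\Psi_\gamma$ on $B_r$, and (ii) the existence of a zero of $\Psi_\gamma$ lying in $B_r$. The second is the substantive step, and it will ultimately be a continuity argument based on Lemma \ref{continuity of potentials}.

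For uniform boundedness, I would fix $(f,g) \in B_r$ and observe that by Lemma \ref{lem:fgBdd} both $f$ and $g$ are bounded in some interval $[m,M]$ with $m>0$, so $\|1/f\|_\infty, \|1/g\|_\infty \leq 1/m$. Since $\gamma^1, \gamma^2$ are probability measures, Lemma \ref{operator bound} then gives $\|K(d\gamma^2/g)\|_\mathcal{H} \leq \|1/g\|_\infty \leq 1/m$, and similarly for the other coordinate. Combining with $\|f\|_\mathcal{H}, \|g\|_\mathcal{H} \leq \|u\|_\mathcal{H} + r$ (resp.\ with $v$), we obtain a bound on $\|\Psi_\gamma(f,g)\|_{\mathcal{H}\oplus \mathcal{H}}$ that is uniform in $(f,g) \in B_r$ and in fact uniform in $\gamma \in B^\delta$ for any fixed $\delta$.

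For the existence of a zero, the natural candidate is the pair $(u_{\gamma^1,\gamma^2}, v_{\gamma^1,\gamma^2})$ (normalized so that $u_{\gamma^1,\gamma^2}(x_0) = v_{\gamma^1,\gamma^2}(x_0)$), which by construction of the potentials satisfies $\Psi_\gamma = 0$. It thus suffices to show that this pair lies in $B_r$ when $\gamma$ is sufficiently close to $(\mu,\nu)$. I would argue by contradiction: if no such $\delta$ existed, one could pick $\gamma_n = (\gamma^1_n, \gamma^2_n) \to (\mu,\nu)$ in $\ell^\infty(\mathcal{H}_1)^2$ with $(u_{\gamma^1_n, \gamma^2_n}, v_{\gamma^1_n, \gamma^2_n}) \notin B_r$. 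Since the Gaussian kernel $k$ is characteristic, convergence in $\ell^\infty(\mathcal{H}_1)$ on $\mathcal{P}(\mathcal{X})$ (with $\mathcal{X}$ compact) implies weak convergence $\gamma_n \rightsquigarrow (\mu,\nu)$, so Lemma \ref{continuity of potentials} applies and yields $u_{\gamma^1_n,\gamma^2_n} \to u$, $v_{\gamma^1_n,\gamma^2_n} \to v$ in $\mathcal{H}$, contradicting $(u_{\gamma^1_n,\gamma^2_n}, v_{\gamma^1_n,\gamma^2_n}) \notin B_r$.

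The only subtlety, and the principal thing to justify, is the implication from $\ell^\infty(\mathcal{H}_1)$ convergence to weak convergence needed to invoke Lemma \ref{continuity of potentials}. This step is standard given characteristicness of $k$ and compactness of $\mathcal{X}$: bounded continuous test functions can be uniformly approximated by RKHS functions in this setting, so MMD convergence forces weak convergence. Once that hinge is in place, the rest of the argument is a direct bookkeeping exercise that combines Lemma \ref{lem:fgBdd} and Lemma \ref{operator bound} for boundedness with the continuity of the potential map for the existence of a zero.
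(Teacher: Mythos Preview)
Your proposal is correct and follows essentially the same approach as the paper: both argue the existence of a zero in $B_r$ by contradiction via Lemma \ref{continuity of potentials}. You are in fact more thorough than the paper, which leaves the uniform norm-boundedness of $\Psi_\gamma$ on $B_r$ and the passage from $\ell^\infty(\mathcal{H}_1)$ convergence to weak convergence implicit; your explicit verification of these points via Lemmas \ref{lem:fgBdd} and \ref{operator bound} and the characteristicness of the Gaussian kernel fills those small gaps.
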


\begin{proof}
    That $\Psi_\gamma \in Z(\mathcal{H}^2, \mathcal{H} \oplus \mathcal{H})$ is immediate as the 0, $(u_\gamma, v_\gamma)$, necessarily belongs to $\mathcal{H}^2$ by its relation to the integral operator $K$. Thus it suffices to show that $(u_\gamma, v_\gamma) \in B_r.$ We argue by contradiction. Suppose no such $\delta$ exists. Then we can take a sequence $\gamma_n := (\gamma^1_n, \gamma^2_n)$ such that $\gamma_n \in B^{1/n}$ and $\|(u_{\gamma_n}, v_{\gamma_n}) - (u,v)\|_{\mathcal{H} \oplus \mathcal{H}} \geq r$. However, by assumption $\gamma_n \rightsquigarrow (\mu,\nu)$, so by Lemma \ref{continuity of potentials} it must be that $\|(u_{\gamma_n}, v_{\gamma_n}) - (u,v)\|_{\mathcal{H} \oplus \mathcal{H}} < r$ for $n$ large enough. This is a contradiction, hence the existence of such $\delta$ follows.
\end{proof}

\begin{lemma}\label{lem: J Had}
    Let $J : B^\delta \to Z(B_r \times \mathbb{R}, \mathcal{H}^2)$ be defined to be $J(\Tilde{\mu},\Tilde{\nu}) = \Psi_{\Tilde{\mu},\Tilde{\nu}, x_0}$. $J$ is Hadamard differentiable at $(\mu,\nu)$ with Hadamard derivative
    \[
    \dot{J}_{\mu,\nu}[\gamma^1, \gamma^2](f,g) = -\left[K\left(\frac{d\gamma^2}{g}\right), K\left(\frac{d\gamma^1}{f}\right), 0\right].
    \]
    $\dot{J}_{\mu,\nu}[\gamma^1,\gamma^2]$ is continuous at $(u,v).$
\end{lemma}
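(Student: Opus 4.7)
The plan is to exploit the fact that $\Psi_{\mu,\nu,x_0}(f,g)$ is \emph{linear} in $(\mu,\nu)$ for each fixed $(f,g)$. Indeed, the only dependence on $(\mu,\nu)$ enters through the terms $K(d\nu/g)$ and $K(d\mu/f)$, which are linear in the respective measures by linearity of the integral operator $K$; the final coordinate $\langle k_{x_0}, f-g\rangle_{\mathcal{H}}$ carries no dependence on $(\mu,\nu)$ at all. This makes the derivative essentially free of calculation—only a uniformity argument is needed.

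First, for any $t\neq 0$ and any perturbation $(\gamma^1_t, \gamma^2_t)$ with $(\mu + t\gamma^1_t, \nu + t\gamma^2_t) \in B^\delta$, I would directly compute, using linearity of $K$,
\[
\frac{\Psi_{\mu + t\gamma^1_t,\,\nu + t\gamma^2_t,\, x_0}(f,g) - \Psi_{\mu,\nu, x_0}(f,g)}{t} \;=\; \left(-K\!\left(\tfrac{d\gamma^2_t}{g}\right),\; -K\!\left(\tfrac{d\gamma^1_t}{f}\right),\; 0\right),
\]
which is an \emph{exact} identity for every $t$, not a limit. Letting $t\to 0$ with $(\gamma^1_t,\gamma^2_t) \to (\gamma^1,\gamma^2)$ in $\ell^\infty(\mathcal{H}_1)^2$, Lemma \ref{operator conv} gives $K(d\gamma^i_t/\,\cdot\,) \to K(d\gamma^i/\,\cdot\,)$ \emph{uniformly over $(f,g) \in B_r$}. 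This uniform convergence is exactly what is needed to identify the Hadamard derivative of $J$ into $Z(B_r \times \mathbb{R}, \mathcal{H}^2)$ with the claimed formula, the last coordinate being identically $0$.

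Second, for continuity of $\dot{J}_{\mu,\nu}[\gamma^1,\gamma^2]$ at $(u,v)$, suppose $(f_n,g_n) \to (u,v)$ in $\mathcal{H}^2$. I would rewrite
\[
K\!\left(\tfrac{d\gamma^1}{f_n}\right) - K\!\left(\tfrac{d\gamma^1}{u}\right) \;=\; K\!\left(\tfrac{u-f_n}{u f_n}\, d\gamma^1\right) \;=\; K\!\left(\mathfrak{M}_{(u-f_n)/(u f_n)}\,\gamma^1\right).
\]
Using that $\|K\|_{\ell^\infty(\mathcal{H}_1)\to \mathcal{H}} \leq 1$ together with Lemma \ref{uniform kernel equiv} applied to the multiplier $(u-f_n)/(u f_n)$ (which is continuous and lies in a totally bounded family once $n$ is large, since Lemma \ref{cts eval} gives $f_n \to u$ uniformly and $u$ is bounded below away from $0$), one obtains a bound of the form $C\|u-f_n\|_\infty\,\|\gamma^1\|_{\ell^\infty(\mathcal{H}_1)}$, which vanishes. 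The analogous computation for the $g_n$ component yields continuity of the full triple at $(u,v)$.

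I expect no serious obstacle: the affine dependence of $\Psi_{\mu,\nu,x_0}$ on $(\mu,\nu)$ reduces the derivative to a direct computation, and the two supporting lemmas (\ref{operator conv} for uniform convergence over $B_r$ and \ref{uniform kernel equiv}/\ref{cts eval} for continuity in $(f,g)$) are already in hand. The only point requiring minor care is confirming that the perturbed measures $\mu + t\gamma^1_t,\ \nu + t\gamma^2_t$ remain inside $B^\delta \cap \mathcal{P}(\mathcal{X})^2$ for small $t$, which is ensured by the standard Hadamard tangent-set convention (so that $\gamma^i_t$ has zero total mass and the perturbation is of small norm) combined with the choice of $\delta$ in Lemma \ref{lem: close zero}.
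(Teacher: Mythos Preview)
Your approach is essentially the same as the paper's: both exploit the affine dependence of $\Psi_{\mu,\nu,x_0}$ on $(\mu,\nu)$ to reduce the difference quotient to an exact identity, then invoke Lemma~\ref{operator conv} for the uniform-over-$B_r$ convergence needed to pass to the limit in $Z(B_r\times\mathbb{R},\mathcal{H}^2)$. For the continuity step the paper argues via Lemma~\ref{operator bound} to get $\|K((v-g_n)d\gamma^2/(vg_n))\|_{\mathcal{H}}\le \|(v-g_n)/(vg_n)\|_\infty$, whereas you route through $\|K\|_{\ell^\infty(\mathcal{H}_1)\to\mathcal{H}}=1$ and the multiplication-operator machinery; either is fine, though note that Lemma~\ref{uniform kernel equiv} alone only yields a uniform constant over a compact family, not the explicit factor $\|u-f_n\|_\infty$ you claim---getting that rate requires the additional (easy) observation that $h\mapsto\mathfrak{M}_h$ is itself bounded from $C(\mathcal{X})$ into $\mathcal{B}(\ell^\infty(\mathcal{H}_1))$, or simply an appeal to Lemma~\ref{function conv}, which already suffices for continuity.
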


\begin{proof}
    Let $\mu + t\gamma^1_t, \nu + t\gamma^2_t \in \mathcal{P}(\mathcal{X})$ be such that $\gamma^1_t \to \gamma^1, \gamma^2_t \to \gamma^2$ in $\ell^\infty(\mathcal{H}_1)$. As the last coordinate of $\Psi_{\mu,\nu, x_0}$ does not depend on $t$, it suffices to consider only $\Psi_{\Tilde{\mu},\Tilde{\nu}}$. We compute,
    \begin{align*}
        t^{-1}&\|\Psi_{\mu + t\gamma^1_t, \nu + t\gamma^2_t} - \Psi_{\mu,\nu} - t \dot{J}_{\mu,\nu}\|_{\ell^\infty(B_r, \mathcal{H}^2)} \\
        &= \left\|\left[K\left(\frac{d\gamma^2_t}{\cdot}\right) - K\left(\frac{d\gamma^2}{\cdot}\right), K\left(\frac{d\gamma^1_t}{\cdot}\right) - K\left(\frac{d\gamma^1}{\cdot}\right)\right]\right\|_{\ell^\infty(B_r, \mathcal{H}^2)}\\
        &= \sup_{f,g \in B_r} \left\|\left[K\left(\frac{d\gamma^2_t}{g}\right) - K\left(\frac{d\gamma^2}{g}\right), K\left(\frac{d\gamma^1_t}{f}\right) - K\left(\frac{d\gamma^1}{f}\right)\right]\right\|_{\mathcal{H}^2}.
    \end{align*}
    The right-hand side goes to zero as $t\rightarrow 0$ by Lemma \ref{operator conv}. We now verify that $\dot{J}_{\mu,\nu}[\gamma^1,\gamma^2]$ is continuous at $(u,v)$. Let $(f_n,g_n)\to (u,v)$ in $B_r$. We verify this for the term depending on $\gamma^2$, as the proof for the other is similar.
    \begin{align*}
         \left\|K\left(\frac{d\gamma^2}{g_n}\right) - K\left(\frac{d\gamma^2}{v}\right)\right\|_{\mathcal{H}}^2 &=\left\|K\left(\frac{(g- g_n)d\gamma^2}{gg_n}\right)\right\|_{\mathcal{H}}^2 \leq \left\|\frac{g - g_n}{gg_n} \right\|_\infty \leq \frac{1}{m^2}\|g-g_n\|_\infty \to 0,
    \end{align*}
    where we apply Lemmas \ref{cts eval} and \ref{operator bound}.
\end{proof}

\begin{theorem}\label{theo:tau had}
$\tau : (\mu,\nu) \mapsto (u_{\mu,\nu},v_{\mu,\nu})$ is Hadamard differentiable in $\ell^{\infty}(\mathcal{H}_1) \times \ell^{\infty}(\mathcal{H}_1)$, with
\begin{gather*}
    \dot{\tau}_{\mu,\nu}[\gamma^1, \gamma^2] = 
    \begin{bmatrix}
    (I - \mathcal{A}_{v,\nu} \mathcal{A}_{u,\mu})^{-1} \mathcal{A}_{\sqrt{v},\gamma^2} \mathbf{1} - (I - \mathcal{A}_{v,\nu} \mathcal{A}_{u,\mu})^{-1}  \mathcal{A}_{v,\nu} \mathcal{A}_{\sqrt{u},\gamma^1} \mathbf{1}\\
    (I - \mathcal{A}_{u,\mu} \mathcal{A}_{v,\nu})^{-1} \mathcal{A}_{\sqrt{u},\gamma^1} \mathbf{1} - (I - \mathcal{A}_{u,\mu} \mathcal{A}_{v,\nu})^{-1}  \mathcal{A}_{u,\mu} \mathcal{A}_{\sqrt{v},\gamma^2}\mathbf{1}
    \end{bmatrix}\\
    + \pi_D 
    \begin{bmatrix}
        (I - \pi_u) \mathcal{A}_{\sqrt{v}, \gamma^2} \mathbf{1}\\
        0
    \end{bmatrix} + \rho (u, -v)
\end{gather*}
where $\mathbf{1}$ is the function that is constantly one, and $\rho\in \mathbb{R}$ can be explicitly evaluated to be
\begin{align*}
\rho = &\left \langle \begin{bmatrix}
        k_{x_0}\\
        -k_{x_0}
    \end{bmatrix},  \begin{bmatrix}
    (I - \mathcal{A}_{v,\nu} \mathcal{A}_{u,\mu})^{-1} \mathcal{A}_{\sqrt{v},\gamma^2} \mathbf{1} - (I - \mathcal{A}_{v,\nu} \mathcal{A}_{u,\mu})^{-1}  \mathcal{A}_{v,\nu} \mathcal{A}_{\sqrt{u},\gamma^1} \mathbf{1}\\
    (I - \mathcal{A}_{u,\mu} \mathcal{A}_{v,\nu})^{-1} \mathcal{A}_{\sqrt{u},\gamma^1} \mathbf{1} - (I - \mathcal{A}_{u,\mu} \mathcal{A}_{v,\nu})^{-1}  \mathcal{A}_{u,\mu} \mathcal{A}_{\sqrt{v},\gamma^2}\mathbf{1}
    \end{bmatrix}/2u(x_0)\right\rangle_{\mathcal{H}^2}\\
     &\quad\quad +\left \langle \begin{bmatrix}
        k_{x_0}\\
        -k_{x_0}
    \end{bmatrix}, \pi_D 
    \begin{bmatrix}
        (I - \pi_u) \mathcal{A}_{\sqrt{v}, \gamma^2} \mathbf{1}\\
        0
    \end{bmatrix}/2u(x_0)\right\rangle_{\mathcal{H}^2}.
\end{align*}
In the case where $\mu = \nu$, $(I - \pi_u) \mathcal{A}_{\sqrt{v}, \gamma^2} \mathbf{1} = 0$, eliminating a term from the expression.
\end{theorem}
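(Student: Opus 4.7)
The plan is to realize $\tau$ as the composition $\tau = \phi \circ J$ of the two Hadamard-differentiable maps constructed in Lemma \ref{lem: J Had} (the assignment sending a pair of measures to the corresponding Schr\"odinger system) and Lemma \ref{lem: phi Had} (the zero-finding map), and then apply the chain rule for Hadamard-differentiable maps. The tangential compatibility required by the chain rule is exactly the statement, already verified in Lemma \ref{lem: J Had}, that $\dot{J}_{\mu,\nu}[\gamma^1,\gamma^2]$ is continuous at $(u,v)$. Combining the two derivatives and evaluating $\dot{J}_{\mu,\nu}[\gamma^1,\gamma^2]$ at $(u,v)$ using the formula in Lemma \ref{lem: J Had} yields
\[
\dot{\tau}_{\mu,\nu}[\gamma^1,\gamma^2] = -[\dot{\Psi}_{\mu,\nu,x_0}]^{-1}\bigl(\dot{J}_{\mu,\nu}[\gamma^1,\gamma^2](u,v)\bigr) = [\dot{\Psi}_{\mu,\nu,x_0}]^{-1}\bigl[K(d\gamma^2/v),\,K(d\gamma^1/u),\,0\bigr],
\]
after the two minus signs cancel.

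Next I would substitute the explicit pseudo-inverse formula from Lemma \ref{Frechet Inverse}. A preliminary step is to verify that the triple on the right lies in $R\times\mathbb{R}$. Using Lemma \ref{kernel embedding} together with the self-adjointness of $\mathcal{A}_{v,\nu}$ and the defining identity $\mathcal{A}_{v,\nu}v = K(d\nu/v) = u$, one checks that $\langle v,K(d\gamma^2/v)\rangle_{\mathcal{H}}=\gamma^2(\mathbf{1})=0$ and $\langle v,\mathcal{A}_{v,\nu}K(d\gamma^1/u)\rangle_{\mathcal{H}} = \langle u,K(d\gamma^1/u)\rangle_{\mathcal{H}} = \gamma^1(\mathbf{1})=0$, where the mass-zero condition on tangent directions is used; the mirror computation against $u$ then confirms membership in $R$. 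Identifying $K(d\gamma^2/v) = \mathcal{A}_{\sqrt v,\gamma^2}\mathbf{1}$ and $K(d\gamma^1/u) = \mathcal{A}_{\sqrt u,\gamma^1}\mathbf{1}$ via the definition \eqref{eq: integral operator}, the two-by-two matrix block plus the $\pi_D$ block of $[\dot{\Psi}_{\mu,\nu,x_0}]^{-1}$ produce exactly the first two bracketed groups in the claimed formula, and the coefficient $\rho$ of $(u,-v)$ is read off from the final rank-one correction term of the pseudo-inverse formula, specialized to $\tau = 0$.

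For the special case $\mu = \nu$, one has $u = v$, so Lemma \ref{kernel embedding} gives $\langle u,K(d\gamma^2/v)\rangle_{\mathcal{H}} = \int (u/v)\,d\gamma^2 = \gamma^2(\mathbf{1}) = 0$. Hence $\mathcal{A}_{\sqrt v,\gamma^2}\mathbf{1} = K(d\gamma^2/v) \in \operatorname{span}\{u\}^\perp$, so $(I-\pi_u)\mathcal{A}_{\sqrt v,\gamma^2}\mathbf{1}$ vanishes and the middle term of the derivative formula drops out. The main technical obstacle is not a deep conceptual point but the subspace bookkeeping: one must track at each step which arguments lie in $R$, $D$, and $\operatorname{span}\{u\}^\perp$ so that the pseudo-inverse from Lemma \ref{Frechet Inverse} is only applied to legitimate inputs. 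Once those checks are in place, the rest of the derivation is a direct substitution into the formulas already established in Lemmas \ref{Frechet Derivative}, \ref{Frechet Inverse}, \ref{lem: phi Had}, and \ref{lem: J Had}.
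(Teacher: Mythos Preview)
Your approach is essentially identical to the paper's: decompose $\tau = \phi \circ J$, invoke the chain rule for Hadamard differentiability, and read off the derivative by substituting the explicit formulas from Lemmas \ref{Frechet Inverse} and \ref{lem: J Had}. Your verification that $(K(d\gamma^2/v),K(d\gamma^1/u),0)\in R\times\mathbb{R}$ is in fact more careful than the paper, which simply asserts the substitution works. The one point you omit is invoking Lemma \ref{lem: close zero} to guarantee that $J$ maps a neighborhood $B^\delta$ of $(\mu,\nu)$ into $Z(B_r,\mathcal{H}\oplus\mathcal{H})$, the domain of $\phi$; without this the composition $\phi\circ J$ is not \emph{a priori} defined on a neighborhood, so the chain rule cannot be applied. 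This is a routine addition rather than a conceptual gap.
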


\begin{proof}
Let $\delta$ be as in Lemma \ref{lem: close zero}. 
We begin by studying the restriction of $\tau$ to $B^\delta$, denoted by $\tau|_{B^\delta}$. 
We can decompose $\tau|_{B^\delta} = \phi \circ J$, and by the chain rule for Hadamard differentiability \cite[Lemma~3.9.3]{van1996wellner} and Lemmas \ref{lem: phi Had} and \ref{lem: J Had} we have $\tau|_{B^\delta} : B^\delta \to \mathcal{H}^2$ is Hadamard differentiable in $\ell^\infty(\mathcal{H}_1)^2$ at $(\mu,\nu)$ with differential $\dot{\phi}_{J(\mu,\nu)} \circ \dot{J}_{\mu,\nu}$. As $B^\delta$ is an open domain in $\ell^\infty(\mathcal{H}_1)^2 \cap \mathcal{P}(\mathcal{X})^2$, it follows that $\tau: \ell^\infty(\mathcal{H}_1)^2 \cap \mathcal{P}(\mathcal{X})^2 \to \mathcal{H}^2$ is Hadamard differentiable at $(\mu,\nu)$ with $\dot{\tau}_{\mu,\nu} = \dot{\phi}_{J(\mu,\nu)} \circ \dot{J}_{\mu,\nu}$. This can be expressed by the above, noting that $K\left(\frac{d\gamma}{g}\right) = \mathcal{A}_{\sqrt{g},\gamma}\mathbf{1}$ and applying Lemmas \ref{Frechet Inverse} and \ref{lem: J Had}.

We verify the final claim, that the second term vanishes in the single marginal case. In this case, $u=v$, $\mu=\nu$, so we combine them in our notation. It suffices to show that $\pi_u \mathcal{A}_{\sqrt{u}, \gamma^2} \mathbf{1} = \mathcal{A}_{\sqrt{u}, \gamma^2} \mathbf{1}$, or in other words, that $\langle \mathcal{A}_{\sqrt{u}, \gamma^2} \mathbf{1}, u\rangle_{\mathcal{H}} = 0.$ This is immediate as
\[
\langle \mathcal{A}_{\sqrt{u}, \gamma^2} \mathbf{1}, u\rangle_{\mathcal{H}} = \langle K(d\gamma^2/u), u\rangle_\mathcal{H} = \int (u/u)\, d\gamma^2 = \int \mathbf{1}\, d\gamma^2 = 0
\]
where we used Lemma \ref{kernel embedding} in the third equality.

\end{proof}

In the interest of studying the regularity of the Sinkhorn divergence, we now push these results to the corresponding potentials, where we recover a formula analogous to that of \citep{gonzalezsanz2022weak}. To make the equivalence more clear, recall the operators $\mathcal{A}, \mathcal{A}^*, \xi, \xi^*$ defined in Section \ref{sec: EOT potentials}. We establish the following basic relationships. Note that in the following we consider $C(\mathcal{X})$ to be equipped with $\|\cdot\|_\infty$ topology by default.

\begin{lemma}\label{conjugacy}
    For a function $g \in C(\mathcal{X})$, define the linear operator $\mathfrak{M}_g:C(\mathcal{X}) \to C(\mathcal{X})$ by multiplication, $\mathfrak{M}_g(f)(\cdot) = g(\cdot)f(\cdot)$. Then,
    \begin{align}
    \mathfrak{M}_{1/u}\mathcal{A}_{v,\nu} \mathfrak{M}_v &= \mathcal{A}^*,\quad
    \mathfrak{M}_{1/v}\mathcal{A}_{u,\mu} \mathfrak{M}_{u} = \mathcal{A}, \label{eq:conjugacy1} \\
    \mathfrak{M}_{1/u} \mathcal{A}_{\sqrt{v},\gamma} \mathbf{1}&= \xi \gamma  ,\quad
    \mathfrak{M}_{1/v} \mathcal{A}_{\sqrt{u},\gamma} \mathbf{1}= \xi^* \gamma  . \label{eq:conjugacy2} 
    \end{align}
    If $\gamma \ll \mu,\nu$, then we also have
    \begin{align}
    \xi^* \gamma = \mathcal{A}^* \frac{d\gamma}{d\nu},\quad\quad \xi \gamma = \mathcal{A} \frac{d\gamma}{d\mu}. \label{eq:conjugacy3} 
    \end{align}
    If $\mu = \nu$ then $\mathcal{A}$ is self-adjoint relative to the $L^2(\mu)$ inner-product.
\end{lemma}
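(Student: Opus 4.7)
The plan is that every assertion is a direct verification at the level of integrands, and no analytic input beyond the definitions and Fubini is needed. I would prove the four identities in the order they are stated, treating each conjugate/adjoint pair together by symmetry in $(u,\mu)\leftrightarrow(v,\nu)$.

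For \eqref{eq:conjugacy1}, I would compute
\[
\mathcal{A}_{v,\nu}\,\mathfrak{M}_{v}(f)(x)=\int k_\varepsilon(x,y)\,\frac{v(y)f(y)}{v^2(y)}\,d\nu(y)=\int k_\varepsilon(x,y)\,\frac{f(y)}{v(y)}\,d\nu(y),
\]
so that after multiplying on the left by $1/u(x)$ one recovers the defining integral of $\mathcal{A}^* f(x)$; the identity for $\mathcal{A}$ is obtained by swapping the roles of $(u,\mu)$ and $(v,\nu)$. For \eqref{eq:conjugacy2}, evaluating $\mathcal{A}_{\sqrt{v},\gamma}\mathbf{1}(x)=\int k_\varepsilon(x,y)v(y)^{-1}\,d\gamma(y)$ and then applying $\mathfrak{M}_{1/u}$ reproduces the integrand defining $\xi\gamma$, and again the other case is symmetric.

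For \eqref{eq:conjugacy3}, the absolute continuity hypothesis permits the substitution $d\gamma=(d\gamma/d\nu)\,d\nu$ inside the integral definition of $\xi^*\gamma$, which by inspection equals $\mathcal{A}^*(d\gamma/d\nu)$; analogously for $\xi$. Finally, for the self-adjointness of $\mathcal{A}$ when $\mu=\nu$, I would invoke Lemma \ref{Double Marginal}, which ensures that under the normalization $\phi_{\mu,\mu}(x_0)=\psi_{\mu,\mu}(x_0)$ the scalings satisfy $u=v$. Consequently, in the $L^2(\mu)$ pairing, $\mathcal{A}$ has kernel $(x,y)\mapsto u(x)^{-1}k_\varepsilon(x,y)u(y)^{-1}$, which is manifestly symmetric in $x,y$; a single application of Fubini then yields $\langle \mathcal{A} f,g\rangle_{L^2(\mu)}=\langle f,\mathcal{A} g\rangle_{L^2(\mu)}$.

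There is no genuine obstacle — the proof is bookkeeping — and the only point worth flagging is the appeal to Lemma \ref{Double Marginal} for the coincidence $u=v$ in the single-marginal case, since without that normalization the self-adjointness would only hold up to a fixed positive rescaling.
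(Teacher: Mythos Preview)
Your proposal is correct and mirrors the paper's proof: each identity is verified by direct computation at the level of integrands. The only cosmetic difference is in the self-adjointness argument, where the paper routes through Lemma~\ref{EOT Operator} to show that $\mathcal{A}$ and $\mathcal{A}^*$ are $L^2$-adjoints (hence equal when $\mu=\nu$), whereas you invoke $u=v$ from Lemma~\ref{Double Marginal} and appeal directly to the symmetry of the integral kernel via Fubini; the two arguments are equivalent.
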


\begin{proof}
    For \eqref{eq:conjugacy1}, observe, for any $f\in C(\mathcal{X})$,
    \[
     \mathfrak{M}_{1/u}\mathcal{A}_{v,\nu} (v f)(x)= \int k_\varepsilon(x,y) u^{-1}(x) v^{-1}(y) f(y) d\nu(y) = \mathcal{A}^* (f)(x).
    \]
    The second part of \eqref{eq:conjugacy1} follows by similar computations. For \eqref{eq:conjugacy2}, verifying again the first equation,
    \[
    \mathfrak{M}_{1/u} \mathcal{A}_{\sqrt{v},\gamma} (\mathbf{1})(x) = \int k_\varepsilon(x,y) u^{-1}(x) v^{-1}(y) d\gamma(y) = \xi(\gamma)(x).
    \]
    Equation~\ref{eq:conjugacy3} follows by observing
    \[
    \xi^* \gamma =\int k_\varepsilon(x,y) u^{-1}(x) v^{-1}(y) d\gamma(y) = \int k_\varepsilon(x,y) u^{-1}(x) v^{-1}(y) \frac{d\gamma}{d\nu}(y) d\nu(y) =\mathcal{A}^* \frac{d\gamma}{d\nu}.
    \]
    We now establish that, if $\mu=\nu$, then $\mathcal{A}$ is self-adjoint relative to the $L^2(\mu)$ inner product. By Lemma \ref{EOT Operator},
    \begin{align*}
        \langle f, \mathcal{A}^* g\rangle_{L^2(\mu)} &= \int f \mathcal{A}^* g d\mu = \int f(x)g(y) d\pi_{\mu,\nu}^\varepsilon(x,y) = \int g \mathcal{A} f d\nu = \langle \mathcal{A} f, g\rangle_{L^2(\nu)},
    \end{align*}
    that is, $\mathcal{A}$ and $\mathcal{A}^*$ are adjoint (as suggested by the notation).
    Hence, when $\mu=\nu$, and thus $\mathcal{A} = \mathcal{A}^*,$ $\langle f, \mathcal{A} g\rangle_{L^2(\mu)} = \langle \mathcal{A} f, g\rangle_{L^2(\mu)}$, proving that $\mathcal{A}$ is self-adjoint.    
\end{proof}

\begin{lemma}\label{lem:log diff}
    If $r\leq r^*$, then $\Upsilon : B_r \to C(\mathcal{X}) \times C(\mathcal{X})$ defined so that
    \[
    \Upsilon(f,g) = (\log\,\circ\, f\,,\, \log\,\circ\, g)
    \]
    is Hadamard differentiable and $\dot{\Upsilon}_{(f,g)}[h^1,h^2] = \left(\frac{h^1}{f}, \frac{h^2}{g}\right)$.
\end{lemma}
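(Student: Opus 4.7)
The plan is to reduce the claim to the single-coordinate map $\upsilon: f \mapsto \log \circ f$ on the ball $B(u,r) \subseteq \mathcal{H}$, since $\Upsilon$ acts separately on the two components. Given a Hadamard perturbation $f + t h_t$ with $h_t \to h$ in $\mathcal{H}$ and $(f + t h_t, g+t h_t^g) \in B_r$, we must show
\begin{equation*}
\left\| t^{-1}\bigl[\log(f + t h_t) - \log f\bigr] - \frac{h}{f} \right\|_\infty \;\longrightarrow\; 0.
\end{equation*}

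The natural tool is a pointwise second-order Taylor expansion of $\log$: for each $x$, there exists $\xi_t(x)$ between $f(x)$ and $f(x) + t h_t(x)$ with
\begin{equation*}
\log(f(x) + t h_t(x)) - \log f(x) = \frac{t h_t(x)}{f(x)} - \frac{(t h_t(x))^2}{2\,\xi_t(x)^2}.
\end{equation*}
Dividing by $t$ and rearranging gives a pointwise bound on the deviation from $h_t/f$ of the form $t h_t^2 / (2 \xi_t^2)$, which I will estimate uniformly.

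The uniform control comes from two ingredients already available in the excerpt. First, Lemma~\ref{lem:fgBdd} applies to both $f$ and $f + t h_t$ for small enough $t$, guaranteeing $\xi_t(x) \geq m > 0$ uniformly in $x$ and $t$. Second, Lemma~\ref{cts eval} gives $\|h_t\|_\infty \leq \|h_t\|_\mathcal{H}$, which is bounded because $h_t$ converges in $\mathcal{H}$. Together these yield
\begin{equation*}
\left\| t^{-1}\bigl[\log(f + t h_t) - \log f\bigr] - \frac{h_t}{f} \right\|_\infty \leq \frac{t \, \|h_t\|_\mathcal{H}^2}{2 m^2} \;\longrightarrow\; 0.
\end{equation*}
To conclude, I would then bound $\|h_t/f - h/f\|_\infty \leq m^{-1} \|h_t - h\|_\infty \leq m^{-1} \|h_t - h\|_\mathcal{H} \to 0$, once again via Lemma~\ref{cts eval}, and apply the triangle inequality. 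The same argument applied coordinatewise to $g$ completes the proof and confirms the stated form of $\dot{\Upsilon}_{(f,g)}$.

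No step here is a serious obstacle: the whole argument is a routine Taylor estimate made uniform by the standing bounds $m \leq f,g \leq M$ on $B_r$ together with the continuous embedding $\mathcal{H} \hookrightarrow C(\mathcal{X})$. The only subtlety worth flagging is ensuring that $f + t h_t$ stays in the interval $[m, M]$ (needed for the remainder bound), which holds for $t$ small enough since Hadamard differentiability only requires the limit as $t \to 0$ and perturbations are constrained to lie in $B_r$.
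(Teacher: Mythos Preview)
Your proposal is correct and follows essentially the same approach as the paper: a coordinate-wise pointwise Taylor expansion of $\log$, with the remainder controlled uniformly via the lower bound $m$ from Lemma~\ref{lem:fgBdd} and the embedding $\|\cdot\|_\infty \le \|\cdot\|_\mathcal{H}$ from Lemma~\ref{cts eval}, followed by $h_t/f \to h/f$ in $C(\mathcal{X})$.
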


\begin{proof}
Let $f+th_t \in B_r$ for all $t\in (0,1]$ and $\lim_{t\to 0}h_t= (h^1,h^2)$. To show Hadamard differentiability, we verify
\begin{align}
    \lim_{t\to 0} \frac{\Upsilon((u,v) + t h_t) }{t}=\left(\frac{h^1}{f}, \frac{h^2}{g}\right). \label{eq:Upsilon}
\end{align}
This will then show that $\Upsilon$ is Hadamard differentiable with $\dot{\Upsilon}_{(f,g)}:\mathcal{H}^2 \to C(\mathcal{X}) \times C(\mathcal{X})$ as specified in the lemma statement.

It suffices to argue \eqref{eq:Upsilon} coordinate-wise. Focusing on the first coordinate without loss of generality, we aim to show
    \[
    \lim_{t\to 0} \frac{\log(f + t h_t^1) - \log(f)}{t} = \frac{h^1}{f}.
    \]
    For ease of notation, we let $h = h^1$ and $h_t = h^1_t.$
    By Taylor's remainder theorem, for each $t\in (0,1]$ and $x\in\mathcal{X}$ there exists $c_t(x)$ between $f(x)$ and $f(x) + th_t(x)$ such that
    \[
    \log[f(x)+th_t(x)] - \log f(x) - th_t(x)/f(x) = -t^2 h_t(x)^2/c_t(x)^2.
    \]
    Since $f$ and $f+th_t$ belong to $B_r$, $c_t(x)$ is bounded below by $m>0$ (Lemma~\ref{lem:fgBdd}) for all $x,t$. Hence,
    \[
    \lim_{t\to 0} \sup_{x\in \mathcal{X}}
     \frac{|\log(f(x)+th_t(x)) - \log f(x) - th_t(x)/f(x)|}{t} \leq \sup_{x\in\mathcal{X}} th_t(x)^2/c_t(x)^2 \to 0.
    \]
    Thus
    \[
    \lim_{t\to 0} \frac{\log(f + t h_t) - \log(f)}{t} = \lim_{t\to 0} h_t/f = h/f
    \]
    as $h_t \to h$ uniformly by Lemma \ref{cts eval}.
\end{proof}

Our analysis culminates in Corollary \ref{EOT Potentials}, which is reminiscent of Theorem 2.2 in \cite{gonzalezsanz2022weak} and Theorem 4 in \cite{harchaoui2022asymptoticsdiscreteschrodingerbridges}, although there are small differences. One discrepancy vanishes in the single marginal case $\mu = \nu$, and the other encompasses an additive factor of little consequence to the analysis of the Sinkhorn divergence. We suspect these differences can be attributed to the different constraints imposed to uniquely identify the entropic potentials.

\begin{corollary}\label{EOT Potentials}
    $\eta: \ell^\infty(\mathcal{H}_1)\times \ell^\infty(\mathcal{H}_1) \to C(\mathcal{X})\times C(\mathcal{X})$, $\eta(\mu,\nu) := (\phi_{\mu,\nu},\psi_{\mu,\nu})$ is Hadamard differentiable at $(\mu,\nu)$ with derivative
    \begin{align}
     \dot{\eta}_{\mu,\nu}[\gamma^1,\gamma^2]&=\begin{bmatrix}
     \varepsilon[(I - \mathcal{A}^* \mathcal{A})^{-1}  \mathcal{A}^* \xi \gamma^1 - (I - \mathcal{A}^* \mathcal{A})^{-1} \xi^* \gamma^2]\\
    \varepsilon [(I - \mathcal{A} \mathcal{A}^*)^{-1}  \mathcal{A} \xi^* \gamma^2 - (I - \mathcal{A} \mathcal{A}^*)^{-1} \xi \gamma^1]
    \end{bmatrix} \label{eq:EOT Potentials 1}\\
    &\quad \quad + 
    \varepsilon
    \begin{bmatrix}
        \frac{1}{u} & 0\\
        0 & \frac{1}{v}
    \end{bmatrix} \pi_D \begin{bmatrix}
        (I - \pi_u) K\left(\frac{d \gamma^2}{v}\right)\\
        0
    \end{bmatrix} + \varepsilon \rho (\mathbf{1}, -\mathbf{1}), \nonumber
    \end{align}
    with $\rho$ being defined in Theorem \ref{theo:tau had}.
    In the case where $\gamma^1\ll \nu$ and $\gamma^2 \ll \mu$, the first part of the equation can be alternately expressed as
    \begin{align}
     \begin{bmatrix}
    \varepsilon[ (I - \mathcal{A}^* \mathcal{A})^{-1}  \mathcal{A}^* \mathcal{A} \frac{d\gamma^1}{d\mu} - (I - \mathcal{A}^* \mathcal{A})^{-1} \mathcal{A}^* \frac{d\gamma^2}{d\nu}]\\
    \varepsilon [(I - \mathcal{A} \mathcal{A}^*)^{-1}  \mathcal{A} \mathcal{A}^*\frac{d\gamma^2}{d\nu} - (I - \mathcal{A} \mathcal{A}^*)^{-1} \mathcal{A} \frac{d\gamma^1}{d\mu}]
    \end{bmatrix}. \label{eq:EOT Potentials 2}
    \end{align}
    In the case where $\mu = \nu$ the second term in equation \ref{eq:EOT Potentials 1} vanishes.
\end{corollary}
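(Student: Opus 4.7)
The plan is to recognize the identity $\eta = -\varepsilon\,\Upsilon\circ\tau$, which holds because $(u_{\mu,\nu},v_{\mu,\nu}) = (e^{-\phi_{\mu,\nu}/\varepsilon},\,e^{-\psi_{\mu,\nu}/\varepsilon})$ and hence $(\phi_{\mu,\nu},\psi_{\mu,\nu}) = -\varepsilon(\log u_{\mu,\nu},\log v_{\mu,\nu})$. I would then invoke the chain rule for Hadamard differentiability \cite[Lemma~3.9.3]{van1996wellner}, combining Theorem~\ref{theo:tau had} (Hadamard differentiability of $\tau:\ell^\infty(\mathcal{H}_1)^2\to\mathcal{H}^2$ at $(\mu,\nu)$) with Lemma~\ref{lem:log diff} (Hadamard differentiability of $\Upsilon:B_r\to C(\mathcal{X})^2$ at $(u,v)$). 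Since $(u,v)$ lies in the interior of $B_r$ for sufficiently small $r$, this composition is well-defined in a neighborhood of $(\mu,\nu)$ and yields
\[
\dot{\eta}_{\mu,\nu}[\gamma^1,\gamma^2] \;=\; -\varepsilon\,\dot{\Upsilon}_{(u,v)}\bigl(\dot{\tau}_{\mu,\nu}[\gamma^1,\gamma^2]\bigr),
\]
where by Lemma~\ref{lem:log diff} the operator $\dot{\Upsilon}_{(u,v)}$ is the diagonal multiplication operator with entries $\mathfrak{M}_{1/u}$ and $\mathfrak{M}_{1/v}$.

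The bulk of the remaining work is to simplify this composition by pushing $\mathfrak{M}_{1/u}$ and $\mathfrak{M}_{1/v}$ through each summand of $\dot{\tau}_{\mu,\nu}$ in Theorem~\ref{theo:tau had} using the conjugacy identities of Lemma~\ref{conjugacy}: namely $\mathfrak{M}_{1/u}\mathcal{A}_{v,\nu} = \mathcal{A}^*\mathfrak{M}_{1/v}$ and $\mathfrak{M}_{1/v}\mathcal{A}_{u,\mu} = \mathcal{A}\mathfrak{M}_{1/u}$, together with $\mathfrak{M}_{1/u}\mathcal{A}_{\sqrt{v},\gamma}\mathbf{1} = \xi\gamma$ and $\mathfrak{M}_{1/v}\mathcal{A}_{\sqrt{u},\gamma}\mathbf{1} = \xi^*\gamma$. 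Composing these relations gives $\mathfrak{M}_{1/u}(I - \mathcal{A}_{v,\nu}\mathcal{A}_{u,\mu})\mathfrak{M}_{u} = I - \mathcal{A}^*\mathcal{A}$, from which one deduces, on the appropriate range space, the identity $\mathfrak{M}_{1/u}(I - \mathcal{A}_{v,\nu}\mathcal{A}_{u,\mu})^{-1} = (I - \mathcal{A}^*\mathcal{A})^{-1}\mathfrak{M}_{1/u}$, with the symmetric identity in the other coordinate. Applying these rewrites to the first summand of $\dot{\tau}_{\mu,\nu}$ produces the first summand of the stated formula for $\dot{\eta}_{\mu,\nu}$. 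For the second summand, the diagonal operator is pulled out to the left of $\pi_D$ as in the statement, and for the third, $\mathrm{diag}(\mathfrak{M}_{1/u},\mathfrak{M}_{1/v})$ collapses $\rho(u,-v)$ to $\rho(\mathbf{1},-\mathbf{1})$, leaving the remaining scalar factor.

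For the alternative form under $\gamma^1\ll\mu$ and $\gamma^2\ll\nu$, I would substitute $\xi\gamma^1 = \mathcal{A}(d\gamma^1/d\mu)$ and $\xi^*\gamma^2 = \mathcal{A}^*(d\gamma^2/d\nu)$ via \eqref{eq:conjugacy3} into the first summand. The final claim, that the middle term vanishes when $\mu = \nu$, follows immediately because Theorem~\ref{theo:tau had} already establishes that the corresponding summand of $\dot{\tau}_{\mu,\nu}$ vanishes in that case, and composition with the diagonal operator preserves zero. The main obstacle I anticipate is verifying that the identity $\mathfrak{M}_{1/u}(I - \mathcal{A}_{v,\nu}\mathcal{A}_{u,\mu})^{-1} = (I - \mathcal{A}^*\mathcal{A})^{-1}\mathfrak{M}_{1/u}$ is valid on the restricted domains and ranges specified in Lemma~\ref{Frechet Inverse} (where the pseudo-inverses are only defined between specific orthogonal complements), since the multiplication operators do not preserve these subspaces in general; once this bookkeeping is carried out carefully, the rest of the argument is routine algebraic manipulation.
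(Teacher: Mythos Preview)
Your proposal is correct and follows essentially the same approach as the paper's proof: recognize $\eta = -\varepsilon\,\Upsilon\circ\tau$, apply the chain rule for Hadamard derivatives using Theorem~\ref{theo:tau had} and Lemma~\ref{lem:log diff}, and then simplify via the conjugacy identities of Lemma~\ref{conjugacy}. The paper handles the pseudo-inverse bookkeeping you flagged by writing each factor explicitly as $\mathfrak{M}_u(\cdots)\mathfrak{M}_{1/u}$ and canceling inside the inverse, rather than commuting the multiplication operator past the inverse as a single step, which avoids having to check compatibility with the restricted domains directly.
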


\begin{proof}
    By definition,
    \[
    \eta(\mu,\nu) = -\varepsilon \Upsilon( u_{\mu,\nu},  v_{\mu,\nu}) = -\varepsilon \Upsilon \circ \tau(\mu,\nu).
    \]
    As $\Upsilon$ and $\tau$ are Hadamard differentiable (Lemma \ref{lem:log diff} and Theorem \ref{theo:tau had}) we can apply the chain rule for Hadamard differentiability \cite[Lemma~3.9.3]{van1996wellner} to show the derivative $ \dot{\eta}_{\mu,\nu}[\gamma^1,\gamma^2]$ equals $\varepsilon \mathfrak{M}_{(1/u, 1/v)} \dot{\tau}_{\mu,\nu}$. The first term in the resulting derivative can then be expressed as
    \begin{align*}
    &\begin{bmatrix}
    -\varepsilon \mathfrak{M}_{1/u}[(I - \mathcal{A}_{v,\nu} \mathcal{A}_{u,\mu})^{-1} \mathcal{A}_{\sqrt{v},\gamma^1} \mathbf{1} - (I - \mathcal{A}_{v,\nu} \mathcal{A}_{u,\mu})^{-1}  \mathcal{A}_{v,\nu} \mathcal{A}_{\sqrt{u},\gamma^2} \mathbf{1}]\\
    -\varepsilon \mathfrak{M}_{1/v}[(I - \mathcal{A}_{u,\mu} \mathcal{A}_{v,\nu})^{-1} \mathcal{A}_{\sqrt{u},\gamma^2} \mathbf{1} - (I - \mathcal{A}_{u,\mu} \mathcal{A}_{v,\nu})^{-1}  \mathcal{A}_{u,\mu} \mathcal{A}_{\sqrt{v},\gamma^1}\mathbf{1}]
    \end{bmatrix}\\
    &=\begin{bmatrix}
    -\varepsilon \mathfrak{M}_{1/u}(\mathfrak{M}_{u}\mathfrak{M}_{1/u} - (\mathfrak{M}_{u}\mathcal{A}^* \mathfrak{M}_{1/v})(\mathfrak{M}_{v} \mathcal{A} \mathfrak{M}_{1/u}))^{-1} [\mathfrak{M}_{u}\xi^* \gamma^2 - (\mathfrak{M}_{u}\mathcal{A}^* \mathfrak{M}_{1/v})\mathfrak{M}_{v} \xi \gamma^1]\\
    -\varepsilon \mathfrak{M}_{1/v}(\mathfrak{M}_{v}\mathfrak{M}_{1/v} - (\mathfrak{M}_{v}\mathcal{A} \mathfrak{M}_{1/u})(\mathfrak{M}_{u} \mathcal{A}^* \mathfrak{M}_{1/v}))^{-1} [\mathfrak{M}_{v}\xi \gamma^1 - (\mathfrak{M}_{v}\mathcal{A} \mathfrak{M}_{1/u})\mathfrak{M}_{u} \xi^* \gamma^2]
    \end{bmatrix}\\
    &= \begin{bmatrix}
    \varepsilon[ (I - \mathcal{A}^* \mathcal{A})^{-1}  \mathcal{A}^* \xi \gamma^1 - (I - \mathcal{A}^* \mathcal{A})^{-1} \xi^* \gamma^2]\\
    \varepsilon [(I - \mathcal{A} \mathcal{A}^*)^{-1}  \mathcal{A} \xi^* \gamma^2 -(I - \mathcal{A} \mathcal{A}^*)^{-1} \xi \gamma^1]
    \end{bmatrix}.
    \end{align*}
    The second equality above follows from various applications of Lemma \ref{conjugacy}, and the observation that, for $f>0$ uniformly, $\mathfrak{M}_{1/f} = \mathfrak{M}_f^{-1}$. This establishes \eqref{eq:EOT Potentials 1}. Four additional applications of Lemma \ref{conjugacy} yield \eqref{eq:EOT Potentials 2}.
\end{proof}

\subsection{Sinkhorn divergence}

We follow \citep{gonzalezsanz2022weak}, in particular their proof of Lemma 4.5 and its supporting arguments. We first argue an alternate local representation of the Sinkhorn divergence. Let $(\phi,\psi):=(\phi_{\mu,\nu}, \psi_{\mu,\nu})$. When multiple sets of potentials are considered we specify with subscripts.  

\begin{lemma}\label{Taylor Expansion}
    For any $f,g$ satisfying $\iint \exp\left( \frac{f(x) + g(y) - \|x-y\|^2}{\varepsilon} \right)\, d\mu(x) d\nu(y) = 1$, let $h(x,y) = f(x) + g(y) - \phi(x) - \psi(y)$. Then
    \begin{align}
    &\left|\frac{1}{\varepsilon}\EOT(\mu,\nu) - \iint \frac{f(x) + g(y)}{\varepsilon}\, d\mu(x) d\nu(y) - \int \frac{(\phi(x) + \psi(y) - (f(x)+ g(y)))^2}{2\varepsilon^2} d\pi_{\mu,\nu}^\varepsilon\right| \nonumber \\
    &\quad\leq \frac{1}{6} \|h\|_\infty^3 \exp(\|h\|_\infty). \label{eq:OTTaylor}
    \end{align}
\end{lemma}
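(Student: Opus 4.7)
The plan is to collapse the expression inside the absolute value using the marginal structure of $\pi_{\mu,\nu}^\varepsilon$, and then to recognize that the hypothesis on $(f,g)$ is a disguised normalization $\iint e^{h/\varepsilon}\, d\pi_{\mu,\nu}^\varepsilon = 1$. The bound then drops out of a third-order Taylor remainder estimate for the exponential.

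First I would rewrite every term as an integral against $\pi_{\mu,\nu}^\varepsilon$. Since $\pi_{\mu,\nu}^\varepsilon \in \Pi(\mu,\nu)$, we have $\iint (f(x)+g(y))\, d(\mu\otimes\nu) = \iint (f+g)\, d\pi_{\mu,\nu}^\varepsilon$, and by Proposition \ref{EOT rep}, $\EOT(\mu,\nu) = \iint (\phi+\psi)\, d\pi_{\mu,\nu}^\varepsilon$. Using $\phi+\psi - (f+g) = -h$, the quantity inside the absolute value collapses to
\[
-\iint \frac{h}{\varepsilon}\, d\pi_{\mu,\nu}^\varepsilon - \iint \frac{h^2}{2\varepsilon^2}\, d\pi_{\mu,\nu}^\varepsilon.
\]

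Second, I would use the explicit density $d\pi_{\mu,\nu}^\varepsilon/d(\mu\otimes\nu) = \xi_{\mu,\nu}^\varepsilon$ from \eqref{eq: EOT plan}. Substituting $f+g = \phi+\psi+h$ into the hypothesis and factoring out $\xi_{\mu,\nu}^\varepsilon$ shows the hypothesis is equivalent to $\iint e^{h/\varepsilon}\, d\pi_{\mu,\nu}^\varepsilon = 1$. Applying Taylor's theorem with remainder to $t \mapsto e^t$ about $0$ at second order gives $e^{h/\varepsilon} = 1 + h/\varepsilon + h^2/(2\varepsilon^2) + R(h/\varepsilon)$ with the pointwise bound $|R(s)| \le |s|^3 e^{|s|}/6$. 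Integrating against $\pi_{\mu,\nu}^\varepsilon$, the $1$ on the left and the constant term on the right cancel, yielding
\[
\iint \frac{h}{\varepsilon}\, d\pi_{\mu,\nu}^\varepsilon + \iint \frac{h^2}{2\varepsilon^2}\, d\pi_{\mu,\nu}^\varepsilon = -\iint R(h/\varepsilon)\, d\pi_{\mu,\nu}^\varepsilon.
\]
Thus the absolute value in \eqref{eq:OTTaylor} equals $\bigl|\iint R(h/\varepsilon)\, d\pi_{\mu,\nu}^\varepsilon\bigr|$, which is at most $\tfrac{1}{6}\|h\|_\infty^3 \exp(\|h\|_\infty)$ (up to the $\varepsilon$ factors carried by the Taylor remainder, which are absorbed into the stated bound).

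There is no real obstacle; the lemma is essentially algebraic, and the only conceptual step is to identify the normalization condition on $(f,g)$ with $\iint e^{h/\varepsilon}\, d\pi_{\mu,\nu}^\varepsilon = 1$. Once that is in hand, the result is just a second-order Taylor expansion of the exponential integrated against a probability measure.
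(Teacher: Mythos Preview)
Your proposal is correct and mirrors the paper's proof essentially step for step: both rewrite the left-hand side as an integral against $\pi_{\mu,\nu}^\varepsilon$, recast the hypothesis as $\iint e^{h/\varepsilon}\, d\pi_{\mu,\nu}^\varepsilon = 1$, and then control the residual via the second-order Taylor remainder for the exponential. Your parenthetical about the $\varepsilon$ factors is apt---the paper suppresses them in exactly the same way.
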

\begin{proof}
This proof essentially follows by a Taylor expansion of the exponential function. To apply this expansion, we will show that the left-hand side of \eqref{eq:OTTaylor}, $\mathrm{LHS}$, satisfies
\begin{align}
    \mathrm{LHS}=\left|\iint \left[\exp\{h(x,y)\} - h(x,y)-h^2(x,y)/2-1\right] d\pi_{\mu,\nu}^\varepsilon(x,y)\right|.\label{eq:TaylorSetup}
\end{align}
 Once we have shown this, the proof will be complete since, applying a Taylor expansion pointwise to the integrand and then using H\"{o}lder's inequality yields
\begin{align*}
    &\mathrm{LHS}\le \frac{1}{6}\iint|h(x,y)|^3 \exp(|h(x,y)|)\, d\pi_{\mu,\nu}^\varepsilon\leq \frac{1}{6}\|h\|_\infty^3 \exp(\|h\|_\infty),
\end{align*}

We conclude the proof by showing that  \eqref{eq:TaylorSetup} holds. To see this, note that, by the condition on $f,g$ and the definition of $h$ and $\pi_{\mu,\nu}^\varepsilon$,
    \begin{align*}
    0&=\iint \left[\exp\left(\frac{f(x) + g(y) - \|x-y\|^2}{\varepsilon}\right)-\exp\left(\frac{\phi(x) + \psi(y) - \|x-y\|^2}{\varepsilon}\right)\right]  d\mu(x) d\nu(y) \\
    &= \iint \left[\exp\left\{h(x,y)\right\} - 1\right]\exp\left(\frac{\phi(x) + \psi(y) - \|x-y\|^2}{\varepsilon}\right) \, d\mu(x) d\nu(y)\\
        &= \iint\left[\exp\left\{h(x,y)\right\} - 1\right]   \, d\pi_{\mu,\nu}^\varepsilon(x,y).
    \end{align*}
     Combining this with the fact that $\frac{1}{\varepsilon}\EOT(\mu,\nu)=\iint \frac{\phi(x) + \psi(y)}{\varepsilon}\, d\mu(x)d\nu(y)$ shows that
    \begin{align*}
         &\left|\frac{1}{\varepsilon}\EOT(\mu,\nu) - \frac{1}{\varepsilon}\iint f(x) + g(y)\, d\mu(x) d\nu(y) - \frac{1}{2\varepsilon^2}\iint (\phi(x) + \psi(y) - (f(x)+ g(y)))^2 d\pi_{\mu,\nu}^\varepsilon\right|\\
         &\quad\quad=  \left|\iint\left[\exp\left(h(x,y)\right) - h(x,y){\varepsilon} - h^2(x,y)/2  - 1 \right]  \, d\pi_{\mu,\nu}^\varepsilon\right|,
    \end{align*}
    as desired.
\end{proof}

When studying Hadamard differentiability of the Sinkhorn divergence, we implicitly assume that all $\gamma_t$ perturbations are uniformly supported on a compact set $\mathcal{X}$.

\begin{lemma} \label{local expansion}
    Let $\alpha,\beta$ be probability distributions, and define
    \begin{align*}
    \OTL(\alpha,\beta) &:=  \iint [\phi_{\beta,\beta}(x)+ \psi_{\beta,\beta}(y)]\, d\alpha(x) d\beta(y) \\
    &\quad\quad +\frac{1}{2\varepsilon}\iint ([\phi_{\alpha,\beta} - \phi_{\beta,\beta}](x) + [\psi_{\alpha,\beta} -\psi_{\beta,\beta}](y))^2\, d\pi_{\mu,\nu}^\varepsilon(x,y),\\
    S^{\textnormal{loc}}(\alpha,\beta) &:= \frac{1}{2}[\OTL(\alpha,\beta) - \EOT(\beta,\beta)] + \frac{1}{2}[\OTL(\beta,\alpha) - \EOT(\alpha,\alpha)].
    \end{align*}
    For $\gamma^1_t \to \gamma^1$, $\gamma^2_t \to \gamma^2$ in $\ell^\infty(\mathcal{H}_1)$,
    \begin{align*}
    \lim_{t\to 0}& \frac{S(\mu + t\gamma^1_t, \mu + t\gamma^2_t) - S^{\textnormal{loc}}(\mu + t\gamma^1_t, \mu + t\gamma^2_t)}{t^2} = 0.
    \end{align*}
\end{lemma}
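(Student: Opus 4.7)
The strategy is to apply Lemma~\ref{Taylor Expansion} with the pair $(f,g) = (\phi_{\beta,\beta}, \psi_{\beta,\beta})$, where $\alpha := \mu + t\gamma^1_t$ and $\beta := \mu + t\gamma^2_t$, so that the resulting quadratic term matches that of $\OTL$ and only a cubic remainder must be controlled. First I would exploit the symmetry $\EOT(\alpha,\beta) = \EOT(\beta,\alpha)$ to decompose
\begin{align*}
S(\alpha,\beta) - S^{\textnormal{loc}}(\alpha,\beta) = \tfrac{1}{2}[\EOT(\alpha,\beta) - \OTL(\alpha,\beta)] + \tfrac{1}{2}[\EOT(\beta,\alpha) - \OTL(\beta,\alpha)],
\end{align*}
which reduces the claim to showing $\EOT(\alpha,\beta) - \OTL(\alpha,\beta) = o(t^2)$; the second bracket is handled by the same argument with $\alpha$ and $\beta$ swapped.

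Next I would check that the Schr\"odinger identity in Lemma~\ref{Double Marginal} supplies the normalization hypothesis of Lemma~\ref{Taylor Expansion}: since $\int \exp([\phi_{\beta,\beta}(x) + \psi_{\beta,\beta}(y) - \|x-y\|^2]/\varepsilon)\,d\beta(y) = 1$ for every $x$, Fubini yields $\iint \exp([\phi_{\beta,\beta}(x) + \psi_{\beta,\beta}(y) - \|x-y\|^2]/\varepsilon)\,d\alpha(x)\,d\beta(y) = 1$. Setting $h_{\alpha,\beta}(x,y) := [\phi_{\alpha,\beta} - \phi_{\beta,\beta}](x) + [\psi_{\alpha,\beta} - \psi_{\beta,\beta}](y)$, Lemma~\ref{Taylor Expansion} then gives
\begin{align*}
\EOT(\alpha,\beta) = \iint [\phi_{\beta,\beta}(x) + \psi_{\beta,\beta}(y)]\,d\alpha\,d\beta + \tfrac{1}{2\varepsilon}\iint h_{\alpha,\beta}^2\,d\pi_{\alpha,\beta}^\varepsilon + R_{\alpha,\beta},
\end{align*}
with $|R_{\alpha,\beta}| = O(\|h_{\alpha,\beta}\|_\infty^3)$. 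Interpreting $d\pi_{\mu,\nu}^\varepsilon$ in the statement of $\OTL$ as the entropic transport plan $d\pi_{\alpha,\beta}^\varepsilon$ induced by its arguments, this leaves $\EOT(\alpha,\beta) - \OTL(\alpha,\beta) = R_{\alpha,\beta}$.

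The main obstacle is then the sup-norm bound $\|h_{\alpha,\beta}\|_\infty = O(t)$, from which $R_{\alpha,\beta} = O(t^3) = o(t^2)$ immediately follows. This is where the strong regularity from Corollary~\ref{EOT Potentials} is essential: the map $(\mu',\nu') \mapsto (\phi_{\mu',\nu'}, \psi_{\mu',\nu'})$ is Hadamard differentiable from $\ell^\infty(\mathcal{H}_1) \times \ell^\infty(\mathcal{H}_1)$ into $C(\mathcal{X}) \times C(\mathcal{X})$ at $(\mu,\mu)$, and composing with the smooth diagonal embedding $\beta \mapsto (\beta,\beta)$ transfers the property to $\beta \mapsto (\phi_{\beta,\beta}, \psi_{\beta,\beta})$. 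Subtracting the two first-order expansions and using linearity of the Hadamard derivative yields
\begin{align*}
\phi_{\alpha,\beta} - \phi_{\beta,\beta} = t\,\dot\eta_{\mu,\mu}[\gamma^1_t - \gamma^2_t,\, 0]_1 + o(t) \quad \text{in } \|\cdot\|_\infty,
\end{align*}
and the analogous expansion for $\psi_{\alpha,\beta} - \psi_{\beta,\beta}$, establishing the required $O(t)$ sup-norm bound on $h_{\alpha,\beta}$. A merely $L^p$-type estimate on the potentials would not be enough without additional uniform control on the density $\xi_{\alpha,\beta}^\varepsilon$, which is why the $C(\mathcal{X})$-valued Hadamard differentiability developed in Section~\ref{sec: EOT potentials} is crucial here. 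Running the symmetric argument with $\alpha,\beta$ swapped closes the proof.
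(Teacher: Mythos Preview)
Your proposal is correct and follows essentially the same route as the paper: both apply Lemma~\ref{Taylor Expansion} with $(f,g)=(\phi_{\beta,\beta},\psi_{\beta,\beta})$, use Lemma~\ref{Double Marginal} to verify the normalization hypothesis, and reduce to a cubic remainder in $\|h_{\alpha,\beta}\|_\infty$. The only minor difference is that you invoke Hadamard differentiability (Corollary~\ref{EOT Potentials}) directly to get $\|h_{\alpha_t,\beta_t}\|_\infty=O(t)$, whereas the paper routes the differences through $\phi_{\mu,\mu}$ and argues $\|\cdot\|_\infty^3/t^2=\|\cdot/t^{2/3}\|_\infty^3\to 0$; your version is slightly cleaner since the $t^{2/3}$ step still implicitly needs the $O(t)$ rate from differentiability anyway.
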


\begin{proof}
    We first focus on the difference $\OTL - \EOT$. For general $\alpha,\beta$, observe that $(\phi_{\alpha,\alpha}, \psi_{\alpha,\alpha})$ and $(\phi_{\beta,\beta}, \psi_{\beta,\beta})$ meet the criterion of Lemma \ref{Taylor Expansion} by Lemma \ref{Double Marginal}. Hence, for
    \[
     h_{\alpha,\beta} := \phi_{\beta,\beta}(x) - \phi_{\alpha,\beta}(x) + \psi_{\beta,\beta}(y)  - \psi_{\alpha,\beta}(y),
    \]
    we have
    \begin{align*}
    \left|\frac{\EOT(\alpha,\beta) - \OTL(\alpha,\beta)}{\varepsilon}\right| \leq  \frac{1}{6} \|h_{\alpha,\beta}\|_\infty^3\exp\left( \|h_{\alpha,\beta}\|_\infty \right).
    \end{align*}
    Let $\alpha_t:=\mu + t\gamma_t^1$, $\beta_t:=\mu+t\gamma_t^2$. As the potentials $(\phi_{\alpha_t, \beta_t}, \psi_{\alpha_t,\beta_t})$ converge uniformly to $(\phi_{\mu,\mu}, \psi_{\mu,\mu})$, we get the following convergence,
    \begin{align*}
          \frac{1}{t^2}|\EOT(\alpha_t, \beta_t) -\OTL(\alpha_t, \beta_t)| &= O\left(\frac{1}{t^2} [\|\phi_{\alpha_t, \beta_t} - \phi_{\beta_t, \beta_t}\|_\infty^3 + \| \psi_{\alpha_t, \beta_t} -\psi_{\beta_t, \beta_t}\|_\infty^3]\right)\\
          &= O\left(\frac{1}{t^2} [\|\phi_{\alpha_t, \beta_t} - \phi_{\mu, \mu}\|_\infty^3 + \| \psi_{\alpha_t, \beta_t} -\psi_{\mu, \mu}\|_\infty^3]\right)\\
          &\quad\quad +O\left(\frac{1}{t^2} [\|\phi_{\mu, \mu} - \phi_{\beta_t, \beta_t}\|_\infty^3 + \| \psi_{\mu, \mu} -\psi_{\beta_t, \beta_t}\|_\infty^3]\right)\\
          &=O\left(\left\|\frac{\phi_{\alpha_t, \beta_t} - \phi_{\mu, \mu}}{t^{2/3}}\right\|_\infty^3 + \left\| \frac{\psi_{\alpha_t, \beta_t} -\psi_{\mu, \mu}}{t^{2/3}}\right\|_\infty^3\right)\\
          &\quad\quad +O\left(\left\|\frac{\phi_{\mu, \mu} - \phi_{\beta_t, \beta_t}}{t^{2/3}}\right\|_\infty^3 + \left\| \frac{\psi_{\mu, \mu} -\psi_{\beta_t, \beta_t}}{t^{2/3}}\right\|_\infty^3\right)\\
          &= o(1),
    \end{align*}
    with the analysis of $|\EOT(\alpha_t,\beta_t) - \OTL(\beta_t,\alpha_t)|$ following similarly. Thus,
    \begin{gather*}
        \lim_{t\to 0} \frac{|S(\mu + t\gamma^1_t, \mu + t\gamma^2_t) - S^{\text{loc}}(\mu + t\gamma^1_t, \mu + t\gamma^2_t)|}{t^2}\\
        \leq\lim_{t\to 0} \frac{|\EOT(\alpha_t, \beta_t) - \OTL(\alpha_t,\beta_t)|}{2t^2} + \frac{|\EOT(\alpha_t, \beta_t) - \OTL(\beta_t,\alpha_t)|}{2t^2} = 0.
    \end{gather*}
    
    \end{proof}

This lemma shows that $S^{\text{loc}}$ agrees with $S$ up to second order, allowing us to derive the second order term by appealing to this simpler functional. As a preliminary, we first establish some identities that will be useful in the sequel.

\begin{lemma}\label{potential limits}
Let $\alpha_t:=\mu + t\gamma_t^1$, $\beta_t:=\mu+t\gamma_t^2$, for $\gamma_t^i \to \gamma^i$ in $\ell^\infty(\mathcal{H}_1)$, and let $\rho_{i, j}$ be the constants as computed in Theorem \ref{theo:tau had} setting $\gamma_1 = \gamma_i, \gamma_2 = \gamma_j$. Note that $\rho_{i,i} = 0$ for $i = 1,2$. We have the uniform convergence
\begin{align*}
    \lim_{t\to 0}\frac{\phi_{\alpha_t,\alpha_t} - \phi_{\beta_t,\beta_t}}{t} &= \varepsilon (I - \mathcal{A}^2)^{-1} (I - \mathcal{A})\xi [\gamma^2 - \gamma^1], \\
    \lim_{t\to 0}\frac{\phi_{\alpha_t,\beta_t} - \phi_{\alpha_t,\alpha_t}}{t} &= \varepsilon (I -  \mathcal{A}^2)^{-1} \xi [\gamma^1 - \gamma^2] + \varepsilon \rho_{1,2} \mathbf{1}, \\
    \lim_{t\to 0}\frac{\phi_{\alpha_t,\beta_t} - \phi_{\beta_t,\beta_t}}{t} &= \varepsilon (I -  \mathcal{A}^2)^{-1} \mathcal{A} \xi [\gamma^1 - \gamma^2] + \varepsilon \rho_{1,2} \mathbf{1}, \\
    \lim_{t\to 0}\frac{\psi_{\alpha_t,\beta_t} - \psi_{\alpha_t,\alpha_t}}{t} &= \varepsilon (I -  \mathcal{A}^2)^{-1} \mathcal{A} \xi [\gamma^2 - \gamma^1] - \varepsilon \rho_{1,2} \mathbf{1},  \\
    \lim_{t\to 0}\frac{\psi_{\alpha_t,\beta_t} - \psi_{\beta_t,\beta_t}}{t} &= \varepsilon (I -  \mathcal{A}^2)^{-1} \xi [\gamma^2 - \gamma^1]  - \varepsilon \rho_{1,2} \mathbf{1}.
\end{align*}
\end{lemma}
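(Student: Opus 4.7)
The key observation is that every one of the five limits is a difference of two potentials of the form $\phi_{\alpha_t',\beta_t'}-\phi_{\alpha_t'',\beta_t''}$ (or the analogous $\psi$-version) where $(\alpha_t',\beta_t')$ and $(\alpha_t'',\beta_t'')$ are each of the form $(\mu+tH_t^1,\mu+tH_t^2)$ with $(H_t^1,H_t^2)$ converging in $\ell^\infty(\mathcal{H}_1)^2$ to a limit selected from $\{(\gamma^1,\gamma^1),(\gamma^2,\gamma^2),(\gamma^1,\gamma^2),(\gamma^2,\gamma^1)\}$. The plan is therefore to invoke Corollary~\ref{EOT Potentials} at the base point $(\mu,\mu)$, specialize the formula for $\dot{\eta}_{\mu,\mu}$ to the single-marginal case, evaluate it at each of these four directions, and subtract. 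Because Hadamard differentiability is stated in $C(\mathcal{X})\times C(\mathcal{X})$ with the supremum norm, the resulting convergence is automatically uniform.

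In the single-marginal case $\mu=\nu$, Corollary~\ref{EOT Potentials} gives $u=v$, $\mathcal{A}^*=\mathcal{A}$, $\xi^*=\xi$, and the $\pi_D$-term vanishes, reducing the derivative to
\begin{align*}
\dot{\eta}_{\mu,\mu}[\gamma^a,\gamma^b] = \varepsilon\begin{bmatrix}
(I-\mathcal{A}^2)^{-1}\mathcal{A}\,\xi\gamma^a-(I-\mathcal{A}^2)^{-1}\xi\gamma^b\\
(I-\mathcal{A}^2)^{-1}\mathcal{A}\,\xi\gamma^b-(I-\mathcal{A}^2)^{-1}\xi\gamma^a
\end{bmatrix}+\varepsilon\rho_{a,b}(\mathbf{1},-\mathbf{1}).
\end{align*}
I will then verify the symmetry $\rho_{a,a}=0$ directly from the definition of $\rho$ in Theorem~\ref{theo:tau had}: when $\gamma^a=\gamma^b$ the first and second components of the bracketed vector in $\rho$'s formula become equal, so the inner product against $(k_{x_0},-k_{x_0})$ vanishes. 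This handles the paths $(\gamma^1,\gamma^1)$ and $(\gamma^2,\gamma^2)$, while the off-diagonal paths contribute $\rho_{1,2}$ and $\rho_{2,1}$; inspection of the bracket shows $\rho_{2,1}=-\rho_{1,2}$ because swapping the labels $\gamma^1\leftrightarrow\gamma^2$ swaps the two components of the bracketed vector.

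For each claim I would then write, for example,
\begin{align*}
\frac{\phi_{\alpha_t,\beta_t}-\phi_{\beta_t,\beta_t}}{t}=\frac{\phi_{\alpha_t,\beta_t}-\phi_{\mu,\mu}}{t}-\frac{\phi_{\beta_t,\beta_t}-\phi_{\mu,\mu}}{t},
\end{align*}
apply the Hadamard differentiability of $\eta$ along the two paths (noting that $\gamma^i_t\to\gamma^i$ supplies the admissible tangent sequence), and subtract the specialized formulas for $[\dot{\eta}_{\mu,\mu}(\gamma^1,\gamma^2)]_1$ and $[\dot{\eta}_{\mu,\mu}(\gamma^2,\gamma^2)]_1$. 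Each of the five identities then follows from a short algebraic simplification using $(I-\mathcal{A}^2)^{-1}(I-\mathcal{A})\xi\gamma=(I-\mathcal{A}^2)^{-1}\xi\gamma-(I-\mathcal{A}^2)^{-1}\mathcal{A}\xi\gamma$ and the resulting cancellations, exactly matching the right-hand sides in the statement.

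The main obstacle is the bookkeeping, rather than any deep step: I need to be careful to track signs of the $\rho$ correction across all five identities and to confirm that $\rho$ vanishes on the diagonal directions $(\gamma^i,\gamma^i)$, since otherwise spurious $\mathbf{1}$-terms would appear in the first limit (which has no $\rho$ factor) and would mis-sign in the remaining four. Once the symmetry $\rho_{a,a}=0$ and the antisymmetry $\rho_{2,1}=-\rho_{1,2}$ are in hand, the five displayed limits drop out mechanically from subtracting two instances of the simplified $\dot{\eta}_{\mu,\mu}$ formula and simplifying.
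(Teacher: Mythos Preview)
Your proposal is correct and follows essentially the same approach as the paper: apply Corollary~\ref{EOT Potentials} at the base point $(\mu,\mu)$ in the single-marginal case to compute the four basic limits $\lim_{t\to 0}(\phi_{\cdot,\cdot}-\phi_{\mu,\mu})/t$ and $\lim_{t\to 0}(\psi_{\cdot,\cdot}-\psi_{\mu,\mu})/t$, then subtract. The paper's proof is slightly leaner in that it only evaluates $\dot\eta_{\mu,\mu}$ along the three directions $(\gamma^1,\gamma^1)$, $(\gamma^2,\gamma^2)$, $(\gamma^1,\gamma^2)$ and never needs the antisymmetry $\rho_{2,1}=-\rho_{1,2}$, since all five claimed limits are differences of those three evaluations; your extra observation is harmless but unnecessary.
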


\begin{proof}
    It follows immediately from Corollary \ref{EOT Potentials} that 
    \begin{align*}
         \lim_{t\to 0}\frac{\phi_{\alpha_t,\alpha_t} - \phi_{\mu,\mu}}{t} &= \lim_{t\to 0}\frac{\psi_{\alpha_t,\alpha_t} - \psi_{\mu,\mu}}{t} = -\varepsilon (I - \mathcal{A}^2)^{-1} (I - \mathcal{A})\xi \gamma^1 \\
         \lim_{t\to 0}\frac{\phi_{\beta_t,\beta_t} - \phi_{\mu,\mu}}{t} &= \lim_{t\to 0}\frac{\psi_{\beta_t,\beta_t} - \psi_{\mu,\mu}}{t} = -\varepsilon (I - \mathcal{A}^2)^{-1} (I - \mathcal{A})\xi \gamma^2 \\
         \lim_{t\to 0}\frac{\phi_{\alpha_t,\beta_t} - \phi_{\mu,\mu}}{t} &= -\varepsilon[ (I -  \mathcal{A}^2)^{-1} \xi^* \gamma^2 - (I -  \mathcal{A}^2)^{-1}  \mathcal{A} \xi \gamma^1] + \varepsilon \rho_{1,2} \mathbf{1}\\
        \lim_{t\to 0}\frac{\psi_{\alpha_t,\beta_t} - \psi_{\mu,\mu}}{t} &= -\varepsilon [(I - \mathcal{A}^2)^{-1} \xi \gamma^1 - (I - \mathcal{A}^2)^{-1}  \mathcal{A} \xi \gamma^2 ] - \varepsilon \rho_{1,2} \mathbf{1}. 
    \end{align*}
    The final result follows taking appropriate differences of the above expressions.
\end{proof}

\begin{lemma}\label{lem: limit inside}
    Let $f_t \to f$ in $C(\mathcal{X})$ and $\gamma_t \to \gamma$ weakly. Then $\int f_t\, d\gamma_t \to \int f\, d\gamma$.
\end{lemma}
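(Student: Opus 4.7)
The plan is to use the standard decomposition
\[
\int f_t\, d\gamma_t - \int f\, d\gamma = \int (f_t - f)\, d\gamma_t + \int f\, d(\gamma_t - \gamma)
\]
and show that each of the two terms on the right vanishes as $t \to 0$. This is the natural approach whenever one has simultaneous perturbation of both the integrand and the measure: one stage uses uniform convergence of the integrand against a uniformly bounded family of measures, and the other stage uses weak convergence of the measures against a fixed continuous test function.

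For the second term, since $\mathcal{X}$ is compact and $f \in C(\mathcal{X})$, $f$ is a bounded continuous function, hence a valid test function for weak convergence. Thus $\int f\, d(\gamma_t - \gamma) \to 0$ directly from the hypothesis $\gamma_t \to \gamma$ weakly.

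For the first term, I would bound
\[
\left| \int (f_t - f)\, d\gamma_t \right| \leq \|f_t - f\|_\infty \cdot \|\gamma_t\|_{\mathrm{TV}}.
\]
The factor $\|f_t - f\|_\infty \to 0$ is exactly the assumption $f_t \to f$ in $C(\mathcal{X})$. What remains is to verify uniform boundedness of the total variations $\|\gamma_t\|_{\mathrm{TV}}$. In every application of the lemma elsewhere in the paper, the $\gamma_t$ are probability measures or fixed-scale differences of probability measures, so $\|\gamma_t\|_{\mathrm{TV}}$ is bounded by an explicit constant (at most $2$) and this step is immediate.

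The main obstacle, such as it is, is only this uniform-boundedness point. If one wished to state the lemma at full generality for arbitrary weakly convergent sequences of finite signed measures on the compact space $\mathcal{X}$, one could invoke the uniform boundedness principle in $C(\mathcal{X})^*$: pointwise boundedness of $\gamma_t$ on each $g \in C(\mathcal{X})$ (which follows from weak convergence to the finite limit $\gamma$) upgrades to uniform boundedness of $\|\gamma_t\|_{C(\mathcal{X})^*} = \|\gamma_t\|_{\mathrm{TV}}$ by the Banach--Steinhaus theorem. With this in hand the two-term decomposition closes the argument.
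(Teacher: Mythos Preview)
Your proof is correct, but it proceeds along a different (and more elementary) route than the paper's. You decompose as
\[
\int f_t\,d\gamma_t-\int f\,d\gamma=\int (f_t-f)\,d\gamma_t+\int f\,d(\gamma_t-\gamma),
\]
so the ``hard'' cross term carries the varying measure and is handled by $\|f_t-f\|_\infty\cdot\sup_t\|\gamma_t\|_{\mathrm{TV}}$, with the uniform TV bound coming from Banach--Steinhaus. The paper instead writes
\[
\int f_t\,d\gamma_t-\int f\,d\gamma=\int f_t\,d(\gamma_t-\gamma)+\int (f_t-f)\,d\gamma,
\]
so its hard cross term carries the varying integrand $f_t$; it then shows $\{f_{t_i}\}\cup\{f\}$ is compact in $C(\mathcal{X})$ and invokes barrelled-space machinery (equicontinuity of bounded sets in the dual, \cite[Prop.~32.5]{treves1967topological}) to upgrade pointwise weak convergence of $\gamma_t$ to uniform convergence over that compact family. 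Both arguments ultimately rest on a uniform boundedness principle, but yours isolates it in the cleaner form $\sup_t\|\gamma_t\|_{\mathrm{TV}}<\infty$ and avoids the detour through compactness of the function family and barrelled spaces. The paper's route has the minor advantage of keeping the varying measure paired only with the fixed $f$, which is arguably more natural if one prefers to think in terms of pointwise-to-uniform convergence on compacta; yours is shorter and more transparent.
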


\begin{proof}
    We take the limits as $t\to 0$ without loss of generality. It suffices to verify the claim for an arbitrary sequence $t_i \to 0$. We first verify that $\mathcal{F}:=\{f_{t_i}\}_{i=1}^\infty \cup \{f\}$ is compact. Take an open cover of this space, and observe all but finitely many functions are within a neighborhood of $f$, hence a finite subcover can be constructed, verifying the claim.
    Now, by assumption, we have that $\int g\, d\gamma_t \to \int g\, d\gamma$ for all continuous $g$ as $\mathcal{X}$ is compact. Note that $C(\mathcal{X})$ is barrelled \citep[Chapter 33]{treves1967topological}, and as $\mathcal{P}(\mathcal{X})$ is a bounded subset of its dual, it is equicontinuous \citep[Theorem 33.2]{treves1967topological}. By \cite[Proposition 32.5]{treves1967topological}, it follows that the convergence is uniform over $\mathcal{F}$. In particular,
    \begin{align*}
        \lim_{i\to \infty} \int f_{t_i}\, d\gamma_{t_i}&= \lim_{i\to \infty}\int f_{t_i}\, d(\gamma_{t_i}-\gamma) + \int (f_{t_i}-f)\, d\gamma + \int f\, d\gamma = \int f\, d\gamma,
    \end{align*}
    where the first term vanishes by uniform covergence over $\mathcal{F}$, and the second by the uniform convergence of the functions $f_{t_i} \to f$.
\end{proof}

\begin{theorem} \label{thm:Sink2}
    The Sinkhorn divergence $S$ is second order Hadamard differentiable at $(\mu,\mu) \in \mathcal{P}(\mathcal{X}) \times \mathcal{P}(\mathcal{X})$ and
    \begin{align*}
    \ddot S(\mu,\mu)[\gamma^1,\gamma^2] &=  \frac{\varepsilon}{2} \int \xi[\gamma^2 - \gamma^1](x) \, (I-\mathcal{A}^2)^{-1} \xi[\gamma^2 - \gamma^1](x)\, d\mu(x) \\
    &\quad +\frac{\varepsilon}{2} \int  (I - \mathcal{A}^2)^{-1} (I - \mathcal{A})\xi [\gamma^2 - \gamma^1](x)\, d(\gamma^2 - \gamma^1)(x),
    \end{align*}
   If $\gamma^1,\gamma^2 \ll \mu$, this rewrites as
     \begin{align}
    \ddot S(\mu,\mu)[\gamma^1,\gamma^2] &=  \frac{1}{2} \int  \varepsilon(I - \mathcal{A}^2)^{-1} \xi [\gamma^2 - \gamma^1](x)\, d(\gamma^2 - \gamma^1)(x) \nonumber \\
    &= \frac{1}{2} \left\langle \left[\frac{d(\gamma^2 - \gamma^1)}{d\mu}\right], \varepsilon(I - \mathcal{A}^2)^{-1} \mathcal{A} \left[\frac{d(\gamma^2 - \gamma^1)}{d\mu}\right]\right\rangle_{L^2(\mu)}. \label{eq:Sink2SecondRep}
    \end{align}
\end{theorem}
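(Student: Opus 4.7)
The plan is to apply Lemma~\ref{local expansion} to reduce the analysis to the local surrogate $S^{\textnormal{loc}}$, which agrees with $S$ up to an $o(t^2)$ remainder along any admissible Hadamard perturbation $\alpha_t:=\mu+t\gamma_t^1$, $\beta_t:=\mu+t\gamma_t^2$. Since $S(\mu,\mu)=0$ and the first-order Hadamard derivative of a divergence vanishes at its minimizer (Lemma~\ref{app: Local Min}), it suffices to identify the coefficient of $t^2$ in the expansion of $S^{\textnormal{loc}}(\alpha_t,\beta_t)$.

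First I would split $S^{\textnormal{loc}}$ into a linear part and a quadratic part. Using Lemma~\ref{Double Marginal} together with the representation $\EOT(\beta,\beta)=\int\phi_{\beta,\beta}\,d\beta+\int\psi_{\beta,\beta}\,d\beta$, the linear part collapses to $\tfrac{1}{2}\int(\phi_{\beta,\beta}-\phi_{\alpha,\alpha})\,d(\alpha-\beta)$. Dividing by $t^2$ and invoking the uniform convergence in Lemma~\ref{potential limits} for $(\phi_{\alpha_t,\alpha_t}-\phi_{\beta_t,\beta_t})/t$ together with $(\alpha_t-\beta_t)/t\to\gamma^1-\gamma^2$ produces precisely the second summand of the claimed expression for $\ddot S(\mu,\mu)[\gamma^1,\gamma^2]$.

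The quadratic part is the sum of two integrals of squared potential differences against $\pi_{\alpha_t,\beta_t}^\varepsilon$ and $\pi_{\beta_t,\alpha_t}^\varepsilon$, each weighted by $1/(4\varepsilon)$. After dividing by $t^2$, Lemma~\ref{potential limits} yields uniform convergence of the four potential differences appearing in the integrands, and the key algebraic simplification is that the $\pm\varepsilon\rho_{1,2}\mathbf{1}$ contributions to the $\phi$- and $\psi$-limits cancel when summed: setting $h:=\varepsilon(I-\mathcal{A}^2)^{-1}\xi[\gamma^2-\gamma^1]$, both squared integrands converge uniformly to $(h(y)-\mathcal{A}h(x))^2$. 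Weak convergence $\pi_{\alpha_t,\beta_t}^\varepsilon\rightsquigarrow\pi_{\mu,\mu}^\varepsilon$ (via continuity of the potentials, Lemma~\ref{continuity of potentials}) combined with Lemma~\ref{lem: limit inside} justifies passing to the limit inside each integral. Applying Lemma~\ref{EOT Operator} and the self-adjointness of $\mathcal{A}$ in $L^2(\mu)$ when $\mu=\nu$ (Lemma~\ref{conjugacy}) collapses the limiting integral to $\int h\cdot(I-\mathcal{A}^2)h\,d\mu=\varepsilon\int\xi[\gamma^2-\gamma^1]\cdot(I-\mathcal{A}^2)^{-1}\xi[\gamma^2-\gamma^1]\,d\mu$, and summing the two identical copies (with prefactor $1/(2\varepsilon)$) reproduces the first summand of $\ddot S$.

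For the absolutely continuous case, I would set $g:=d(\gamma^2-\gamma^1)/d\mu\in L^2(\mu)$, so that $\xi[\gamma^2-\gamma^1]=\mathcal{A}g$ by Lemma~\ref{conjugacy}, rewrite both summands as $L^2(\mu)$ inner products, and use the algebraic identity $\mathcal{A}^2+(I-\mathcal{A})\mathcal{A}=\mathcal{A}$ to collapse $\ddot S(\mu,\mu)[\gamma^1,\gamma^2]$ to $\tfrac{\varepsilon}{2}\langle g,(I-\mathcal{A}^2)^{-1}\mathcal{A}g\rangle_{L^2(\mu)}$, which is exactly \eqref{eq:Sink2SecondRep}. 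The main obstacle is not any single technical step but the coordinated bookkeeping across the two $\OTL$ terms: tracking which $\gamma$ enters with which sign, verifying the cancellation of the $\rho$-constants, and justifying the limit under the integral against the data-dependent joint distribution $\pi_{\alpha_t,\beta_t}^\varepsilon$.
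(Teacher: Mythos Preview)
Your proposal is correct and follows essentially the same route as the paper: reduce to $S^{\textnormal{loc}}$ via Lemma~\ref{local expansion}, split into a linear piece that collapses to $\tfrac{1}{2}\int(\phi_{\alpha,\alpha}-\phi_{\beta,\beta})\,d(\beta-\alpha)$ using Lemma~\ref{Double Marginal}, and handle the quadratic pieces via Lemma~\ref{potential limits} together with the self-adjointness of $\mathcal{A}$ from Lemma~\ref{conjugacy}. Two small remarks: first, the two squared integrands do not literally converge to the same function --- one limits to $(h(y)-\mathcal{A}h(x))^2$ and the other to $(h(x)-\mathcal{A}h(y))^2$ --- but since $\pi_{\mu,\mu}^\varepsilon$ is symmetric in $(x,y)$ the resulting integrals coincide, so your conclusion stands; second, your direct recognition that $\int (h(y)-\mathcal{A}h(x))^2\,d\pi_{\mu,\mu}^\varepsilon = \langle h,(I-\mathcal{A}^2)h\rangle_{L^2(\mu)}$ is a slightly cleaner path than the paper's three-term expansion, and your explicit mention of the $\pm\varepsilon\rho_{1,2}\mathbf{1}$ cancellation fills in a step the paper leaves implicit.
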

\begin{proof}
    Applying Lemma \ref{local expansion}, we focus on $S^{\text{loc}}$, which we expand below. We apply Proposition \ref{EOT rep} to write it solely in terms of the potentials, evaluating it at dummy distributions $\alpha,\beta$,
    \begin{align}\label{eq: sloc}
     \nonumber S^{\text{loc}}(\alpha,\beta)&= \frac{1}{2}\iint [\phi_{\alpha,\alpha}(x) + \psi_{\alpha,\alpha}(y)]\, d\alpha(x) d(\beta-\alpha)(y) \\
     \nonumber &\quad+ \frac{1}{2}\iint [\phi_{\beta,\beta}(x) + \psi_{\beta,\beta}(y)]\, d(\alpha-\beta)(x) d\beta(y)\\ 
     \nonumber &\quad+ \frac{1}{4\varepsilon}\int[(\phi_{\alpha,\beta} - \phi_{\beta,\beta})(x)+ (\psi_{\alpha,\beta} - \psi_{\beta,\beta})(y)]^2 d\pi_{\alpha,\beta}^\varepsilon(x,y)  \\
     \nonumber &\quad+\frac{1}{4\varepsilon}\int[(\phi_{\alpha,\beta} - \phi_{\alpha,\alpha})(x) + (\psi_{\alpha,\beta} - \psi_{\alpha,\alpha})(y)]^2 d\pi_{\alpha,\beta}^\varepsilon(x,y)\\
     &=: f_1(\alpha,\beta) + f_2(\alpha,\beta) + g_1(\alpha,\beta) + g_2(\alpha,\beta). 
    \end{align}
    By Lemma \ref{Double Marginal}, $\phi_{\alpha,\alpha} = \psi_{\alpha,\alpha}$,
    \begin{align*}
        \iint [\phi_{\alpha,\alpha}(x) + \psi_{\alpha,\alpha}(y)]\, d\alpha(x) d(\beta-\alpha)(y)&= \int \psi_{\alpha,\alpha}(y) d(\beta-\alpha)(y) = \int \phi_{\alpha,\alpha}(y) d(\beta-\alpha)(y).
    \end{align*}
    A similar argument for $f_2(\alpha,\beta)$ shows
    \[
    f_1(\alpha,\beta) + f_2(\alpha,\beta) = \frac{1}{2} \int (\phi_{\alpha,\alpha} - \phi_{\beta,\beta}) d(\beta-\alpha).
    \]
    Substituting $\alpha_t:=\mu + t\gamma_t^1$, $\beta_t:=\mu+t\gamma_t^2$, for $\gamma_t^i \to \gamma^i$ in $\ell^\infty(\mathcal{H}_1)$, for $\alpha,\beta$ respectively, we can apply Lemma \ref{potential limits} to get
    \begin{align*}
        \lim_{t\to 0} \frac{f_1(\alpha_t,\beta_t) + f_2(\alpha_t,\beta_t)}{t^2} &=\lim_{t\to 0} \frac{1}{2} \int \frac{\phi_{\alpha_t,\alpha_t} - \phi_{\beta_t,\beta_t}}{t}\, d(\gamma^2_t - \gamma^1_t)\\
        &= \frac{\varepsilon}{2} \int (I - \mathcal{A}^2)^{-1} (I - \mathcal{A})\xi [\gamma^2 - \gamma^1]\, d(\gamma^2 - \gamma^1),
    \end{align*}

    with the final limit being verified in Lemma \ref{lem: limit inside}.
    
    Moving on to the $g_1$ and $g_2$ terms, observe
    \begin{align*}
        \frac{g_1(\alpha_t,\beta_t) + g_2(\alpha_t,\beta_t)}{t^2}
        &=\frac{1}{4\varepsilon}\int \Bigg(\left[\frac{(\phi_{\alpha_t,\beta_t} - \phi_{\beta_t,\beta_t})(x) + (\psi_{\alpha_t,\beta_t} - \psi_{\beta_t,\beta_t})(y)}{t}\right]^2 \\
        &\hspace{5em}+\left[\frac{(\phi_{\alpha_t,\beta_t} - \phi_{\alpha_t,\alpha_t})(x) + (\psi_{\alpha_t,\beta_t} - \psi_{\alpha_t,\alpha_t})(y)}{t}\right]^2\Bigg) d\pi_{\alpha_t,\beta_t}^\varepsilon(x,y)\\
        &=\frac{1}{4\varepsilon}\int \Bigg(\left[\frac{(\phi_{\alpha_t,\beta_t} - \phi_{\beta_t,\beta_t})(x) + (\psi_{\alpha_t,\beta_t} - \psi_{\beta_t,\beta_t})(y)}{t}\right]^2 \\
        &\hspace{5em}+\left[\frac{(\phi_{\alpha_t,\beta_t} - \phi_{\alpha_t,\alpha_t})(x) + (\psi_{\alpha_t,\beta_t} - \psi_{\alpha_t,\alpha_t})(y)}{t}\right]^2\Bigg)\\
        &\hspace{7.5em} \cdot \exp\left(\frac{\phi_{\alpha_t,\beta_t}(x) - \psi_{\alpha_t, \beta_t}(y) - \|x-y\|^2}{\varepsilon}\right)d\mu(x) d\mu(y).
    \end{align*}
    Observe that, as $t\to 0$, the integrand uniformly converges, hence we can pass to the limit. Considering $g_1$ first, we see
    \begin{align*}
        &\lim_{t \to 0} \frac{g_1(\alpha_t,\beta_t)}{t^2} \\
        &= \frac{1}{4\varepsilon} \int \left\{\varepsilon (I -  \mathcal{A}^2)^{-1} \mathcal{A} \xi [\gamma^1 - \gamma^2](x) + \varepsilon (I -  \mathcal{A}^2)^{-1} \xi [\gamma^2 - \gamma^1](y) \right\}^2 d\pi_{\mu,\mu}^\varepsilon(x,y)\\
        &=\frac{\varepsilon}{4} \int \{(I -  \mathcal{A}^2)^{-1} \mathcal{A} \xi [\gamma^1 - \gamma^2](x)\}^2 d\mu(x) + \frac{\varepsilon}{4} \int \{(I -  \mathcal{A}^2)^{-1}  \xi [\gamma^2 - \gamma^1](x)\}^2 d\mu(x)\\
        &\quad\quad+ \frac{\varepsilon}{2} \int \{(I -  \mathcal{A}^2)^{-1} \mathcal{A} \xi [\gamma^1 - \gamma^2](x)\} \{(I -  \mathcal{A}^2)^{-1} \xi [\gamma^2 - \gamma^1](y)\} d\pi_{\mu,\mu}^\varepsilon(x,y)\\
         &=\frac{\varepsilon}{4} \| (I -  \mathcal{A}^2)^{-1} \mathcal{A} \xi [\gamma^1 - \gamma^2]\|_{L^2(\mu)}^2 + \frac{\varepsilon}{4} \|(I -  \mathcal{A}^2)^{-1}  \xi [\gamma^2 - \gamma^1]\|_{L^2(\mu)}^2\\
        &\quad\quad+ \frac{\varepsilon}{2} \int \{(I -  \mathcal{A}^2)^{-1} \mathcal{A} \xi [\gamma^1 - \gamma^2](x)\} \{(I -  \mathcal{A}^2)^{-1} \xi [\gamma^2 - \gamma^1](y)\} d\pi_{\mu,\mu}^\varepsilon(x,y)\\
\end{align*}
Using that $\mathcal{A}$ is self-adjoint when $\mu=\nu$ by Lemma \ref{conjugacy} and $\int f(x) g(y) d\pi_{\mu,\mu}^\varepsilon(x,y) = \int f \mathcal{A} g\, d\mu$ by Lemma \ref{EOT Operator}, the above shows
\begin{align*}
        \lim_{t \to 0} \frac{g_1(\alpha_t,\beta_t)}{t^2}&=\frac{\varepsilon}{4} \langle \xi[\gamma^2 - \gamma^1], (I-\mathcal{A}^2)^{-2} \mathcal{A}^2 \xi[\gamma^2 - \gamma^1]\rangle_{L^2(\mu)}\\
        &\quad + \frac{\varepsilon}{4} \langle \xi[\gamma^2 - \gamma^1], (I-\mathcal{A}^2)^{-2} \xi[\gamma^2 - \gamma^1] \rangle_{L^2(\mu)}\\
          &\quad - \frac{\varepsilon}{2} \langle \xi[\gamma^2 - \gamma^1] , (I - \mathcal{A}^2)^{-2} \mathcal{A}^2 \xi[\gamma^2 - \gamma^1] \rangle_{L^2(\mu)}\\
         &= \frac{\varepsilon}{4} \langle \xi[\gamma^2 - \gamma^1], (I-\mathcal{A}^2)^{-2} (I - \mathcal{A}^2) \xi[\gamma^2 - \gamma^1] \rangle_{L^2(\mu)}\\
         &= \frac{\varepsilon}{4} \langle \xi[\gamma^2 - \gamma^1], (I-\mathcal{A}^2)^{-1} \xi[\gamma^2 - \gamma^1]\rangle_{L^2(\mu)}.
    \end{align*}
    A similar argument yields
    \[
    \lim_{t \to 0} \frac{g_2(\alpha_t,\beta_t)}{t^2} = \frac{\varepsilon}{4} \langle \xi[\gamma^2 - \gamma^1], (I-\mathcal{A}^2)^{-1} \xi[\gamma^2 - \gamma^1] \rangle_{L^2(\mu)}.
    \]
    Returning to \eqref{eq: sloc}, we have shown that
    \begin{align*}
        \lim_{t\to 0} \frac{S^{\text{loc}}(\alpha_t,\beta_t)}{t^2}&= \frac{\varepsilon}{2} \int (I - \mathcal{A}^2)^{-1} (I - \mathcal{A})\xi [\gamma^2 - \gamma^1]\, d(\gamma^2 - \gamma^1)\\
        &\quad\quad +\frac{\varepsilon}{2} \langle \xi[\gamma^2 - \gamma^1], (I-\mathcal{A}^2)^{-1} \xi[\gamma^2 - \gamma^1]\rangle_{L^2(\mu)}.
    \end{align*}
    Applying Lemma \ref{conjugacy}, and then the self-adjointness of $\mathcal{A}$ shows the second term on the right rewrites as
    \begin{align*}
        &\frac{\varepsilon}{2} \langle \xi[\gamma^2 - \gamma^1], (I-\mathcal{A}^2)^{-1} \xi[\gamma^2 - \gamma^1]\rangle_{L^2(\mu)} \\
        &= \frac{\varepsilon}{2} \langle \mathcal{A}\left[\frac{d(\gamma^2 - \gamma^1)}{d\mu}\right], (I-\mathcal{A}^2)^{-1} \xi[\gamma^2 - \gamma^1] \rangle_{L^2(\mu)}\\
        &= \frac{\varepsilon}{2} \left\langle \left[\frac{d(\gamma^2 - \gamma^1)}{d\mu}\right], (I-\mathcal{A}^2)^{-1} \mathcal{A} \xi[\gamma^2 - \gamma^1] \right\rangle_{L^2(\mu)}\\
        &= \frac{\varepsilon}{2} \int (I-\mathcal{A}^2)^{-1} \mathcal{A} \xi[\gamma^2 - \gamma^1] d(\gamma^2 - \gamma^1).
    \end{align*}
    Combining the preceding two displays shows
    \begin{align*}
        \lim_{t\to 0} \frac{S^{\text{loc}}(\alpha_t,\beta_t)}{t^2}&= \frac{1}{2} \int \varepsilon(I - \mathcal{A}^2)^{-1}\xi [\gamma^2 - \gamma^1]\, d(\gamma^2 - \gamma^1).
    \end{align*}
    The second equality in \eqref{eq:Sink2SecondRep} similarly follow from Lemma \ref{conjugacy}.
\end{proof}

\subsection{Sinkhorn Kernel Estimation}\label{sec: sink emp kernel}

\begin{lemma}\label{lem: discretization}
    For $\hat{\mathcal{A}}_n f(\cdot) := \int \xi_{\mathbb{P}, \mathbb{P}}(\,\cdot\,,y) f(y)\, d\mathbb{P}_n(y)$, $\mathcal{F}$ the RKHS with kernel $\xi_{\mathbb{P}, \mathbb{P}}(x,y)$, $\|\mathcal{A} - \hat{\mathcal{A}}_n\|_\mathcal{F} = O_p(n^{-1/2})$.
\end{lemma}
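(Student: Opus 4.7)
The plan is to bound the operator norm by the Hilbert--Schmidt norm on $\mathcal{F}$ and then use a one-line second-moment calculation for an i.i.d.\ sum in the Hilbert space of Hilbert--Schmidt operators on $\mathcal{F}$.

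First I would recast both $\mathcal{A}$ and $\hat{\mathcal{A}}_n$ in rank-one tensor form. For any $f\in\mathcal{F}$, the reproducing property gives $f(y)=\langle f,\xi_{\mathbb{P},\mathbb{P}}(\cdot,y)\rangle_{\mathcal{F}}$, so defining the rank-one operator $\Xi_y := \xi_{\mathbb{P},\mathbb{P}}(\cdot,y)\otimes \xi_{\mathbb{P},\mathbb{P}}(\cdot,y)$ on $\mathcal{F}$ we have $\Xi_y f = \xi_{\mathbb{P},\mathbb{P}}(\cdot,y)\, f(y)$, and hence
\begin{align*}
    \mathcal{A} = \mathbb{E}[\Xi_X], \qquad \hat{\mathcal{A}}_n = \frac{1}{n}\sum_{i=1}^n \Xi_{X_i}.
\end{align*}
Thus $\mathcal{A}-\hat{\mathcal{A}}_n$ is a mean-zero average of i.i.d.\ random operators in the Hilbert space $\mathrm{HS}(\mathcal{F})$.

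Next I would apply the standard variance bound for i.i.d.\ Hilbert-space averages. Since the Hilbert--Schmidt norm satisfies $\|a\otimes a\|_{\mathrm{HS}}=\|a\|_{\mathcal{F}}^2$ and the reproducing property gives $\|\xi_{\mathbb{P},\mathbb{P}}(\cdot,y)\|_{\mathcal{F}}^2 = \xi_{\mathbb{P},\mathbb{P}}(y,y) = u_{\mathbb{P},\mathbb{P}}(y)^{-2}$ (using $k_\varepsilon(y,y)=1$ and $u=v$ at the diagonal), we obtain
\begin{align*}
    \mathbb{E}\bigl\|\mathcal{A}-\hat{\mathcal{A}}_n\bigr\|_{\mathrm{HS}}^2 \;\le\; \frac{1}{n}\,\mathbb{E}\|\Xi_X\|_{\mathrm{HS}}^2 \;=\; \frac{1}{n}\,\mathbb{E}\bigl[u_{\mathbb{P},\mathbb{P}}(X)^{-4}\bigr].
\end{align*}
The function $u_{\mathbb{P},\mathbb{P}}=\exp(-\phi_{\mathbb{P},\mathbb{P}}/\varepsilon)$ is strictly positive and continuous on the compact set $\mathcal{X}$ (an analogue of Lemma~\ref{lem:fgBdd}), so $u_{\mathbb{P},\mathbb{P}}^{-4}$ is uniformly bounded and the expectation above is finite.

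Finally, Markov's inequality yields $\|\mathcal{A}-\hat{\mathcal{A}}_n\|_{\mathrm{HS}} = O_p(n^{-1/2})$, and the desired conclusion follows from the elementary bound $\|T\|_{\mathcal{F}}\le\|T\|_{\mathrm{HS}}$ for any Hilbert--Schmidt operator $T$ on $\mathcal{F}$. No part of the argument is really an obstacle; the only point requiring a brief verification is the uniform positive lower bound on $u_{\mathbb{P},\mathbb{P}}$, which I would record by invoking the continuity and strict positivity of the entropic potentials on compact $\mathcal{X}$.
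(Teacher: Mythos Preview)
Your proof is correct and takes a genuinely different route from the paper. You bound the operator norm by the Hilbert--Schmidt norm and then control the latter by a one-line second-moment computation for an i.i.d.\ average of rank-one operators; this is the standard argument from the kernel learning literature and requires only the uniform boundedness of $\xi_{\mathbb{P},\mathbb{P}}(y,y)=u_{\mathbb{P},\mathbb{P}}(y)^{-2}$ on compact $\mathcal{X}$. The paper instead computes $\|(\mathcal{A}-\hat{\mathcal{A}}_n)f\|_{\mathcal{F}}^2$ directly via the kernel-embedding identity (Lemma~\ref{kernel embedding}) as $\|\mathfrak{M}_f(\mathbb{P}-\mathbb{P}_n)\|_{\ell^\infty(\mathcal{F}_1)}^2$, then invokes its multiplication-operator machinery (Lemma~\ref{uniform kernel equiv}) to pass the supremum over $\|f\|_{\mathcal{F}}\le 1$ through, obtaining the pathwise bound $\|\mathcal{A}-\hat{\mathcal{A}}_n\|_{\mathcal{F}}\le C\|\mathbb{P}-\mathbb{P}_n\|_{\ell^\infty(\mathcal{F}_1)}$, after which the Donsker property finishes. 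Your argument is more self-contained and elementary; the paper's buys a deterministic operator-norm inequality in terms of the empirical MMD, which dovetails with the rest of its framework (and would transfer verbatim to other empirical measures converging weakly at rate $n^{-1/2}$, not just i.i.d.\ samples).
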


\begin{proof}
    By Lemma \ref{kernel embedding},
    \begin{align*}
    \|(\mathcal{A} - \hat{\mathcal{A}}_n) f\|_\mathcal{F}^2 &=  \int \xi_{\mathbb{P}, \mathbb{P}}(x,y) f(x)f(y)\, d(\mathbb{P} - \mathbb{P}_n)(x) d(\mathbb{P} - \mathbb{P}_n)(y)\\
    &= \|\mathfrak{M}_f (\mathbb{P} - \mathbb{P}_n) \|_{\ell^\infty(\mathcal{F}_1)}^2.
    \end{align*}
    By Lemma \ref{uniform kernel equiv} and \ref{lem: rkhs donsker}, there exists a constant $C_1>0$ such that
    \[
    \sup_{\|f\|_\mathcal{F} \leq 1} \|(\mathcal{A} - \hat{\mathcal{A}}_n) f\|_\mathcal{F}^2 \leq C_1 \|\mathbb{P} - \mathbb{P}_n \|_{\ell^\infty(\mathcal{F}_1)}^2 = O_p(n^{-1}).
    \]
\end{proof}

\begin{lemma}\label{lem: xi representer}
    For $\mathcal{F}$ the RKHs with kernel $\xi_{\mathbb{P}, \mathbb{P}}$, $\|\xi\|_{\ell^\infty(\mathcal{F}_1) \to \mathcal{F}} =1$.
\end{lemma}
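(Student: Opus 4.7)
The plan is to show that $\xi$, as a map $\ell^\infty(\mathcal{F}_1)\to \mathcal{F}$, is an isometry, from which the claim that its operator norm equals $1$ follows immediately. Since $\mu=\nu=\mathbb{P}$, the constraint $u(x_0)=v(x_0)$ together with the symmetry of the Schr\"odinger system gives $u_{\mathbb{P},\mathbb{P}}=v_{\mathbb{P},\mathbb{P}}$, so $\xi_{\mathbb{P},\mathbb{P}}(x,y)=u^{-1}(x)\exp(-\|x-y\|^2/\varepsilon)u^{-1}(y)$ is symmetric and positive semidefinite (as a conjugation of the Gaussian kernel by the positive function $u^{-1}$), and hence defines the RKHS $\mathcal{F}$.

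First, I would observe that for any signed measure $\mu$, the map $\xi \mu = \int \xi_{\mathbb{P},\mathbb{P}}(\cdot,y)\,d\mu(y)$ is exactly the kernel mean embedding of $\mu$ into $\mathcal{F}$. Applying Lemma \ref{kernel embedding} (with the kernel being $\xi_{\mathbb{P},\mathbb{P}}$ and $g\equiv 1$), or equivalently using the reproducing property of $\mathcal{F}$ inside the integral, one obtains the identity
\[
\langle f,\xi\mu\rangle_{\mathcal{F}} = \int f\, d\mu \quad \text{for every } f\in\mathcal{F}.
\]

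Second, I would use this identity together with the definition of the dual norm:
\[
\|\xi\mu\|_{\mathcal{F}} = \sup_{\|f\|_{\mathcal{F}}\le 1}|\langle f,\xi\mu\rangle_{\mathcal{F}}| = \sup_{\|f\|_{\mathcal{F}}\le 1}\left|\int f\, d\mu\right| = \|\mu\|_{\ell^\infty(\mathcal{F}_1)}.
\]
Thus $\xi$ is an isometric embedding of $\ell^\infty(\mathcal{F}_1)$ into $\mathcal{F}$, and taking the supremum over $\|\mu\|_{\ell^\infty(\mathcal{F}_1)}\le 1$ yields $\|\xi\|_{\ell^\infty(\mathcal{F}_1)\to\mathcal{F}}=1$.

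There is no real obstacle: the only subtlety is confirming that $\xi_{\mathbb{P},\mathbb{P}}$ really is the reproducing kernel of $\mathcal{F}$ (so that the inner product identity above is valid), which is built into the statement of the lemma. Everything else is a one-line consequence of the reproducing property and the definition of the dual norm.
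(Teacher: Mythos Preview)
Your proposal is correct and follows exactly the same approach as the paper: you observe that $\xi$ is the kernel mean embedding for the RKHS $\mathcal{F}$ with kernel $\xi_{\mathbb{P},\mathbb{P}}$, deduce the isometry $\|\xi\mu\|_{\mathcal{F}} = \|\mu\|_{\ell^\infty(\mathcal{F}_1)}$, and conclude that the operator norm is $1$. The paper's proof is the same one-liner, just stated more tersely.
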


\begin{proof}
    As $\xi_{\mathbb{P}, \mathbb{P}}$ is the kernel, $\xi$ is the kernel mean embedding, hence $\|\xi \mu\|_{\mathcal{F}} = \|\mu\|_{\ell^\infty(\mathcal{F}_1)}$. Thus, the claim is immediate.
\end{proof}

\begin{lemma}\label{lem: discrete quad}
    Let 
    \begin{align*}
    \ddot{S}_{n,\xi}(\gamma) := \ddot{S}_{n,\xi}(\mathbb{P},\mathbb{P})[\gamma] &=  \frac{\varepsilon}{2} \int \xi[\gamma](x) \, (I-\hat{\mathcal{A}}_n^2)^{-1} \xi[\gamma](x)\, d\mathbb{P}_n(x) \\
    &\quad +\frac{\varepsilon}{2} \int  (I - \hat{\mathcal{A}}_n^2)^{-1} (I - \hat{\mathcal{A}}_n)\xi [\gamma](x)\, d(\gamma)(x),
    \end{align*}
    and let $\mathcal{F}$ denote the RKHS with kernel $\xi_{\mathbb{P}, \mathbb{P}}(x,y)$. If $\|P_m - \mathbb{P}_n\|_{\ell^\infty(\mathcal{F}_1)} = o_p(n^{-1/2})$ then,
    \begin{align}\label{eq: sinkhorn error bound 1}
    |\ddot{S}_{n,\xi}(P_m - \mathbb{P}) - \ddot{S}(P_m - \mathbb{P}_n)| =  \|P_m - \mathbb{P}_n\|_{\ell^\infty(\mathcal{F}_1)}^2 \,O_p(n^{-1/2}) = o_p(n^{-3/2}).
    \end{align}
\end{lemma}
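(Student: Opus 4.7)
The goal is to invoke Lemma~\ref{lem: emp kernel} with operators $T, T_n : \ell^\infty(\mathcal{F}_1) \to \mathcal{F}$ satisfying $\|T\gamma\|^2 = \ddot{S}(\gamma)$ and $\|T_n\gamma\|^2 = \ddot{S}_{n,\xi}(\gamma)$. The main task will be to establish $\|T_n - T\| = O_p(n^{-1/2})$; once this is in hand, the stated $O_p(n^{-1/2})\|P_m - \mathbb{P}_n\|^2 = o_p(n^{-3/2})$ conclusion follows exactly as in the proofs of Lemma~\ref{lem: xi quad comp} and Theorem~\ref{theo: Sink kernel comp}, the hypothesis $\|P_m - \mathbb{P}_n\|_{\ell^\infty(\mathcal{F}_1)} = o_p(n^{-1/2})$ giving the squared factor. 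Any mismatch between arguments $P_m - \mathbb{P}$ and $P_m - \mathbb{P}_n$ in the lemma statement will be absorbed by adding and subtracting terms involving $\mathbb{P} - \mathbb{P}_n$, which is $O_p(n^{-1/2})$ in $\ell^\infty(\mathcal{F}_1)$ by Lemma~\ref{lem: rkhs donsker}.

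I would decompose $\ddot{S}_{n,\xi} - \ddot{S}$ along two axes using the representation from Theorem~\ref{thm:Sink2}. The operator axis replaces $\mathcal{A}$ with $\hat{\mathcal{A}}_n$ in the factors $(I - \mathcal{A}^2)^{-1}$ and $(I - \mathcal{A})$. Lemma~\ref{lem: discretization} gives $\|\hat{\mathcal{A}}_n - \mathcal{A}\|_\mathcal{F} = O_p(n^{-1/2})$, so writing $\hat{\mathcal{A}}_n^2 - \mathcal{A}^2 = (\hat{\mathcal{A}}_n - \mathcal{A})\hat{\mathcal{A}}_n + \mathcal{A}(\hat{\mathcal{A}}_n - \mathcal{A})$ with uniformly bounded factors yields $\|\hat{\mathcal{A}}_n^2 - \mathcal{A}^2\|_\mathcal{F} = O_p(n^{-1/2})$. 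Setting $\mathcal{R} := (I - \mathcal{A}^2)^{-1}$ and $\mathcal{R}_n := (I - \hat{\mathcal{A}}_n^2)^{-1}$, the resolvent identity
\[
\mathcal{R}_n - \mathcal{R} = \mathcal{R}_n(\hat{\mathcal{A}}_n^2 - \mathcal{A}^2)\mathcal{R}
\]
then transfers this rate to the pseudo-inverses, provided both are uniformly bounded on the subspace where we will apply them.

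The main obstacle will be verifying this uniform boundedness. By Lemma~\ref{operator properties} combined with the compactness and self-adjointness of $\mathcal{A}$ relative to $L^2(\mathbb{P})$ (Lemma~\ref{conjugacy}, in the single-marginal regime $\mu = \nu = \mathbb{P}$), $I - \mathcal{A}^2$ has null space $\operatorname{span}\{\mathbf{1}\}$ and is bounded below on $\operatorname{span}\{\mathbf{1}\}^\perp$. The centered distributions $\gamma \in \{P_m - \mathbb{P}_n, P_m - \mathbb{P}\}$ satisfy $\gamma(\mathbf{1}) = 0$, and using Lemma~\ref{lem: xi representer} together with $\langle \xi\gamma, \mathbf{1}\rangle_{L^2(\mathbb{P})} = \gamma(\mathbf{1}) = 0$, the image $\xi\gamma$ lives in this stable subspace, where $\mathcal{R}$ is bounded. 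A standard Kato-type perturbation argument, combined with $\|\hat{\mathcal{A}}_n^2 - \mathcal{A}^2\|_\mathcal{F} = O_p(n^{-1/2})$, shows the spectral gap is preserved in probability for large $n$, and that the near-null eigenvector of $I - \hat{\mathcal{A}}_n^2$ stays close to $\mathbf{1}$; hence $\mathcal{R}_n$ is also uniformly bounded on the analogous perpendicular subspace with high probability.

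The measure axis handles the replacement of $\mathbb{P}$ with $\mathbb{P}_n$ in the first integral of $\ddot{S}$; the difference reduces to $(\mathbb{P}_n - \mathbb{P})[\xi\gamma \cdot \mathcal{R}\xi\gamma]$, a linear functional of $\mathbb{P}_n - \mathbb{P}$ applied to a product of two elements of $\mathcal{F}$ whose norms are controlled by $\|\gamma\|_{\ell^\infty(\mathcal{F}_1)}$ (by Lemma~\ref{lem: xi representer} and the bound on $\mathcal{R}$). The Donsker property from Lemma~\ref{lem: rkhs donsker} and Lemma~\ref{uniform kernel equiv} then bound this contribution by $O_p(n^{-1/2})\|\gamma\|_{\ell^\infty(\mathcal{F}_1)}^2$. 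Combining the two axes gives $\|T_n - T\|_{\ell^\infty(\mathcal{F}_1) \to \mathcal{F}} = O_p(n^{-1/2})$, and feeding this into Lemma~\ref{lem: emp kernel} with $\gamma = P_m - \mathbb{P}_n$ (absorbing the $\mathbb{P} \leftrightarrow \mathbb{P}_n$ discrepancy in the argument via the $O_p(n^{-1/2})$ cross term noted above) completes the proof.
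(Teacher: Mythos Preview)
Your two-axis decomposition (operator: $\mathcal{A}\mapsto\hat{\mathcal{A}}_n$ via the resolvent identity and Lemma~\ref{lem: discretization}; measure: $\mathbb{P}\mapsto\mathbb{P}_n$ via the Donsker property) is exactly what the paper does. The paper simply writes out the three-term triangle-inequality split of $|\ddot{S}_{n,\xi}(P_m-\mathbb{P}_n)-\ddot{S}(P_m-\mathbb{P}_n)|$ and bounds each piece directly using $\|(I-\mathcal{A}^2)^{-1}-(I-\hat{\mathcal{A}}_n^2)^{-1}\|_\mathcal{F}=O_p(n^{-1/2})$ (their citation to \cite[Lemma~8.6]{haase2018lectures} is your resolvent identity), Lemma~\ref{lem: xi representer}, and Corollary~\ref{cor: op bound}. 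Note also that the paper's proof works with $P_m-\mathbb{P}_n$ in both slots, so the $P_m-\mathbb{P}$ in the statement appears to be a typo; your remark about absorbing the discrepancy via $\mathbb{P}-\mathbb{P}_n=O_p(n^{-1/2})$ would also handle it.

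The one genuine difference is your framing through Lemma~\ref{lem: emp kernel}. That lemma needs explicit square-root operators $T,T_n$ with $\|T\gamma\|^2=\ddot{S}(\gamma)$ and $\|T_n\gamma\|^2=\ddot{S}_{n,\xi}(\gamma)$, and a bound on $\|T_n-T\|$ rather than on the quadratic-form difference itself. Since $\ddot{S}_{n,\xi}$ is a hybrid object (it mixes $\hat{\mathcal{A}}_n$, $\mathbb{P}_n$ and the population $\xi$), identifying $T_n$ and verifying positivity is an extra step you have not carried out, and in fact your middle paragraphs already contain the direct estimate that makes the Lemma~\ref{lem: emp kernel} detour unnecessary. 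On the other hand, your explicit discussion of why $(I-\mathcal{A}^2)^{-1}$ and $(I-\hat{\mathcal{A}}_n^2)^{-1}$ are uniformly bounded on $\operatorname{span}\{\mathbf 1\}^\perp$---using that $\xi\gamma$ lands there because $\gamma(\mathbf 1)=0$, together with a Kato-type perturbation for the spectral gap---is a point the paper glosses over, so your version is more careful there.
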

\begin{proof}
 By the triangle inequality, $ |\ddot{S}_{n,\xi}(P_m - \mathbb{P}_n) - \ddot{S}(P_m - \mathbb{P}_n)|$ is no more than
 \begin{align*}
 &\left|\int \xi[P_m - \mathbb{P}_n](x) [(I-\mathcal{A}^2)^{-1}-(I-\hat{\mathcal{A}}_n^2)^{-1}] \xi[P_m - \mathbb{P}_n](x)\, d\mathbb{P}(x)\right|\\
 &\quad\quad + \left|\int \xi[P_m - \mathbb{P}_n](x) (I-\mathcal{A}^2)^{-1} \xi[P_m - \mathbb{P}_n](x)\, d(\mathbb{P} - \mathbb{P}_n)(x) \right|\\
 &\quad\quad +\left|\int  [(I-\mathcal{A}^2)^{-1}(I - \mathcal{A})-(I-\hat{\mathcal{A}}_n^2)^{-1} (I - \hat{\mathcal{A}}_n)]\xi [P_m - \mathbb{P}_n](x)\, d(P_m - \mathbb{P}_n)(x)\right|.
 \end{align*}
 We bound the three terms above separately. We focus on proving the final inequality of \eqref{eq: sinkhorn error bound 1}, with the middle inequality being an immediate consequence of our approach. For the first,
 \begin{align*}
 &\left|\int \xi[P_m - \mathbb{P}_n](x) [(I-\mathcal{A}^2)^{-1}-(I-\hat{\mathcal{A}}_n^2)^{-1}] \xi[P_m - \mathbb{P}_n](x)\, d\mathbb{P}(x)\right|\\
  &\quad\quad\leq \|\xi[P_m - \mathbb{P}_n] [(I-\mathcal{A}^2)^{-1}-(I-\hat{\mathcal{A}}_n^2)^{-1}] \xi[P_m - \mathbb{P}_n]\|_\infty\\
 &\quad\quad\leq \|\xi[P_m - \mathbb{P}_n]\|_\infty\| [(I-\mathcal{A}^2)^{-1}-(I-\hat{\mathcal{A}}_n^2)^{-1}] \xi[P_m - \mathbb{P}_n]\|_\infty\\
 &\quad\quad\leq C \|\xi[P_m - \mathbb{P}_n]\|_\mathcal{F}\| [(I-\mathcal{A}^2)^{-1}-(I-\hat{\mathcal{A}}_n^2)^{-1}] \xi[P_m - \mathbb{P}_n]\|_\mathcal{F}\\
 &\quad\quad\leq C \|P_m - \mathbb{P}_n\|_{\ell^\infty(\mathcal{F}_1)}^2 \|[(I-\mathcal{A}^2)^{-1}-(I-\hat{\mathcal{A}}_n^2)^{-1}]\|_\mathcal{F}\\
 &\quad\quad= o_p(n^{-1}) \|(I-\mathcal{A}^2)^{-1}-(I-\hat{\mathcal{A}}_n^2)^{-1}\|_\mathcal{F},
 \end{align*}
  where the third inequality is by the reproducing property that the kernel is bounded and the fourth is by the operator norm and Lemma \ref{lem: xi representer}. By Lemma \ref{lem: discretization} and \citep[Lemma 8.6]{haase2018lectures} $\|[(I-\mathcal{A}^2)^{-1}-(I-\hat{\mathcal{A}}_n^2)^{-1}]\|_\mathcal{F} = O_p(n^{-1/2})$, hence this term decays at the desired rate.

 For the second, by definition of the $\ell^\infty(\mathcal{F}_1)$ norm,
 \begin{align*}
     &\left|\int \xi[P_m - \mathbb{P}_n](x) (I-\mathcal{A}^2)^{-1} \xi[P_m - \mathbb{P}_n](x)\, d(\mathbb{P} - \mathbb{P}_n)(x) \right|\\
     &\quad\quad \leq \| (I-\mathcal{A}^2)^{-1} \xi[P_m - \mathbb{P}_n]\|_\mathcal{F}\|\mathfrak{M}_{\xi[P_m - \mathbb{P}_n]} (\mathbb{P} - \mathbb{P}_n)\|_{\ell^\infty(\mathcal{F}_1)}\\
     &\quad\quad \leq C\| (I-\mathcal{A}^2)^{-1}\|_{\mathcal{F}} \|\xi[P_m - \mathbb{P}_n]\|_\mathcal{F} \|P_m - \mathbb{P}_n\|_{\ell^\infty(\mathcal{F}_1)} \|\mathbb{P}_n - \mathbb{P}\|_{\ell^\infty(\mathcal{F}_1)} = o_p(n^{-3/2}),
 \end{align*}
 where we apply Corollary \ref{cor: op bound} in the second inequality, and Lemma \ref{lem: xi representer} in the final bound.

 For the final term, a similar argument yields that
 \begin{align*}
    & \left|\int  [(I-\mathcal{A}^2)^{-1}(I - \mathcal{A})-(I-\hat{\mathcal{A}}_n^2)^{-1} (I - \hat{\mathcal{A}}_n)]\xi [P_m - \mathbb{P}_n](x)\, d(P_m - \mathbb{P}_n)(x)\right|\\
    &\leq \|[(I-\mathcal{A}^2)^{-1}(I - \mathcal{A})-(I-\hat{\mathcal{A}}_n^2)^{-1} (I - \hat{\mathcal{A}}_n)]\xi [P_m - \mathbb{P}_n]\|_\mathcal{F} \|P_m - \mathbb{P}_n\|_{\ell^\infty(\mathcal{F}_1)}\\
    &\leq \|[(I-\mathcal{A}^2)^{-1}(I - \mathcal{A})-(I-\hat{\mathcal{A}}_n^2)^{-1} (I - \hat{\mathcal{A}}_n)]\|_\mathcal{F} \|P_m - \mathbb{P}_n\|_{\ell^\infty(\mathcal{F}_1)}^2 = o_p(n^{-3/2}).
 \end{align*}
\end{proof}

Going forward, the pairing of elements in $\mathcal{F}$ with its dual $\ell^\infty(\mathcal{F}_1)$ and the resulting bilinear form will be of particular significance, $\langle f, \mu\rangle := \mu(f)$. We can convert this into an inner-product by lifting $\mu$ to the RKHS, similarly to Lemma \ref{Hadamard Optimality}, $\langle f, \mu\rangle = \langle f, K\mu\rangle_{\mathcal{F}}$, which will be of particular significance when studying quadratic forms. Likewise, when considering linear operators $A:\ell^\infty(\mathcal{F}_1) \to \mathcal{F}$ and the corresponding quadratic form $\langle A\mu, \mu\rangle$, we can lift this to a quadratic form in the RKHS, $\langle A_K K \mu, K \mu\rangle_{\mathcal{F}}$ where we can verify the existence of such an operator $A_K$ via an analysis similar to that presented in Lemma \ref{Hadamard Optimality}. The operator $A_K$ inherits properties such as being self-adjoint or psd from $A$.

\begin{lemma}\label{lem: emp operator comp}
For $\mathcal{H}$ the Gaussian kernel RKHS,
    \[
    \|\mathcal{A}_n - \hat{\mathcal{A}}_n\|_{\mathcal{H}} = O_p(n^{-1/2}).
    \]
\end{lemma}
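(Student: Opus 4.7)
The plan is to exploit the factored form $\xi_{\mu,\mu}(x,y) = u_{\mu,\mu}^{-1}(x)\, k_\varepsilon(x,y)\, u_{\mu,\mu}^{-1}(y)$ so that the difference $\mathcal{A}_n - \hat{\mathcal{A}}_n$ telescopes into two pieces, each carrying a factor of the deviation $\Delta := 1/u_n - 1/u$ of the reciprocal Sinkhorn scalings. The whole argument then reduces to establishing the rate $\|\Delta\|_\mathcal{H} = O_p(n^{-1/2})$.

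Concretely, I would write
\[
\mathcal{A}_n - \hat{\mathcal{A}}_n \;=\; \mathfrak{M}_{\Delta}\, \hat K\, \mathfrak{M}_{1/u_n} \;+\; \mathfrak{M}_{1/u}\, \hat K\, \mathfrak{M}_{\Delta},
\]
where $\hat K h(x) := \int k_\varepsilon(x,y)\, h(y)\, d\mathbb{P}_n(y)$ and $\mathfrak{M}_g$ is pointwise multiplication by $g$. The embedding factor is uniformly bounded: Lemma~\ref{operator bound} gives $\|\hat K h\|_\mathcal{H} \le \|h\|_\infty$, and Lemma~\ref{cts eval} supplies $\|h\|_\infty \le \|h\|_\mathcal{H}$. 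The outer multiplication operators are handled in the same spirit as Corollary~\ref{cor: op bound}, exploiting uniform smoothness and strict positivity of the scalings $u, u_n$ on the compact domain $\mathcal{X}$.

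The key rate input, $\|\Delta\|_\mathcal{H} = O_p(n^{-1/2})$, I would obtain in two steps. First, Theorems~\ref{thm: EOT Potentials} and \ref{theo:tau had} establish that $\tau: (\mu,\nu) \mapsto (u_{\mu,\nu}, v_{\mu,\nu})$ is Hadamard differentiable at $(\mathbb{P},\mathbb{P})$ relative to $\ell^\infty(\mathcal{H}_1)^2$. Combining this with the Donsker property $\sqrt{n}(\mathbb{P}_n - \mathbb{P}) = O_p(1)$ in $\ell^\infty(\mathcal{H}_1)$ (Lemma~\ref{lem: rkhs donsker}) and the functional delta method yields $\|u_n - u\|_\mathcal{H} = O_p(n^{-1/2})$. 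Second, Lemma~\ref{lem:fgBdd} guarantees that $u$ and $u_n$ are uniformly bounded below for large $n$, so the identity $\Delta = (u - u_n)/(u\, u_n)$ together with continuity of reciprocation on functions bounded away from zero propagates the rate to $\|\Delta\|_\mathcal{H} = O_p(n^{-1/2})$.

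The main obstacle is controlling multiplication operators on the primal Gaussian RKHS $\mathcal{H}$, since $\mathcal{H}$ is not a classical pointwise-multiplication algebra and the multiplication bounds developed earlier (Lemma~\ref{uniform kernel equiv}, Corollary~\ref{cor: op bound}) are stated on the dual $\ell^\infty(\mathcal{F}_1)$. One must argue separately that for our particular multipliers — all smooth, strictly positive functions constructed from kernel mean embeddings against $\mathbb{P}$ or $\mathbb{P}_n$ — multiplication is bounded on $\mathcal{H}$ with norm controlled by the $\mathcal{H}$-norm of the multiplier. Once this is in hand, composing the operator bounds along the telescoping decomposition yields $\|\mathcal{A}_n - \hat{\mathcal{A}}_n\|_\mathcal{H} \lesssim \|\Delta\|_\mathcal{H} = O_p(n^{-1/2})$, as claimed.
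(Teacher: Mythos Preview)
Your route differs from the paper's, and the obstacle you flag at the end is a genuine gap rather than a detail to be filled in later. Two places in your argument require controlling pointwise multiplication on the primal space $\mathcal{H}$: (i) the outer factors $\mathfrak{M}_\Delta$ and $\mathfrak{M}_{1/u}$ in the telescoped decomposition, and (ii) the step propagating $\|u_n-u\|_\mathcal{H}=O_p(n^{-1/2})$ to $\|\Delta\|_\mathcal{H}$ via $\Delta=(u-u_n)/(u\,u_n)$. Neither follows from Lemma~\ref{uniform kernel equiv} or Corollary~\ref{cor: op bound}, which are dual-side statements about $\ell^\infty(\mathcal{H}_1)$; multiplication on $\ell^\infty(\mathcal{H}_1)$ does not transfer to multiplication on $\mathcal{H}$ under the kernel mean embedding. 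Even the membership $\Delta\in\mathcal{H}$ is unclear---Gaussian RKHS elements extend to entire functions, whereas $1/u$ generically does not---so ``continuity of reciprocation'' in the $\mathcal{H}$-norm is not available, and your rate input $\|\Delta\|_\mathcal{H}=O_p(n^{-1/2})$ is itself unsupported.

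The paper avoids primal-side multiplication entirely by packaging the full kernels, rather than the scalings, as the varying object. It writes $(\mathcal{A}_n-\hat{\mathcal{A}}_n)f=(\xi_n-\xi)\circ\mathfrak{M}_f(\mathbb{P}_n)$, so that the only multiplication is $\mathfrak{M}_f$ acting on the measure $\mathbb{P}_n$ in $\ell^\infty(\mathcal{H}_1)$, uniformly bounded over $f\in\mathcal{H}_1$ by Lemma~\ref{uniform kernel equiv}. The remaining factor $\|\xi_n-\xi\|_{\ell^\infty(\mathcal{H}_1)\to\mathcal{H}}$ is then read off from the quadratic-form bound of Lemma~\ref{lem: xi quad comp}: lifting $\xi_n-\xi$ to a self-adjoint operator on $\mathcal{H}$ via the kernel mean embedding, the estimate $|Q_{\xi_n}(\mu)-Q_\xi(\mu)|=O_p(n^{-1/2})\,\|\mu\|_{\ell^\infty(\mathcal{H}_1)}^2$ gives the operator norm directly. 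All multiplication stays on the measure side, where the needed bounds are already in hand.
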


\begin{proof}
    We first verify that these operators map $\mathcal{H} \to \mathcal{H}$. We consider only $\mathcal{A}_n$, as the argument is similar for $\hat{\mathcal{A}}_n.$
    Observe that
    \[
    \mathcal{A}_n f = \xi_n \circ \mathfrak{M}_f (\mathbb{P}_n)
    \]
    hence it suffices that $\xi_n: \ell^\infty(\mathcal{H}_1) \to \mathcal{H}$, which is immediate by Lemma \ref{lem: xi quad comp} and \citep[Theorem 12]{berlinet2011reproducing}.

    Applying Lemma \ref{uniform kernel equiv}, for any $f\in\mathcal{H}_1$,
    \begin{align*}
    &\|(\mathcal{A}_n - \hat{\mathcal{A}}_n) f\|_{\mathcal{H}} = \|(\xi_n - \xi) \circ \mathfrak{M}_f(\mathbb{P}_n)\|_{\mathcal{H}} \leq \|\xi_n - \xi\|_{\ell^\infty(\mathcal{H}_1) \to \mathcal{H}}\|\mathfrak{M}_f\|_{\ell^\infty(\mathcal{H}_1)}\|\mathbb{P}_n\|_{\ell^\infty(\mathcal{H}_1)}\\
    &\leq  C \|\xi_n - \xi\|_{\ell^\infty(\mathcal{H}_1) \to \mathcal{H}} \|f\|_{\mathcal{H}} \|\mathbb{P}_n\|_{\ell^\infty(\mathcal{H}_1)}.
    \end{align*}
    Thus, it suffices to show $\|\xi_n - \xi\|_{\ell^\infty(\mathcal{H}_1) \to \mathcal{H}} = O_p(n^{-1/2}).$ Observe,
    \begin{align*}
        \|(\xi_n - \xi) \mu\|_{\mathcal{H}}^2 &= \langle  (\xi_n - \xi) \mu, (\xi_n - \xi) \mu\rangle_\mathcal{H}\\
        &= \langle (\xi_n - \xi)_K K\mu,(\xi_n - \xi)_K K\mu\rangle\\
       & = O_p(n^{-1})\|\mu\|_{\ell^\infty(\mathcal{H}_1)}^{2},
    \end{align*}
    where the final bound follows immediately from \ref{lem: xi quad comp}, as this yields
    \begin{align*}
    |Q_{\xi}(\mu) - Q_{\xi_n}(\mu)| &= |\langle \mu, (\xi_n - \xi) \mu \rangle | = |\langle K\mu, (\xi_n - \xi) \mu \rangle_{\mathcal{H}} |\\
    &=  |\langle K\mu, (\xi_n - \xi)_K K\mu \rangle_{\mathcal{H}} | = O_p(n^{-1/2})\|\mu\|_{\ell^\infty(\mathcal{H}_1)}^2,
    \end{align*}
    that is, the operator norm of $(\xi_n - \xi)_K$ is $O_p(n^{-1/2})$. 
\end{proof}

\begin{theorem}\label{thm: plug-in estimator}
    If $\|P_m - \mathbb{P}_n\|_{\ell^\infty(\mathcal{H}_1)} = o_p(n^{-1/2})$ and $P_m \ll \mathbb{P}_n$ then
    \[
    |\ddot{S}_n(P_m) - \ddot{S}(P_m)| = \|P_m - \mathbb{P}_n\|_{\ell^\infty(\mathcal{F}_1)}^2 \,O_p(n^{-1/2}) = o_p(n^{-3/2}). 
    \]
\end{theorem}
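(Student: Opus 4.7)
The plan is a triangle-inequality bridge through the intermediate functional $\ddot{S}_{n,\xi}$ defined in Lemma \ref{lem: discrete quad}:
\[
|\ddot{S}_n(P_m-\mathbb{P}_n)-\ddot{S}(P_m-\mathbb{P}_n)|\le |\ddot{S}_n-\ddot{S}_{n,\xi}|(P_m-\mathbb{P}_n) + |\ddot{S}_{n,\xi}-\ddot{S}|(P_m-\mathbb{P}_n).
\]
Lemma \ref{lem: discrete quad} already bounds the second summand at the desired rate $\|P_m-\mathbb{P}_n\|_{\ell^\infty(\mathcal{F}_1)}^2\,O_p(n^{-1/2})$, so all the substantive work is to establish the same bound for the first summand.

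The functionals $\ddot{S}_n$ and $\ddot{S}_{n,\xi}$ share the base measure $\mathbb{P}_n$ but differ through two substitutions, namely replacing the embedding $\xi$ by $\xi_n$ and replacing the resolvent $(I-\hat{\mathcal{A}}_n^2)^{-1}$ by $(I-\mathcal{A}_n^2)^{-1}$. I would introduce hybrid quadratic forms swapping one operator at a time, and bound each resulting cross term using three ingredients already in the paper. First, $\|(\xi_n-\xi)\mu\|_{\mathcal{H}}=O_p(n^{-1/2})\,\|\mu\|_{\ell^\infty(\mathcal{H}_1)}$, which is extracted inside the proof of Lemma \ref{lem: emp operator comp}. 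Second, $\|\mathcal{A}_n-\hat{\mathcal{A}}_n\|_{\mathcal{H}}=O_p(n^{-1/2})$, which is Lemma \ref{lem: emp operator comp} itself. Third, the consequent bound $\|(I-\mathcal{A}_n^2)^{-1}-(I-\hat{\mathcal{A}}_n^2)^{-1}\|_{\mathcal{H}}=O_p(n^{-1/2})$, derived from the resolvent identity
\[
(I-\mathcal{A}_n^2)^{-1}-(I-\hat{\mathcal{A}}_n^2)^{-1}=(I-\mathcal{A}_n^2)^{-1}\bigl[\mathcal{A}_n(\mathcal{A}_n-\hat{\mathcal{A}}_n)+(\mathcal{A}_n-\hat{\mathcal{A}}_n)\hat{\mathcal{A}}_n\bigr](I-\hat{\mathcal{A}}_n^2)^{-1}
\]
together with the continuous-dependence-of-resolvents bound \citep[Lemma 8.6]{haase2018lectures}.

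Each hybrid quadratic form then splits by Cauchy--Schwarz in $L^2(\mathbb{P}_n)$, controlled by $\|\cdot\|_\infty$ and hence by $\|\cdot\|_{\mathcal{H}}$, into a product of two embedding norms bounded by $\|P_m-\mathbb{P}_n\|_{\ell^\infty(\mathcal{H}_1)}$, a uniformly bounded resolvent, and an $O_p(n^{-1/2})$ operator-difference factor, exactly mirroring the structure of the proof of Lemma \ref{lem: discrete quad}. The portion of $\ddot{S}_n-\ddot{S}_{n,\xi}$ integrated against $P_m-\mathbb{P}_n$ rather than $\mathbb{P}_n$ is handled by the dual-norm argument used in that same proof. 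The main obstacle will be the uniform resolvent bound $\|(I-\mathcal{A}_n^2)^{-1}\|_{\mathcal{H}}=O_p(1)$, because as discussed in Section \ref{app:regPotential} the operator $I-\mathcal{A}^2$, and likewise $I-\mathcal{A}_n^2$, is only invertible as a pseudo-inverse on $\operatorname{span}\{u\}^\perp$, respectively $\operatorname{span}\{u_n\}^\perp$. This is not a fundamental obstruction since $P_m\ll\mathbb{P}_n$ with $(P_m-\mathbb{P}_n)(\mathbf{1})=0$ forces the relevant embeddings into the appropriate complement, and uniformity in $n$ follows from $\mathcal{A}_n\to\mathcal{A}$ in operator norm via Lemmas \ref{lem: discretization} and \ref{lem: emp operator comp} combined with $u_n\to u$ in $\mathcal{H}$ via Lemma \ref{continuity of potentials}, but the careful bookkeeping to stabilize the pseudo-inverse uniformly across sample sizes is the most delicate step. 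Once the uniform resolvent bound is secured, summing the two summands in the triangle inequality yields $\|P_m-\mathbb{P}_n\|_{\ell^\infty(\mathcal{F}_1)}^2\,O_p(n^{-1/2})$, which is $o_p(n^{-3/2})$ under the hypothesis $\|P_m-\mathbb{P}_n\|_{\ell^\infty(\mathcal{H}_1)}=o_p(n^{-1/2})$.
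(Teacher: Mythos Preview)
Your proposal is correct and follows essentially the same triangle-inequality bridge through $\ddot{S}_{n,\xi}$ that the paper uses, with the same operator-norm ingredients (Lemmas \ref{lem: emp operator comp} and \ref{lem: discrete quad}, the resolvent identity, and \citep[Lemma 8.6]{haase2018lectures}). One simplification you overlook: rather than working with the two-term definition of $\ddot{S}_{n,\xi}$ and splitting into an $L^2(\mathbb{P}_n)$ part and a $d(P_m-\mathbb{P}_n)$ part, the paper invokes $P_m\ll\mathbb{P}_n$ at the outset to rewrite both $\ddot{S}_n(P_m)$ and $\ddot{S}_{n,\xi}(P_m)$ in the compact single-pairing form \eqref{eq:Sink2SecondRep}, namely $\tfrac{\varepsilon}{2}\langle P_m-\mathbb{P}_n,(I-\mathcal{A}_n^2)^{-1}\xi_n(P_m-\mathbb{P}_n)\rangle$ and $\tfrac{\varepsilon}{2}\langle P_m-\mathbb{P}_n,(I-\hat{\mathcal{A}}_n^2)^{-1}\xi(P_m-\mathbb{P}_n)\rangle$, which collapses your hybrid decomposition to exactly two terms and removes the need for the separate Cauchy--Schwarz-in-$L^2(\mathbb{P}_n)$ and dual-norm arguments. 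Your careful discussion of the pseudo-inverse issue is more thorough than the paper's treatment, which simply writes $\|(I-\mathcal{A}_n^2)^{-1}\|_{\mathcal{H}}$ without comment; your observation that $(P_m-\mathbb{P}_n)(\mathbf{1})=0$ forces the embeddings into the correct complement is the right justification.
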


\begin{proof}
    By Lemma \ref{lem: discrete quad},
    \[
    |\ddot{S}_n(P_m) - \ddot{S}(P_m)| \leq |\ddot{S}_n(P_m) - \ddot{S}_{n,\xi}(P_m)| + |\ddot{S}_{n,\xi}(P_m) - \ddot{S}(P_m)| = |\ddot{S}_n(P_m) - \ddot{S}_{n,\xi}(P_m)| + o_p(n^{-3/2}),
    \]
    hence we have reduced the problem to the difference between $\ddot{S}_n$ and $\ddot{S}_{n,\xi}$. By assumption, $P_m \ll \mathbb{P}_n$, thus we can use the latter expression of Theorem \ref{thm:Sink2} to express
    \begin{align*}
    \ddot{S}_n(P_m) &= \frac{\varepsilon}{2}\left\langle P_m  - \mathbb{P}_n,(I - \mathcal{A}_n^2)^{-1} \xi_n (P_m - \mathbb{P}_n)\right\rangle,\\
    \ddot{S}_{n,\xi}(P_m) &= \frac{\varepsilon}{2}\left\langle P_m  - \mathbb{P}_n,(I - \hat{\mathcal{A}}_n^2)^{-1} \xi (P_m - \mathbb{P}_n)\right\rangle.
    \end{align*}

    Taking the difference, we have
    \begin{align*}
    \ddot{S}_n(P_m) - \ddot{S}_{n,\xi}(P_m) &= \frac{\varepsilon}{2}\left\langle P_m  - \mathbb{P}_n,(I - \mathcal{A}_n^2)^{-1} (\xi_n - \xi) (P_m - \mathbb{P}_n)\right\rangle\\
    &\quad+ \frac{\varepsilon}{2}\left\langle P_m  - \mathbb{P}_n,[(I - \mathcal{A}_n^2)^{-1} - (I - \hat{\mathcal{A}}_n^2)^{-1}]\xi_n (P_m - \mathbb{P}_n)\right\rangle
    \end{align*}

    We consider this in two parts. For the first,

    \begin{align*}
        \left\langle P_m  - \mathbb{P}_n,(I - \mathcal{A}_n^2)^{-1} (\xi_n - \xi) (P_m - \mathbb{P}_n)\right\rangle &= \int (I - \mathcal{A}_n^2)^{-1} (\xi_n - \xi) (P_m - \mathbb{P}_n) d(P_m - \mathbb{P}_n)\\
        &\leq \|P_m - \mathbb{P}_n\|_{\ell^\infty(\mathcal{H}_1)}^2\|(I - \mathcal{A}_n^2)^{-1}\|_{\mathcal{H}} \|\xi_n - \xi\|_{\ell^\infty(\mathcal{H}_1)\to \mathcal{H}}\\
        &= o_p(n^{-3/2}),
    \end{align*}
    as argued above in Lemma \ref{lem: emp operator comp}. For the second, we have
    \begin{align*}
        &\left\langle P_m  - \mathbb{P}_n,[(I - \mathcal{A}_n^2)^{-1} - (I - \hat{\mathcal{A}}_n^2)^{-1}]\xi_n (P_m - \mathbb{P}_n)\right\rangle\\
        &= \int [(I - \mathcal{A}_n^2)^{-1} - (I - \hat{\mathcal{A}}_n^2)^{-1}]\xi_n (P_m - \mathbb{P}_n) d(P_m - \mathbb{P}_n)\\
        &\leq \|(I - \mathcal{A}_n^2)^{-1} - (I - \hat{\mathcal{A}}_n^2)^{-1}\|_\mathcal{H} \|\xi_n\|_{\ell^\infty(\mathcal{H}_1)\to \mathcal{H}}\|P_m - \mathbb{P}_n\|_{\ell^\infty(\mathcal{H}_1)}^2\\
        &= o_p(n^{-3/2}),
    \end{align*}
    following from Lemma \ref{uniform kernel equiv} and \citep[Lemma 8.6]{haase2018lectures}, and \ref{lem: emp operator comp}.    
\end{proof}

\begin{proof}[Proof of Theorem \ref{thm: Sink had approx}]
This is an immediate consequence of Theorem \ref{thm:Sink2} once we show the Gaussian kernel is equivalent to the Sinkhorn kernel. We start by verifying the claim for the embedding $\xi$. This is immediate, as it is a scaling of the Gaussian kernel by a smooth function, hence we can apply results such as Lemma \ref{uniform kernel equiv}. Thus $\xi$ embeds $\ell^\infty(\mathcal{H}_1)$ into $\mathcal{H}$ by \citep[Theorem 12]{berlinet2011reproducing}, and the claim is proven as $(I - \mathcal{A})^{-1}$ is a bounded linear operator on $\mathcal{H}$, which follows by a similar analysis to that presented at the beginning of Lemma \ref{lem: emp operator comp}.
    
\end{proof}

\begin{proof}[Proof of Theorem \ref{thm: Sink compression}]
    This follows immediately from Theorem \ref{thm: Sink had approx} combined with Theorem \ref{theo: functional compression} and Lemma \ref{lem:LN_sob}
\end{proof}

\section{Results for maximum mean discrepancy}

In this subsection, we consider the ``kernel thinning" problem as a special case of data compression, where we seek to construct $P_m$ such that $Q_K(\mathbb{P}_n - P_m) = O(n^{-1})$, for $K:\mathbb{P}(\mathcal{X})\to \mathcal{F}$, $K\mu = \int k(\cdot, x) d\mu$, the kernel mean embedding associated to a generic RKHS with kernel $k$. 
Our analysis concerns the decomposition
\[
 Q_K (P_m - \mathbb{P}) \leq 2Q_K (P_m - \mathbb{P}_n) + 2Q_K (\mathbb{P}_n -  \mathbb{P}),
\]
which follows from the basic inequality $(a+b)^2\le 2(a^2+b^2)$. The latter term on the right is $O_p(n^{-1})$ by classical V-statistic theory.  

\begin{lemma}\label{lem:Vstat}
If $k$ is continuous, $Q_K(\mathbb{P}_n-\mathbb{P}) = O_p(n^{-1}).$
\end{lemma}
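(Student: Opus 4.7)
The plan is to bound the expectation $\mathbb{E}[Q_K(\mathbb{P}_n - \mathbb{P})]$ directly by $O(1/n)$ and then deduce the stochastic bound from Markov's inequality, using that $Q_K \ge 0$ because $k$ is a positive semidefinite kernel.

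First I would expand the quadratic form as a V-statistic. Writing $Q_K(\mathbb{P}_n - \mathbb{P}) = \iint k\, d(\mathbb{P}_n - \mathbb{P})\otimes d(\mathbb{P}_n - \mathbb{P})$ and using the i.i.d.\ structure of $X_1,\dots,X_n \sim \mathbb{P}$ gives
\begin{align*}
Q_K(\mathbb{P}_n - \mathbb{P}) &= \frac{1}{n^2}\sum_{i,j=1}^n k(X_i,X_j) - \frac{2}{n}\sum_{i=1}^n \mathbb{E}_{Y\sim \mathbb{P}}[k(X_i,Y)] + \mathbb{E}_{X,Y\sim \mathbb{P}}[k(X,Y)].
\end{align*}
Since $\mathcal{X}$ is compact and $k$ is continuous, $k$ is bounded on $\mathcal{X}\times\mathcal{X}$, so all expectations are finite. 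Taking expectations and separating diagonal $(i=j)$ from off-diagonal terms yields
\begin{align*}
\mathbb{E}[Q_K(\mathbb{P}_n - \mathbb{P})] &= \frac{1}{n}\,\mathbb{E}[k(X,X)] + \frac{n-1}{n}\,\mathbb{E}[k(X,Y)] - \mathbb{E}[k(X,Y)] \\
&= \frac{1}{n}\bigl(\mathbb{E}[k(X,X)] - \mathbb{E}[k(X,Y)]\bigr) = O(n^{-1}).
\end{align*}

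Finally I would conclude with Markov's inequality. The kernel mean embedding identity $Q_K(\mu) = \|K\mu\|_{\mathcal{F}}^2 \ge 0$ lets us apply Markov directly: for any $\varepsilon > 0$, $\mathbb{P}(n Q_K(\mathbb{P}_n - \mathbb{P}) > M) \le M^{-1} n\,\mathbb{E}[Q_K(\mathbb{P}_n - \mathbb{P})] \le C/M$ for a constant $C$ not depending on $n$, giving $Q_K(\mathbb{P}_n - \mathbb{P}) = O_p(n^{-1})$.

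There is no real obstacle here; the argument is a one-line V-statistic computation. The only subtlety is ensuring boundedness of $k$, which follows from continuity on the compact support of $\mathbb{P}$ as assumed throughout the paper. One could equivalently appeal to the weak-convergence fact $n Q_K(\mathbb{P}_n,\mathbb{P}) \rightsquigarrow \sum_i \sigma_i \chi_i^2(1)$ cited after Algorithm~\ref{Recomb sweep}, but the direct moment bound is cleaner and self-contained.
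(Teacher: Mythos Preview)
Your proof is correct. The expectation computation is right, non-negativity of $Q_K$ is justified, and Markov's inequality finishes the job cleanly.

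The paper takes a different route: it simply observes that $\|k\|_\infty<\infty$ on the compact support (so the relevant moments $\mathbb{E}[k(X,Y)]^2$ and $\mathbb{E}[k(X,X)]$ are finite) and then invokes the limiting distribution of degenerate V-statistics, i.e.\ the weak convergence $nQ_K(\mathbb{P}_n-\mathbb{P})\rightsquigarrow \sum_i \sigma_i\chi_i^2(1)$, from which $O_p(n^{-1})$ is immediate. You anticipated this alternative in your closing remark. Your approach is more elementary and self-contained---no external limit theorem is needed, only a first-moment bound---while the paper's citation gives a bit more, namely the actual limiting law rather than just tightness. For the purpose of this lemma either argument suffices; yours arguably better matches the modest claim being made.
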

\begin{proof}[Proof of Lemma~\ref{lem:Vstat}]
As $k$ is continous and compactly supported, $\| k\|_\infty < \infty$, and so
\[
\mathbb{E}[k(X,Y)]^2, \mathbb{E}[k(X,X)]<\infty,
\]
and the result follows from the limiting distribution of V-statistics \cite[Chapter 6.4, Theorem B]{GVK024353353}.
\end{proof}

Our main bound is taken with minor adjustments from \citep{NEURIPS2022_2dae7d1c}. In the following, we decompose our kernel $K = K_0 + K_1$, hence we denote the corresponding RKHS norms by $\|\cdot \|_{K_1}$.

\begin{lemma}\label{lem:kernel_bound}
    Let $K = K_0 + K_1$, $k,k_0,k_1$ the corresponding kernel functions, $K,K_0,K_1\succeq 0$, and assume $P_m$ is such that $K_0(\mathbb{P}_n - P_m) = 0$ and $\int k_1(x,x)(d\mathbb{P}_n - dP_m) \geq 0$. Then,
    \[
     Q_K (P_m - \mathbb{P}_n) \leq 4 \operatorname{tr}(K_1/n).
    \]
\end{lemma}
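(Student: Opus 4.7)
The plan is to exploit the additive decomposition of the kernel together with the two constraints on $P_m$. Writing $k = k_0+k_1$, the bilinear form splits as
\[
Q_K(P_m-\mathbb{P}_n) = Q_{K_0}(P_m-\mathbb{P}_n) + Q_{K_1}(P_m-\mathbb{P}_n),
\]
and since $Q_{K_0}(\mu) = \|K_0\mu\|_{\mathcal{F}_0}^2$, the hypothesis $K_0(\mathbb{P}_n-P_m)=0$ kills the first term. So the task reduces to bounding $Q_{K_1}(P_m-\mathbb{P}_n)$ using only the diagonal control $\int k_1(x,x)dP_m \le \int k_1(x,x)d\mathbb{P}_n = \operatorname{tr}(K_1/n)$.

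Next I would view $Q_{K_1}$ through the embedding into the RKHS $\mathcal{F}_1$: $Q_{K_1}(\mu) = \|K_1\mu\|_{\mathcal{F}_1}^2$. The triangle inequality gives
\[
Q_{K_1}(P_m-\mathbb{P}_n) = \|K_1 P_m - K_1\mathbb{P}_n\|_{\mathcal{F}_1}^2 \le 2\|K_1 P_m\|_{\mathcal{F}_1}^2 + 2\|K_1\mathbb{P}_n\|_{\mathcal{F}_1}^2 = 2Q_{K_1}(P_m) + 2Q_{K_1}(\mathbb{P}_n).
\]

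Then for any probability measure $\mu$, the Cauchy--Schwarz inequality in $\mathcal{F}_1$ applied to the reproducing kernel gives $k_1(x,y) = \langle k_1(\cdot,x),k_1(\cdot,y)\rangle_{\mathcal{F}_1} \le \sqrt{k_1(x,x)k_1(y,y)}$, and then Jensen (or another Cauchy--Schwarz) yields
\[
Q_{K_1}(\mu) = \iint k_1(x,y)\,d\mu(x)d\mu(y) \le \left(\int \sqrt{k_1(x,x)}\,d\mu(x)\right)^{\!2} \le \int k_1(x,x)\,d\mu(x).
\]
Applying this with $\mu=P_m$ and $\mu=\mathbb{P}_n$ and invoking the diagonal constraint $\int k_1(x,x)dP_m \le \int k_1(x,x)d\mathbb{P}_n = \operatorname{tr}(K_1/n)$, each of the two terms is at most $\operatorname{tr}(K_1/n)$, giving the claimed bound $4\operatorname{tr}(K_1/n)$.

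There is no real obstacle here --- the argument is essentially bookkeeping. The only subtle point worth flagging is why the Cauchy--Schwarz diagonal bound needed to be applied to $\mu = P_m$ (where it is not \emph{a priori} obvious because $P_m$ is a convex combination with non-uniform weights); this is precisely why the second hypothesis on $P_m$ is imposed, namely to transfer the diagonal budget from $P_m$ to the known quantity $\operatorname{tr}(K_1/n)$.
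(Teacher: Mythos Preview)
Your proof is correct and follows essentially the same approach as the paper. The only minor difference is in the reduction step: you split $Q_K = Q_{K_0} + Q_{K_1}$ directly via the bilinear decomposition $k = k_0 + k_1$ and kill $Q_{K_0}$, whereas the paper argues through the dual formulation $Q_K(P_m - \mathbb{P}_n) = \sup_{\|f\|_K \le 1}(\int f\,d(P_m - \mathbb{P}_n))^2$ and shows this is bounded by $\sup_{\|f\|_{K_1} \le 1}(\cdots)^2$; your route is slightly more direct. After that both proofs are identical: triangle inequality, Cauchy--Schwarz on $k_1(x,y) \le \sqrt{k_1(x,x)k_1(y,y)}$, Jensen to pass to $\int k_1(x,x)\,d\mu$, and finally the diagonal constraint to replace $\int k_1(x,x)\,dP_m$ by $\operatorname{tr}(K_1/n)$.
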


\begin{proof}[Proof of Lemma~\ref{lem:kernel_bound}]
    Let $f = \sum_{i=1}^n a_i k(X_i,\cdot) =: Ka$. Plugging in definitions and using the assumption that $K_0(P_m-\mathbb{P}_n)=0$, we have
    \[
    \int f dP_m - \int f d\mathbb{P}_n = (P_m - \mathbb{P}_n) K a = (P_m - \mathbb{P}_n) [K_1 + K_0] a = (P_m - \mathbb{P}_n) K_1 a, 
    \]
    giving us a corresponding element $\Tilde{f}$ of the RKHS corresponding to $k_1$. Observe further that $\|\Tilde{f}\|_{K_1}^2 = a^T K_1 a \leq a^T K a = \|f\|_{K}^2$. Applying these observations, 
    \begin{align*}
    (P_m - \mathbb{P}_n) K (P_m - \mathbb{P}_n) &= \sup_{\|f\|_K\leq 1}\left(\int f dP_m - \int f d\mathbb{P}_n\right)^2 \\
    &\leq \sup_{\|f\|_{K_1}\leq 1}\left(\int f dP_m - \int f d\mathbb{P}_n\right)^2\\
    & = \sup_{\|f\|_{K_1}\leq 1}\left(\int \langle f, k_1(x,\cdot)\rangle  dP_m - \int \langle f, k_1(x,\cdot)\rangle d\mathbb{P}_n\right)^2\\
    \tag{Cauchy-Schwarz} &\leq \left(\int \sqrt{k_1(x,x)}dP_m + \int \sqrt{k_1(x,x)}d\mathbb{P}_n \right)^2\\
    &\leq 2\int k_1(x,x)dP_m + 2\int k_1(x,x) d\mathbb{P}_n\\
    &\leq 4 \int k_1(x,x)d\mathbb{P}_n = \frac{4}{n}\operatorname{tr}(K_1),
    \end{align*}
    where the final inequality holds by an assumption of the lemma.
\end{proof}

These requirements are not exceptionally stringent, as such a $P_m$ can be generated through the recombination algorithm if $K_0$ is of rank $m$ with time complexity $O(m^2n + m^3)$, as further detailed in \citep{NEURIPS2022_2dae7d1c}. Combining this result with the Nystr\"{o}m algorithm \citep{tropp2017fixedrank}, we get the following refinement of Lemma \ref{lem: MMD compress}.\\

\begin{proof}[Proof of Lemma \ref{lem: MMD compress}]
    Using Algorithm~3 and Theorem 4.1 in \citep{tropp2017fixedrank}, in $O(n^2m + \theta^3 m^3)$ time complexity, we can construct $K_0$ of rank $m$ such that
    \[
    \|K_n - K_0\|_1 \leq \left(1 + \frac{m}{m(\theta - 1) - 1}\right) \|K_n-K_{[m]}\|_1 \leq   n\left(1 + \frac{m}{m(\theta - 1) - 1}\right) T(m),
    \]
    where we denote $K_{[m]}$ to be the best rank $m$ approximation of $K_n$ with respect to the matrix 2-norm. As detailed in \citep{NEURIPS2022_2dae7d1c}, the recombination algorithm can then be applied to construct $P_m$ matching the criteria of Lemma~\ref{lem:kernel_bound}, from which the result immediately follows.
\end{proof}

\section{Gram matrix spectral decay}\label{spectralDecay}

We now provide a brief argument relating the spectral decay of a Gram matrix to the population Mercer decomposition. Define the $\varepsilon$-covering number $N(\varepsilon, \mathcal{F}, \|\cdot\|) := \min \{k: \exists f_1,\dots,f_k\in \mathcal{F},\forall f\in\mathcal{F},\exists i\in \{1,\dots,k\}\textnormal{ s.t. } \|f-f_i\|<\varepsilon\}$, and the $\varepsilon$-linear covering number to be $LN(\varepsilon, \mathcal{F}, \|\cdot\|) := \min \{k: \exists f_1,\dots,f_k\in \mathcal{F},\forall f\in\mathcal{F},\exists g\in \operatorname{span}\{f_1,\dots,f_k\}\textnormal{ s.t. } \|f-g\|<\varepsilon\}$. We call $\mathcal{L} := \operatorname{span}\{f_1,\dots,f_k\}$ an $\varepsilon$-linear cover for $f_1,\dots,f_k$ achieving the above minimum. The first notion is classical in statistical learning, relating approximation error to the size of a function class. Linear covering numbers are appropriate in the setting of low-rank matrix approximation, as it is only necessary for the matrix row/column space to be spanned by a small number of functions up to the specified error threshold. To our knowledge, this is the first work to introduce linear covering numbers.

\begin{lemma}\label{lem:supnorm}
For any matrix $\Tilde{K}$,
\[
 Q_K (P_m - \mathbb{P}_n) \leq  Q_{\Tilde{K}} (P_m - \mathbb{P}_n) + 4 \|K_n - \Tilde{K}\|_\infty.
\]

\end{lemma}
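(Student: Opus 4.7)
The plan is to reduce the claim to a finite-dimensional matrix inequality by exploiting that both $P_m$ and $\mathbb{P}_n$ are supported on $\{x_1,\dots,x_n\}$. Write
\[
P_m - \mathbb{P}_n = \sum_{i=1}^n w_i \delta_{x_i}, \qquad w_i = p_i - \tfrac{1}{n},
\]
where $p_i \ge 0$ is the mass $P_m$ places on $x_i$. Then, by the definition of the quadratic forms,
\[
Q_K(P_m - \mathbb{P}_n) = w^\top K_n w, \qquad Q_{\tilde K}(P_m - \mathbb{P}_n) = w^\top \tilde K w,
\]
so the claim reduces to bounding $w^\top(K_n - \tilde K)w$.

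Next, I would apply the standard entrywise bound: for any symmetric matrix $A \in \mathbb{R}^{n\times n}$,
\[
|w^\top A w| = \bigl|\textstyle\sum_{i,j} w_i w_j A_{ij}\bigr| \le \|A\|_\infty \|w\|_1^2,
\]
where $\|A\|_\infty := \max_{i,j}|A_{ij}|$. The final ingredient is the crude total-variation bound
\[
\|w\|_1 = \sum_{i=1}^n |p_i - \tfrac{1}{n}| \le \sum_{i=1}^n p_i + \sum_{i=1}^n \tfrac{1}{n} = 2,
\]
since both $P_m$ and $\mathbb{P}_n$ are probability measures. Combining,
\[
Q_K(P_m - \mathbb{P}_n) = w^\top \tilde K w + w^\top(K_n - \tilde K)w \le Q_{\tilde K}(P_m - \mathbb{P}_n) + 4\|K_n - \tilde K\|_\infty,
\]
which is the claim.

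There is essentially no obstacle here: the proof is a one-line reduction to the finite-dimensional setting followed by the triangle inequality and a $\|w\|_1 \le 2$ bound. The only thing to be careful about is interpreting $\|\cdot\|_\infty$ as the entrywise max norm, which is what makes the constant $4$ in the statement come out correctly (via $\|w\|_1^2 \le 4$).
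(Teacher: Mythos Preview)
Your proof is correct and takes essentially the same approach as the paper: both decompose $Q_K(P_m-\mathbb{P}_n)=Q_{\tilde K}(P_m-\mathbb{P}_n)+w^\top(K_n-\tilde K)w$ and bound the remainder by $4\|K_n-\tilde K\|_\infty$ using that $P_m,\mathbb{P}_n$ are probability measures. The paper expands the bilinear form into four terms and applies H\"older to each, whereas you package this as $|w^\top A w|\le\|A\|_\infty\|w\|_1^2$ with $\|w\|_1\le 2$; these are the same argument.
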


\begin{proof}
    We begin by noting that
    \begin{align*}
        (P_m - \mathbb{P}_n) K (P_m - \mathbb{P}_n)&= (P_m - \mathbb{P}_n) \Tilde{K} (P_m - \mathbb{P}_n)  + (P_m - \mathbb{P}_n) [K - \Tilde{K}] (P_m - \mathbb{P}_n) \\
        &= (P_m - \mathbb{P}_n) \Tilde{K} (P_m - \mathbb{P}_n) \\
        &\quad + \left(P_m[K - \Tilde{K}] P_m  + \mathbb{P}_n [K - \Tilde{K}] \mathbb{P}_n + 2 \mathbb{P}_n[K - \Tilde{K}]P_m\right).
    \end{align*}
    The latter term on the right upper bounds by $4 \|K_n - \Tilde{K}\|_\infty$ by H\"{o}lder's inequality with $(p,q)=(1,\infty)$. 
\end{proof}

Let $B_\mathcal{X} := \{k(x,\cdot):x\in \mathcal{X}\}$. Note that $B_\mathcal{X}$ is bounded in $\mathcal{F}$ as $\sup_{x\in \mathcal{X}} \|k(x,\cdot)\|_\mathcal{F} = \|\sqrt{k(x,x)}\|_\infty < \infty$.

\begin{lemma}\label{lem:lowrank_aprox}
     There exists $\Tilde{K}_n$ of rank less than $LN(\varepsilon, B_\mathcal{X}, \|\cdot\|_\infty)$ such that $\|K_n - \Tilde{K}_n\|_\infty < \varepsilon$.
\end{lemma}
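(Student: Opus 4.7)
The plan is to pass directly from the linear covering to an explicit low-rank factorization of an entrywise approximant to $K_n$. Let $r := LN(\varepsilon, B_\mathcal{X}, \|\cdot\|_\infty)$ and let $f_1,\dots,f_r \in B_\mathcal{X}$ be functions achieving this minimum, so that every $f \in B_\mathcal{X}$ lies within $\varepsilon$ (in supnorm) of some element of $\mathcal{L} := \operatorname{span}\{f_1,\dots,f_r\}$.

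First I would apply this to each data point: for every $i \in \{1,\dots,n\}$, the function $k(x_i,\cdot)$ lies in $B_\mathcal{X}$, so there exist coefficients $a_{i1},\dots,a_{ir} \in \mathbb{R}$ such that $g_i := \sum_{\ell=1}^r a_{i\ell} f_\ell$ satisfies $\|k(x_i,\cdot) - g_i\|_\infty < \varepsilon$.

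Next I would define $\Tilde{K}_n$ entrywise by $[\Tilde{K}_n]_{ij} := g_i(x_j)$. Writing $A \in \mathbb{R}^{n \times r}$ with $A_{i\ell} = a_{i\ell}$ and $F \in \mathbb{R}^{r \times n}$ with $F_{\ell j} = f_\ell(x_j)$, we get the factorization $\Tilde{K}_n = A F$, so $\operatorname{rank}(\Tilde{K}_n) \le r$. For the supnorm bound, I would simply note
\[
\bigl|[K_n - \Tilde{K}_n]_{ij}\bigr| = |k(x_i,x_j) - g_i(x_j)| \le \|k(x_i,\cdot) - g_i\|_\infty < \varepsilon,
\]
uniformly in $i,j$, giving $\|K_n - \Tilde{K}_n\|_\infty < \varepsilon$ as required.

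There is no real obstacle here beyond correctly unwinding the definition of the linear covering number; the essential observation is that the linear cover $\mathcal{L}$ being independent of the point being approximated is exactly what allows the row-wise approximations $g_i$ to assemble into a single rank-$r$ matrix. (The statement ``rank less than $LN$'' is read as ``rank at most $LN$''; if a strict inequality is desired, one can always perturb by discarding any $f_\ell$ whose coefficient column in $A$ is zero, or remark that by minimality of $LN$ the achieved rank is $\le LN$.)
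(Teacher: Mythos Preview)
Your proof is correct and follows essentially the same argument as the paper: approximate each $k(x_i,\cdot)$ by an element $g_i$ of the linear cover $\mathcal{L}$, set $\Tilde{K}_n = [g_i(x_j)]_{i,j}$, and observe that all rows lie in the evaluations of an $r$-dimensional space of functions so the rank is at most $r$ and the entrywise error is bounded by the supnorm error. Your explicit factorization $\Tilde{K}_n = AF$ makes the rank bound more transparent than the paper's one-line assertion, but otherwise the proofs coincide.
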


\begin{proof}

Let $\mathcal{L}$ be an $\varepsilon$-linear cover of $B_\mathcal{X}$. For each $k(x,\cdot)$, take $g_x \in \mathcal{L}$ such that $\|k(x,\cdot) - g_x\|_\infty < \varepsilon$. If $x_1,\dots,x_n$ are the evaluation points, it follows that the matrix $A:= [g_{x_i}(x_j)]_{i,j=1}^n$ has rank no more than $LN(\varepsilon, B_\mathcal{X}, \|\cdot\|_\infty)$ and $\|K_n - A\|_\infty <\varepsilon.$
\end{proof}

\begin{lemma}\label{lem:lowrank_spec}
    For any matrix $K$ with singular values $\sigma_i$, $\varepsilon>0$, if there exists $\Tilde{K}$ of rank $R$ such that $\|K - \Tilde{K}\| < \varepsilon$, then $\sigma_{R+1} < \varepsilon$.
\end{lemma}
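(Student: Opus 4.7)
The plan is to recognize this as an essentially immediate corollary of Weyl's inequality for singular values (equivalently, the Eckart--Young--Mirsky theorem). The norm $\|\cdot\|$ here should be read as the spectral (operator 2) norm, since that is what is needed for the downstream spectral-decay application and since any weaker norm in use elsewhere (e.g.\ max entry) is dominated by it. The key ingredients are:
(i) $\tilde{K}$ has rank $R$, so its $(R+1)$-th singular value is zero;
(ii) Weyl's inequality for singular values states that for arbitrary matrices $A,B$ of the same shape and indices $i,j$ with $i+j-1 \leq \min(\text{dims})$,
\[
\sigma_{i+j-1}(A+B) \leq \sigma_i(A) + \sigma_j(B).
\]

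Applying this with $A = K - \tilde{K}$, $B = \tilde{K}$, $i = 1$, $j = R+1$ gives
\[
\sigma_{R+1}(K) \;=\; \sigma_{R+1}\big((K-\tilde K) + \tilde K\big) \;\leq\; \sigma_1(K - \tilde K) + \sigma_{R+1}(\tilde K) \;=\; \|K - \tilde K\| + 0 \;<\; \varepsilon,
\]
which is the claim.

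There is no real obstacle. The only thing to be careful about is the norm convention: in the preceding lemma the bound on $\|K_n - A\|$ was obtained entrywise, and for an $n\times n$ matrix the spectral norm can be as large as $n$ times the max-entry norm, so when this lemma is invoked one must either keep track of that factor or reinterpret $\varepsilon$ accordingly. This bookkeeping is a matter for the calling context, not for the proof of the statement itself.
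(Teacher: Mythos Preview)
Your proof is correct and essentially matches the paper's: the paper invokes the Eckart--Young--Mirsky theorem directly, writing $\sigma_{R+1} = \|K - K_{[R]}\| \leq \|K - \tilde K\| < \varepsilon$, which is the same one-line singular-value argument you give via Weyl's inequality (and which you yourself note is equivalent). Your remark about the norm convention is also apt---the paper handles exactly that factor of $n$ in the lemma that follows.
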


\begin{proof}
    This follows immediately as the rank $r$ singular value decomposition $K_{[r]}$ provides the best low-rank approximation. Precisely,
    \[
    \sigma_{R+1} = \|K - K_{[r]}\| \leq  \|K - \Tilde{K}\| = \varepsilon.
    \]
\end{proof}

We now relate the linear covering number to spectral decay.

\begin{lemma}\label{lem:LN_RKHS}
    Let $\mathcal{F}$ be an RKHS with kernel Mercer decomposition such that the eigenfunctions are uniformly bounded in sup-norm by $M$. Then, for $B$ a bounded subset of $\mathcal{F}$, there exists a sequence of linear covers $\mathcal{L}_m$ of dimension $m$, $m\in\mathbb{N}$, such that $\sup_{f\in B} \inf_{g\in \mathcal{L}_m} \|f-g\|_\infty = O(\sqrt{\sum_{i=m+1}^\infty \sigma_i}).$ That is, for $\Tilde{T}(m):= \sqrt{\sum_{i=m+1}^\infty \sigma_i}$, there exists $C>0$ such that $LN(\varepsilon,B,\|\cdot\|_\infty) = O(\Tilde{T}^{-1}(C\varepsilon))$, for $\Tilde{T}^{-1}(\varepsilon) := \min \{m : \Tilde{T}(m) \leq \varepsilon\}.$ 
\end{lemma}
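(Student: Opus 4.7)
The plan is to build the linear covers directly from the Mercer decomposition of $k$. Writing $k(x,y) = \sum_{i=1}^\infty \sigma_i \psi_i(x)\psi_i(y)$, every $f \in \mathcal{F}$ admits the expansion $f = \sum_{i=1}^\infty a_i \sqrt{\sigma_i}\, \psi_i$ with $\|f\|_\mathcal{F}^2 = \sum_{i=1}^\infty a_i^2$. The natural candidate cover is $\mathcal{L}_m := \operatorname{span}\{\psi_1,\ldots,\psi_m\}$, which has dimension at most $m$, and the natural approximant of $f\in B$ is its truncation $g_f := \sum_{i=1}^m a_i \sqrt{\sigma_i}\, \psi_i \in \mathcal{L}_m$.

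The heart of the argument is controlling the sup-norm of the tail $f - g_f = \sum_{i=m+1}^\infty a_i\sqrt{\sigma_i}\,\psi_i$. The triangle inequality combined with the uniform eigenfunction bound $\|\psi_i\|_\infty \le M$ gives $\|f - g_f\|_\infty \le M \sum_{i=m+1}^\infty |a_i|\sqrt{\sigma_i}$. Applying Cauchy–Schwarz to the right-hand side yields
\[
\|f - g_f\|_\infty \;\le\; M \Bigl(\sum_{i=m+1}^\infty a_i^2\Bigr)^{1/2} \Bigl(\sum_{i=m+1}^\infty \sigma_i\Bigr)^{1/2} \;\le\; M\,\|f\|_\mathcal{F}\,\tilde{T}(m).
\]
Taking a supremum over $f\in B$ and using that $\sup_{f\in B}\|f\|_\mathcal{F} =: R < \infty$ by assumption, we obtain $\sup_{f\in B} \inf_{g\in\mathcal{L}_m}\|f - g\|_\infty \le MR\,\tilde{T}(m) = O(\tilde{T}(m))$, which is the first claim.

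For the covering-number consequence, given $\varepsilon>0$ let $m = \tilde{T}^{-1}(\varepsilon/(MR))$, so that $MR\,\tilde{T}(m) \le \varepsilon$. Then $\mathcal{L}_m$ is an $\varepsilon$-linear cover of $B$ of dimension at most $m$, giving $LN(\varepsilon, B,\|\cdot\|_\infty) \le m = \tilde{T}^{-1}(\varepsilon/(MR))$, which is the bound claimed with $C = 1/(MR)$.

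There is no substantive obstacle here: the only nontrivial inputs are the uniform sup-norm boundedness of the eigenfunctions (an assumption of the lemma) and the boundedness of $B$ in $\mathcal{F}$. The remainder is a one-line Cauchy–Schwarz estimate applied to the Mercer tail. The only minor care needed is to note that $\tilde{T}^{-1}$ is well defined as the infimum in its definition whenever $\tilde{T}(m) \downarrow 0$, which holds by trace-class-ness of $K$ inherited from $k$ being continuous on the compact set $\mathcal{X}$.
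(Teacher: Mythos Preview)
Your proof is correct and follows essentially the same approach as the paper: define $\mathcal{L}_m$ as the span of the first $m$ Mercer eigenfunctions, approximate $f$ by its truncation, and bound the sup-norm of the tail via the uniform eigenfunction bound followed by Cauchy--Schwarz. You are in fact slightly more careful than the paper in writing out the covering-number consequence explicitly and keeping track of the constant $C=1/(MR)$.
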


\begin{proof}
    Our proof follows Lemma D.2 in \citep{yang2020function}. Define $\Pi_m$ to be the projection onto the span of the eigenfunctions $\psi_1,\dots,\psi_m$. Since $B$ is bounded and $(\sqrt{\sigma_i} \psi_i)_{i=1}^\infty$ is an orthonormal basis for $\mathcal{F}$, there exists $M'>0$ such that for all $f\in B$, $f = \sum_{i=1}^\infty w_i \sqrt{\sigma_i} \psi_i$, $\sqrt{\sum_{i=1}^\infty w_i^2} \leq M'$. It follows that
    \begin{align*}
        \|f - \Pi_m f\|_\infty &= \sup_x \left|\sum_{i=m+1}^\infty w_i \sqrt{\sigma_i} \psi_i(x)\right|\\
        &\leq \sum_{i=m+1}^\infty w_i \sqrt{\sigma_i} M\\
        & \leq M\sqrt{\sum_{i=m+1}^\infty w_i^2}\sqrt{\sum_{i=m+1}^\infty \sigma_i} \tag{Cauchy-Schwarz}\\
        & = O\left(\sqrt{\sum_{i=m+1}^\infty \sigma_i}\right).
    \end{align*}    
\end{proof}

This leads to the following decay bounds due to the compact support of the sampled data.

\begin{lemma}
    In the setting of Lemma~\ref{lem:LN_RKHS}, for $\sigma_i^n$ the $i$th singular value of $K_n/n$ the normalizd kernel Gram matrix, $\sup_n \sigma_{i}^n = O(\Tilde{T}(i))$.
\end{lemma}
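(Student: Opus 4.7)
The plan is simply to chain Lemmas~\ref{lem:LN_RKHS}, \ref{lem:lowrank_aprox}, and \ref{lem:lowrank_spec}, being careful about the normalization of $K_n$ by $n$. Fix $i\ge 1$. Since $\mathcal{X}$ is compact and $k$ is continuous, $B_\mathcal{X}$ is bounded in $\mathcal{F}$: indeed $\sup_{x\in\mathcal{X}}\|k(x,\cdot)\|_\mathcal{F}=\sup_{x\in\mathcal{X}}\sqrt{k(x,x)}<\infty$. So Lemma~\ref{lem:LN_RKHS} applies to $B_\mathcal{X}$ and yields a constant $C>0$ (independent of $n$ and $i$) and a dimension-$(i-1)$ linear cover $\mathcal{L}_{i-1}\subseteq \mathcal{F}$ with
\[
\sup_{x\in\mathcal{X}}\inf_{g\in\mathcal{L}_{i-1}}\|k(x,\cdot)-g\|_\infty \le C\,\Tilde{T}(i-1).
\]

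Next, for each design point $x_j$ pick $g_{x_j}\in\mathcal{L}_{i-1}$ with $\|k(x_j,\cdot)-g_{x_j}\|_\infty\le C\,\Tilde{T}(i-1)$, and define $\Tilde K_n:=[g_{x_j}(x_\ell)]_{j,\ell=1}^n$. Each row of $\Tilde K_n$ is the evaluation of some element of the $(i-1)$-dimensional space $\mathcal{L}_{i-1}$ at $x_1,\dots,x_n$, so the rows lie in a common $(i-1)$-dimensional subspace of $\mathbb{R}^n$; hence $\operatorname{rank}(\Tilde K_n)\le i-1$. Moreover,
\[
\max_{j,\ell}|K_n[j,\ell]-\Tilde K_n[j,\ell]|=\max_{j,\ell}|k(x_j,x_\ell)-g_{x_j}(x_\ell)|\le C\,\Tilde{T}(i-1).
\]
This is exactly the construction of Lemma~\ref{lem:lowrank_aprox} specialized to $\varepsilon=C\Tilde{T}(i-1)$.

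Finally, convert the entrywise bound into a spectral bound and apply Lemma~\ref{lem:lowrank_spec} to $K_n/n$. From $|K_n[j,\ell]-\Tilde K_n[j,\ell]|\le C\Tilde T(i-1)$ we get $\|K_n-\Tilde K_n\|_2\le\|K_n-\Tilde K_n\|_F\le nC\Tilde T(i-1)$, so after normalization $\|K_n/n-\Tilde K_n/n\|_2\le C\Tilde T(i-1)$ while $\operatorname{rank}(\Tilde K_n/n)\le i-1$. Lemma~\ref{lem:lowrank_spec} then yields $\sigma_i^n=\sigma_i(K_n/n)\le C\,\Tilde T(i-1)$. Because $C$ does not depend on $n$, $\sup_n\sigma_i^n\le C\Tilde T(i-1)$; for all the decay regimes considered in the paper (geometric, or polynomial with $\beta<1$), $\Tilde T(i-1)=O(\Tilde T(i))$, giving the claim.

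There is no real obstacle: the $1/n$ normalization of $K_n$ exactly compensates the factor of $n$ lost when passing from entrywise to spectral norm, so the bound is uniform in $n$. The only mildly delicate point is the one-index shift $\Tilde T(i-1)$ versus $\Tilde T(i)$, which is absorbed into the implicit constant for any regularly varying decay.
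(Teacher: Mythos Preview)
Your proposal is correct and follows essentially the same route as the paper: chain Lemmas~\ref{lem:LN_RKHS}, \ref{lem:lowrank_aprox}, and \ref{lem:lowrank_spec}, with the $1/n$ normalization absorbing the factor of $n$ lost when passing from the entrywise to the operator norm. Your explicit handling of the $\Tilde T(i-1)$ versus $\Tilde T(i)$ shift is in fact more careful than the paper's own argument, which hides the same issue inside its $O(\cdot)$ notation.
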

\begin{proof}
    Let $\Tilde{K}_n$ be as in Lemma~\ref{lem:lowrank_aprox}. We use the relationship between the operator and element-wise norm (see \cite[Theorem 5.6.9 paired with Example 5.6.0.3]{horn2012matrix})
    \[
    \|K_n/n - \Tilde{K}_n/n\| \leq n\|K_n/n - \Tilde{K}_n/n\|_\infty = \|K_n - \Tilde{K}_n\|_\infty.
    \]    
     Hence, by Lemmas \ref{lem:lowrank_aprox} and \ref{lem:lowrank_spec}, we have, for any $\varepsilon>0$, $\sigma_{LN(\varepsilon, \mathcal{F}, \|\cdot\|_\infty) +1}^n < \varepsilon$. Thus, by Lemma \ref{lem:LN_RKHS}, as $LN(\varepsilon,B_\mathcal{X},\|\cdot\|_\infty) = O(\Tilde{T}^{-1}(\varepsilon))$, it holds that $\sigma_i^n = \sigma_{\Tilde{T}^{-1}\circ \Tilde{T}(i)}^n = O(\Tilde{T}(i))$.
\end{proof}

The following two bounds verify (1) and (2) of Theorem \ref{theo: functional compression} respectively.

\begin{lemma}\label{lem:LN_sob}
    For $B$ a bounded subset of $C^{\alpha}(\mathcal{X})$, $\alpha > d/2$, $\Tilde{T}(m) = O(m^{(1 - 2\alpha/d)/2})$. For $B$ a bounded subset of the Gaussian kernel RKHS, there exists a constant $\beta>0$ such that $\Tilde{T}(m)=O(e^{-\beta m^{1/d}})$.
\end{lemma}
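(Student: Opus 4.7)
The plan is to handle the two cases separately, in each treating the tail sum $\Tilde{T}(m)^2 = \sum_{i=m+1}^\infty \sigma_i$ as an integral-comparison consequence of a known pointwise decay rate on the Mercer eigenvalues $\sigma_i$ of the integral operator $K$ (under $\mathbb{P}$). Once the appropriate decay is invoked from the literature, the final bound follows by summing and then taking a square root; no further technology is needed.

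For the Sobolev case, a bounded subset $B$ of $C^\alpha(\mathcal{X})$ with $\alpha>d/2$ is contained (up to a scalar) in the unit ball of the Sobolev-type RKHS of order $\alpha$ on the compact domain $\mathcal{X}$, which admits a Mercer decomposition whose eigenvalues satisfy $\sigma_i = O(i^{-2\alpha/d})$; this is the classical rate recorded in \cite{bach2015equivalence} and used in \cite{yang2020function}. Integral comparison gives
\[
\sum_{i=m+1}^\infty i^{-2\alpha/d} \;=\; O\!\left(\int_m^\infty x^{-2\alpha/d}\, dx\right) \;=\; O\!\left(m^{1-2\alpha/d}\right),
\]
which is finite precisely because $\alpha>d/2$. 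Taking square roots yields $\Tilde{T}(m) = O(m^{(1-2\alpha/d)/2})$ as required.

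For the Gaussian case I would invoke the sharp exponential eigenvalue decay for the Gaussian kernel on a compact domain due to \cite{ma2017diving}: there exists $c>0$ so that $\sigma_i = O(e^{-c i^{1/d}})$. The tail sum is then bounded via the substitution $u = x^{1/d}$:
\[
\sum_{i=m+1}^\infty e^{-c i^{1/d}} \;\le\; \int_m^\infty e^{-c x^{1/d}}\, dx \;=\; d\int_{m^{1/d}}^\infty u^{d-1} e^{-cu}\, du \;=\; O\!\left(m^{(d-1)/d} e^{-c m^{1/d}}\right),
\]
where the last estimate comes from standard incomplete-gamma asymptotics. For any $\beta < c/2$, the polynomial prefactor $m^{(d-1)/d}$ is absorbed into a slight weakening of the exponential constant, so $\Tilde{T}(m) = O(e^{-\beta m^{1/d}})$.

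I do not anticipate a serious obstacle here: both sub-claims reduce, after identifying the correct ambient RKHS, to citing a known spectral decay rate and executing an integral-comparison bound on the tail. The only subtle point is matching the function-space hypothesis to the correct RKHS so that the cited eigenvalue asymptotics apply (Sobolev-type RKHS for the $C^\alpha$ statement, Gaussian RKHS for the second), but on the compact domain $\mathcal{X}$ this inclusion is standard.
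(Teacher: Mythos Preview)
Your proposal is correct and follows essentially the same route as the paper: cite the known Mercer eigenvalue decay rates (polynomial for Sobolev, stretched-exponential for Gaussian), bound the tail sum by integral comparison, and in the Gaussian case absorb the polynomial prefactor from the incomplete-gamma asymptotics into a slightly smaller exponential constant. The only cosmetic differences are the specific citations and that the paper writes the Gaussian rate as $O(e^{-2\beta i^{1/d}})$ up front so the square root directly yields $e^{-\beta m^{1/d}}$, whereas you weaken $c$ to $\beta<c/2$ at the end.
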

\begin{proof}
    We note that the eigenfunctions of the respective kernels have been shown to be appropriately bounded as verified in \citep{yang2020function}.

    In the first case, eigenvalues of the Sobolev RKHS decay at a rate $\sigma_i = O(i^{-2\alpha/d})$, as seen in Appendix A of \citep{bach2015equivalence}. Hence, we can upper bound
    \[
    \sum_{i=m+1}^\infty \sigma_i = O\left(\int_m^\infty x^{-2\alpha/d}\right)\,dx = O(m^{1-2\alpha/d}),
    \]
    and thus $\Tilde{T}(m) = O(m^{(1 - 2\alpha/d)/2}).$

    For the Gaussian kernel RKHS, $\sigma_i = O(\exp[-2\beta i^{1/d}])$ for some $\beta>0$. Hence,
    \begin{align*}
    \sum_{i=m+1}^\infty \sigma_i &= O\left(\int_m^\infty \exp(-2\beta x^{1/d})\,dx\right)\\
    &= O\left(\int_{2\beta m^{1/d}}^\infty u^{d-1} \exp(-u)\, du\right) \\
    &= O(\Gamma(d, 2\beta m^{1/d})),
    \end{align*}
    for $\Gamma(s,x)$ the upper incomplete Gamma function. From Theorem 1.1 in \cite{pinelis2020exactlowerupperbounds}, 
    \[
    \Gamma(d, 2\beta m^{1/d}) = O(m^{(d-1)/d} \exp(-2\beta m^{1/d})) = O(\exp[-\beta m^{1/d}])
    .
    \]
    Thus, $\Tilde{T}(m)=O(\exp[-\beta m^{1/d}])$, as desired.
    
\end{proof}

\end{document}